\titleformat*{\paragraph}{\bfseries}
\pgfplotsset{compat=1.17}
\definecolor[named]{ACMBlue}{cmyk}{1,0.1,0,0.1}
\definecolor[named]{ACMYellow}{cmyk}{0,0.16,1,0}
\definecolor[named]{ACMOrange}{cmyk}{0,0.42,1,0.01}
\definecolor[named]{ACMRed}{cmyk}{0,0.90,0.86,0}
\definecolor[named]{ACMLightBlue}{cmyk}{0.49,0.01,0,0}
\definecolor[named]{ACMGreen}{cmyk}{0.20,0,1,0.19}
\definecolor[named]{ACMPurple}{cmyk}{0.55,1,0,0.15}
\definecolor[named]{ACMDarkBlue}{cmyk}{1,0.58,0,0.21}
\crefname{ineq}{Inequality}{Inequality}
\crefname{sub}{Subsection}{Subsection}
\crefname{sdp}{SDP}{SDP}
\crefname{lp}{LP}{LP}
\crefname{ineq}{Inequality}{Inequality}
\crefname{sub}{Subsection}{Subsection}
\crefname{sdp}{SDP}{SDP}
\crefname{lp}{LP}{LP}
\newenvironment{Ualgorithm}[1][htpb]{\def\@algocf@post@ruled{\kern\interspacealgoruled\hrule  height\algoheightrule\kern3pt\relax}\def\@algocf@capt@ruled{under}\setlength\algotitleheightrule{0pt}\SetAlgoCaptionLayout{centerline}\begin{algorithm}[#1]}
{\end{algorithm}}
\newcommand{\CS}{Cauchy-Schwarz\xspace}
\newtheorem{theorem}{Theorem}[section]
\newtheorem{question}[theorem]{Question}
\newtheorem{lemma}[theorem]{Lemma}
\newtheorem{informal theorem}[theorem]{Theorem (informal statement)}
\newtheorem{proposition}[theorem]{Proposition}
\newtheorem{corollary}[theorem]{Corollary}
\newtheorem{claim}[theorem]{Claim}
\newtheorem{fact}[theorem]{Fact}
\newtheorem{remark}[theorem]{Remark}
\newtheorem{definition}[theorem]{Definition}
\newcommand{\eqdef}{\coloneqq}
\newcommand{\lp}{\left}
\newcommand{\rp}{\right}
\renewcommand\vec[1]{\mathbf{#1}}
\DeclareMathOperator*{\pr}{\mathbf{Pr}}
\DeclareMathOperator*{\E}{\mathbf{E}}
\newcommand{\proj}{\mathrm{proj}}
\newcommand{\normal}{\mathcal{N}}
\DeclareMathOperator*{\argmin}{argmin}
\newcommand{\tr}{\mathrm{tr}}
\newcommand{\bx}{\mathbf{x}}
\newcommand{\by}{\mathbf{y}}
\newcommand{\bw}{\mathbf{w}}
\newcommand{\err}{\mathrm{err}}
\newcommand{\R}{\mathbb{R}}
\newcommand{\Z}{\mathbb{Z}}
\newcommand{\N}{\mathbb{N}}
\newcommand{\eps}{\epsilon}
\newcommand{\poly}{\mathrm{poly}}
\newcommand{\polylog}{\mathrm{polylog}}
\newcommand{\var}{\mathbf{Var}}
\newcommand{\sgn}{\mathrm{sign}}
\newcommand{\sign}{\mathrm{sign}}
\newcommand{\opt}{\mathrm{opt}}
\newcommand{\D}{D}
\newcommand{\Ind}{\mathds{1}}
\newcommand{\1}{\Ind}
\newcommand{\littlesum}{\mathop{\textstyle \sum}}
\newcommand{\wt}{\widetilde}
\newcommand{\wh}{\widehat}
\newcommand{\x}{\vec x}
\newcommand{\z}{\vec z}
\newcommand{\Exn}{\E_{\x\sim \normal}}
\newcommand{\citet}{\cite}
\newcommand{\citep}{\cite}
\newcommand{\Pm}[1]{\mathrm{P}_{#1}}
\newcommand{\mderiv}{D_{\rho}}
\newcommand{\valb}{M}
\newcommand{\gradb}{L}
\newcommand{\OU}{Ornstein–Uhlenbeck\xspace}
\newcommand{\subvector}{h}
\title{Agnostically Learning Multi-index Models with Queries}
\author{
Ilias Diakonikolas\thanks{Supported by NSF Medium Award CCF-2107079,
NSF Award CCF-1652862 (CAREER), a Sloan Research Fellowship, and
a DARPA Learning with Less Labels (LwLL) grant.}\\
UW Madison\\
{\tt ilias@cs.wisc.edu}\\
\and
Daniel M. Kane\thanks{Supported in part by NSF Award CCF-2144298 (CAREER).}\\
UCSD\\
{\tt dakane@ucsd.edu }\\
\and
Vasilis Kontonis\thanks{This work was done at UW Madison, supported in part by NSF Award CCF-2144298 (CAREER).}\\
UW Madison\\
{\tt kontonis@wisc.edu }\\
\and
Christos Tzamos\thanks{Supported by NSF Award CCF-2144298 (CAREER).}\\
UW Madison\\
{\tt tzamos@wisc.edu}
\and
Nikos Zarifis\thanks{Supported in part by NSF Award CCF-1652862 (CAREER) 
and a DARPA Learning with Less Labels (LwLL) grant.}\\
UW Madison\\
{\tt zarifis@wisc.edu}\\
}
\author{
Ilias Diakonikolas\thanks{Supported by NSF Medium Award CCF-2107079,
		NSF Award CCF-1652862 (CAREER), and
		a DARPA Learning with Less Labels (LwLL) grant.}\\
UW Madison\\
{\tt ilias@cs.wisc.edu}\\
\and
Daniel M. Kane\thanks{Supported by NSF Medium Award CCF-2107547 and NSF Award CCF-1553288 (CAREER).}\\
UCSD\\
{\tt dakane@ucsd.edu }\\
\and
Vasilis Kontonis\thanks{This research was done at UW Madison. 
Supported in part by NSF Award CCF-2144298 (CAREER).}\\
UT Austin\\
{\tt vasilis@cs.utexas.edu } \\
\and
Christos Tzamos\thanks{This research was done at UW Madison. Supported by NSF Award CCF-2144298 (CAREER).}\\
UW Madison \& University of Athens\\
{\tt tzamos@wisc.edu}
\and
Nikos Zarifis\thanks{Supported by NSF Medium Award CCF-2107079, and a DARPA Learning with Less Labels (LwLL) grant.}\\
UW Madison\\
{\tt zarifis@wisc.edu}\\
}
\begin{document}

\maketitle

\begin{abstract}
We study the power of query access for the fundamental task of 
agnostic learning under the Gaussian distribution.
In the agnostic model, no assumptions are 
made on the labels of the examples and the goal is to compute a hypothesis 
that is competitive with the {\em best-fit} function in a known class, i.e., 
it achieves error $\opt+\eps$, where $\opt$ 
is the error of the best function in the class.
We focus on a general family of Multi-Index Models (MIMs), 
which are $d$-variate functions that depend only on few relevant directions, 
i.e., have the form $g(\vec W \x)$ for an 
unknown link function $g$ and a $k \times d$ matrix $\vec W$.
Multi-index models cover a wide range of commonly studied function classes, 
including real-valued function classes 
such as constant-depth neural networks with ReLU activations, 
and Boolean concept classes such as intersections of halfspaces.

Our main result shows that query access gives significant runtime improvements 
over random examples for agnostically learning 
both real-valued and Boolean-valued MIMs. 
Under standard regularity assumptions for the link function 
(namely, bounded variation or surface area), 
we give an agnostic query learner for MIMs 
with running time  $O(k)^{\poly(1/\eps)} \; \poly(d) $. 
In contrast, algorithms that rely only on random labeled examples 
inherently require $d^{\poly(1/\epsilon)}$ samples and runtime, 
even for the basic problem of agnostically 
learning a single ReLU or a halfspace. 
As special cases of our general approach, 
we obtain the following results:
\begin{itemize}[leftmargin=*]
\item For the class of depth-$\ell$, width-$S$ ReLU networks on
$\R^d$, our agnostic query learner runs in time $\poly(d) 2^{\poly(\ell S/\eps)}$.
This bound qualitatively matches the runtime of an algorithm by~\cite{CKM22} 
for the realizable PAC setting with random examples.

\item For the class of arbitrary intersections of $k$ halfspaces on $\R^d$, 
our agnostic query learner runs in time $\poly(d) \, 2^{\poly(\log (k)/\eps)}$. 
Prior to our work, no improvement over the agnostic PAC model complexity 
(without queries) was known, even for the case of a single halfspace. 
\end{itemize}
In both these settings, we provide evidence
that the $2^{\poly(1/\eps)}$ runtime dependence is required 
for proper query learners, even for agnostically learning a single ReLU or halfspace. 

In summary, our algorithmic result establishes a strong computational 
separation between 
the agnostic PAC and the agnostic PAC+Query models under the Gaussian distribution. 
Prior to our work, no such separation was known --- 
even for the special case of agnostically learning a single halfspace, 
for which it was an open problem first posed by Feldman~\cite{Feldman08}.
Our results are enabled by a general dimension-reduction technique 
that leverages query access to estimate gradients of (a smoothed version of) 
the underlying label function.
\end{abstract}

\setcounter{page}{0}
\thispagestyle{empty}

\newpage
\tableofcontents
\setcounter{page}{0}
\thispagestyle{empty}
\newpage

\section{Introduction}

\vspace{-0.2cm} 

\paragraph{PAC Learning with Queries}  
In Valiant's PAC learning model~\cite{Valiant:84,val84}, the learner is given access to random examples 
labeled according to an unknown function in a known concept class. The goal of the learner is to compute a 
hypothesis that is close to the target function with respect to a specified loss function\footnote{For Boolean functions, one typically uses the 0-1 loss, while for real-valued functions a typical choice is the $L_2$ loss.}. 
The standard PAC learning model is 
``passive'' in that the learning algorithm 
has no control over the selection of the training set. Interestingly, 
while this has become known as {\em the} PAC model, 
Valiant's landmark paper~\cite{val84} allowed queries (in addition to 
random samples), i.e., black-box access to the target function.  We will refer
to this as PAC+Query model.

A {\em query oracle}\footnote{In the special case of learning Boolean-valued functions, 
these are known as ``membership'' queries, as the answer to a query determines membership 
in the set of satisfying assignments of the target concept.} 
allows the learner to obtain the value of the target function on any 
desired point in the domain. PAC learning with access to a query oracle 
can be viewed as an ``active'' learning model, 
intuitively capturing the ability to perform experiments or the availability of expert advice.  A 
long line of research in computational learning theory has explored the power of queries 
in the context of PAC learning. This line of investigation has spanned 
the distribution-free versus distribution-specific settings 
and the realizable (i.e., clean label) setting
versus the agnostic (i.e., adversarial label noise) setting;  
see, e.g.,~\cite{Angluin:87, GoldreichLevin:89, 
KushilevitzMansour:93, Jackson:97} for some classical early works 
and~\cite{GKK:08, BlancLQT22} for some more recent results in this broad area. 
A conceptual message of this line of work is that, in the realizable setting, 
access to queries can be stronger than random samples 
(from a computational standpoint)
for a range of natural concept classes.

In addition to being a fundamental open question in learning theory, the general problem of understanding the effect of query access 
 in the {\em computational complexity} of learning
has received renewed attention over the past decade 
in the context of deep neural networks. A recent 
line of inquiry from the machine learning security community has studied {\em model extraction 
attacks} --- see, e.g.,~\cite{TramerZJRR16, SSG17, PapernotMGJCS17, MilliSDH19,  
JagielskiCBKP20, RolnickK20, JWZ20} and references therein --- 
where black-box query access to publicly deployed networks may 
allow efficient reconstruction of the hidden model -- thus exposing potential vulnerability of the 
deployed models. These practical applications served as a motivation for  
the design of the first computationally efficient learners for simple neural networks 
using query access to the target function~\cite{ChenKM21, DG22}. Importantly, the latter 
algorithmic results apply in the realizable PAC model under the Gaussian distribution.

\vspace{-0.2cm}

\paragraph{Multi-index Function Models (MIMs)} 
A common (semi)-parametric modeling assumption in high-
dimensional statistics is that the target function 
depends only on a few relevant directions. 
Specifically, multi-index models~\cite{Friedman:1980tu, Huber85-pp, Li91, HL93,xia2002adaptive, Xia08}
prescribe that the target function is of the form 
$f(\x) = g(\vec W \x)$ for a link function $g: \R^k \mapsto \R$ and a $k \times d$ weight matrix $\vec W$. In most settings, the link function $g$ is assumed to be unknown and satisfies certain smoothness properties.
Single-index models are the special case where
the target function depends only on a single hidden-direction 
$\vec w$, i.e., $f(\x) = g(\vec w \cdot \x)$ for some $g:\R \mapsto \R$ and  
$\vec w \in \R^d$~\cite{ichimura1993semiparametric, hristache2001direct,
hardle2004nonparametric, dalalyan2008new}.

Multi-index models capture a wide range of parametric models studied 
in the statistics and computer science literatures, 
including neural networks and classes of geometric Boolean functions 
(e.g., intersections of halfspaces).
An extensive recent line of work
\cite{janzamin2015beating,GeLM18,DudejaH18, BakshiJW19,GeKLW19, DKKZ20, ChenM20, 
damian2022neural, bietti2022learning, ChenDGKM23, CN23, DK23-nn} 
have studied the efficient learnability of (natural classes of) MIMs from random examples 
under well-behaved marginal distributions --- 
most notably under the Gaussian distribution on examples. 
The aforementioned works exclusively focus on the PAC model with random samples 
and the underlying algorithms succeed in the realizable setting 
(or in the presence of additive Gaussian label noise).

\paragraph{This Work: Agnostically Learning Multi-index Models with Queries}
Here we study the power of queries in the {\em agnostic} PAC 
model~\cite{Haussler:92, KSS:94} for a wide class of multi-index models. In the agnostic model, no assumptions are 
made on the labels of the examples and the goal is to compute a hypothesis 
that is competitive with the {\em best-fit} function in a known class.
This is a notoriously challenging model of learning 
with very few positive results in the distribution-free setting. For 
example, it is known that even {\em weak} (distribution-free) agnostic 
learning (i.e., outputting a hypothesis with non-trivial advantage over 
random) is computationally hard for very simple classes of single-index models with known link functions. These include linear threshold gates 
and single neurons with ReLU activations~\cite{Daniely16, DKMR22, DKMR22-massart, Tiegel22}\footnote{We note that these computational 
hardness results hold even with query access, as follows from~\cite{Feldman08}.}. 

In this work, we focus on the general problem of agnostically 
learning multi-index models under the standard Gaussian 
distribution using queries. At a high-level, our results 
also encompass the challenging setting where the link function is {\em unknown} 
and only require an average smoothness condition on the target function. Classes covered by our framework include 
real-valued function classes such as constant-depth neural networks with ReLU activations and Boolean concept classes 
such as intersections of halfspaces.
In summary, we are interested in the following question:

\begin{question} \label{qn:main}
Does {\em query access} affect the complexity of 
distribution-specific agnostic learning of multi-index models? 
In particular, does the availability of queries allow for {\em qualitatively} more 
efficient algorithms, compared to the vanilla random example setting?
\end{question}

\noindent The main contribution of this paper is a simple and general methodology 
that answers this question in the 
affirmative for a broad family of multi-index function models 
(including all the aforementioned examples). 

A special case of \Cref{qn:main} was explicitly asked 
--- in the Boolean setting --- 
for the class of Linear Threshold Functions 
by Feldman~\cite{Feldman08} 
and by Gopalan, Kalai, and Klivans~\cite{GKK08-open}
As a corollary of our approach, we answer this open question. 
Specifically, we provide a 
new query algorithm for agnostically learning halfspaces 
implying a super-polynomial separation between the two learning models 
(learning with random samples versus with queries), 
subject to standard cryptographic assumptions. 
In the following subsection, we describe our contributions in detail.

\subsection{Our Results} \label{ssec:results}

\paragraph{Problem Definition} 
Before we formally state our main results, 
we define the agnostic learning model with queries.
For concreteness, \Cref{def:mq-agnostic} 
concerns real-valued functions, 
where the accuracy is measured with respect to the $L_2$ loss. 
The definition for Boolean-valued concepts is essentially identical, where 
the $L_2$ loss is replaced by the 0-1 loss.

\begin{definition}[Agnostically Learning Real-valued Functions with Queries] 
\label{def:mq-agnostic} 
Fix $\epsilon \in (0, 1)$ and a class $\cal{C}$ of real-valued functions on $\R^d$.
The adversary picks a label function $y(\x) \in \R$ 
for every $\x \in \R^d$. 
The learner is allowed to either
draw $\x \sim \normal$ (sample access)
or select any desired point $\x \in \R^d$ (query access)
and obtain the value $y(\x)$.  
Let $N_s \in \Z_+$ be the number of samples and 
$N_{q} \in \Z_+$ the number of queries used by the learner. 
The goal of the learner is
to output a hypothesis $h: \R^d \to \R$ that, with high probability, 
has excess $L_2^2$ error at most $\eps$ (with respect to ${\cal C}$), i.e., 
it satisfies 
$\mathcal{E}_{2}(h, \mathcal{C}; y) \eqdef \E_{\x \sim \normal}[(h(\x) - y(\x))^2] -
\inf_{c \in \mathcal{C}}\E_{\x \sim \normal}[(c(\x) - y(\x))^2] \leq \epsilon \;.$
\end{definition}

\begin{remark}[Boolean-valued Functions] \label{rem:bv-def}
{\em In the boolean-valued setting, we focus on learning with respect to the 0-1 loss.  
That is, the goal of the learner is to output a hypothesis $h: \R^d \mapsto \{ \pm 1\}$
with excess 0-1 error at most $\eps$, i.e., 
$
\mathcal{E}_{0/1}(h, \mathcal{C}; y) \eqdef \pr_{\x \sim \normal}[h(\x) \neq y(\x)] -\inf_{ c \in \mathcal{C}}\pr_{\x \sim \normal}[c(\x) \neq y(\x)] \leq \epsilon \,.
$}
\end{remark}

\subsubsection{Agnostically Learning Real-valued Multi-index Models} 

We start by describing the family of multi-index models for which 
our results are applicable. Roughly speaking, our algorithmic approach can be used
to agnostically learn any family of multi-index models $\mathcal{C}$ 
such that any function in $\mathcal{C}$ has ``bounded variation'', 
in the sense that the $L_2$-norm of its gradient
is bounded with respect to the standard normal. 
We remark that similar ``smoothness'' assumptions, i.e., 
that $f$ belongs in a Sobolev space,
are standard (and necessary) in non-parametric and semi-parametric regression~\cite{Tsybakov08}.
Under this assumption, we show that
there exists an \emph{efficient dimension-reduction} scheme that yields
a ``fixed parameter tractable'' agnostic learner significantly improving over
the best known algorithmic results in the agnostic PAC setting with random examples. 

We are now ready to formally define the semi-parametric 
class of MIMs that we consider in this work.  In the following definition, 
we require that the target function is bounded in $L_4$-norm 
(with respect to the standard normal distribution) 
and also that the norm of its gradient is bounded in $L_2$-norm.  
\begin{definition}[Bounded Variation Multi-index Models]
\label{def:bounded-variation-concepts}
Fix $L, M>0$ and $k \in \Z_+$. 
We define the class $\mathfrak R(\valb, \gradb, k)$ 
of continuous, (almost everywhere) differentiable real-valued functions
such that for every $f \in \mathfrak R(\valb, \gradb, k)$:
\begin{enumerate}[leftmargin=*]
\item 
It holds $(\E_{\x \sim \normal}[f^4(\x)])^{1/2} \leq \valb$ 
and 
$\E_{\x \sim \normal}[\|\nabla f(\x)\|_2^2] \leq\gradb$.
\item 
There exists a subspace $U$ of $\R^d$ of dimension at most $k$ such that $f$ depends only on $U$, i.e.,  for every $\x \in \R^d$ it holds that $f(\x) = f(\proj_U \x)$, 
where $\proj_U \x$ is the projection of $\x$ on $U$.
\end{enumerate}
\end{definition}
We will subsequently see that this is a very broad class of functions subsuming commonly studied 
classes such as multi-layer neural networks with ReLUs and other activations.

Our main result is an efficient algorithm that exploits the power of queries 
to significantly reduce the runtime of agnostically learning the semi-parametric class of \Cref{def:bounded-variation-concepts}. 

\begin{theorem}[Agnostic Query Learner for Real-valued Multi-index Models]
\label{intro-thm:non-proper-real-valued}
Fix the function class $\mathfrak R(M, L, k)$ given in 
\Cref{def:bounded-variation-concepts}.
There exists an algorithm that makes $N_q = \poly(d \valb \gradb/\eps)$ queries, 
draws $N_s = \poly(d\valb \gradb/\epsilon) + k^{\poly(\gradb, M,1/\eps)}$ random labeled examples, 
runs in time $\poly(N_s,N_q,d)$, and outputs a polynomial $h:\R^d\mapsto\R$ 
such that with high probability $h$ has $L_2^2$-excess error 
$\mathcal{E}_2(h, \mathfrak{R}(L,M, k); y) \leq \epsilon$.
\end{theorem}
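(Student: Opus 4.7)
The plan is to exploit query access to reduce the high-dimensional agnostic learning problem to $L_2$ polynomial regression on a low-dimensional subspace $V \subseteq \R^d$ of dimension $\poly(k, \valb, \gradb, 1/\eps)$. The guiding observation is that every $f \in \mathfrak R(\valb, \gradb, k)$ depends on a $k$-dimensional subspace $U$, so pointwise $\nabla f(\x) \in U$. Hence, if I can identify a subspace $V$ that contains (most of) the Gaussian $L_2$-mass of $\nabla f^*$, where $f^*$ denotes the best-fit function in the class, the problem reduces to regression over $\dim(V)$ variables; queries are used only in this subspace-identification step.

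To build $V$, I would work with the Gaussian-smoothed label $\tilde y(\x) := \E_{\z \sim \normal(\vec 0, \sigma^2 I)}[y(\x + \z)]$ for a carefully tuned scale $\sigma$. Integration by parts gives
\[
\nabla \tilde y(\x) \;=\; \sigma^{-2}\, \E_{\z \sim \normal(\vec 0, \sigma^2 I)}\bigl[\, y(\x + \z)\, \z \,\bigr],
\]
so $\nabla \tilde y(\x)$ can be estimated unbiasedly at any chosen $\x$ from $\poly(d, \valb, \gradb, 1/\eps)$ queries to $y$ on perturbations $\x + \z$. Using these estimates I would run iterative PCA on gradients: starting from $V = \{\vec 0\}$, at each step I would draw a fresh Gaussian batch $\x^{(1)},\dots,\x^{(N)}$, form the restricted empirical moment matrix $\widehat M_V = (1/N)\sum_i \proj_{V^\perp}\nabla \tilde y(\x^{(i)}) \nabla \tilde y(\x^{(i)})^{\!\top}\proj_{V^\perp}$, and append the top eigenvector whenever its eigenvalue exceeds a threshold $\tau$. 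Setting $\eta := y - f^*$, the signal contribution satisfies $\E[\|\nabla \tilde{f^*}\|^2] \le \gradb$ while the smoothed noise is controlled by $\E[\|\nabla \tilde\eta\|^2] \lesssim \opt/\sigma^2$; hence the iteration terminates after adding at most $\poly(k, \gradb, \valb, 1/\eps)$ directions.

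The most delicate step is showing that the terminal $V$ is \emph{approximation-sufficient}: there exists a measurable $g$ with $\E_{\x \sim \normal}[(g(\proj_V \x) - f^*(\x))^2] \le \eps/2$. Gaussian Poincar\'e gives $\E[(f^* - \E[f^* \mid \proj_V \x])^2] \le \E[\|\proj_{V^\perp} \nabla f^*\|^2]$, and since $\nabla f^*(\x)$ lies in the $k$-dimensional subspace $U$, this quantity is at most $k \cdot \max_{\vec v \in V^\perp,\,\|\vec v\|=1} \E[(\vec v \cdot \nabla f^*)^2]$, which the halting criterion bounds by $O(\tau + \opt/\sigma^2)$ plus smoothing bias. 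The main obstacle is cleanly disentangling signal from adversarial noise in $\widehat M_V$: the adversary can concentrate gradient mass of $\eta$ in arbitrary directions, so one must argue that any noise-dominated direction either costs nothing when appended to $V$ or contributes only $O(\eps)$ slack absorbable by the joint choice of $\tau$ and $\sigma$, and that the smoothing bias $\|\nabla\tilde{f^*} - \nabla f^*\|$ remains controlled by $\gradb$ and $\sigma$ via a Hermite/OU-semigroup computation.

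Finally, once $V$ has been produced with $k' := \dim(V) = \poly(k, \valb, \gradb, 1/\eps)$, I would run standard agnostic $L_2$ polynomial regression: fit a polynomial in $\proj_V \x$ of degree $D = \poly(1/\eps, \valb, \gradb)$ against the labels using $(k')^{O(D)} = k^{\poly(\valb, \gradb, 1/\eps)}$ random samples and a convex program. Since any $f \in \mathfrak R(\valb, \gradb, k)$ restricted to its at-most $k$-dimensional relevant subspace, which sits inside $V$ up to the $\eps/2$ slack above, has bounded variation and bounded fourth moment, a standard Hermite truncation argument yields $L_2(\normal)$ approximation within $\eps/4$ by a degree-$D$ polynomial. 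Agnostic regression matches the best such polynomial, giving total $L_2^2$-excess error at most $\eps$ and overall runtime $\poly(d) \cdot k^{\poly(\valb, \gradb, 1/\eps)}$, as claimed.
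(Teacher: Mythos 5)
Your high-level architecture (use queries to estimate gradients of a smoothed label, form the gradient outer-product, run PCA to get a subspace $V$, then polynomial regression on $V$) matches the paper's, but the argument that $V$ is approximation-sufficient has a genuine gap --- one the paper's proof is specifically engineered to avoid. You invoke Gaussian Poincar\'e to write $\E[(f^*-\E[f^*\mid\proj_V\x])^2]\le k\max_{\vec v\in V^\perp,\|\vec v\|_2=1}\E[(\vec v\cdot\nabla f^*)^2]$ and then try to bound the right-hand side via the PCA halting criterion and a signal-noise split $y=f^*+\eta$. But the PCA threshold only controls $\max_{\vec v\in V^\perp}\E[(\vec v\cdot\nabla\tilde y)^2]$; passing to $\nabla f^*$ requires a per-direction bound on $\E[(\vec v\cdot\nabla\tilde\eta)^2]$, and your estimate $\E[\|\nabla\tilde\eta\|_2^2]\lesssim\opt/\sigma^2$ does not help because the adversary can concentrate all of this mass in a single direction, and in the fully agnostic model $\opt=\Theta(1)$, so no choice of $\sigma,\tau$ makes $\opt/\sigma^2$ smaller than $\eps$ while keeping the smoothing bias controlled. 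This is precisely the ``disentangling signal from noise'' obstacle you flag but do not resolve. The paper's structural argument never touches $\nabla f^*$ in $V^\perp$ at all: with $\psi=T_\rho y$, \Cref{lem:excess-l2-error-decomposition} and \Cref{lem:correlation-bound-real} reduce the excess error to the correlation term $\E[(\psi-\Pi_V\psi)f]$, which after using that $f$ depends only on $U$ and applying Cauchy--Schwarz becomes $\sqrt{\E[(\Pi_{U+V}\psi-\Pi_V\Pi_{U+V}\psi)^2]\,\E[f^2]}$; \Cref{lem:gaussian-smoothing-geometric} then controls the first factor by $k\cdot\max_{\vec v\in U\cap V^\perp}\E[(\vec v\cdot\nabla\psi)^2]\le k\eta$, a quantity on the influence of the \emph{observed} smoothed label that the PCA threshold enforces directly, uniformly in $\opt$. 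The move from ``bound the influence of $f^*$'' to ``marginalize the observed $\psi$ and bound \emph{its} influence in the $\le k$ directions of $U\cap V^\perp$'' is the key idea you are missing.

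A secondary issue is the smoothing operator. You use plain Gaussian convolution $\tilde y(\x)=\E_{\z\sim\normal(\vec 0,\sigma^2 I)}[y(\x+\z)]$, which is not self-adjoint on $L^2(\normal)$. The paper's transfer of excess error from the smoothed task back to the original one (\Cref{lem:smoothing-real}, via \Cref{lem:excess-real}) relies on the identity $\E_{\x\sim\normal}[f(\x)\,T_\rho y(\x)]=\E_{\x\sim\normal}[T_\rho f(\x)\,y(\x)]$, which holds for the \OU operator $T_\rho g(\x)=\E_{\z\sim\normal}[g(\sqrt{1-\rho^2}\x+\rho\z)]$ but fails for convolution. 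Without this symmetry one can only control the distributional distortion by taking $\sigma=\poly(1/d)$, which inflates $\dim(V)=O(\|\nabla\tilde y\|_\infty^2/\eta)$ to $\poly(d)$ and defeats the dimension reduction; the \OU rescaling keeps the $\x$-marginal exactly $\normal$ so $\rho$ can be chosen as $\poly(\eps)$ independently of $d$.
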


\paragraph{Comparison with Sample-Based Algorithms}
As a corollary of  \Cref{intro-thm:non-proper-real-valued}, we establish a 
strong separation between the agnostic PAC+Query model and the agnostic PAC model (with random samples only).
We first compare with the best-known algorithm for agnostically PAC 
learning real-valued functions, which is the $L_2$-polynomial 
regression algorithm. To agnostically learn the class of 
\Cref{def:bounded-variation-concepts} to excess error $\eps$, 
one needs polynomials of degree $\poly(L,M,1/\eps)$, 
and thus $d^{\poly(L,M, 1/\epsilon)}$ samples and time are necessary.
\Cref{intro-thm:non-proper-real-valued} leverages the power of queries to 
efficiently reduce the dimensionality of the problem, 
and thus qualitatively improve the computational complexity of agnostic learning to 
$\poly(d) \, k^{\poly(L, M, 1/\epsilon)}$.

Given the assumption of \Cref{def:bounded-variation-concepts} that the target 
function depends only on an unknown $k$-dimensional subspace, 
it is natural to attempt some kind of dimension-reduction
technique in order to reduce the sample and computational complexity of learning. 
Such reductions are indeed often possible \emph{in the realizable setting} by using 
some form of PCA and then working in the obtained low-dimensional subspace; 
see, e.g., \cite{Vempala10}.

On the other hand, in the agnostic setting considered here, 
there is strong evidence that such dimension-reduction schemes, 
or any other runtime improvements whatsoever, are impossible 
using only sample access to the target function. 
Specifically, a recent line of work (see, e.g.,~\cite{DKPZ21, DKR23}) 
has shown that for agnostically learning real-valued 
MIMS (even very special cases thereof), 
the standard $L_2$-regression algorithm is qualitatively optimal 
computationally (e.g., under standard cryptographic assumptions) 
in the standard agnostic PAC model. 
This in particular implies that the best possible runtime without query access 
is $d^{\poly(1/\epsilon)}$. In fact, even for learning a single ReLU activation, which satisfies \Cref{def:bounded-variation-concepts} with $L, M = O(1)$
and $k = 1$, $d^{\poly(1/\epsilon)}$ samples and time 
are required~\cite{DKPZ21,DKR23}.
In contrast, \Cref{intro-thm:non-proper-real-valued} decouples the
dimension dependence from the dependence on $1/\epsilon$ 
and yields an algorithm with runtime 
$\poly(d) \, 2^{\poly(1/\epsilon)}$.

\paragraph{Concrete Applications}
\Cref{intro-thm:non-proper-real-valued} applies to a fairly general non-parametric class of 
functions. Here we provide specific applications to well-studied classes of neural networks.

\smallskip

\noindent \textbf{\em Single Non-Linear Gates.} 
The simplest case is that of agnostically learning a ReLU, i.e., a function of the form 
$f(\bx) = \mathrm{ReLU}\left( \bw \cdot \bx \right)$, 
where $\bw \in \R^d$ and $\mathrm{ReLU}(t) = \max \{ 0, t\}$. 
In the vanilla agnostic PAC setting, the complexity of this problem is $d^{\poly(1/\eps)}$(both upper and lower bounds). 
On the positive side, the $L_2$-polynomial regression 
algorithm has sample and computational complexity $d^{\Theta(\poly(1/\eps))}$. On the negative side, 
there is strong evidence that this complexity upper bound is 
qualitatively best possible, both for SQ 
algorithms~\cite{GGK20, DKZ20, DKPZ21} and under plausible cryptographic assumptions~\cite{DKR23}. 
Our agnostic query learner has complexity $\poly(d)\, 2^{\poly(1/\eps)}$, 
implying a super-polynomial separation between the two learning models.

\begin{corollary}[Agnostic Query Learning for ReLUs]\label{intro-cor:agnostic-relu-mq}
There exists an agnostic query learner for the class of ReLUs on $\R^d$ 
with running time $\poly(d) \, 2^{\poly(1/\eps)}$.
\end{corollary}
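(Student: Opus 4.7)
The plan is to derive the corollary as a direct application of \Cref{intro-thm:non-proper-real-valued} after verifying that the single-ReLU class fits inside the bounded-variation MIM family $\mathfrak R(M, L, k)$ of \Cref{def:bounded-variation-concepts} with $k = 1$ and $M, L = O(1)$. Since agnostic learning against an unbounded class is meaningless, I first normalize and assume we compete against ReLUs $f(\bx) = \mathrm{ReLU}(\bw \cdot \bx)$ with unit weight vector $\|\bw\|_2 = 1$; the general case is reduced to this by a standard rescaling absorbed into the constants.

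Under this normalization, I would verify the two conditions of \Cref{def:bounded-variation-concepts} one by one. First, $f$ depends only on the one-dimensional subspace $\mathrm{span}(\bw)$, so the MIM parameter is $k = 1$. Second, since $\bw \cdot \bx \sim \normal(0,1)$, I bound the fourth moment as
\[
\E_{\bx \sim \normal}[f^4(\bx)] \;\le\; \E_{\bx \sim \normal}[(\bw \cdot \bx)^4] \;=\; 3,
\]
which yields $(\E[f^4])^{1/2} \le \sqrt{3}$ and hence $M = O(1)$. Third, $f$ is differentiable off the measure-zero hyperplane $\{\bw \cdot \bx = 0\}$ with $\nabla f(\bx) = \bw \cdot \1\{\bw \cdot \bx \ge 0\}$, so that $\|\nabla f(\bx)\|_2^2 \le \|\bw\|_2^2 = 1$ almost everywhere, giving $L = O(1)$.

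With these parameters, \Cref{intro-thm:non-proper-real-valued} immediately yields an agnostic query learner for ReLUs that makes $\poly(d/\eps)$ queries, uses $\poly(d/\eps) + 2^{\poly(1/\eps)}$ random samples, and runs in total time $\poly(d) \cdot 2^{\poly(1/\eps)}$, matching the statement of the corollary. The derivation is a routine hypothesis check — there is no real obstacle at the level of the corollary, since all the algorithmic content (gradient estimation from queries, dimension reduction, and low-dimensional polynomial fit) is packaged inside \Cref{intro-thm:non-proper-real-valued}. If any bookkeeping care is needed, it is only in tracking how the $L_4$ bound on $f$ and the $L_2$ bound on $\nabla f$ propagate through the theorem when $k = 1$; since both are constants here, the dependence on $1/\eps$ is fully determined by the $k^{\poly(L, M, 1/\eps)}$ term, which contributes the $2^{\poly(1/\eps)}$ factor in the final runtime.
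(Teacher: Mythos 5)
Your proposal is correct and follows the same route as the paper: both derive the corollary by checking that ReLUs with bounded-norm weight vectors lie in $\mathfrak R(M, L, k)$ with $k=1$, bounding $\E[f^4]$ by the fourth Gaussian moment and $\E[\|\nabla f\|_2^2]$ by $\|\bw\|_2^2$, and then invoking \Cref{intro-thm:non-proper-real-valued}. The paper's detailed version (\Cref{thm:non-proper-real-RelLU}) does exactly this computation, concluding $\mathcal C \subseteq \mathfrak R(\sqrt{3}M^2, M^2, 1)$, which is your calculation specialized to $M=1$.
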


\Cref{intro-cor:agnostic-relu-mq} follows from \Cref{intro-thm:non-proper-real-valued} 
by observing that ReLUs satisfy 
\Cref{def:bounded-variation-concepts} for $k=1$ 
and $L, M = O(1)$ (assuming that the norm of the weight vector is bounded, i.e., $\|\bw\|_2 = O(1)$).

Note that selecting the excess error to be $\eps  = 1/\log^c(d)$, 
where $c>0$ is a small constant, 
the query algorithm of \Cref{intro-cor:agnostic-relu-mq} has 
$\poly(d)$ runtime. On the other hand,
the complexity of agnostic learning problem with random samples 
is super-polynomial in $d$ for {\em any} $\eps = o_d(1)$.

Finally, we note that \Cref{intro-cor:agnostic-relu-mq} holds 
for other link functions satisfying smoothness assumptions, 
e.g., sigmoidal activations of the form $t \mapsto 1/(1+\exp(-t))$.

\smallskip

\noindent \textbf{\em Single-index Models.} 
Our first application above assumed that the link function is known a priori.We next consider learning Single-index models (SIMs) 
with an {\em unknown} Lipschitz link
function $g:\R \mapsto \R$, i.e., $f(\x) = g(\vec w\cdot \x)$.
Classical results~\cite{kalai2009isotron,kakade2011efficient} 
gave efficient algorithms for this setting in the realizable PAC
setting (or with unbiased additive noise) under the additional assumption that $g$ is non-decreasing.
The agnostic setting was recently considered in \cite{gollakota2023agnostically} who gave an efficient algorithm
achieving error $O(\sqrt{\opt}) + \epsilon$ for 
distributions with bounded second moments 
(similarly assuming weight vectors of bounded $\ell_2$-norm).
Using \Cref{intro-thm:non-proper-real-valued}, we can leverage query 
access to provide optimal agnostic guarantees with essentially the 
same complexity as for the case of known link function.

\begin{corollary}[Agnostic Query Learning for Lipschitz SIMs]\label{intro-cor:agnostic-sim-mq}
There exists an agnostic query learner for the class of $L$-Lipschitz SIMs on $\R^d$, for $L = O(1)$,
with running time $\poly(d) \, 2^{\poly(1/\eps)}$.
\end{corollary}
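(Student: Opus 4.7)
The plan is to derive \Cref{intro-cor:agnostic-sim-mq} as a direct specialization of \Cref{intro-thm:non-proper-real-valued} with $k=1$. The key step is to verify that, under the standard scaling conventions $\|\vec w\|_2 = O(1)$ and $|g(0)| = O(1)$, every $L$-Lipschitz SIM $f(\x) = g(\vec w \cdot \x)$ belongs to the bounded-variation class $\mathfrak{R}(M', L', 1)$ of \Cref{def:bounded-variation-concepts} for some constants $M', L' = O(1)$; once this is done, the corollary follows by invoking the main theorem.

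To check the gradient condition, I would use Rademacher's theorem: since $g$ is $L$-Lipschitz, it is differentiable almost everywhere with $|g'| \leq L$, and hence $f$ is a.e.\ differentiable with $\nabla f(\x) = g'(\vec w \cdot \x)\, \vec w$. This gives the pointwise bound $\|\nabla f(\x)\|_2 \leq L \|\vec w\|_2 = O(1)$, and in particular $\E_{\x \sim \normal}[\|\nabla f(\x)\|_2^2] = O(1)$, so one may take $L' = O(L^2 \|\vec w\|_2^2) = O(1)$. For the $L_4$ bound I use the Lipschitz estimate $|g(t)| \leq |g(0)| + L|t|$ to obtain the pointwise inequality $f(\x)^4 \leq 8\, g(0)^4 + 8 L^4 (\vec w \cdot \x)^4$; taking expectations and using that $\vec w \cdot \x$ is a centered Gaussian of variance $\|\vec w\|_2^2$ gives $(\E[f^4])^{1/2} = O(g(0)^2 + L^2 \|\vec w\|_2^2) = O(1)$, so one may take $M' = O(1)$. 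Finally, $f$ depends only on the $1$-dimensional subspace $\mathrm{span}(\vec w)$, so the dimension-reduction condition is satisfied with $k=1$.

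Having verified the inclusion, I invoke \Cref{intro-thm:non-proper-real-valued} with $k=1$ and $M', L' = O(1)$; the stated complexity $\poly(d) \cdot k^{\poly(M', L', 1/\eps)}$ then evaluates to $\poly(d)$, which is well within the $\poly(d) \cdot 2^{\poly(1/\eps)}$ bound claimed in the corollary. There is no deep algorithmic obstacle here; the only subtlety is the normalization of the link function, since without a bound on $|g(0)|$ the $L_4$-norm of $f$ would not be controlled. I therefore rely on the standard convention (also implicit in the cited prior work~\cite{gollakota2023agnostically}) that $|g(0)| = O(1)$. Beyond this, the corollary is essentially a repackaging of the main theorem in the one-index, Lipschitz link case.
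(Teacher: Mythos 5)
Your proof takes the same route as the paper's (which is \Cref{thm:non-proper-real-sims}): verify $\mathrm{SIM}(L,M) \subseteq \mathfrak{R}(M', L', 1)$ and invoke the main theorem. Your verification is in fact \emph{more} careful than the paper's, which claims $\E[f^4(\x)] \leq L^4 \E[(\vec w \cdot \x)^4]$ directly; this silently assumes $g(0)=0$, and you correctly flag the $|g(0)|=O(1)$ normalization as the missing hypothesis that makes the $L_4$ bound go through. (The paper's proof also writes $\|\nabla f\|_2 \leq L$, dropping the factor of $\|\vec w\|_2$; your computation $\|\nabla f\|_2 \leq L\|\vec w\|_2$ is the correct one.) One minor caution: you read the informal statement of \Cref{intro-thm:non-proper-real-valued} literally and conclude that $k^{\poly(M',L',1/\eps)}$ ``evaluates to $\poly(d)$'' at $k=1$. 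That is an artifact of the informal statement's sloppy bookkeeping: the formal \Cref{thm:non-proper-real-valued} gives $N_s = \poly(d) + \poly((k\valb/\eps)^{\gradb^2/\eps^4},\ldots)$, which is still $(1/\eps)^{\poly(1/\eps)} = 2^{\poly(1/\eps)}$ at $k=1$, because the dimension of the retained subspace $V$ and the degree of the regression polynomial both scale with $\poly(1/\eps)$ independently of $k$. Your final stated bound $\poly(d)\,2^{\poly(1/\eps)}$ is nonetheless correct, so this does not affect the conclusion.
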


\smallskip

\noindent \textbf{\em One-Hidden Layer ReLU Networks.} 
Our approach naturally extends to non-negative linear combinations (aka sums) of ReLUs, i.e., functions of the form
$f(\x) = \sum_{i=1}^k \alpha^{(i)} \mathrm{ReLU}(\vec w^{(i)} \cdot \x)$ for $k$ non-negative weights $\alpha^{(i)} \geq 0$ and weight vectors $\vec w^{(i)} \in \R^d$. 
Prior work \cite{janzamin2015beating,GeLM18,DKKZ20,diakonikolas2020small}
has studied this problem in the noiseless setting with random samples
under the Gaussian distribution --- 
with the best-known runtime being 
$\poly(d/\epsilon) \,  (k/\epsilon)^{O(\log^2 k)}$  
\citep{diakonikolas2020small}.  
Using \Cref{thm:non-proper-real-valued}, we obtain an agnostic query learner with complexity
$\poly(d)O(k)^{\poly(1/\epsilon)}$. 
To see this, we note that as long as $\E[f^2(\x)] = O(1)$ 
we also obtain that $\E[\|\nabla f(\x)\|_2^2] = O(1)$
which implies only an $O(k)^{\poly(1/\epsilon)}$ runtime overhead.

Our approach can also be applied to the more general class of 
(unconstrained) linear combinations of $k$ ReLUs, i.e., 
functions of the form 
$f(\x) = \sum_{i=1}^k \alpha^{(i)} \mathrm{ReLU}(\vec w^{(i)} \cdot \x)$. This is known \cite{DKKZ20,ChenDGKM23, CN23, DK23-nn} to be a 
more challenging class of functions to learn. 
In the noiseless setting, the best known runtime
for general linear combinations is 
$(d k/\epsilon)^{O(k)}$ \citep{DK23-nn}. Using \Cref{intro-thm:non-proper-real-valued}, we obtain an agnostic query learner 
with complexity $\poly(d) \, 2^{\poly(k/\eps)}$.

\begin{corollary}[Agnostic Query Learning for 1-Hidden Layer ReLU Networks]\label{intro-cor:agnostic-sums-of-relus-mq}
There exists an agnostic query learner for sums of $k$ ReLUs on 
$\R^d$ with running time $\poly(d) \, O(k)^{\poly(1/\eps)}$. 
For general linear combinations of ReLUs, the runtime is
$\poly(d) \, 2^{\poly(k/\eps)}$. 
\end{corollary}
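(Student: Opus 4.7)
My plan is to derive the corollary as a direct application of Theorem~\ref{intro-thm:non-proper-real-valued}, by verifying that one-hidden-layer ReLU networks inhabit the bounded-variation class $\mathfrak R(M,L,k)$ of Definition~\ref{def:bounded-variation-concepts} with the right parameters. For any $f(\x)=\sum_{i=1}^k \alpha^{(i)}\mathrm{ReLU}(\vec w^{(i)}\cdot \x)$, the function depends only on $\mathrm{span}(\vec w^{(1)},\ldots,\vec w^{(k)})$, a subspace of dimension at most $k$, so the hidden-dimension condition of Definition~\ref{def:bounded-variation-concepts} is met. Without loss of generality I normalize $\|\vec w^{(i)}\|_2=1$ by absorbing norms into the coefficients. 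The remaining work is to upper bound $M=\E[f^4]^{1/2}$ and $L=\E[\|\nabla f\|_2^2]$, and then to substitute into the runtime bound $\poly(d)\cdot k^{\poly(L,M,1/\eps)}$ given by Theorem~\ref{intro-thm:non-proper-real-valued}.

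For the non-negative case I assume the natural normalization $\E[f^2]=O(1)$. Since $f\ge 0$ and $\E[\mathrm{ReLU}(Z)]=1/\sqrt{2\pi}$ for standard Gaussian $Z$, Jensen's inequality gives
\[
A \;\eqdef\; \sum_{i=1}^k \alpha^{(i)} \;=\; \sqrt{2\pi}\,\E[f(\x)] \;\leq\; \sqrt{2\pi\,\E[f^2]} \;=\; O(1).
\]
The gradient $\nabla f(\x)=\sum_i \alpha^{(i)} \1\{\vec w^{(i)}\cdot \x\geq 0\}\,\vec w^{(i)}$ therefore satisfies $\|\nabla f(\x)\|_2\leq A = O(1)$ pointwise, so $L=O(1)$. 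For $M$, I would view $(\alpha^{(i)}/A)_{i\in[k]}$ as a probability distribution on $[k]$ and apply Jensen to the convex map $t\mapsto t^4$ to get $f(\x)^4 \leq A^3 \sum_i \alpha^{(i)}\mathrm{ReLU}(\vec w^{(i)}\cdot \x)^4$. Taking expectations and using $\E[\mathrm{ReLU}(Z)^4]=3/2$ yields $\E[f^4]\leq (3/2)A^4=O(1)$, so $M=O(1)$. Substituting $L,M=O(1)$ into Theorem~\ref{intro-thm:non-proper-real-valued} gives runtime $\poly(d)\cdot k^{\poly(1/\eps)}=\poly(d)\cdot O(k)^{\poly(1/\eps)}$.

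For general signed combinations, the non-negativity shortcut breaks since $\sum_i|\alpha^{(i)}|$ can vastly exceed $\sqrt{\E[f^2]}$ due to cancellation, so I must bound the coefficients directly. Under the standard normalization $|\alpha^{(i)}|, \|\vec w^{(i)}\|_2=\poly(k)$ (implicit in this setup), the pointwise bounds $\|\nabla f(\x)\|_2 \leq \sum_i |\alpha^{(i)}|=\poly(k)$ and $|f(\x)|\leq \poly(k)\,\max_i |\vec w^{(i)}\cdot \x|$, combined with Gaussian moment bounds on $\max_i |\vec w^{(i)}\cdot \x|$, give $L,M=\poly(k)$. Plugging into Theorem~\ref{intro-thm:non-proper-real-valued} then produces runtime $\poly(d)\cdot k^{\poly(k/\eps)} = \poly(d)\cdot 2^{\poly(k/\eps)}$, as claimed. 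The main (minor) subtlety I expect lies in the signed case: with no restriction on the coefficients, $M$ and $L$ could be unbounded, so one either assumes polynomial bounds on the weights (standard in this literature) or needs an auxiliary conditioning statement on the Gaussian ReLU dictionary $\{\mathrm{ReLU}(\vec w^{(i)}\cdot \x)\}_{i=1}^k$ to convert a bound on $\|f\|_2$ into a bound on $\sum_i|\alpha^{(i)}|$; once a normalization is fixed, the corollary is immediate from Theorem~\ref{intro-thm:non-proper-real-valued}.
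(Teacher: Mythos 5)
Your proposal is correct and follows the same high-level template as the paper: verify that the class of one-hidden-layer ReLU networks sits inside $\mathfrak R(M,L,k)$ with the right parameters and then invoke \Cref{intro-thm:non-proper-real-valued}. The paper does the same (\Cref{thm:non-proper-real-sum-of-RelLU} and \Cref{thm:non-proper-real-linear-combinations-of-RelLU}), but your verification for the non-negative case is genuinely different and a bit slicker. You exploit non-negativity through the identity $\E_{\x\sim\normal}[f(\x)]=\frac{1}{\sqrt{2\pi}}\sum_i\alpha^{(i)}$ (after normalizing $\|\vec w^{(i)}\|_2=1$), so that $A=\sum_i\alpha^{(i)}\le\sqrt{2\pi\E[f^2]}=O(1)$; this gives the \emph{pointwise} bound $\|\nabla f(\x)\|_2\le A$ and, via Jensen on the mixture $(\alpha^{(i)}/A)_i$, also $\E[f^4]\le (3/2)A^4=O(1)$. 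The paper instead expands $\E[\|\nabla f\|_2^2]=\sum_{i,j}\E[\1\{\vec w^{(i)}\cdot\x\geq 0\}\1\{\vec w^{(j)}\cdot\x\geq 0\}\vec w^{(i)}\cdot\vec w^{(j)}]$ and bounds it by $2\E[f^2]$ term-by-term using a pairwise inequality between Gaussian indicator cross-moments and ReLU cross-moments; that argument only gives the bound in expectation and leaves $\E[f^4]^{1/2}=O(\sqrt k)$, but since $M$ enters only in the base (not the exponent) of \Cref{thm:non-proper-real-valued}, both routes yield the stated $\poly(d)\,O(k)^{\poly(1/\eps)}$ runtime. Your approach buys a cleaner pointwise bound and a tight $M=O(1)$ at the cost of requiring the normalization $\|\vec w^{(i)}\|_2=1$; the paper's buys freedom from that normalization. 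For the general signed case both arguments are essentially the same Cauchy--Schwarz computation under a fixed normalization — the paper uses $a_i\in\{\pm 1\}$ with $\|\vec w^{(i)}\|_2\le M$, you use $|\alpha^{(i)}|,\|\vec w^{(i)}\|_2=\poly(k)$; either way $L,M=\poly(k)$ and the runtime becomes $\poly(d)\,2^{\poly(k/\eps)}$. You are right that some explicit normalization is needed here (it is, and the paper also imposes one), so your flagged subtlety is a real one that the paper handles exactly as you suggest.
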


\smallskip

\noindent \textbf{\em Bounded Depth Neural Networks.} 
Our non-parametric function class of \Cref{def:bounded-variation-concepts} 
includes deep ReLU networks with $\ell$ layers of width at most $S$.
More precisely, we assume that 
$f(\x)=\vec W_L\mathrm{ReLU}(\vec W_{L-1}\cdots \mathrm{ReLU}(\vec W_1\x))$, 
for matrices $\vec W_1\in\R^{k_1\times d},\ldots, \vec W_L\in\R^{k_L\times 1}$, 
with $\|\vec W_i\|_{op}\leq O(1)$ and $k_i\leq S$; 
see \Cref{def:deep-nets} for more details.
The running time of our algorithm for this class 
is $\poly(d )2^{\poly(\ell S/\eps)}$; 
see \Cref{thm:non-proper-real-deep-of-RelLU}.
We remark that a similar fixed-parameter tractability result for deep ReLU networks 
was recently shown in \cite{CKM22} for the realizable PAC setting (with access to random examples only).  
Our result exploits the power of queries to provide a learner 
with qualitatively similar running time
in the much more challenging agnostic setting.
We remark that the following result can be readily 
extended to other continuous activation functions, including 
sigmoids, LeakyReLUs, and combinations thereof.

\begin{corollary}[Agnostic Query Learning for Bounded-Depth Networks]\label{intro-cor:agnostic-bounded-depth-mq}
There exists an agnostic query learner for 
$\ell$-depth, $S$-width, ReLU networks on
$\R^d$ with running time $\poly(d) 2^{\poly(\ell S/\eps)}$.
\end{corollary}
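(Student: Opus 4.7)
The plan is to verify that the class of depth-$\ell$, width-$S$ ReLU networks with bounded weight matrices is contained in the bounded-variation MIM family $\mathfrak{R}(M, L, k)$ of \Cref{def:bounded-variation-concepts}, with $k \leq S$ and $M, L$ polynomially bounded in $\ell S$, and then directly invoke \Cref{intro-thm:non-proper-real-valued}.

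The hidden-subspace condition is immediate: writing $f(\x) = \vec W_\ell\, \mathrm{ReLU}(\cdots \mathrm{ReLU}(\vec W_1 \x))$, the function factors through $\vec W_1 \x \in \R^{k_1}$ with $k_1 \leq S$, so $f$ depends only on the row-span of $\vec W_1$, which has dimension at most $S$. For the gradient, I would apply the chain rule: $\nabla f(\x)$ is a product of the transposed weight matrices interleaved with diagonal matrices $D_i(\x)$ whose entries are ReLU derivatives in $\{0,1\}$ (hence $\|D_i(\x)\|_{\mathrm{op}} \leq 1$). Submultiplicativity then yields the pointwise bound $\|\nabla f(\x)\|_2 \leq \prod_{i=1}^\ell \|\vec W_i\|_{\mathrm{op}}$, which is polynomially bounded under the operator-norm normalization from \Cref{def:deep-nets}; this controls $L = \E[\|\nabla f\|_2^2]$.

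For the $L_4$-norm of $f$, I would inductively propagate an $L_4$ estimate through the layers using that ReLU is $1$-Lipschitz with $\mathrm{ReLU}(0)=0$, obtaining the pointwise bound $|f(\x)| \leq \prod_i \|\vec W_i\|_{\mathrm{op}} \cdot \|\vec W_1 \x\|_2$. Since $\vec W_1 \x$ is a Gaussian vector in $\R^{k_1}$ whose covariance $\vec W_1 \vec W_1^\top$ has operator norm $O(1)$ and trace at most $O(S)$, standard Gaussian moment estimates give $\E[\|\vec W_1 \x\|_2^4]^{1/2} \leq O(S)$, and hence $M = (\E[f^4])^{1/2}$ is $\poly(\ell S)$-bounded. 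Plugging $k \leq S$ and $L, M \leq \poly(\ell S)$ into \Cref{intro-thm:non-proper-real-valued}, the sample complexity becomes $\poly(d) + S^{\poly(L, M, 1/\eps)} = \poly(d) \cdot 2^{\poly(\ell S/\eps)}$, and the overall runtime is polynomial in this, yielding the claimed bound.

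The main subtle point is controlling $L$ and $M$ polynomially (rather than exponentially) in $\ell$, which rests on the weight-normalization convention specified in \Cref{def:deep-nets}; once that is in hand, the rest of the argument is a direct verification of the conditions of \Cref{def:bounded-variation-concepts} and a mechanical invocation of the main theorem.
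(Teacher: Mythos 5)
Your proof takes the same overall route as the paper's \Cref{thm:non-proper-real-deep-of-RelLU}: show that the network class is contained in $\mathfrak R(\valb,\gradb,k)$ and then invoke \Cref{intro-thm:non-proper-real-valued}. The subspace step ($k\le S$ via the row-span of $\vec W_1$) and the gradient step (chain rule plus submultiplicativity of operator norms) match the paper's, though your gradient bound drops the spurious $\sqrt{k_i}$ factors the paper inserts. The one genuinely different step is how you control $(\E[f^4])^{1/2}$: the paper appeals to a Poincar\'e-type inequality $\E[f^2]\le k\,\E[\|\nabla f\|_2^2]$, which is stated a little loosely (Poincar\'e bounds the variance, not the second moment) and in any case controls only $\E[f^2]$ rather than the fourth moment that \Cref{def:bounded-variation-concepts} actually requires; you instead use the pointwise $1$-Lipschitz bound $|f(\x)|\le\big(\prod_{i\ge 2}\|\vec W_i\|_{\mathrm{op}}\big)\|\vec W_1\x\|_2$ combined with a standard Gaussian fourth-moment estimate for $\vec W_1\x$, which directly hits the $L_4$ quantity. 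That is arguably cleaner and more on target. One caveat worth making explicit, and it is shared with the paper: the quantities $\gradb$ and $\valb$ you obtain are governed by $\prod_i\|\vec W_i\|_{\mathrm{op}}$, which scales as $M^{\Theta(\ell)}$ and is therefore exponential in the depth unless the operator norms are normalized to be at most $1$ (so that the product is $O(1)$). You flag this as ``the main subtle point'' and defer to the normalization in \Cref{def:deep-nets}, but writing $\|\vec W_i\|_{\mathrm{op}}\le O(1)$ per layer alone gives $O(1)^\ell$, not $\poly(\ell)$; the paper's own derivation of $(kMS)^{O(L)}$ bounds has the same tension with the advertised $2^{\poly(\ell S/\eps)}$ runtime.
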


\begin{table}
\centering
\caption{
Learning Real-Valued Functions using Queries: 
Running time comparisons of the best known PAC algorithms 
with our PAC+Queries technique (Influence PCA).
}
\label{tab:sample_complexity_comparison-real} 
\begin{tabular}{@{}lcc@{}}
\toprule
Function Class  &  PAC (without queries)  & {PAC+Queries}
\\ 
 & $L_2$ Regression & {\bf  Influence PCA (Ours)} \\
\midrule
Single ReLU  
& $d^{\poly(1/\eps)}$ & $\poly(d)\, 2^{\poly(1/\eps)}$ \\
Sum of $k$ ReLUs 
& $d^{\poly(1/\eps)}$ & $\poly(d) \, O(k)^{\poly(1/\eps)}$ \\
Linear Combinations of $k$ ReLUs 
& $d^{\poly(k/\eps)}$ & $\poly(d) \, 2^{\poly(k/\eps)}$ \\
Deep Networks with $\ell$-Layers, $S$-width
& $d^{\poly(\ell S/\eps)}$ &  $\poly(d) \, 2^{\poly( \ell S/\eps)}$ \\
Bounded Variation
& $d^{\poly(k,L,M,1/\eps)}$ & $\poly(d) \, 2^{\poly(k,L,M,1/\eps)}$ \\
\bottomrule
\end{tabular}
\end{table}

For a summary of our results for the above classes, we 
refer to \Cref{tab:sample_complexity_comparison-real}
(where for the $L_2$-regression algorithm we only assume random sample access).

\vspace{-0.2cm}

\paragraph{Proper versus Improper Learning} 
The hypothesis computed by algorithm of \Cref{intro-thm:non-proper-real-valued} 
is not necessarily in the target concept class. That is, the agnostic learner is {\em improper}. 
With some additional effort, our approach can be used to obtain {\em proper} learners. 
As a concrete example, for the class of ReLUs, we show the following:

\begin{theorem}[Proper Agnostic Query Learner of ReLUs]\label{intro-thm:proper-relu-mq}
There exists an algorithm that makes $\poly(d/\eps)$ queries, 
runs in time $\poly(d) \, 2^{\poly(1/\eps)}$, 
and properly agnostically learns the class of 
ReLUs on $\R^d$, i.e., it outputs a ReLU hypothesis
$h(\x) = \mathrm{ReLU}(\wh{\vec w} \cdot \x)$ with excess $L_2^2$ error at most $\epsilon$ with high probability.
\end{theorem}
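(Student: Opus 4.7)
The plan is to leverage the non-proper query learner of \Cref{intro-cor:agnostic-relu-mq} (more precisely, the dimension-reduction subroutine that powers it) to collapse the problem onto a low-dimensional subspace, and then to brute-force search for the best ReLU inside that subspace.

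\textbf{Step 1 (Identify a low-dimensional subspace via queries).} The ReLU class satisfies \Cref{def:bounded-variation-concepts} with $k=1$ and $\valb,\gradb = O(1)$ (under the standing bounded-norm assumption), so we apply the gradient-based dimension-reduction routine underlying \Cref{intro-thm:non-proper-real-valued} with target accuracy $\epsilon'=\poly(\epsilon)$. Using $\poly(d/\epsilon)$ queries this produces a subspace $V\subseteq\R^d$ of dimension $k' = \poly(1/\epsilon)$ such that, for any ReLU $\mathrm{ReLU}(\bw\cdot\x)$ whose error is close to $\opt$, the projected weight $\bw_V\eqdef\proj_V\bw$ satisfies $\E_{\x\sim\normal}[(\mathrm{ReLU}(\bw_V\cdot\x)-\mathrm{ReLU}(\bw\cdot\x))^2]\leq \epsilon/3$.

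\textbf{Step 2 (Cover the restricted ReLU family).} Parametrize any ReLU with weight in $V$ as $\mathrm{ReLU}(r\,\bv\cdot\x)$ with $\bv\in\Sp^{k'-1}$ and $r\in[0,R]$, where $R=O(1)$ bounds $\|\bw\|_2$. Build a standard $\eta$-net over this parameter space with $\eta=\poly(\epsilon)$; since $\dim V=k'=\poly(1/\epsilon)$, the cover has size at most $(1/\eta)^{O(k')} = 2^{\poly(1/\epsilon)}$. Because $\mathrm{ReLU}$ is $1$-Lipschitz, every $V$-supported ReLU is within $L_2^2$ distance $\epsilon/3$ of some member of the cover.

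\textbf{Step 3 (Empirical risk minimization on the cover).} Draw $N=\poly(d/\epsilon)$ random labeled samples $(\x^{(i)},y^{(i)})$ and, for each candidate $(\bv,r)$ in the cover, compute the empirical risk $\frac{1}{N}\sum_i(\mathrm{ReLU}(r\,\bv\cdot\x^{(i)})-y^{(i)})^2$. After truncating $y$ at a level $T=\poly(1/\epsilon)$ (whose Gaussian-tail truncation cost is $\epsilon/3$), a Hoeffding union bound over the cover yields uniform convergence of empirical to true risks up to $\epsilon/3$, provided $N=\Omega(\log|\mathrm{cover}|/\epsilon^2)=\poly(1/\epsilon)$. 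Output the minimizer. Adding the three $\epsilon/3$ error terms gives excess $L_2^2$ error at most $\epsilon$, and the total runtime is $\poly(d)\cdot 2^{\poly(1/\epsilon)}$.

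\textbf{Main obstacle.} The delicate step is Step~1: we need a subspace $V$ that captures the optimal ReLU weight $\wstar$ well enough that its projection still yields a near-optimal ReLU, even when agnostic noise is large. The proof of \Cref{intro-thm:non-proper-real-valued} estimates the covariance of (smoothed) gradients of the label function and takes its top eigenspace; for a clean ReLU, the gradient equals $\1(\wstar\cdot\x>0)\wstar$ and is aligned with $\wstar$ on a half-space, while agnostic perturbations affect the estimated covariance by at most $O(\sqrt{\opt})$ in operator norm (via Cauchy--Schwarz), which is absorbed into the target precision of the routine by taking $\epsilon'$ a suitable polynomial of $\epsilon$. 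Thus, what we really need is to open up the proof of \Cref{intro-thm:non-proper-real-valued}, extract the intermediate subspace $V$ rather than the final polynomial hypothesis, and verify that $\proj_V\wstar$ yields a ReLU with the claimed $\epsilon/3$ excess error; this is the technically delicate piece of the argument.
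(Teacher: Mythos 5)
Your high-level plan is the same as the paper's (\Cref{alg:agnostic-proper-relu} / \Cref{thm:agnostic-relu-mq}): use queries to estimate the influence matrix of a smoothed label $\psi=T_\rho y$, let $V$ be its top eigenspace, and then do brute-force ERM over an $\eps$-cover of ReLUs supported on $V$. Steps 2 and 3 of your proposal are fine. The problem is the statement you assert in Step~1.

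You claim that for any near-optimal weight $\bw$, the projection $\bw_V=\proj_V\bw$ satisfies
$\E_{\x\sim\normal}[(\mathrm{ReLU}(\bw_V\cdot\x)-\mathrm{ReLU}(\bw\cdot\x))^2]\le\eps/3$. Since $\mathrm{ReLU}$ is $1$-Lipschitz, this is essentially the claim that $\|\proj_{V^\perp}\bw\|_2^2\lesssim\eps$, i.e., that $V$ captures the optimal weight direction. The dimension-reduction step does \emph{not} give this: $V$ is built from the influence of the (smoothed, adversarially corrupted) label $\psi$, and a direction $\vec v\in V^\perp$ having small influence on $\psi$ says nothing about $\bw$ having a small component along $\vec v$. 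Indeed the optimal $\bw$ can have an $\Omega(1)$ component in $V^\perp$; the point is only that such a component cannot help fit $\psi$, not that it is small. The paper's \Cref{prop:dimension-reduction0relu} proves the weaker (and sufficient) statement: \emph{some} ReLU with normal vector in $V$ and possibly a \emph{different threshold} achieves near-optimal $L_2^2$ error against $\psi$. The mechanism is the Gaussian marginalization operator $\Pi_V$: one writes $\E[(f-\Pi_V f)\psi]=\E[(f-\Pi_V f)(\psi-\Pi_V\psi)]$ (the $V$-measurable piece of $\psi$ drops out), applies Cauchy--Schwarz, and bounds $\E[(\psi-\Pi_V\psi)^2]$ by the influence of $\psi$ in $V^\perp$, which PCA made small. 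Then one passes from the mixture $\Pi_V f$ to a single $V$-supported ReLU by averaging the identity $\E[(\mathrm{ReLU}(\vec w\cdot\x+t))^2]-\E[(\mathrm{ReLU}(\vec w_V\cdot\x+t'))^2]$ over the induced distribution of thresholds $t'$, which makes the quadratic terms cancel exactly.

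Your ``Main obstacle'' paragraph tries to rescue Step~1 by arguing that the agnostic perturbation changes the gradient covariance by $O(\sqrt{\opt})$ in operator norm. Even if true, that does not yield $\|\proj_{V^\perp}\bw\|_2$ small: when $\opt$ is not tiny, the eigenspace of the smoothed-label influence matrix need not align with $\bw$ at all, and the bound is aimed at the wrong quantity. The argument should be reoriented from ``$V$ captures $\wstar$'' to ``$\psi$ is nearly $V$-measurable, so marginalizing $f$ onto $V$ preserves its correlation with $\psi$.'' Two smaller omissions: (i) your cover should include thresholds, since the $\Pi_V$ argument produces ReLUs with shifted thresholds, not the unshifted projection; and (ii) you need a transfer step from the smoothed label $T_\rho y$ back to $y$ (this is \Cref{lem:smoothing-real} in the paper, giving an $O(\sqrt{\rho\valb\gradb})$ loss that is folded into $\eps$ by the choice of $\rho$).
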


We note that in addition to computing a ReLU hypothesis, 
the learner of \Cref{intro-thm:proper-relu-mq} 
uses $\poly(d/\eps)$ labeled examples (queries plus random examples), removing the extraneous 
$2^{\poly(1/\eps)}$ term in our generic result. 

It is natural to ask whether the $2^{\poly(1/\eps)}$ runtime dependence
in \Cref{intro-thm:proper-relu-mq} is inherent. We provide evidence that such a 
dependence may be necessary for {\em proper} learners. Specifically, we show (\Cref{proper lower bound theorem relu})
that if there exists a $\poly(d/\eps)$ agnostic proper learning for our problem, 
there exists a polynomial-time algorithm for the small-set 
expansion (SSE) problem~\cite{RaghavendraS10} (refuting the SSE 
hypothesis). This hardness result also extends to the Boolean 
class of halfspaces. Obtaining a computational lower bound
for improper learners is left as an interesting open problem.

\subsubsection{Agnostically Learning Boolean Multi-index Models}

We start by describing the family of Boolean functions for which 
our results are applicable. Roughly speaking, our algorithmic approach can be used
to agnostically learn any Boolean concept class $\mathcal{C}$ satisfying the following conditions: 
(i) $\mathcal{C}$ has bounded Gaussian surface area, 
(ii) it depends on an unknown low-dimensional subspace, 
and (iii) it is closed under translations. 
Under these assumptions, we similarly obtain
a ``fixed parameter tractable'' agnostic learner qualitatively 
improving over the agnostic PAC setting with random examples only. 

The Gaussian surface area of a Boolean function is the surface area of its decision boundary 
weighted by the Gaussian density (\Cref{def:GSA}). The Gaussian surface area of a concept class 
has played a significant role as a useful complexity measure in learning theory 
and related fields; see, e.g.,~\cite{KOS:08, Kane11, Neeman14, KTZ19, DeMN21}. A formal definition follows: 

\begin{definition}[Gaussian Surface Area]
\label{def:GSA}
  For a Borel set $A \subseteq \R^d$, its Gaussian surface area is defined by
  $
  \Gamma(A) \eqdef \liminf_{\delta \to 0} \frac{\normal(A_\delta \setminus A)}{\delta},
  $  where $A_\delta = \{x : \mathrm{dist}(x, A) \leq \delta\}$.
  For a Boolean function $f:\R^d \mapsto \{\pm 1\}$, we overload notation and define its Gaussian surface area to be the surface area of its positive region
  $K = \{ \x \in \R^d: f(\x) = +1\}$, i.e., $\Gamma(f) = \Gamma(K)$.
  For a class of Boolean concepts $\mathcal{C}$, we define $\Gamma(\mathcal{C}) \eqdef \sup_{f \in \mathcal{C}} \Gamma(f)$.
\end{definition}

We are ready to define the class of Boolean multi-index models 
for which our approach applies.

\begin{definition}[Bounded Surface Area, Low-Dimensional Boolean Concepts]
\label{def:bounded-surface-area-concepts}
Fix $\Gamma>0$ and $k \in \Z_+$. 
We define the class $\mathfrak B(\Gamma, k)$ of Boolean concepts with the following
properties:
\begin{enumerate}[leftmargin=*]
\item For every $f \in \mathfrak B(\Gamma, k)$, it holds 
$\Gamma(f_{\vec r}) \leq \Gamma$ for all $\vec r \in \R^d$,  where $f_{\vec r}(\x) = f(\x + \vec r)$.  
\item For every $f \in \mathfrak B(\Gamma, k)$, there exists a subspace $U$ of $\R^d$ of dimension at most $k$ such that $f$ depends only on $U$, i.e., 
for every $\x \in \R^d$ it holds $f(\x) = f(\proj_U \x)$. 
\end{enumerate}
\end{definition}

We remark that $\mathfrak{B}(\Gamma, k)$ is a general \emph{non-parametric class} 
that contains a range of natural and well-studied Boolean function classes.
For example, $\mathfrak{B}(\Omega(k), k)$ contains
arbitrary functions of $k$ halfspaces.

Our main positive result in this context is a query algorithm that agnostically learns
the class $\mathfrak{B}(\Gamma, k)$ with running time 
$\poly(d)k^{\poly(\Gamma/\epsilon)}$. In more detail,
we establish the following theorem: 

\begin{theorem}[Agnostic Learner for Boolean Multi-index Models]
\label{intro-thm:non-proper-geometric}
Fix the concept class $\mathfrak B(\Gamma, k)$ given in 
\Cref{def:bounded-surface-area-concepts}.
There exists an algorithm that makes $N_q = \poly(d/\eps)$ queries, 
draws $N_s=\poly(d/\eps)+O(k)^{\poly(\Gamma/\epsilon)}$ random labeled examples, 
runs in sample-polynomial time, 
and outputs a hypothesis $h: \R^d \to \{\pm 1\}$ with excess 0-1 error 
$\mathcal{E}_{0/1}(h, \mathfrak{B}(\Gamma, k); y) \leq \epsilon$.
\end{theorem}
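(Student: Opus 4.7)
The plan is to adapt the gradient-based dimension-reduction strategy from the real-valued case (\Cref{intro-thm:non-proper-real-valued}) to the Boolean setting by first applying Gaussian smoothing. Since $\pm 1$-valued functions are not differentiable, I would work with a smoothed surrogate $\tilde y(\x) \eqdef \E_{\vec g \sim \normal}[y(\x + \sigma \vec g)]$ at a scale $\sigma$ to be tuned below. Using Stein's identity, the gradient can be written as $\nabla \tilde y(\x) = (1/\sigma)\, \E_{\vec g \sim \normal}[\vec g \cdot y(\x+\sigma \vec g)]$, so each coordinate of $\nabla \tilde y(\x)$ can be estimated to inverse-polynomial accuracy from $\poly(d/\eps)$ queries to $y$ at points in a $\sigma$-neighborhood of $\x$.

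Let $f^* \in \mathfrak{B}(\Gamma, k)$ be a minimizer of the 0-1 error, with $\opt = \pr[f^*(\x)\ne y(\x)]$, and let $U$ be the associated $k$-dimensional subspace on which $f^*$ depends. Writing $y(\x)=f^*(\x)+\eta(\x)$ with $\|\eta\|_2^2 \leq 4\,\opt$, and smoothing both sides, the key structural facts are: (i) $\tilde{f^*}$ still depends only on $U$, so $\nabla \tilde{f^*}(\x)\in U$ for all $\x$; (ii) the assumed bound on the Gaussian surface area of all translates $f^*_{\vec r}$ implies, via the standard identity between smoothed-indicator gradients and perimeter, that $\E_{\x\sim\normal}[\|\nabla \tilde{f^*}(\x)\|_2^2] \leq O(\Gamma^2)$ uniformly in $\sigma$; and (iii) the contribution of $\eta$ to $\nabla \tilde y$ has squared norm at most $O(\opt/\sigma^2)$ in expectation.

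Next, using $N_s = \poly(d/\eps)$ samples $\x^{(i)} \sim \normal$ and $\poly(d/\eps)$ queries per sample, I would form the empirical influence matrix $\widehat{\vec M} = (1/N_s)\sum_i \nabla \tilde y(\x^{(i)}) \nabla \tilde y(\x^{(i)})^\top$ and apply standard matrix concentration to compare it against $\vec M \eqdef \E[\nabla \tilde y \,\nabla \tilde y^\top]$. Decomposing $\vec M = \vec M_{f^*} + \vec E$, the ``signal'' $\vec M_{f^*} = \E[\nabla \tilde{f^*} \nabla \tilde{f^*}^\top]$ has rank $\leq k$ and image contained in $U$, while $\|\vec E\|_{op} = O(\opt/\sigma^2)$. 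Taking $V$ to be the span of the eigenvectors of $\widehat{\vec M}$ with eigenvalue above a threshold $\tau = \poly(\eps/\Gamma)$, Davis–Kahan guarantees that $V$ has dimension $k' = O(k(\Gamma/\eps)^{O(1)})$ and that the mass of $\tilde{f^*}$ (and hence, by choosing $\sigma$ small, of $f^*$) on directions orthogonal to $V$ is at most $\eps/2$.

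Finally, within the low-dimensional subspace $V$, I would run $L_2$ polynomial regression using the Hermite basis up to degree $D = \poly(\Gamma/\eps)$; the Klivans–O'Donnell–Servedio-style approximation theorem for concepts of Gaussian surface area $\Gamma$ guarantees that degree $D$ suffices to $\eps/4$-approximate $f^*\circ \proj_V$ in $L_2$. Regression in $k'$ dimensions with degree $D$ uses $(k')^{D} = O(k)^{\poly(\Gamma/\eps)}$ samples and time; thresholding the resulting real-valued regressor then yields a Boolean hypothesis $h$ with excess 0-1 error at most $\eps$ (using that 0-1 error is bounded by $O(1)$ times the $L_2$ error for bounded concepts). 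The main obstacle is the simultaneous calibration of $\sigma$ and $\tau$: one must take $\sigma$ small enough that $\|\tilde{f^*}-f^*\|_2 \leq \eps$, yet large enough that $\opt/\sigma^2$ does not swamp the eigenvalues of $\vec M_{f^*}$ that correspond to directions in $U$ contributing nontrivially to $f^*$; the translate-invariant surface area condition in \Cref{def:bounded-surface-area-concepts} is exactly what lets us bound the gradient of $\tilde{f^*}$ uniformly and thus make this tradeoff work.
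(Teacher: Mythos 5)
Your decomposition $\vec M = \vec M_{f^*}+\vec E$ followed by Davis--Kahan is the step that breaks. You need the ``noise'' term $\vec E$, whose operator norm you bound by $O(\opt/\sigma^2)$, to be small relative to the eigenvalues of $\vec M_{f^*}$ so that the top eigenspace of $\vec M$ captures the directions on which $f^*$ has mass. But in the agnostic setting $\opt$ can be a constant (say $\opt = 1/4$), so making $\opt/\sigma^2 \lesssim \eps^2/k$ forces $\sigma \gtrsim \sqrt{k}/\eps \gg 1$, which is incompatible with your other requirement $\|\tilde{f^*}-f^*\|_2 \le \eps$ (which needs $\sigma \lesssim \eps^2/\Gamma$). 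There is no choice of $\sigma$ satisfying both, and moreover the nonzero eigenvalues of $\vec M_{f^*}$ can themselves be arbitrarily small (a halfspace in $U$ can be nearly vacuous), so even absent the calibration problem there is no eigengap for Davis--Kahan to exploit. The conclusion you want from this step --- that $\tilde{f^*}$ has little mass orthogonal to $V$ --- is simply false in general.

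The paper sidesteps this by proving a weaker and more robust structural claim that never requires separating the ``signal'' eigenvalues from the ``noise.'' The argument works entirely with the observable smoothed label $T_\rho y$: by construction every direction in $V^\perp$ has influence $\le \eta$ \emph{for $T_\rho y$}, so by a Gaussian Poincar\'e/Efron--Stein argument (\Cref{lem:gaussian-smoothing-geometric}) the variance of $T_\rho y$ along $U\cap V^\perp$ is $O(k\eta)$, regardless of how $\opt$ and $\sigma$ compare. One then takes as the candidate low-dimensional hypothesis the marginalization $\Pi_V f^*$ (a random translate/average of $f^*$, which stays in $\mathfrak B(\Gamma,k)$ by the translation-closure hypothesis), and bounds the correlation loss $\E[(f^*-\Pi_V f^*)T_\rho y]$ by Cauchy--Schwarz against exactly this small variance (\Cref{lem:correlation-to-CS}, \Cref{lem:correlating-convex-combinations}). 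No lower bound on the signal eigenvalues, no upper bound on $\opt/\sigma^2$, and no Davis--Kahan are needed. Two smaller issues: the paper uses the Ornstein--Uhlenbeck operator $T_\rho$ rather than plain Gaussian convolution $\x+\sigma\vec g$ precisely because $T_\rho$ preserves the Gaussian marginal and is self-adjoint on $L^2(\normal)$ (which is what makes the Ledoux--Pisier correlation-preservation argument of \Cref{lem:smoothing-boolean} go through); and the final step is the $L_1$-polynomial-regression learner of KKMS/KOS on the projected data rather than ``run $L_2$ regression and threshold,'' since the excess 0-1 guarantee is exactly what KKMS provides for bounded-surface-area classes and does not follow from a generic $L_2$-to-0/1 comparison.
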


\paragraph{Discussion} 
Some remarks are in order. We start by noting that, 
in the setting of \Cref{intro-thm:non-proper-geometric}, an exponential dependence on the parameter $\Gamma$ 
is {\em information-theoretically} necessary --- even with access to queries. Specifically, as shown in \cite{KOS:08}, there 
exists a Boolean concept class with Gaussian surface area $\Gamma$ 
(consisting of intersections of halfspaces) 
such that the total number of samples and queries required to obtain constant accuracy is $2^{\Omega(\Gamma)}$.

It is worth comparing \Cref{intro-thm:non-proper-geometric} with the best 
known algorithmic results in the standard agnostic PAC model (with random 
samples only). Klivans, O'Donnell and Servedio~\cite{KOS:08} showed that the 
$L_1$-polynomial regression algorithm of~\cite{KKMS:08} agnostically learns 
any concept class on $\R^d$ whose Gaussian surface area is at most $\Gamma>0$ 
with (sample and computational) complexity $d^{\poly(\Gamma/\eps)}$. 
Under the additional assumption that the concepts in the target class depend 
on an unknown $k$-dimensional subspace, for some parameter $k \ll d$, 
\Cref{intro-thm:non-proper-geometric} gives a significantly improved agnostic 
query algorithm with computational complexity 
$\poly(d) \,  k^{\poly(\Gamma/\eps)}$. 

For a concrete example, if the target class is the concept class consisting of any intersection of $\ell$ halfspaces, 
then we have that $k = \ell$ and $\Gamma = O(\sqrt{\log(\ell)})$~\cite{KOS:08}. 
So, as long as $\ell = O(1)$ or even $\ell = \polylog(d)$, query access allows us to obtain a super-polynomial complexity improvement.

\paragraph{Concrete Applications}
\Cref{intro-thm:non-proper-geometric} applies to a fairly general non-parametric class of functions. Here we provide specific applications to well-studied classes of Boolean functions. 

\smallskip

\noindent \textbf{\em Halfspaces.}
Arguably the simplest application is for the class of halfspaces.
A halfspace (or Linear Threshold Function) is any Boolean-valued function 
$f: \R^d \to \{ \pm 1\}$ of the form 
$f(\bx) = \sgn \left( \bw \cdot \bx -\theta \right)$, 
where $\bw \in \R^d$ is the weight vector and 
$\theta \in \R$ is the threshold. 
(The function $\sign: \R \to \{ \pm 1\}$ is defined as $\sgn(t)=1$ if $t \geq 0$ 
and $\sgn(t)=-1$ otherwise.) 
The problem of PAC learning halfspaces is a textbook problem in machine 
learning, whose history goes back to Rosenblatt's 
Perceptron algorithm~\cite{Rosenblatt:58}. As a corollary of \Cref{intro-thm:non-proper-geometric}, 
we obtain the following:

\begin{corollary}[Agnostic Query Learning of Halfspaces]
\label{intro-cor:agnostic-ltf-mq}
There exists an agnostic query learner for the class of halfspaces 
on $\R^d$ with running time $\poly(d) \, 2^{\poly(1/\eps)}$.
\end{corollary}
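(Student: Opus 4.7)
The plan is to derive the corollary as a direct instantiation of \Cref{intro-thm:non-proper-geometric} by verifying that the class of halfspaces lies inside $\mathfrak{B}(\Gamma, k)$ for $\Gamma = O(1)$ and $k = 1$. Once that membership is established, substituting these parameters into the runtime bound $\poly(d/\eps) + O(k)^{\poly(\Gamma/\eps)}$ of \Cref{intro-thm:non-proper-geometric} collapses the second summand to $O(1)^{\poly(1/\eps)} = 2^{\poly(1/\eps)}$, yielding precisely the claimed $\poly(d)\, 2^{\poly(1/\eps)}$ runtime.

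The verification has three routine pieces. First, for the low-dimensional structure, any halfspace $f(\x) = \sgn(\bw \cdot \x - \theta)$ depends only on the one-dimensional subspace $U = \mathrm{span}(\bw/\|\bw\|_2)$, since $f(\x) = f(\proj_U \x + \vec v) = f(\proj_U \x)$ for every $\vec v \perp \bw$; hence the subspace-dependence condition of \Cref{def:bounded-surface-area-concepts} is met with $k = 1$. Second, for the surface area, the decision boundary of a halfspace is an affine hyperplane, and a standard computation shows that its Gaussian surface area equals $\phi(\theta/\|\bw\|_2) \leq 1/\sqrt{2\pi}$, where $\phi$ denotes the standard normal density on $\R$. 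Third, for the translation-closure condition, the translate $f_{\vec r}(\x) = \sgn\bigl(\bw \cdot \x - (\theta - \bw \cdot \vec r)\bigr)$ is itself a halfspace with the same weight vector $\bw$, so $\Gamma(f_{\vec r}) \leq 1/\sqrt{2\pi}$ uniformly in $\vec r \in \R^d$. Together these show that the class of halfspaces is contained in $\mathfrak{B}(O(1), 1)$.

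Plugging $k = 1$ and $\Gamma = O(1)$ into \Cref{intro-thm:non-proper-geometric} gives $N_q = \poly(d/\eps)$ queries, $N_s = \poly(d/\eps) + O(1)^{\poly(1/\eps)} = \poly(d/\eps) + 2^{\poly(1/\eps)}$ random examples, and sample-polynomial runtime, which is $\poly(d) \, 2^{\poly(1/\eps)}$ overall. The output hypothesis $h: \R^d \to \{\pm 1\}$ satisfies $\mathcal{E}_{0/1}(h, \mathfrak{B}(O(1), 1); y) \leq \eps$, and since halfspaces form a subclass of $\mathfrak{B}(O(1), 1)$, the excess error with respect to halfspaces is at most $\eps$ as well, completing the proof.

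There is essentially no obstacle here beyond bookkeeping, since the work is done by \Cref{intro-thm:non-proper-geometric}; the only care required is to note that the resulting hypothesis is improper (the halfspace-properness strengthening is handled separately and is not required for this corollary) and that one must take the Gaussian surface area bound uniformly over all translates rather than only for the specific halfspace, which follows trivially from the translation-invariance of the halfspace family.
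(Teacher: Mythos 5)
Your proposal is correct and follows exactly the route the paper takes: the paper proves \Cref{intro-cor:agnostic-ltf-mq} by observing that halfspaces satisfy \Cref{def:bounded-surface-area-concepts} with $k=1$ and $\Gamma \le 1/\sqrt{2\pi}$ and then instantiating \Cref{intro-thm:non-proper-geometric}. You spell out the three verification steps (one-dimensional dependence, the surface-area bound $\phi(\theta/\|\bw\|_2) \le 1/\sqrt{2\pi}$, translation-closure) that the paper leaves implicit, which is fine and matches the intended argument.
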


\Cref{intro-cor:agnostic-ltf-mq} follows from \Cref{intro-thm:non-proper-geometric} by observing that halfspaces satisfy \Cref{def:bounded-surface-area-concepts} for $k=1$ and $\Gamma \leq 1/\sqrt{2 \pi}$. 

As mentioned in the introduction, \Cref{intro-cor:agnostic-ltf-mq} answers an open question independently posed by Feldman~\cite{Feldman08} 
and by Gopalan, Kalai, and Klivans~\cite{GKK08-open}. 
Specifically, as we explain below, it implies a {\em super-polynomial} computational separation between agnostic query learning and agnostic learning with random samples for the class of halfspaces.

In the vanilla agnostic PAC setting, the complexity of this problem 
is $d^{\poly(1/\eps)}$; the upper bound
follows via the $L_1$-polynomial regression 
algorithm~\cite{KKMS:08} which has complexity $d^{\Theta(1/\eps^2)}$~\cite{DKN10} 
in this setting. The matching lower bound follows from a recent 
line of work, both in the SQ model~\cite{GGK20, DKZ20, DKPZ21} 
and under plausible cryptographic assumptions~\cite{DKR23, Tiegel22}.

\smallskip

\noindent \textbf{\em Functions of Halfspaces.} A more general concept class where our general approach is applicable is that consisting of all intersections (or arbitrary functions) of a bounded number of halfspaces. For the special case of intersections, we show:

\begin{corollary}[Agnostic Query Learning for Intersections of Halfspaces]\label{intro-cor:agnostic-int-ltf-mq}
There exists an agnostic query learner for intersections of $\ell$ halfspaces on $\R^d$ with running time $\poly(d) \, O(\ell)^{\poly(\log(\ell)/\eps)}$.
\end{corollary}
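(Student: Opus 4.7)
The plan is to derive this corollary as a direct instantiation of \Cref{intro-thm:non-proper-geometric} for the concept class $\mathcal{C}_\ell$ of intersections of $\ell$ halfspaces on $\R^d$. I need to verify that $\mathcal{C}_\ell \subseteq \mathfrak{B}(\Gamma, k)$ for appropriate parameters, then plug those parameters into the theorem's runtime bound. Concretely, my target is $k = \ell$ and $\Gamma = O(\sqrt{\log \ell})$, which when substituted yields $\poly(d) \cdot \ell^{\poly(\sqrt{\log \ell}/\eps)} \leq \poly(d) \cdot O(\ell)^{\poly(\log(\ell)/\eps)}$, as claimed.

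First I would check the low-dimensional subspace condition. Any $f \in \mathcal{C}_\ell$ can be written as $f(\x) = \bigwedge_{i=1}^{\ell} \sgn(\vec w_i \cdot \x - \theta_i)$ for weight vectors $\vec w_1, \dots, \vec w_\ell \in \R^d$ and thresholds $\theta_i \in \R$. Setting $U \eqdef \mathrm{span}(\vec w_1, \dots, \vec w_\ell)$, each linear form $\vec w_i \cdot \x$ depends only on $\proj_U \x$, so $f(\x) = f(\proj_U \x)$ and $\dim(U) \leq \ell$. This verifies the second condition of \Cref{def:bounded-surface-area-concepts} with $k = \ell$.

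Next I would verify the translation-invariant surface-area bound. For any shift $\vec r \in \R^d$, the translate $f_{\vec r}(\x) = f(\x + \vec r)$ is again an intersection of $\ell$ halfspaces, specifically $f_{\vec r}(\x) = \bigwedge_{i=1}^{\ell} \sgn(\vec w_i \cdot \x - (\theta_i - \vec w_i \cdot \vec r))$, so the class $\mathcal{C}_\ell$ is closed under translations. By the classical Gaussian surface area bound for intersections of halfspaces established in \cite{KOS:08}, every such $f_{\vec r}$ satisfies $\Gamma(f_{\vec r}) \leq c \sqrt{\log \ell}$ for an absolute constant $c > 0$. Therefore $\mathcal{C}_\ell \subseteq \mathfrak{B}(\Gamma, k)$ for $\Gamma = O(\sqrt{\log \ell})$ and $k = \ell$.

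Finally, invoking \Cref{intro-thm:non-proper-geometric} with these parameters yields a query learner making $\poly(d/\eps)$ queries, drawing $\poly(d/\eps) + O(\ell)^{\poly(\sqrt{\log \ell}/\eps)}$ random labeled examples, running in sample-polynomial time, and achieving $\mathcal{E}_{0/1}(h, \mathcal{C}_\ell; y) \leq \eps$. Since $\poly(\sqrt{\log \ell}/\eps) \leq \poly(\log(\ell)/\eps)$, the overall running time is bounded by $\poly(d) \cdot O(\ell)^{\poly(\log(\ell)/\eps)}$, matching the claim. The statement is essentially a direct substitution into the main Boolean theorem; there is no real obstacle beyond citing the surface area estimate of \cite{KOS:08} and observing that projecting onto the span of the normals reveals the hidden low-dimensional structure.
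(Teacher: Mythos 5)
Your proposal is correct and follows essentially the same route as the paper's own proof (\Cref{corr:non-proper-interction}): you invoke \Cref{intro-thm:non-proper-geometric} after checking that intersections of $\ell$ halfspaces lie in $\mathfrak{B}(O(\sqrt{\log\ell}),\ell)$ via Nazarov's surface-area bound from \cite{KOS:08}. The extra verifications you carry out (closure under translations, the explicit span argument for the $k$-dimensional subspace) are facts the paper leaves implicit, but they do not change the argument.
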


\Cref{intro-cor:agnostic-int-ltf-mq} follows from \Cref{intro-thm:non-proper-geometric} by observing that intersections of $\ell$ halfspaces satisfy \Cref{def:bounded-surface-area-concepts} for $k=\ell$ and that their Gaussian surface area is bounded above by $\Gamma  = O(\sqrt{\log(\ell)})$, as shown by Nazarov (see, e.g.,~\cite{KOS:08, chernozhukov2017detailed}).

Analogously to the case of a single halfspace, the complexity of the agnostic learning problem with random samples is significantly worse (as long as $\ell \ll d$), namely $d^{\poly(\log(\ell)/\eps)}$;  
the upper bound follows from~\cite{KOS:08} and a qualitatively matching SQ lower bound was given 
in~\cite{DKPZ21, HsuSSV22}.

Finally, for arbitrary functions of $\ell$ halfspaces, the Gaussian surface area is bounded by $\Gamma = O(\ell)$, 
leading to the following corollary: 

\begin{corollary}[Agnostic Query Learning for Functions of Halfspaces]\label{intro-cor:agnostic-fun-ltf-mq}
There exists an agnostic query learner for arbitrary functions of $\ell$ halfspaces on $\R^d$ with running time $\poly(d) \, O(\ell)^{\poly(\ell/\eps)}$.
\end{corollary}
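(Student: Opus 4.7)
The plan is to invoke \Cref{intro-thm:non-proper-geometric} with appropriate parameters. Let $\mathcal{F}_\ell$ denote the concept class of all Boolean functions of $\ell$ halfspaces on $\R^d$. It suffices to show that $\mathcal{F}_\ell \subseteq \mathfrak{B}(\Gamma, k)$ with $k = \ell$ and $\Gamma = O(\ell)$; substituting these into the running time $\poly(d)\, k^{\poly(\Gamma/\eps)}$ of \Cref{intro-thm:non-proper-geometric} immediately yields the claimed bound $\poly(d)\, O(\ell)^{\poly(\ell/\eps)}$.

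First I would verify the low-dimensionality condition. Every $f \in \mathcal{F}_\ell$ has the form $f(\x) = F(\sgn(\bw_1 \cdot \x - \theta_1), \ldots, \sgn(\bw_\ell \cdot \x - \theta_\ell))$ for some $F: \{\pm 1\}^\ell \to \{\pm 1\}$, weight vectors $\bw_i \in \R^d$, and thresholds $\theta_i \in \R$. Taking $U = \mathrm{span}(\bw_1, \ldots, \bw_\ell)$, whose dimension is at most $\ell$, each halfspace satisfies $\bw_i \cdot \x = \bw_i \cdot \proj_U \x$, so $f(\x) = f(\proj_U \x)$. This verifies condition (2) of \Cref{def:bounded-surface-area-concepts} with $k = \ell$.

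Next I would bound the Gaussian surface area. The decision boundary of $f$ is contained in the union $\bigcup_{i=1}^\ell H_i$ of the affine hyperplanes $H_i = \{\x : \bw_i \cdot \x = \theta_i\}$, since $f$ can change value only when at least one of the $\ell$ halfspace indicators flips. Using the Minkowski-content definition of GSA together with the inclusion $(A \cup B)_\delta \subseteq A_\delta \cup B_\delta$, we obtain the subadditivity $\Gamma(A \cup B) \leq \Gamma(A) + \Gamma(B)$. The GSA of a single hyperplane equals the one-dimensional Gaussian density at the signed distance to the origin, hence is at most $1/\sqrt{2\pi}$. Combining gives $\Gamma(f) \leq \ell/\sqrt{2\pi} = O(\ell)$.

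Finally, the translation condition (1) of \Cref{def:bounded-surface-area-concepts} is automatic: a translation $\x \mapsto \x + \vec r$ leaves each $\bw_i$ unchanged and only shifts $\theta_i$ by $\bw_i \cdot \vec r$, so $f_{\vec r} \in \mathcal{F}_\ell$ and the GSA bound from the previous step applies verbatim. I expect the entire argument to be pure bookkeeping; the only mildly non-trivial ingredient is the subadditivity of GSA under finite unions, which is immediate from the Minkowski-content definition, so there is no significant obstacle and the corollary reduces to a parameter substitution in the master theorem.
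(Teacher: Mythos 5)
Your proof is correct and takes essentially the same route as the paper: verify that arbitrary functions of $\ell$ halfspaces lie in $\mathfrak{B}(\Gamma, k)$ with $k=\ell$ and $\Gamma = O(\ell)$, then substitute into \Cref{intro-thm:non-proper-geometric}. The only difference is that where the paper cites~\cite{KOS:08} (Fact 17) for the surface-area bound, you derive it from the Minkowski-content definition via the inclusion $K_\delta \setminus K \subseteq (\partial K)_\delta \subseteq \bigcup_i (H_i)_\delta$; note that the $\delta$-neighborhood of a hyperplane is a slab of width $2\delta$, so the per-hyperplane contribution is $2/\sqrt{2\pi}$ rather than $1/\sqrt{2\pi}$, a harmless factor that leaves the $O(\ell)$ bound intact.
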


\noindent Similarly, the best known complexity upper bound with random samples is $d^{\poly(\ell/\eps)}$.

\smallskip

\noindent \textbf{\em Low-degree Polynomial Threshold Functions (PTFs).} 
Another notable application is for the class of low-degree PTFs that depend on a low-dimensional 
subspace. A degree-$\ell$ PTF is any Boolean function $f: \R^d \to \{ \pm 1\}$ 
of the form $h(\bx) = \sgn \left( p(\x)\right)$, where $p: \R^d \to \R$ is a degree at most $\ell$ 
polynomial. Low-degree PTFs have been extensively studied in theoretical machine learning and 
specifically in the context of agnostic learning~\cite{DHK+:10, DSTW:10, DRST14, Kane11}.

Here we consider a natural subclass of low-degree PTFs where the underlying polynomial is a subspace junta.
Specifically, we consider the class of Boolean functions of the form
$f(\bx) = \sgn \left( p(\proj_U \x)\right)$, where 
$U$ is an unknown $k$-dimensional subspace and $p$ is a degree-$\ell$ polynomial in $k$ variables. 
Since the Gaussian surface area of this class of functions is bounded above by $\Gamma = O(\ell)$~\cite{Kane11}, we obtain the following corollary: 

\begin{corollary}[Agnostic Query Learning for Low-Dimensional PTFs]\label{intro-cor:agnostic-ptf-mq}
There exists an agnostic query learner for degree-$\ell$ PTFs on $\R^d$ that depend on an unknown $k$-dimensional subspace with running time 
$\poly(d) \, O(k)^{\poly(\ell/\eps)}$.
\end{corollary}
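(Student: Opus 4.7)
The plan is to derive the corollary as a direct instantiation of \Cref{intro-thm:non-proper-geometric}. Concretely, I would show that the class of degree-$\ell$ PTFs depending on an unknown $k$-dimensional subspace is contained in the non-parametric class $\mathfrak{B}(\Gamma, k)$ of \Cref{def:bounded-surface-area-concepts} for $\Gamma = O(\ell)$, and then simply plug these parameters into the complexity bound of \Cref{intro-thm:non-proper-geometric}, which yields a running time of $\poly(d)\, k^{\poly(\Gamma/\eps)} = \poly(d)\, O(k)^{\poly(\ell/\eps)}$.

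To verify containment in $\mathfrak{B}(\Gamma,k)$, I need to check the two conditions of \Cref{def:bounded-surface-area-concepts}. The subspace-junta condition is immediate: by assumption each target function has the form $f(\x) = \sgn(p(\proj_U \x))$ for some $k$-dimensional subspace $U$ and degree-$\ell$ polynomial $p$, so $f$ factors through $\proj_U$. For the surface area condition, I need a bound on $\Gamma(f_{\vec r})$ that is \emph{uniform over all translations} $\vec r \in \R^d$. The key observation is that the class of degree-$\ell$ PTFs is closed under translations: for any $\vec r \in \R^d$,
\[
f_{\vec r}(\x) = \sgn\bigl(p(\proj_U \x + \proj_U \vec r)\bigr) = \sgn\bigl(q(\proj_U \x)\bigr),
\]
where $q(\y) \eqdef p(\y + \proj_U \vec r)$ is again a degree-$\ell$ polynomial in $k$ variables. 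Hence every translate $f_{\vec r}$ is itself a degree-$\ell$ PTF, and Kane's bound~\cite{Kane11} on the Gaussian surface area of degree-$\ell$ PTFs gives $\Gamma(f_{\vec r}) \leq O(\ell)$ uniformly in $\vec r$.

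With both conditions verified and $\Gamma = O(\ell)$, applying \Cref{intro-thm:non-proper-geometric} yields an agnostic query learner making $\poly(d/\eps)$ queries and using $\poly(d/\eps) + O(k)^{\poly(\ell/\eps)}$ additional random examples, with total running time $\poly(d)\, O(k)^{\poly(\ell/\eps)}$, as claimed. The only nontrivial step in this reduction is the translation-closure argument above; once that is in place, the rest is a direct invocation of the main theorem.
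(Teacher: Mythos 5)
Your proof is correct and takes essentially the same approach as the paper, which simply cites Kane's $O(\ell)$ bound on the Gaussian surface area of degree-$\ell$ PTFs and invokes \Cref{intro-thm:non-proper-geometric}. The paper leaves the translation-closure check implicit; you spell it out correctly by observing that shifting the argument of the underlying polynomial preserves both the degree and the dependence on the subspace $U$.
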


The above running time bound should be compared with the best known complexity bound of $d^{\poly(\ell/\eps)}$ for agnostic learning with samples~\cite{Kane11}.

\Cref{tab:sample_complexity_comparison} summarizes our contributions for Boolean concept classes in comparison to prior work on agnostic PAC learning (with random samples only).

\begin{table}
\centering
\caption{Learning Boolean Concepts using Queries: 
Running time comparisons of the best known agnostic learners (using random samples)
with our Influence PCA technique (using queries).}
\begin{tabular}{@{}lcc@{}}
\toprule
Concept Class  &  PAC (without queries) &  PAC+Queries 
\\ 
 & $L_1$ Regression \cite{KOS:08} & {\bf Influence PCA (Ours)} \\
\midrule
Single Halfspace& $d^{\poly(1/\eps)}$ & $\poly(d) \, 2^{\poly(1/\eps)}$ \\

Intersections of $k$ Halfspaces& $d^{\poly(\log(k)/\eps)}$ & $\poly(d) \, 2^{\poly(\log (k)/\eps)}$ \\

Functions of $k$ Halfspaces& $d^{\poly(k/\eps)}$ & $\poly(d) \, 2^{\poly(k/\eps)}$ \\

Degree-$\ell$, $k$-Dim. PTFs& $d^{\poly(\ell/\eps)}$ & $\poly(d) \, O(k)^{\poly(\ell/\eps)}$ \\

Low-Dim. Geometric Concepts& $d^{\poly(\Gamma/\eps)}$ & $\poly(d) \,  O(k)^{\poly(\Gamma/\eps)}$ \\
\bottomrule
\end{tabular}
\label{tab:sample_complexity_comparison}
\end{table}

 \section{Technical Overview}
\label{sec:techniques}

We leverage query access to develop a unified dimension-reduction 
framework for agnostically learning both real-valued and Boolean-valued 
multi-index models. As already explained after the statement 
of \Cref{intro-thm:non-proper-real-valued}, 
natural dimension-reduction approaches that work in the realizable (noiseless) setting
inherently cannot be extended to the agnostic setting.

At a high-level, our framework reduces the problem of agnostically 
learning MIMS in $d$ dimensions to agnostically learning the same 
class in $\poly(k/\epsilon)$ dimensions. 
It consists of three main steps:
\begin{itemize}[leftmargin=*]
\item 
First we use queries to the label function
to simulate gradient queries to a ``smoothed'' version $\wt{y}(\x)$ 
of the adversarial label $y(\x)$.  We show that, as long as the concept class
of interest has bounded variation (real-valued MIMs of 
\Cref{def:bounded-variation-concepts}) 
or bounded Gaussian surface area 
(Boolean MIMs of \Cref{def:bounded-surface-area-concepts}),
a hypothesis that has low excess-error with respect to 
the smoothed label $\wt y$ will also have low excess error with
respect to the original label $y(\x)$; 
see \Cref{inf-prop:informal-smoothing-erm}.  

\item 
The second step uses gradient queries to the function $\wt y$ 
in order to  compute an accurate estimate of the influence matrix 
of the ``smoothed'' label, namely 
$\vec M = \E_{\x \sim \normal}[\nabla \wt y(\x) (\nabla \wt y(\x))^\top]$. 
We perform PCA on $\vec M$ and find the top eigenvectors 
(i.e., the eigen-directions whose corresponding eigenvalues 
are larger than some threshold).  
This method is known as outer gradient product~\cite{xia2002adaptive}; 
in the context of learning/testing Boolean concepts, it has been used in 
\cite{DKKTZ21,DeMN21}. (See \Cref{sec:related-work} for a detailed summary 
of related work.)
We show that those ``high-influence'' directions form 
a low-dimensional (i.e., of dimension $\poly(k/\epsilon)$) 
subspace such that there exists a hypothesis that 
(i) depends only on the low-dimensional subspace, 
(ii) has bounded surface area/variation, 
and (iii) is close to our target function. That is, 
we effectively reduce the dimension of our original learning task 
from $d$ down to $\poly(k/\epsilon)$.

\item 
The third step is to solve an agnostic learning task 
of a bounded variation/surface area function
in the low-dimensional subspace spanned by the top eigenvectors of $\vec M$.  
For this step, for learning real-valued MIMs, we rely on a generic 
$L_2$-regression algorithm; for learning Boolean concepts, 
we use the $L_1$-polynomial regression agnostic learner of \cite{KKMS:08, KOS:08}.  Those methods yield non-proper learning algorithms -- to obtain proper-learners, 
we essentially perform a brute-force search over a net 
of the \emph{low-dimensional} parameter space found in the previous step.
\end{itemize}

\subsection{From Zero- to First-Order: Gradient Queries via Oracle Queries}
Intuitively, having access  to queries, for some example $\x$, 
we can ask for the values of $y(\x)$ in a ``small'' neighborhood around $\x$ and therefore
estimate the gradient $\nabla_\x y(\x)$.  
The first issue that we have to overcome is that the observed label $y(\x)$ is not guaranteed to be a differentiable function (even if the underlying target function is). 
To circumvent this issue, 
we employ a strategy similar to the Gaussian convolution technique 
used in zero-order (gradient-free) optimization \cite{nesterov2017random}.
In particular, to estimate the gradient of a function $y(\cdot)$ at $\x$ 
only having access to a value oracle, the method 
samples $\vec z$ from a mean-zero Gaussian with small covariance, 
i.e., $\vec z \sim \normal(\vec 0, \rho \vec I)$ for some small $\rho$, 
and then asks for the value of the function at $\vec x + \rho \vec z$.
Even if the function $y(\cdot)$ itself is non-smooth, 
then, by Stein's identity, we have
$\E_{\vec z \sim \normal}[\vec z ~ y(\vec x + \rho \vec z)] \propto
\nabla \wt y(\x)$, where $\wt y(\x)$ is a smoothed version of $y(\x)$, 
specifically 
$\wt y(\x) = \E_{\vec z \sim \normal}[y(\x + \rho \vec z)]$. 
By drawing $N = \poly(d/\epsilon)$ Gaussian samples 
$\vec z^{(1)}, \ldots, \vec z^{(N)}$, 
we can empirically estimate
the gradient of $\wt y(\cdot)$ at every desired point $\x \in \R^d$. 
Therefore, by performing $N$ queries on the points $\vec z^{(i)}$, 
we obtain an approximation of the gradient $\nabla \wt y(\x)$ 
for any $\x$. 
Even though the above technique yields gradient estimates, 
it comes with a cost: \emph{to obtain the ``smooth'' label $\wt y(\x)$, 
we add noise to the (already corrupted) label $y(\x)$. 
Our plan is to argue that learning using the resulting smoothed labels 
$\wt{y}(\x)$ yields a good classifier for the original instance --- as long as the ``smoothing'' parameter $\rho$ is sufficiently small.}

\paragraph{\OU Smoothing}
One could hope that if we add a small amount of noise to $y(\x)$, the smooth label 
$\wt y(\x)$ will be close to $y(\x)$ (at least in the $L_2$-sense).
Unfortunately, this is not true (even in one dimension), 
as $y(\x)$ may be an arbitrarily complex
function and after smoothing $\wt y(\x)$ may be far from $y(\x)$; 
see \Cref{fig:label-smoothing}. 
To be able to learn from the smoothed instance, we need two properties:
(i) the resulting marginal distribution on the examples 
must be close to the initial $\x$-marginal, 
and (ii) the smoothing operation must not increase 
the excess error of the functions  
in the hypothesis class by a lot. 
In other words, a hypothesis that 
performs well with respect to the smoothed label $\wt{y}(\x)$ should also 
perform well with respect to the original label $y(\x)$.
Applying the Gaussian convolution smoothing $\x + \rho \z$ yields 
a normal distribution that has covariance $(1 +\rho) \vec I $. 
In order to make this distribution be close to a standard normal 
(say, in total variation distance), one would need to apply 
a tiny amount of noise, i.e., $\rho$ should be at most $\poly(1/d)$. 
To avoid changing the $\x$-marginal of the instance, 
instead of simply convolving with a Gaussian kernel, 
we  apply the \OU noise operator $T_\rho$ that rescales $\x$ 
and corresponds to the transformation 
$\wt{\vec x} = \sqrt{1-\rho^2} \vec x + \rho \vec z$. 
We observe that $\wt{\vec x}$ follows a standard normal distribution. 
The resulting ``smoothed'' label $\wt{y}$ is now defined 
as $T_\rho y(\x) = \E_{\vec z \sim \normal}[y(\wt{\x})]$.
Even though the marginal of $\wt{\x}$ matches exactly with the initial marginal,
we have introduced noise to the instance 
and we still need to show that this
does not significantly affect the performance of the hypotheses 
in the function class of interest.

We show that, regardless of how complex the label $y(\x)$ is, 
if the function class of interest is ``well-behaved'' 
--- in the sense that it only contains concepts with 
bounded variation/Gaussian surface area --- 
the \OU noise process will not significantly affect 
the excess error of a hypothesis $h$.

\begin{proposition}[Informal -- \OU Smoothing Preserves the Risk-Minimizer]
\label{inf-prop:informal-smoothing-erm}
Let $y: \R^d \mapsto \R$ and $C$ 
be a class of functions over $\R^d$ 
such that for every $f \in C$ it holds 
$\E_{\x \sim \normal}[\| \nabla f(\x) \|_2^2] \leq L$.
Let $\wt f \in C$ be an $L_2$ risk minimizer with respect to the smoothed label $T_\rho y$ (see \Cref{def:OU-operator}), i.e., $\wt f \in \argmin_{h \in C}\E_{\x \sim \normal}[(h(\x) - T_\rho y(\x))^2]$. Then we have that
\[
\pr_{\x \sim \normal}[(\wt f(\x) - y(\x))^2]
\leq 
\inf_{f \in C}
\pr_{\x \sim \normal}[ (f(\x) - y(\x))^2]
+ O(\rho^2 L) \,.
\]
\end{proposition}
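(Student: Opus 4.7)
The plan is to reduce the claim to a Gaussian Poincaré--type estimate for the OU semigroup restricted to the class $C$, combined with the optimality of $\wt f$ on the smoothed problem and the self-adjointness of $T_\rho$ in $L^2(\normal)$.

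First I would establish the key smoothness estimate: for every $f \in C$, $\|(I-T_\rho)f\|_2^2 \leq \rho^2\,\E[\|\nabla f\|^2] \leq \rho^2 L$. In the Hermite basis $f = \sum_\alpha c_\alpha H_\alpha$, one has $T_\rho f = \sum_\alpha(1-\rho^2)^{|\alpha|/2}c_\alpha H_\alpha$, so combining the elementary inequality $(1-(1-\rho^2)^{k/2})^2 \leq k\rho^2$ (valid for $k\ge 1,\, \rho\in[0,1]$; checked separately for $k=1$ and $k\ge 2$ via Bernoulli) with the Hermite identity $\E[\|\nabla f\|^2]=\sum_\alpha|\alpha|c_\alpha^2$ yields the estimate. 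This is a quantitative manifestation of the Gaussian Poincaré inequality adapted to the OU semigroup.

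Next, fix a true risk minimizer $f^* \in \arg\min_{f\in C}\E[(f-y)^2]$. A routine expansion of the squared loss $\|h-y\|_2^2 - \|h-T_\rho y\|_2^2$ applied to $h = \wt f$ and $h = f^*$ and then subtracted gives the identity
\begin{equation*}
\E[(\wt f - y)^2] - \E[(f^* - y)^2] = \underbrace{\Bigl(\E[(\wt f - T_\rho y)^2] - \E[(f^* - T_\rho y)^2]\Bigr)}_{\le\,0\text{ by optimality of }\wt f} + 2\,\bigl\langle \wt f - f^*,\, T_\rho y - y\bigr\rangle,
\end{equation*}
so the excess risk is at most the cross term $2\langle \wt f - f^*, T_\rho y - y\rangle$. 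Invoking the self-adjointness of $T_\rho$ in $L^2(\normal)$ rewrites this as $-2\langle (I-T_\rho)(\wt f - f^*),\, y\rangle$, and applying the Poincaré estimate above to $g := \wt f - f^*$ (which satisfies $\E[\|\nabla g\|^2]\le 4L$) gives $\|(I-T_\rho)g\|_2 \le 2\rho\sqrt L$. To convert this into the claimed $O(\rho^2 L)$ bound — as opposed to the weaker $O(\rho\sqrt L\,\|y\|_2)$ produced by naïve Cauchy--Schwarz — one splits the operator symmetrically using its spectral square root: write $\langle(I-T_\rho)g,y\rangle = \langle(I-T_\rho)^{1/2}g,\,(I-T_\rho)^{1/2}y\rangle$ (legitimate since $I-T_\rho$ is PSD on Hermite modes), and then control $\|(I-T_\rho)^{1/2}y\|_2^2 = \langle(I-T_\rho)y,y\rangle$ by re-using the optimality of $\wt f$ on the smoothed problem together with the Poincaré estimate applied to $f^*$.

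The delicate part is precisely this last step: $y$ is an arbitrary adversarial label with no a priori $L^2$ control, so the $\|y\|_2$ factor from a naïve Cauchy--Schwarz cannot be absorbed. The heart of the argument is that $y$ interacts with $g = \wt f - f^*$ only through the $T_\rho$-smoothed component, which is close to $\wt f\in C$ by the smoothed optimality and therefore inherits the Sobolev-type bound from $C$. This is what decouples the excess error from the magnitude of the adversarial label and produces the final $O(\rho^2 L)$ dependence.
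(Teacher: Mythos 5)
Your steps 1--4 are sound and match the paper's approach in spirit: the paper also uses self-adjointness of $T_\rho$ together with a Poincar\'e/correlated-differences estimate ($\|(I-T_\rho)f\|_{L^2}^2 \lesssim \rho^2\,\E[\|\nabla f\|_2^2]$, their \Cref{clm:difference}) to reduce the excess-risk comparison to a cross-correlation term of the form $\E\bigl[\,(I-T_\rho)g\cdot y\,\bigr]$. Up to that point the two arguments differ only cosmetically (they pass through $\sup_{f\in C}|\E[f(T_\rho y-y)]|$, you subtract the smoothed losses and discard the nonpositive difference by optimality).

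The genuine gap is Step 5. Your spectral-square-root split correctly gives the factor $\|(I-T_\rho)^{1/2}g\|_2 = O(\rho\sqrt{L})$, but the other factor is $\|(I-T_\rho)^{1/2}y\|_2^2 = \langle(I-T_\rho)y,y\rangle = \sum_\alpha\bigl(1-(1-\rho^2)^{|\alpha|/2}\bigr)\hat y_\alpha^2$, which for a high-frequency $y$ is comparable to $\|y\|_2^2$ and is completely uncontrolled. Optimality of $\wt f$ only constrains $\|\wt f - T_\rho y\|_2$ relative to $\|f^* - T_\rho y\|_2$; it gives no information whatsoever about the high-order Hermite mass of $y$ itself, so there is no way to "re-use" it to bound $\langle(I-T_\rho)y,y\rangle$. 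And this is not a technical obstacle: the stated bound $O(\rho^2 L)$ is \emph{false} without a second-moment assumption on $y$. Take $d=1$, $C = \{b\,H_N : |b|\le \sqrt{L/N}\}$ (so $\E[\|\nabla f\|_2^2]\le L$ throughout), and $y = \mu H_N$. Then $T_\rho y = \gamma\mu H_N$ with $\gamma = (1-\rho^2)^{N/2}$, so $\wt f$ has coefficient $\min(\gamma\mu,\sqrt{L/N})$ while $f^*$ has coefficient $\min(\mu,\sqrt{L/N})$; optimizing over $\mu\in[\sqrt{L/N},\ \sqrt{L/N}/\gamma]$ gives excess error of order $\tfrac{L}{N}\cdot\gamma^{-1} \approx \tfrac{L}{N}e^{N\rho^2/2}$, which for fixed $\rho,L$ diverges as $N\to\infty$ while $\rho^2 L$ stays fixed. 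The paper's own formal version of this statement (\Cref{lem:smoothing-real}) accordingly \emph{assumes} $\E[y^2]\le M$, uses the plain Cauchy--Schwarz route you call na\"ive, and obtains a bound that carries a $\sqrt M$ factor; in the main algorithm (\Cref{thm:non-proper-real-valued}) the label is explicitly truncated beforehand to guarantee such a bound. So the informal statement you are proving is idealized, and your attempt to prove it literally cannot succeed.
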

At a high-level, the effect of the noise operator $T_\rho$ on the risk minimizer 
is milder when the function does not change very rapidly. 
To prove \Cref{inf-prop:informal-smoothing-erm}, 
we show that the correlation of any hypothesis $f$ 
with bounded variation  is approximately preserved 
when we replace $y(\x)$ with $T_\rho y(\x)$. 
The correlation of $f$ with respect to $T_\rho y(\x)$ is 
$\E_{\x \sim \normal}[f(\x) T_\rho y(\x)]$.  However, since $T_\rho$ is a symmetric linear operator, we can equivalently apply the smoothing $T_\rho$ 
to $f$ and consider $\E_{\x \sim \normal}[T_\rho f(\x)  y(\x)]$.
Since $f(\x)$ has bounded variation, we can now show via a result on noise sensitivity for real-valued functions, that $T_\rho f(\x)$ is indeed 
close to $f(\x)$ in $L_2^2$. Therefore, the correlation $\E_{\x \sim \normal}[T_\rho f(\x)  y(\x)]$ 
is close to  $\E_{\x \sim \normal}[f(\x)  y(\x)]$. 
The fact that $T_\rho f$ and $f$ are close is intuitively clear: 
the smaller the variation of $f$, $\E_{\x \sim \normal}[\| \nabla f(\x)\|_2^2]$, 
the smaller the effect of slightly perturbing a point $\x$ will
have on the $L_2^2$, as the $L_2^2$ distance between
$f(\x)$ and $f(\sqrt{1-\rho} \x + \rho z)$ is roughly proportional 
to $\rho^2 \|\nabla f(\x)\|_2^2$.
For more details, we refer to \Cref{sec:membership-derivatives} and \Cref{lem:smoothing-real}.

For learning Boolean concepts, we identify 
their Gaussian Surface Area to be the crucial complexity measure that
determines the effect the smoothing operator $T_\rho$ has on the agnostic
learning instance.   Similarly to our result for real-valued functions, 
we reduce preserving the excess error to preserving the correlation of concepts, 
i.e., ensuring that 
$\E_{\x \sim \normal}[f(\x) T_\rho y(\x)] - \E_{\x \sim \normal}[f(\x) y(\x)] $ 
is small for all concepts of interest $f$ --- 
see \Cref{lem:smoothing-boolean} --- 
and then use a result of Ledoux \cite{Ledoux:94} and Pisier \cite{Pisier:86} 
to show that correlations are indeed approximately preserved when the concepts have bounded Gaussian Surface Area; see \Cref{lem:smoothing-boolean}.

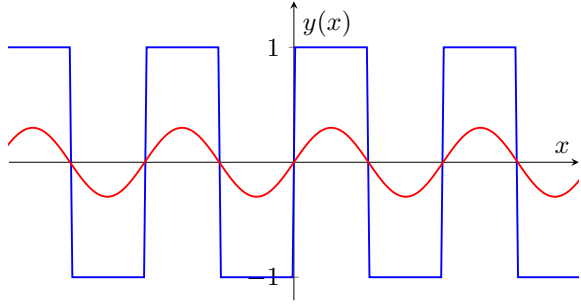
\begin{figure}[h!]
\centering
\begin{tikzpicture}[scale=0.9]
  \begin{axis}[
    xlabel=$x$,
    ylabel=$y(x)$,
    xmin=-2,
    xmax=2,
    ymin=-1.2,
    ymax=1.4,
    axis lines=center,
    width=10cm,
    height=6cm,
    domain=-2*pi:2*pi,
    samples=700,
    xtick=\empty,
    ]
    \addplot+[mark=none,thick] {sign(sin(6 * deg(x)))};
    \addplot+[mark=none,thick] {0.3 * sin(6 * deg(x))};
  \end{axis}
\end{tikzpicture}
\caption{Smoothing the label $y(\x)$. The label $y(\x)$ corresponds to the ``square wave'' (shown in blue).  The smoothed version $\wt{y}(\x)$ is the red curve.  We observe that $y(\x)$ and $\wt{y}(\x)$ are far (in the $L_2$ sense).
}
\label{fig:label-smoothing}
\end{figure}

\subsection{Learning Bounded Variation Functions via Influence PCA}

\paragraph{Real-Valued MIMs} 
Up to this point, we have established that 
(i) we can leverage query access 
in order to efficiently simulate gradient queries 
for the \OU smoothed label $T_\rho y$, 
and (ii) learning from the smoothed label $T_\rho y$ 
is approximately equivalent to learning from the original label 
$y(\x)$.  We will now describe an efficient learner that uses 
the gradient queries to $T_\rho y$. 

Our learner is based on estimating the influence matrix of $T_\rho y$, 
i.e.,  
$\vec M = \E_{\x \sim \normal}[\nabla T_\rho y(\x) (\nabla T_\rho y(\x))^\top]$, 
using  gradient queries.
Our main structural result is a general dimension-reduction tool establishing
the following: given (an approximation of) the influence matrix 
of the smooth function $T_\rho y $, we can perform PCA and 
learn a low-dimensional subspace $V$ so that 
a bounded variation function that depends only on $V$ 
can achieve $\eps$ excess error with respect to $T_\rho y$ in $L_2^2$.
This dimension-reduction step crucially relies on the target concept being
low-dimensional (see \Cref{def:bounded-variation-concepts}).

In fact, our dimension-reduction proof for real-valued concepts 
shows directly that a low-degree polynomial that depends only 
on the low-dimensional space $V$ exists.  
\begin{proposition}[Informal-- Dimension Reduction via Influence PCA: 
Real-Valued Functions]\label{info-prop:main-prop-real-values}
Let $\wt y(\x) = T_\rho y(\x)$ and  let $\vec M = \E_{\x \sim \normal}[ \nabla \wt y (\x) (\nabla \wt y(\x))^\top]$. Moreover, let $V$ be the subspace spanned by 
all the eigenvectors of $\vec M$ whose corresponding eigenvalues are 
at least $ \eps^2/(k\valb)$. The following holds:
\begin{itemize}[leftmargin=*]
\item 
The dimension of $V$ is at most $\poly(\valb,k,1/\rho, 1/\epsilon)$. 
\item There exists a polynomial $q: V \mapsto \R$ of degree $m = O(L/\epsilon^2)$ such that 
\[
\E_{\x\sim \normal}[(q(\proj_V (\x) ) -\wt y (\x))^2]\leq \inf_{f\in \mathfrak R(\valb,\gradb, k) }\E_{\x\sim \normal}[( f(\x) - \wt{y}(\x) )^2]+\eps\;.
\]
\end{itemize}
\end{proposition}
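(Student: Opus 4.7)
The dimension of $V$ equals the number of eigenvalues of $\vec M$ at or above the threshold $\eps^2/(kM)$, which by a trace argument is at most $kM \cdot \tr(\vec M)/\eps^2$.  Since $\tr(\vec M) = \Ex_{\x \sim \normal}[\|\nabla \wt y\|_2^2]$ and $\wt y = T_\rho y$, the Hermite expansion of $\wt y$ shows $\tr(\vec M) \leq O(\|y\|_2^2/\rho^2)$ (each mode $H_\alpha$ contributes $(1-\rho^2)^{|\alpha|} \,|\alpha|$ to the sum, which is $O(1/\rho^2)$).  After a standard truncation of $y$ that enforces $\|y\|_2^2 = O(M + \opt) = O(M)$ at negligible cost to the excess error, this gives $\dim V = \poly(k, M, 1/\rho, 1/\eps)$.

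\textbf{Part (ii), setup and degree truncation.}  Let $f^* \in \mathfrak R(M,L,k)$ be a risk-minimizer of $\wt y$, depending on a $k$-dimensional subspace $U$, and let $P_V$ and $\pi_m$ denote the $L^2(\normal)$-orthogonal projections onto $V$-measurable functions and onto polynomials of degree at most $m$ on $V$, respectively.  Take the candidate $q := \pi_m P_V f^*$.  By orthogonality of the two projections,
\[
\|q - f^*\|_2^2 \;=\; \|(I - P_V) f^*\|_2^2 + \|(I - \pi_m) P_V f^*\|_2^2,
\]
and expanding $\|q - \wt y\|_2^2 - \|f^* - \wt y\|_2^2 = \|q - f^*\|_2^2 + 2 \la q - f^*, f^* - \wt y\ra$ together with Cauchy-Schwarz on the cross-term reduces the task to proving $\|q - f^*\|_2^2 \leq O(\eps^2/M)$.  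The degree-truncation piece is easy: conditioning does not increase the gradient norm, so $\Ex[\|\nabla P_V f^*\|_2^2] \leq L$, and a Markov bound applied Hermite-coefficient by Hermite-coefficient yields $\|(I - \pi_m) P_V f^*\|_2^2 \leq L/m$, which is $\leq \eps^2/(4M)$ by choosing $m = \Theta(LM/\eps^2)$.

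\textbf{Part (ii), main obstacle.}  What remains --- and is the hardest step --- is the dimension-reduction bound $\|(I - P_V) f^*\|_2^2 \leq O(\eps^2/M)$.  By Gaussian Poincar\'e applied conditionally on $\proj_V \x$ (the $V^\perp$-marginal is a standard Gaussian), this is at most $\Ex[\|\proj_{V^\perp} \nabla f^*\|_2^2]$.  Because $\nabla f^*(\x)$ lies pointwise in $U$, its $V^\perp$-projection lies in $\proj_{V^\perp}(U)$, a subspace of $V^\perp$ of dimension $k' \leq k$; letting $\{\vec w_i\}_{i=1}^{k'}$ be an orthonormal basis, we have $\Ex[\|\proj_{V^\perp} \nabla f^*\|_2^2] = \sum_i \vec w_i^\top \vec M_{f^*} \vec w_i$.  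The operator Cauchy-Schwarz $\vec M_{f^*} \preceq 2\vec M + 2\vec M_{\wt y - f^*}$ together with the defining property $\vec w^\top \vec M \vec w \leq \eps^2/(kM)$ for $\vec w \in V^\perp$ bounds this by $2\eps^2/M + 2\sum_i \vec w_i^\top \vec M_{\wt y - f^*} \vec w_i$.  The plan for the remaining sum is to exploit the OU structure: write $\wt y - f^* = T_\rho(y - f^*) + (T_\rho f^* - f^*)$, apply Stein's identity $\nabla T_\rho h(\x) = \tfrac{\sqrt{1-\rho^2}}{\rho}\Ex_\z[\z\, h(\sqrt{1-\rho^2}\x + \rho \z)]$ to the first piece (paying one factor of $1/\rho$ in exchange for a bound proportional to $\|y - f^*\|_2^2$), and use $\nabla T_\rho f^* = \sqrt{1-\rho^2}\,T_\rho \nabla f^*$ to control the second piece by $O(\rho^2 L)$.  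Invoking \Cref{inf-prop:informal-smoothing-erm} converts $\|y - f^*\|_2^2$ into $2\opt_{\wt y} + O(\rho^2 L)$, and calibrating the threshold $\eps^2/(kM)$ (which only inflates $\dim V$ by constants) absorbs these factors to give $\|(I - P_V) f^*\|_2^2 \leq \eps^2/(4M)$, completing the proof.
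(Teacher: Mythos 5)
Your part (i) is fine (it matches the paper's trace argument), and the degree-truncation bound in part (ii) is also correct, but the heart of part (ii) has a genuine gap. The eigenvalue threshold defining $V$ constrains the influence matrix of the smoothed label $\wt y$, not of the optimal concept $f^*$, so nothing forces $f^*$ to be approximately $V$-measurable; the reduction to $\|(I - P_V)f^*\|_2^2 \le O(\eps^2/M)$ is therefore false in general. A concrete failure: if $y$ depends only on $x_2, x_3, \ldots$ while every $f \in \mathfrak R$ depends only on $x_1$ with $\var(f) = \Theta(1)$, then $V \subseteq \vec e_1^{\perp}$, so $\|(I-P_V)f^*\|_2^2 = \Theta(1)$, and yet the proposition is true (the constant $q = \E[f^*]$ already beats $f^*$). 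Your bridge from $\vec M$ to $\vec M_{f^*}$ via operator Cauchy--Schwarz also cannot close quantitatively: the residual $\sum_i \vec w_i^\top \vec M_{\wt y - f^*}\vec w_i$ picks up, from the $T_\rho(y - f^*)$ piece, a contribution of order $k\|y-f^*\|_2^2/\rho^2$. Since $\|y - f^*\|_2^2 \ge \opt_y$ is a fixed constant while $\rho = \poly(\eps)$, this term diverges rather than being absorbed, and recalibrating the eigenvalue threshold only buys $\poly(\eps)$ factors and cannot cancel $1/\rho^2$.

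The paper's route builds the candidate from the label, not the concept: $q = P_m \Pi_V \wt y$. The identity in \Cref{lem:excess-l2-error-decomposition} avoids any $L_2^2$-closeness requirement between $q$ and $f^*$, instead bounding $\mathcal E_2(P_m\Pi_V \wt y, f; \wt y)$ by the degree-truncation error of $f$ (which you already handle) plus the \emph{correlation} error $\E_{\x\sim\normal}[(\wt y(\x) - \Pi_V \wt y(\x))f(\x)]$. The correlation error is then controlled (\Cref{lem:correlation-bound-real}) by first applying $\Pi_{U+V}$ --- using $\Pi_{U+V} f = f$ --- and then invoking the Poincar\'e estimate of \Cref{lem:gaussian-smoothing-geometric} on the at-most-$k$-dimensional subspace $V^\perp \cap (U + V)$, which is exactly where the low-influence guarantee on $\wt y$ applies. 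This is the step the threshold $\eps^2/(kM)$ is tuned to; no control on $\nabla f^*$ in $V^\perp$-directions, or on $\|q - f^*\|_2$, is ever needed.
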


To prove \Cref{info-prop:main-prop-real-values}, 
we explicitly construct a low-dimensional polynomial as follows: 
we first marginalize out the low-influence directions of $\wt y(\cdot)$, 
and then we keep its low-degree Hermite approximation.  

\paragraph{Marginalizing Low-Influence Directions} 
We first construct a low-dimensional (not necessarily polynomial) version
of the noisy label $\wt y$ that preserves the correlation with the target
function $f(\cdot)$.
By the assumption of \Cref{inf-prop:dimension-reduction}, 
all directions in the orthogonal complement $V^\perp$ are low-influence, 
i.e., for $\vec h \in V^\perp$ it holds 
$\E_{\x \sim \normal}[(\vec h \cdot \nabla \wt y(\x))^2] \leq O(\epsilon^2/k)$.
In words, the function $\wt y$ is ``approximately constant'' 
along some low-influence direction $\vec h$. 
Let us first assume that $\wt y$ is exactly constant on all directions 
of $V^\perp$.  Then, in order to preserve
the correlation of $\wt y$ with $f$,  
\emph{we only need to match the expected value of $\wt y$ over $V^\perp$}. 
This motivates the following ``Gaussian Marginalization Operator'' 
$(\Pi_V g) (\x) \eqdef \E_{\vec z \sim \normal}[g(\proj_V \x + \proj_{V^\perp}\vec z)] $ 
(see \Cref{def:gaussian-marginalization-operator} and \Cref{lem:properties-of-the-operators}).  So a natural low-dimensional ``approximation'' 
of $\wt y$ is $\Pi_V \wt y$.
Indeed, if $\wt y$ was constant on $V^\perp$, using the fact that 
$\proj_V \x$ and $\proj_{V^\perp} \x$ are independent standard Gaussians, 
we would obtain that
$$
\E_{\vec z \sim \normal}[\E_{\x \sim \normal}\wt y(\proj_V(\x) + \proj_{V^\perp}(\vec z) ) f (\x)]] - 
\E_{\x\sim \normal}[\wt y (\x) f(\x)]
= 0  \;.$$
Our goal is to show that the Gaussian marginalization $\Pi_V \wt y$ 
achieves similar correlation with $\wt y $ as $f$,
when $\wt y $ is not constant in $V^\perp$ but ``approximately constant'', 
i.e., it has low-influence in directions of $V^\perp$.  
In \Cref{lem:correlation-bound-real} we show that when $V^\perp$ 
contains only low-influence directions, 
the same is approximately true (up to some additive $\epsilon$ error):
\(
\E_{\vec z \sim \normal}[( \wt y(\x) - \Pi_V \wt y(\x) ) f (\x)] 
\leq O(\epsilon) \,.
\)
To do this, we first observe that, since $f$ depends only on the subspace $U$,
it holds that $\Pi_U f = f$ and $\Pi_{V} f$ depends only on the directions
inside the relevant subspace $W = U + V$. 
We can thus restrict our attention on $W$, i.e., 
bound the difference 
$
\E_{\vec z \sim \normal_W}[(\wt y(\vec z) - \Pi_V \wt y (\vec z)) f(\vec z)]$, where $\normal_W$ is a standard normal on the subspace $W$.
We will show that this correlation difference can be bounded by the variance of $\wt y$ in the irrelevant directions. 
Indeed, by the Cauchy-Schwarz inequality, we have
\[
\E_{\vec z \sim \normal_W}[(\wt y(\vec z) - \Pi_V \wt y (\vec z)) f(\vec z)]
\leq 
\left(\E_{\x \sim \normal_W}[f^2(\x)] \right)^{1/2}
\left(\E_{\z \sim \normal_W}[(\wt y(\vec z) - \Pi_{V}  \wt y(\vec z))^2] 
\right)^{1/2}
\,.
\]
We next relate the $L_2^2$ error introduced by the marginalization 
operation $\Pi_V$ on $\wt y$  with the influence matrix $\vec M$.  
We use the Gaussian Poincare inequality, which states that for some 
$g(t) :\R \mapsto \R$ it holds 
$\var_{t \sim \normal}[g(t)] \leq \E_{t \sim \normal}[(g'(t))^2]$.
We obtain that for any subspace $R = \vec r^\perp$ 
(the orthogonal complement to the direction $\vec r$) 
the variance $\E_{\z \sim \normal_W}[(\wt y(\vec z) - \Pi_{R}  \wt y(\vec z))^2]$
is bounded above by 
$\E_{\x \sim \normal_W}[(\nabla \wt y (\x) \cdot \vec r)^2] 
= \vec r^\top \vec M \vec r$.
By repeatedly applying the Gaussian Poincare inequality 
on a basis of the (at most) $k$-dimensional subspace $V^\perp \cap W$, 
we show that 
\[
\E_{\x \sim \normal_W}[(\wt y (\vec z) - \Pi_V  \wt y (\vec z))^2] \leq 
M k \max_{\vec r\in V^\perp,\|\vec r\|_2=1} \vec r^\top \vec M \vec r \leq k ~ O(\epsilon^2/( k M ) = O(\epsilon^2)
\,.
\]
In the above bound, we observe that accepting eigenvectors 
with corresponding eigenvalues at least $\epsilon^2/( M k )$ 
ensures that $\Pi_V \wt y$ achieves at most $O(\epsilon)$ worse 
correlation with $f$ than $\wt y$.

\paragraph{The Low-Degree Polynomial Approximation}
We have established that $\Pi_V \wt y$ is similar 
to $\wt y$ in the sense 
that it has similar (up to $\epsilon^2$) correlation 
with the target function $f(\cdot)$. 
To obtain a polynomial with a similar behavior, we
use the low-degree Hermite expansion of $\Pi_V \wt y$, which we denote 
by $P_m \Pi_V \wt y$, where $P_m g$ maps the function $g$ to its degree 
Hermite expansion. 
We show that in order for $P_m \Pi_V \wt y$ to 
achieve low $L_2^2$ excess error, 
it suffices to pick the degree $m$ so that
$P_m f(\x)$ is close to $f(\x)$ (in $L_2^2$).
We show that 
the following bound for the excess error
defined as
$\mathcal{E}_2(q, f; \wt y)
=
 \E_{\x\sim \normal}[(\wt y(\x) - q(\x))^2] - \E_{\x\sim \normal}[(\wt y(\x)-f(\x))^2]$.  
We refer to \Cref{lem:excess-l2-error-decomposition} 
for the formal statement and proof.
\begin{lemma}[Informal -- Excess $L_2^2$ Error Decomposition]
\label{info-lem:excess-l2-error-decomposition}
It holds 
\begin{align*}
\mathcal{E}_2(P_m \Pi_V \wt y, f; \psi)
\leq 
O(1) \Big(\underbrace{\E_{\x \sim \normal}[(f(\x) - P_m f(\x))^2]}_{\textrm{Polynomial Approximation Error}}
+  \underbrace{\E_{\x \sim \normal}[(\wt y(\x) - \Pi_V \wt y(\x) ) f(\x)]}_{\textrm{Correlation Error}} \Big)
\,.
\end{align*}

\end{lemma}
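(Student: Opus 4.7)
The plan is to identify $q \eqdef P_m \Pi_V \wt y$ as the orthogonal $L_2$-projection of $\wt y$ onto the subspace $\mathcal{H}$ consisting of degree-at-most-$m$ Hermite polynomials that depend only on the coordinates of $V$. This follows by working in the Hermite basis: both $P_m$ and $\Pi_V$ are self-adjoint idempotents that act diagonally on $\{H_\alpha\}$ (they keep $H_\alpha$ iff $|\alpha|\leq m$, respectively iff $\mathrm{supp}(\alpha)\subseteq V$), so they commute and $P_m\Pi_V$ is the orthogonal projector onto $\mathcal{H}$.

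Once this is established, I expand the excess error directly. Using $\langle q,\wt y\rangle = \|q\|_2^2$ (a consequence of $q$ being a self-adjoint projection applied to $\wt y$), a short algebraic manipulation gives
\begin{align*}
\mathcal{E}_2(q, f; \wt y) = \|q-\wt y\|_2^2 - \|f-\wt y\|_2^2 = -\|f-q\|_2^2 + 2\langle f,\,\wt y - q\rangle \,.
\end{align*}
Dropping the non-positive square yields $\mathcal{E}_2 \leq 2\langle f, \wt y - q\rangle$. I then decompose $\wt y - q = (I - P_m \Pi_V)\wt y = (I - \Pi_V)\wt y + (I - P_m)\Pi_V \wt y$ and pair with $f$: the first summand produces exactly twice the correlation error $\E_{\x \sim \normal}[(\wt y - \Pi_V \wt y)\,f(\x)]$, while for the second, using that $(I - P_m) f$ is orthogonal to every degree-$\leq m$ polynomial together with self-adjointness of $P_m$, we can rewrite
\begin{align*}
\langle f,\,(I-P_m)\Pi_V \wt y\rangle = \langle (I-P_m) f,\, (I-P_m)\Pi_V \wt y\rangle \,,
\end{align*}
which, by Cauchy--Schwarz and the contractivity $\|(I-P_m)\Pi_V \wt y\|_2 = \|\Pi_V (I-P_m)\wt y\|_2 \leq \|(I-P_m)\wt y\|_2$, is at most $\|(I-P_m) f\|_2 \cdot \|(I-P_m)\wt y\|_2$.

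The main obstacle in this plan is controlling the residual factor $\|(I-P_m)\wt y\|_2$: the final bound must be proportional to the polynomial-approximation error $\|(I-P_m) f\|_2^2$ only, with no extra term left over. I plan to close this using the \OU identity $T_\rho H_\alpha = (\sqrt{1-\rho^2})^{|\alpha|} H_\alpha$ together with $\wt y = T_\rho y$, which yields geometric decay of the high-degree Hermite tail of the smoothed label: $\|(I-P_m)\wt y\|_2^2 \leq (1-\rho^2)^{m+1}\|y\|_2^2$. Choosing the approximation degree $m$ polynomially large in $1/\rho$ (consistently with the target error $\eps$) makes this tail negligible relative to $\|(I-P_m) f\|_2^2$, and a weighted AM--GM $2ab \leq \alpha a^2 + \alpha^{-1} b^2$ then absorbs the product into a constant multiple of $\|(I-P_m) f\|_2^2$. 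Summing the two contributions produces the claimed excess-error decomposition $\mathcal{E}_2 \leq O(1)\bigl(\E_{\x\sim\normal}[(f - P_m f)^2] + \E_{\x\sim\normal}[(\wt y - \Pi_V \wt y)\,f]\bigr)$.
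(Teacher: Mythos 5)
Your opening is clean and valid, and in fact more conceptual than the paper's: once you observe that $P_m$ and $\Pi_V$ are commuting self-adjoint idempotents (which the paper proves in \Cref{lem:properties-of-the-operators-2}), $q = P_m\Pi_V\wt y$ is the orthogonal projection of $\wt y$ onto $\mathcal H$, so $\langle q,\wt y\rangle=\|q\|_2^2$, and the identity $\mathcal{E}_2(q,f;\wt y) = -\|f-q\|_2^2 + 2\langle f,\wt y-q\rangle$ together with dropping $-\|f-q\|_2^2\le 0$ and splitting $\wt y-q=(I-\Pi_V)\wt y+(I-P_m)\Pi_V\wt y$ is a tidy parallel to the paper's completion-of-squares manipulation. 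The decomposition of the error into a correlation term and a polynomial-approximation term is exactly right.

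The gap is in how you close the second term. After inserting $(I-P_m)$ on both factors and applying Cauchy--Schwarz, you are left with $\|(I-P_m)f\|_2\cdot\|(I-P_m)\wt y\|_2$ and then try to absorb $\|(I-P_m)\wt y\|_2$ via the \OU geometric decay and AM--GM. This cannot produce a bound of the form $O(1)\|(I-P_m)f\|_2^2$: AM--GM leaves you with $\tfrac{\alpha}{2}\|(I-P_m)f\|_2^2 + \tfrac{1}{2\alpha}\|(I-P_m)\wt y\|_2^2$, and the second summand is an \emph{additive} error term, not a multiple of $\|(I-P_m)f\|_2^2$. You have no lower bound on $\|(I-P_m)f\|_2$; if $f$ happens to already be a degree-$\le m$ polynomial, this quantity is $0$ while $\|(I-P_m)\wt y\|_2$ is not, so no constant $\alpha$ works. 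Making $m$ depend on $\rho$ does not fix this logically; it just adds an extra $\eps$-term that is not present in the lemma. The correct move is the one the paper takes: push $(I-P_m)$ onto $f$ \emph{once} (using only self-adjointness), pairing $\langle(I-P_m)f,\,\Pi_V\wt y\rangle$, and then use the contraction $\|\Pi_V\wt y\|_2\le\|\wt y\|_2\le Q$. This yields $Q\|(I-P_m)f\|_2$ — which is exactly the formal version \Cref{lem:excess-l2-error-decomposition} (the informal statement you are trying to match writes a square and an $O(1)$, but the formal bound carries $Q\cdot\|f-P_m f\|_2$, not $\|f-P_m f\|_2^2$). With that one change, your argument works and is essentially equivalent to the paper's, just organized through the projection viewpoint.
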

Since $f(\x)$ has bounded variation 
(see \Cref{def:bounded-variation-concepts}), 
we can show using a result from \cite{KTZ19} 
(see \Cref{lem:low-degree-approximation-smooth-functions}) 
that with degree $m = O(L/\epsilon^2)$,
it holds that 
$\E_{\x \sim \normal}[(f(\x) - P_m f(\x))^2] = \epsilon$.  Moreover,
in the previous paragraph, we have already  established that 
the correlation error is also $O(\epsilon)$.

\paragraph{Polynomial Regression in $V$}
So far, we have identified the subspace $V$ and we know 
that there exists a polynomial that depends on $V$ and achieves low $L_2^2$ error with the smoothed label $\wt y = T_\rho y$. 
Since we have established that the smoothing operation $T_\rho$ 
does not affect the excess error of a bounded-surface area concept by a lot 
(see \Cref{inf-prop:informal-smoothing-erm}),
we know that the same concept will achieve low excess-error with respect to the 
original label $y$. 
Having established this, for our final step we may directly 
perform polynomial regression in the low-dimensional subspace $V$
to learn a polynomial with low-excess error. Since the dimension
of $V$ is roughly $\poly(M k /\epsilon)$ and the degree of the polynomial
is $\poly(L/\epsilon)$, the total sample and computational complexity of 
this task is roughly $k^{\poly(L/ \epsilon)}$.

\paragraph{Boolean MIMs}

At a high level, the proof and algorithm for Boolean MIMs is similar to that
for real-valued MIMs.  We show the following dimension reduction lemma that essentially reduces the initial problem to learning a bounded surface area concept in a $\poly(k/\epsilon)$-dimensional subspace $V$.

\begin{proposition}[Informal -- Dimension-Reduction via Influence PCA: Boolean Concepts]
\label{inf-prop:dimension-reduction}
    Let $V$ be the subspace spanned by
    all the eigenvectors of
    $\vec M = \E_{\x \sim \normal}[\nabla T_\rho y(\x) (\nabla T_\rho y(\x))^\top]$ whose corresponding eigenvalues are
    at least $\Omega(\eps^2/k)$. The following holds:
    \begin{itemize}[leftmargin=*]
    \item The dimension of $V$ is at most $\poly(k/(\epsilon \rho))$.
    \item
    There exists $g: \R^d \to \{ \pm 1 \}$ with $\Gamma(g) \leq \Gamma$
    and $g(\x)=g(\proj_V \x)$ for all $\x \in \R^d$ such that
    \[
    \E_{\x \sim \normal}[|g(\x)- T_\rho y(\x)|]
    \leq \inf_{f\in \mathfrak B(\Gamma, k)} \E_{\x \sim \normal}[|f( \x) - T_\rho y (\x)|] + \eps \,.
    \]
    \end{itemize}
\end{proposition}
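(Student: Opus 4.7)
The plan is to mirror the real-valued dimension reduction in \Cref{info-prop:main-prop-real-values}, but adapted to the Boolean $L^1$ setting. The key simplification I will exploit is that for Boolean $g$ and $|T_\rho y|\le 1$ (automatic since $y$ is $\pm 1$-valued), the identity $\E[|g-T_\rho y|] = 1 - \E[g\cdot T_\rho y]$ reduces the $L^1$-error goal to showing that some Boolean, $V$-measurable, bounded-surface-area $g$ satisfies $\E[g T_\rho y] \ge \E[f^\ast T_\rho y] - \eps$, where $f^\ast$ is an $L^1$-minimizer in $\mathfrak B(\Gamma,k)$.

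For the dimension bound, I would estimate $\tr(\vec M)=\E[\|\nabla T_\rho y\|_2^2]$ using the Hermite expansion $T_\rho H_S=(1-\rho^2)^{|S|/2} H_S$. Since $\|y\|_2^2\le 1$ and $\max_{m\ge 0} m(1-\rho^2)^m=O(1/\rho^2)$, this gives $\tr(\vec M)=O(1/\rho^2)$, so the number of eigenvalues of $\vec M$ exceeding $\eps^2/k$ is at most $O(k/(\rho\eps)^2)=\poly(k/(\rho\eps))$.

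For the existence of $g$, let $U\subseteq\R^d$ with $\dim U\le k$ be the subspace on which $f^\ast$ depends, and set $W=U+V$, so $\dim(V^\perp\cap W)\le k$. I would parameterize candidates by $\vec r\in V^\perp$ via $g_{\vec r}(\x)\eqdef f^\ast(\proj_V\x+\vec r)$---which is Boolean and $V$-measurable---and control both quantities in expectation over $\vec r\sim\normal_{V^\perp}$. Since $\E_{\vec r}[g_{\vec r}(\x)] = \Pi_V f^\ast(\x)$, self-adjointness of $\Pi_V$ gives $\E_{\vec r}[\E[g_{\vec r} T_\rho y]] = \E[f^\ast\cdot\Pi_V T_\rho y]$, so the averaged correlation deficit equals $\E[f^\ast(T_\rho y - \Pi_V T_\rho y)]$. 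Using $\Pi_W f^\ast = f^\ast$ (as $U\subseteq W$) and $\Pi_W\Pi_V T_\rho y = \Pi_V T_\rho y$ (since $V\subseteq W$ makes $\Pi_V T_\rho y$ already $W^\perp$-invariant), self-adjointness of $\Pi_W$ rewrites the deficit as $\E[f^\ast(\Pi_W T_\rho y - \Pi_V T_\rho y)]$, which by Cauchy-Schwarz is bounded by $\|\Pi_W T_\rho y - \Pi_V T_\rho y\|_2$. This $L^2$-norm squared is the $V^\perp\cap W$-conditional variance of $\Pi_W T_\rho y$ inside $W$; Gaussian Poincar\'e on the $\le k$-dimensional subspace $V^\perp\cap W$, combined with Jensen on the $W^\perp$-averaging inside $\Pi_W$, yields $\|\Pi_W T_\rho y - \Pi_V T_\rho y\|_2^2 \le \sum_{i}\e_i^\top\vec M\e_i \le k\cdot(\eps^2/k)=\eps^2$, where $\e_i$ runs over an orthonormal basis of $V^\perp\cap W$ and each $\e_i\in V^\perp$ satisfies $\e_i^\top\vec M\e_i<\eps^2/k$ by the choice of $V$. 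For the surface-area side, $\{g_{\vec r}=+1\}$ is the cylinder $A_{\vec r}\times V^\perp$, so $\Gamma(g_{\vec r})=\Gamma_V(A_{\vec r})$; the Gaussian coarea/disintegration inequality applied to $\R^d=V\oplus V^\perp$ yields $\E_{\vec r}[\Gamma_V(A_{\vec r})]\le\Gamma(f^\ast)\le\Gamma$. A Markov-union argument then produces a specific $\vec r_0$ with $g\eqdef g_{\vec r_0}$ satisfying $\Gamma(g)=O(\Gamma)$ and correlation deficit $O(\eps)$; rescaling the eigenvalue threshold and the target error by absolute constants absorbs the $O(\cdot)$ factors into the stated bounds.

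The main obstacle I anticipate is the bound on $\|\Pi_W T_\rho y - \Pi_V T_\rho y\|_2$: naively applying Gaussian Poincar\'e on all of $V^\perp$ would sum up to $d$ terms, each only $<\eps^2/k$, which does not add up to $\eps^2$. The argument essentially needs the low-dimensionality of $f^\ast$ in order to first marginalize down to $W=U+V$ (using $\Pi_W f^\ast=f^\ast$) so that only the $\le k$ low-influence directions in $V^\perp\cap W$ actually contribute---this is the Boolean analogue of the ``influence PCA'' reduction from the real-valued setting.
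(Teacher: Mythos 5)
Your proof is correct and shares the core ``Influence PCA'' skeleton with the paper's argument for \Cref{prop:dimension-reduction-geometric-concepts}: both define the candidate family $g_{\vec r}(\x)=f^\ast(\proj_V\x+\vec r)$, use self-adjointness of $\Pi_V$ and $\Pi_{W}$ with $W=U+V$ together with $\Pi_W f^\ast=f^\ast$ to rewrite the correlation deficit as $\E[f^\ast(\Pi_W T_\rho y-\Pi_V\Pi_W T_\rho y)]$ (this is \Cref{lem:correlation-to-CS}), and then bound $\|\Pi_W T_\rho y-\Pi_V\Pi_W T_\rho y\|_2^2$ by $k\cdot\eta$ via Jensen and Gaussian Poincar\'e applied only on the $\leq k$-dimensional subspace $V^\perp\cap W$ (the paper's \Cref{lem:gaussian-smoothing-geometric}). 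Your ``obstacle'' paragraph correctly identifies the crucial point --- that low-dimensionality of $f^\ast$ lets you restrict the Poincar\'e sum to $\leq k$ directions --- which is exactly the role of $\Pi_W$ in the paper. The dimension bound via $\mathrm{tr}(\vec M)$ is the same as the paper's \Cref{lem:dim-of-subspace}, though you derive $\mathrm{tr}(\vec M)=O(1/\rho^2)$ via the Hermite expansion of $T_\rho$ whereas the paper uses the pointwise Lipschitz bound from \Cref{fct:ou-smooth}; both are fine.

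Where you genuinely diverge is in controlling the surface area of the selected $g$. The paper relies on the translation-closedness baked into \Cref{def:bounded-surface-area-concepts} to assert that \emph{every} $g_{\vec r}$ already lies in $\mathfrak B_V(\Gamma,k)$, and then a pure Fubini/probabilistic argument (\Cref{lem:correlating-convex-combinations}) extracts a single $g=g_{\vec r_0}$ with small correlation deficit and $\Gamma(g)\leq\Gamma$ exactly. You instead do not assume each slice has bounded GSA; you bound only the \emph{average} surface area via a Gaussian coarea/disintegration inequality $\E_{\vec r}[\Gamma_V(A_{\vec r})]\leq\Gamma(f^\ast)$, and then perform a Markov union bound over the two random quantities (deficit and surface area). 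This is a valid and in some ways more self-contained argument --- it does not lean on the translation-invariance clause of the definition --- but the trade-off is that you only obtain $\Gamma(g)=O(\Gamma)$ rather than $\Gamma(g)\leq\Gamma$, and this constant cannot be rescaled away (it is a multiplicative loss on $\Gamma$, not on $\eps$). For the downstream application (running $L_1$-regression over $V$ with runtime $k^{\poly(\Gamma/\eps)}$) this makes no qualitative difference, but it is a discrepancy with the literal statement. One further point worth flagging: your coarea inequality is nontrivial (it requires identifying $\partial_V A_{\vec r}$ with the slice $\partial K\cap(V+\vec r)$ for a.e.\ $\vec r$ and using that the coarea Jacobian of $\proj_{V^\perp}|_{\partial K}$ is at most $1$), so if you wanted a fully formal version you would need to either prove this or revert to the paper's use of the translation-closedness hypothesis.
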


So far, we have identified the subspace $V$ and we know 
that there exists a bounded surface area Boolean concept 
that depends on $V$ and achieves low $L_1$ error with the smoothed label $T_\rho y$. 
Since we have established that the smoothing operation $T_\rho$ 
does not affect the excess error of a bounded-surface area concept by a lot 
(see \Cref{inf-prop:informal-smoothing-erm} and \Cref{lem:boolean-excess}), 
we know that the same concept will achieve low excess-error with respect to the 
original label $y$. Having established this, for our final step we may use the 
$L_1$-agnostic learner of \cite{KOS:08} on the $k$-dimensional subspace $V$ 
to learn a PTF of degree  $\poly(\Gamma/\epsilon)$ 
with $(\dim(V))^{\poly(\Gamma/\epsilon)} = k^{\poly(\Gamma/\epsilon)}$ 
samples and time.

\subsection{Hardness of Proper Agnostic Query Learning for ReLUs and Halfspaces} 

Here we sketch our hardness reduction, establishing that the exponential dependence
in $1/\eps$ is inherent for proper agnostic learners, even with query access to the function (see \Cref{proper lower bound theorem} and 
\Cref{proper lower bound theorem relu}).
In particular, we show that assuming there are no polynomial-time algorithms 
for the Small-Set Expansion (SSE) problem~\cite{RaghavendraS10}, 
then there are no polynomial time \emph{proper} agnostic learning algorithms 
for ReLUs and homogeneous halfspaces with respect to the Gaussian distribution.

The basic idea of our argument is to reduce to the problem of (approximately) 
optimizing a homogeneous degree-$4$ polynomial over the unit sphere 
(for the case of halfspaces we reduce to optimizing a degree-$5$ polynomial).
As there are already known reductions from SSE to the problem 
of finding approximate maxima of degree-$4$ polynomials 
(and for halfspaces we can do a simple reduction from degree-$4$ to degree-$5$)  this will suffice.

For this, we note that if $f(\x)$ is a polynomial and 
$g(\x) = \mathrm{ReLU}(\vec v\cdot \x)$ for $\vec v$ a unit vector,  then $\E[f(\x)g(\x)]$ is a low-degree polynomial in $\vec v$. 
In fact, by specifying $f$, we can make this into any homogeneous degree-$5$ 
polynomial we desire. This gives us SSE hardness of approximating $\E[f(\x)g(\x)]$.

If $f$ were a Boolean function we would be done. However, as this is not 
the case, we need two additional steps. Firstly, we scale down $f$ and 
truncate it so that its values stay within $[-1,1]$ (note that this 
introduces only a small error if the average size of $f$ is small). 
Second, we replace $f$ by a random Boolean function $\tilde f$ so 
that $\E[\tilde f(\x)] = f(\x)$. Doing this, it is not hard to see that 
with high probability over the randomness of defining $\tilde f$ that 
$\E[\tilde f(\x)g(\x)]$ is arbitrarily close to $\E[f(\x)g(\x)]$ for all 
functions $g$.

Now even if the algorithm was given an explicit description of our 
function $\tilde f$, finding a ReLU function $g$ that approximately maximizes 
$\E[\tilde f(\x) g(\x)]$ is essentially equivalent to approximately 
optimizing a homogeneous degree-$5$ polynomial of the sphere, which is 
SSE-hard.

\section{Related Work}
\label{sec:related-work}

Here we discuss prior and related work that was not already discussed in the introduction.

\paragraph{Comparison to Prior Work} 
We start by providing an explicit comparison with prior work. 

Our algorithmic template involves two steps to agnostically 
learn multi-index models under the Gaussian distribution. 
First, we use queries to ``smooth'' the label function without 
adding a lot of noise to the instance. We then use PCA on the 
expected gradient outer-product of the ``smoothed'' concept 
$\E_{\x \sim \D_\x}[\nabla f(\x) \nabla f(\x)^T]$ 
to find a low-dimensional space containing an (nearly) optimal 
hypothesis.

Using PCA on the expected gradient outer-product  
is a well-known dimension reduction technique that 
has been applied in many supervised learning settings, see, 
e.g.,~\cite{xia2002adaptive,mukherjee2006estimation,mukherjee2006learning,wu2010learning}.
We emphasize that prior results of this type focus on 
(i) the noiseless (realizable) setting, and 
(ii) the case of differentiable target functions. 
In comparison, we perform agnostic learning with non-differentiable functions by crucially exploiting query access. 
Using sample access only, estimating the gradient of 
$f(\x)$ requires exponentially many examples in the dimension, see, e.g.,~\cite{mukherjee2006learning}.

\cite{GKK:08} developed an efficient agnostic query learner for decision trees under the uniform distribution on the Boolean hypercube. The approach of \cite{GKK:08} crucially relies on
the fact that the target hypothesis can be represented as a
sparse polynomial. 
The class of functions we consider (\Cref{def:bounded-surface-area-concepts}) --- and in particular even a single halfspace or ReLU --- does not have this property, and therefore methods relying on sparsity~\cite{KushilevitzMansour:93, GKK:08} are not applicable.

In the context of property testing,
\cite{DeMN21} used a similar approach based on PCA on the expected outer gradient product to test whether
the observed label is close to a smooth low-dimensional
junta (similarly to \Cref{def:bounded-surface-area-concepts}). 
An important difference with the current work is that 
in many interesting applications the link function may assumed 
to be known, e.g., agnostically learning a ReLU or a halfspace, and the goal is to {\em learn} a good hypothesis --- 
a task that information-theoretically requires $\Omega(d)$ samples.
In contrast, \cite{DeMN21} focuses on the semi-parametric task of only testing the unknown link function 
(and not identifying the underlying low-dimensional subspace) 
while avoiding a $\poly(d)$ dependence in the sample complexity.

Finally, related to our setting is the more recent work of \citep{DKKTZ21}, where a combination of polynomial regression 
and PCA on the average outer product of the gradient 
was employed for proper, agnostic learning of a single halfspace with runtime and sample complexity $d^{\poly(1/\epsilon)}$.  
In this work, we crucially exploit the query access 
to bypass the polynomial regression step 
and significantly improve the runtime to 
$\poly(d)2^{\poly(1/\epsilon)}$ (for the special case 
of a single halfspace).

\paragraph{Agnostically Learning Boolean Functions with Queries}
It is known (see, e.g.,~\cite{Feldman08})
that the availability of queries does {\em not} help computationally in the distribution-free agnostic setting. Specifically, Feldman~\cite{Feldman08} showed 
that every concept class that is agnostically 
learnable with queries is also agnostically learnable 
from random samples only (while preserving computational efficiency within a polynomial factor). 
This simple yet powerful fact has motivated the study of 
agnostic query learning 
{\em with respect to specific natural distributions}, 
such as the uniform distribution on the hypercube 
or the Gaussian distribution. 

In the context of learning Boolean functions, the study of distribution-specific
agnostic learning with queries has a rich history.
One of the earliest results in this vein is
the classical algorithm of Goldreich and Levin~\cite{GoldreichLevin:89} that 
uses queries to efficiently agnostically learn parity functions under the 
uniform distribution. (Recall that the problem of learning parities with noise 
is conjectured to be computationally hard with random samples only.) Kushilevitz and Mansour~\cite{KushilevitzMansour:93}, building on the ideas
of~\cite{GoldreichLevin:89}, developed an efficient (non-agnostic) query learner for 
decision trees under the uniform distribution. 
As already mentioned,~\cite{GKK:08} subsequently 
gave a polynomial-time agnostic query learner for 
decision trees under the uniform distribution.

\section{Roadmap, Notation, and Preliminaries}
\subsection{Roadmap}
In \Cref{sec:gradient-queries}, we show that we can use queries
to simulate gradient access to the \OU smoothing $T_\rho y$. 
In \Cref{sec:smoothing-real,sec:smoothing-boolean}, 
we show that the noise operator we use 
does not affect the agnostic learning task for real-valued functions and Boolean concepts.
In \Cref{sec:real-valued}, we show our result for learning real-valued functions 
and prove \Cref{intro-thm:non-proper-real-valued}. In \Cref{ssec:ReLU}, 
we show how \Cref{intro-thm:non-proper-real-valued} implies agnostic learning 
for linear combinations of ReLU activations and deep networks.
In \Cref{sec:geometric-concepts}, we give our agnostic learner for 
Boolean concepts with bounded surface area 
and establish \Cref{intro-thm:non-proper-geometric} and the associated applications. 
In \Cref{sec:hardness}, we show that under the SSE hypothesis, 
no polynomial-time proper query learner for agnostically learning ReLUs or LTFs exists.
In \Cref{sec:ltfs} and \Cref{sec:relu}, we give our result for proper agnostic learning of LTFs and ReLUs.

\subsection{Notation and Preliminaries}\label{sec:prelims}
\paragraph{Basic Notation}
For $n \in \Z_+$, let $[n] \eqdef \{1, \ldots, n\}$.  We use small boldface characters for vectors
and capital bold characters for matrices.  For $\bx \in \R^d$ and $i \in [d]$, $\bx_i$ denotes the
$i$-th coordinate of $\bx$, and $\|\bx\|_2 \eqdef (\littlesum_{i=1}^d \bx_i^2)^{1/2}$ denotes the
$\ell_2$-norm of $\bx$.  We will use $\bx \cdot \by $ for the inner product of $\bx, \by \in \R^d$
and $ \theta(\bx, \by)$ for the angle between $\bx, \by$.  We slightly abuse notation and denote
$\vec e_i$ the $i$-th standard basis vector in $\R^d$.  We will use $\1_A$ to denote the
characteristic function of the set $A$, i.e., $\1_A(\x)= 1$ if $\x\in A$ and $\1_A(\x)= 0$ if
$\x\notin A$.

\paragraph{Asymptotic Notation}
We use the standard $O(\cdot), \Theta(\cdot), \Omega(\cdot)$ asymptotic notation. We also use
$\wt{O}(\cdot)$ to omit poly-logarithmic factors.

\paragraph{Probability Notation}
We use $\E_{x\sim \D}[x]$ for the expectation of the random variable $x$ according to the
distribution $\D$ and $\pr[\mathcal{E}]$ for the probability of event $\mathcal{E}$. For simplicity
of notation, we may omit the distribution when it is clear from the context.  For $(\x,y)$
distributed according to $\D$, we denote $\D_\x$ to be the distribution of $\x$ and $\D_y$ to be the
distribution of $y$. For unit vector $\vec v\in \R^d$, we denote $\D_{\vec v}$ the distribution of
$\x$ on the direction $\vec v$, i.e., the distribution of $\x_{\vec v}$.

\paragraph{Gaussian Space}
 Let $\normal( \boldsymbol\mu, \vec \Sigma)$ denote the $d$-dimensional Gaussian distribution with mean $\boldsymbol\mu\in  \R^d$ and covariance $\vec \Sigma\in \R^{d\times d}$, we denote $\phi_d(\cdot)$ the pdf of the $d$-dimensional Gaussian and we use the $\phi(\cdot)$ for the pdf of the standard normal. In this work we usually consider the standard normal, i.e., $\mu = \vec 0$ and $\vec \Sigma = \vec I$, and therefore, we denote it simply $\normal$. 
We define the standard $L^p$ norms with respect to the Gaussian measure, i.e., $\|g\|_{L^p} = ( \E_{\x \sim \normal} [ |g(\x)|^p)^{1/p}$.
We denote by $L^2(\normal)$ the vector space of all functions $f:\R^d
\to \R$ such that $\E_{\vec x \sim \normal_0}[f^2(x)] < \infty$.  The usual
inner product for this space is
$\E_{\vec x \sim \normal_0}[f(\vec x) g(\vec x)]$.
While, usually one considers the probabilists's or physicists' Hermite polynomials,
in this work we define the \emph{normalized} Hermite polynomial of degree $i$ to be
\(
H_0(x) = 1, H_1(x) = x, H_2(x) = \frac{x^2 - 1}{\sqrt{2}},\ldots,
H_i(x) = \frac{He_i(x)}{\sqrt{i!}}, \ldots
\)
where by $He_i(x)$ we denote the probabilists' Hermite polynomial of degree $i$.
These normalized Hermite polynomials form a complete orthonormal basis for the
single dimensional version of the inner product space defined above. To get an
orthonormal basis for $L^2(\normal)$, we use a multi-index $V\in \N^d$
to define the $d$-variate normalized Hermite polynomial as
$H_V(\vec x) = \prod_{i=1}^d H_{v_i}(x_i)$.  
The total degree of $H_V$ is
$|V| = \sum{v_i \in V} v_i$.
Given a function $f \in L^2$ we compute its Hermite coefficients as
\(
\hat{f}(V) = \E_{\vec x\sim \normal} [f(\vec x) H_V(\vec x)]
\)
and express it uniquely as
\(
\sum_{V \in \N^d} \hat{f}(V) H_V(\vec x).
\)
We denote by $\Pm{k}f(\x)$ the degree $k$ partial sum of the Hermite expansion of $f$,
$\Pm{k} f (\vec x) = \sum_{|V| \leq k} \hat{f}(V) H_V(\vec x)$.
Then, since the basis of Hermite polynomials is complete, we have
\(
\lim_{k \to \infty} \E_{x \sim \normal}[\lp(f(\vec x) - \Pm{k}f(\vec x) \rp)^2] = 0.
\)
Parseval's identity states that
\(
  \E_{\x \sim \normal}[ \lp(f(\x) - \Pm{k}f(\x) \rp)^2 ]
  = \sum_{|V| = k}^{\infty} \hat{f}(V)^2.
\)

\section{From Zero- to First-Order: Derivative Queries via Oracle Queries}
\label{sec:membership-derivatives}

In this section, we show that we can efficiently simulate gradient
access to a smoothed version of the label $y$ using queries.  In \Cref{sec:gradient-queries} we show how to use the
\OU operator to get acecss to gradient queries of $y$. 
In \Cref{sec:smoothing-boolean} and \Cref{sec:smoothing-real} we show that the noise that we introduce in order to simulate the gradient queries does not affect the agnostic learning task for Boolean and real valued concepts as long as the Gaussian surface area (for Boolean concepts) and the expected gradient norm (for real-valued functions) are bounded.

\subsection{Gradient Queries via Oracle Queries}
\label{sec:gradient-queries}
We first formally define the \OU smoothing operator.

\begin{definition}[\OU\ Operator]
\label{def:OU-operator}
Let $\rho \in (0,1)$.
    We denote as $T_\rho$ the linear operator that maps a function $g \in L^2(\normal)$ to the function
    $T_\rho g$ defined as:
    \[
    (T_\rho g) (\vec x) \eqdef\E_{\vec z\sim \normal}\left[g(\sqrt{1-\rho^2}\x+\rho\vec z)\right]\;.
    \]
    To simplify notation, we often write 
    $T_\rho g (\vec x) $ instead of  $(T_\rho g) (\vec x) $.
\end{definition}

The \OU operator is well studied (see, e.g., \cite{Bog:98, KOS:08} and  references therein) and has several structural properties that enable the analysis of our algorithm. 
Its crucial property is that regardless of how complex the initial function $g$ is, $T_\rho g$ is always everywhere differentiable
and also the norm of the gradient of $T_\rho g$ only depends on
the maximum value of the function $g$.  In the next fact we collect the
properties that we use.  

\begin{fact}[see, e.g.,~\cite{Bog:98}]\label{fct:ou-smooth}
    Let $g:\R^d\mapsto\R$. For the function $T_\rho g(\x)$ the following properties hold
    \begin{enumerate}
        \item $T_\rho g(\x)$ is differentiable at every point $\x$.
        \item $T_\rho g(\x)$ is $1/\rho$-Lipschitz, i.e., $\|\nabla T_\rho g(\x)\|_2\leq \|g\|_\infty/\rho$.
        \item For any $p\geq 1$, $T_\rho$ is a contraction with respect the $\|\cdot\|_p$, i.e., it holds $\|T_\rho g\|_{L^p}\leq \|g\|_{L^p}$.
    \end{enumerate}
\end{fact}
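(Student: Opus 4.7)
The plan is to rewrite $T_\rho g$ as a convolution so that all derivatives fall on the smooth Gaussian kernel rather than on the (possibly non-differentiable) $g$. Concretely, substituting $\vec u = \sqrt{1-\rho^2}\x + \rho\vec z$ gives
\[
T_\rho g(\x) \;=\; \int_{\R^d} g(\vec u) \, \frac{1}{(2\pi \rho^2)^{d/2}} \exp\!\left(-\frac{\|\vec u - \sqrt{1-\rho^2}\,\x\|_2^2}{2\rho^2}\right) d\vec u.
\]
Since the Gaussian kernel is $C^\infty$ in $\x$ with derivatives of Gaussian decay, dominated convergence lets us differentiate under the integral for any bounded (or even suitably integrable) $g$, which directly yields item (1).

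For item (2), I differentiate the kernel and change variables back to $\vec z$ to obtain the Stein-type identity
\[
\nabla T_\rho g(\x) \;=\; \frac{\sqrt{1-\rho^2}}{\rho}\, \E_{\vec z \sim \normal}\!\left[\vec z \cdot g\!\left(\sqrt{1-\rho^2}\,\x + \rho \vec z\right)\right].
\]
Projecting onto an arbitrary unit vector $\vec u$, applying Cauchy--Schwarz, and using $\E[(\vec u \cdot \vec z)^2] = 1$ together with the pointwise bound $|g| \le \|g\|_\infty$ gives
\[
|\vec u \cdot \nabla T_\rho g(\x)| \;\le\; \frac{\sqrt{1-\rho^2}}{\rho}\, \|g\|_\infty \;\le\; \frac{\|g\|_\infty}{\rho},
\]
which is the claimed Lipschitz bound.

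For item (3), Jensen's inequality applied pointwise (since $t \mapsto |t|^p$ is convex for $p \ge 1$) gives
\[
|T_\rho g(\x)|^p \;\le\; \E_{\vec z \sim \normal}\!\left[\,\big|g(\sqrt{1-\rho^2}\,\x + \rho \vec z)\big|^p\right].
\]
Taking expectation over $\x \sim \normal$ and using Fubini together with the rotational invariance fact that $\sqrt{1-\rho^2}\,\x + \rho \vec z \sim \normal$ when $\x, \vec z$ are independent standard normals, the right-hand side equals $\E_{\vec w \sim \normal}[|g(\vec w)|^p] = \|g\|_{L^p}^p$, proving contractivity.

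The only technical subtlety is justifying differentiation under the integral in item (1) when $g$ is merely measurable; this is entirely routine for Gaussian convolutions since every derivative of $\phi_d$ is bounded by a polynomial times a Gaussian, which dominates uniformly on compact $\x$-neighborhoods. I do not anticipate any real obstacle; the proof is essentially a bookkeeping exercise in Gaussian integration by parts plus Jensen.
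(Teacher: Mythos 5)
Your proof is correct. The paper itself does not prove this Fact --- it is cited from \cite{Bog:98} as standard --- but the paper does derive the Stein-type gradient identity as a separate lemma immediately afterward, using exactly your convolution-rewriting-plus-differentiation-under-the-integral device, so your approach is aligned with the paper's where the paper spells out details. All three items go through: dominated convergence for (1) is routine because the kernel derivatives have Gaussian decay; for (2), after Cauchy--Schwarz the factor $\sqrt{\E[(\vec u\cdot\vec z)^2]}=1$ and $\sqrt{\E[g^2]}\leq\|g\|_\infty$ give the bound (one could instead bound $\E|\vec u\cdot\vec z|=\sqrt{2/\pi}<1$ directly and skip Cauchy--Schwarz, but either works); and (3) is exactly pointwise Jensen plus the rotational invariance $\sqrt{1-\rho^2}\,\x+\rho\vec z\sim\normal$ for independent standard normals $\x,\vec z$.
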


Using it allows the gradient of the smoothed function $T_\rho g(\x)$ to be computed  directly given value access to the underlying function $g$.  We now present the main result of this section showing that given query access to the label $y(\cdot)$ we can efficiently simulate gradient queries to
the smoothed label $T_\rho y(\cdot)$ with roughly $\wt{O}(d/\epsilon)$ queries.

\begin{lemma}[Gradient Queries from Oracle Queries]\label{lem:estimation}
Fix $\epsilon, \delta, \rho > 0$.
Let $y(\x): \R^d \mapsto \R$ be a function in $L_2^2(\normal)$ with
$|y(\x)| \leq M$.  There exists an algorithm
(see \Cref{alg:gradient-queries}) that given a point $\x \in \R^d$ 
makes $N = \wt{\Omega}(dM/\epsilon)\log(1/\delta)$ queries to $y(\x)$ and, in polynomial time, returns a vector $\wt{\vec \xi}$ such that, with probability at least $1-\delta$, it holds $\| \wt{\vec \xi} - \nabla T_\rho y(\x)\|_2 \leq \epsilon$.
\end{lemma}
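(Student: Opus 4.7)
The plan is to derive an exact integral representation of $\nabla T_\rho y(\x)$ purely in terms of zero-order evaluations of $y$, and then to estimate this integral empirically by Monte Carlo. The key observation is that although $y$ itself need not be differentiable, the Gaussian kernel used to define $T_\rho$ is smooth, so one can move the gradient off $y$ and onto the kernel via integration by parts (Stein's identity). Writing $T_\rho y(\x) = \int y(\vec u)\,\phi_{\mu,\rho^2}(\vec u)\,d\vec u$ with $\mu = \sqrt{1-\rho^2}\,\x$ and $\phi_{\mu,\rho^2}$ the density of $\normal(\mu,\rho^2 \vec I)$, differentiating under the integral in $\x$ and using that $\partial_{\x_i}\phi_{\mu,\rho^2}(\vec u) = \sqrt{1-\rho^2}\,(u_i - \sqrt{1-\rho^2}\,\x_i)/\rho^2 \cdot \phi_{\mu,\rho^2}(\vec u)$, one obtains the identity
\[
\nabla T_\rho y(\x) \;=\; \frac{\sqrt{1-\rho^2}}{\rho}\,\E_{\vec z \sim \normal}\!\left[\,y\!\left(\sqrt{1-\rho^2}\,\x + \rho \vec z\right) \vec z\,\right].
\]
This is the Gaussian analogue of the smoothing trick used in zero-order optimization \cite{nesterov2017random}, and it is the heart of the argument.

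Given this identity, the algorithm is the obvious one: draw $N$ i.i.d.\ samples $\vec z^{(1)},\ldots,\vec z^{(N)} \sim \normal$, query the oracle to obtain $y_i \eqdef y(\sqrt{1-\rho^2}\,\x + \rho \vec z^{(i)})$, and output
\[
\wt{\vec \xi} \;\eqdef\; \frac{\sqrt{1-\rho^2}}{\rho\, N}\sum_{i=1}^N y_i\, \vec z^{(i)}.
\]
By construction $\E[\wt{\vec \xi}] = \nabla T_\rho y(\x)$, so only concentration remains.

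The main work is the concentration bound, which is the step I expect to be the main obstacle because the summands $\vec W^{(i)} \eqdef y_i \vec z^{(i)}$ are not uniformly bounded in $\ell_2$: we only have $\|\vec W^{(i)}\|_2 \leq M\|\vec z^{(i)}\|_2$, and $\|\vec z^{(i)}\|_2$ can be as large as $\sqrt{d} + O(\sqrt{\log(N/\delta)})$. I would handle this by a standard truncation-and-Bernstein argument: first condition on the high-probability event that $\|\vec z^{(i)}\|_2 = \wt{O}(\sqrt d)$ for all $i \in [N]$ (which holds with probability $\geq 1 - \delta/2$ by Gaussian tail bounds and a union bound), and then apply vector Bernstein's inequality to the truncated summands, using the variance bound $\E[\|\vec W^{(i)}\|_2^2] \leq M^2\,\E[\|\vec z\|_2^2] = M^2 d$ and the almost sure bound $\|\vec W^{(i)}\|_2 \leq \wt{O}(M\sqrt d)$. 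This yields
\[
\Bigl\|\tfrac{1}{N}\littlesum_i \vec W^{(i)} - \E[\vec W]\Bigr\|_2 \;\leq\; \wt{O}\!\left(\sqrt{M^2 d\,\log(1/\delta)/N}\right)
\]
with probability at least $1-\delta$. Absorbing the factor $\sqrt{1-\rho^2}/\rho \leq 1/\rho$ and solving for $N$ gives the claimed sample complexity (up to the appropriate powers of $M$, $\rho$, and $\epsilon$, which are hidden in $\wt{\Omega}(\cdot)$). Alternatively, and perhaps more elementarily, one can run a coordinate-wise Hoeffding/Bernstein: for each fixed $i$, $\E[(y z_i)^2] \leq M^2$, so each coordinate concentrates at rate $\wt{O}(M/\sqrt N)$, and a union bound over the $d$ coordinates followed by conversion from $\ell_\infty$ to $\ell_2$ gives the same bound up to $\log d$ factors. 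Either route suffices.
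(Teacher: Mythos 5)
Your proposal is correct and follows essentially the same approach as the paper. Both derive the Stein-identity representation $\nabla T_\rho y(\x) = \frac{\sqrt{1-\rho^2}}{\rho}\E_{\vec z\sim\normal}\bigl[y(\sqrt{1-\rho^2}\,\x + \rho\vec z)\,\vec z\bigr]$ by differentiating under the Gaussian kernel and then estimate this expectation empirically; the paper's concentration step simply observes that the summands $y(\cdot)\vec z$ are sub-Gaussian (since $|y|\leq M$) and invokes Hoeffding, while your truncation-plus-vector-Bernstein argument (or the coordinate-wise alternative) spells out the same $N = \wt{O}\bigl(dM^2/(\rho\eps)^2\bigr)\log(1/\delta)$ bound in more detail.
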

\begin{proof}
  To show the lemma, we first need to show that for any point $\x\in \R^d$, we can use enough queries to estimate $\mderiv y(\x)$ accurately, meaning that we need to estimate the random variable $\vec Z=\frac{\sqrt{1-\rho^2}}{\rho}\E_{\vec z\sim \normal{(\vec 0,\vec I)}}\left[y(\sqrt{1-\rho^2}\x+\rho\vec z)\vec z\right]$ accurately. Note that by definition the random variable $\vec Z$ is $1/\rho^2$ sub-gaussian, therefore from a simple application of the Hoefding inequality, we get that with $O(dM/(\rho\eps)^2\log(1/\delta_1))$ queries, we can find a $\widetilde{\vec Z}$ such that
    $\|\widetilde{\vec Z}-\E[\vec Z]\|_2\leq \eps$ with probability at least $1-\delta_1$.
\end{proof}

\begin{lemma}[Gradient of Smoothed Label]
Let $\rho \in (0,1)$.
    We denote as $D_\rho$ the linear operator that maps a function $g \in L^2(\normal)$ to the function
    $D_\rho g$ defined as:
    $(D_\rho g) (\vec x) \eqdef \nabla (T_{\rho}g)(\vec x).$
    It holds that 
    \[
    (D_\rho g) (\vec x) 
    =\frac{\sqrt{1-\rho^2}}{\rho}\E_{\vec z\sim \normal}\left[g(\sqrt{1-\rho^2}\x+\rho\vec z)\vec z\right]\;.
    \]
    To simplify notation, we often write 
    $D_\rho g (\vec x) $ instead of  $(D_\rho g) (\vec x) $.
\end{lemma}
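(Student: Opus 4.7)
The plan is to move all the dependence on $\vec x$ into a smooth Gaussian kernel and then differentiate under the integral. Starting from the definition
\[
T_\rho g(\vec x) = \E_{\vec z\sim \normal}\bigl[g(\sqrt{1-\rho^2}\vec x + \rho\vec z)\bigr],
\]
I would perform the change of variables $\vec u = \sqrt{1-\rho^2}\vec x + \rho\vec z$ to rewrite $T_\rho g$ as the convolution-type integral
\[
T_\rho g(\vec x) = \frac{1}{(2\pi\rho^2)^{d/2}}\int_{\R^d} g(\vec u)\,\exp\!\Big(-\tfrac{\|\vec u - \sqrt{1-\rho^2}\vec x\|_2^2}{2\rho^2}\Big)\,d\vec u.
\]
In this form $\vec x$ appears only inside the Gaussian kernel, which is $C^\infty$ in $\vec x$, and its partial derivatives are dominated (locally uniformly in $\vec x$) by integrable functions since $g\in L^2(\normal)$ and the kernel and its gradient decay super-polynomially. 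Standard dominated convergence therefore justifies interchanging $\nabla_{\vec x}$ with the integral, even though $g$ itself need not be differentiable.

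Carrying out the differentiation, the chain rule applied to the kernel produces the extra factor $\tfrac{\sqrt{1-\rho^2}}{\rho^2}(\vec u - \sqrt{1-\rho^2}\vec x)$. Undoing the change of variables by writing $\vec z = (\vec u - \sqrt{1-\rho^2}\vec x)/\rho$, the factor $(\vec u - \sqrt{1-\rho^2}\vec x)$ becomes $\rho\vec z$ and the integral becomes an expectation against the standard Gaussian in $\vec z$. Collecting constants yields exactly
\[
\nabla T_\rho g(\vec x) = \frac{\sqrt{1-\rho^2}}{\rho}\,\E_{\vec z\sim \normal}\bigl[g(\sqrt{1-\rho^2}\vec x + \rho\vec z)\,\vec z\bigr],
\]
which is the claimed identity.

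Conceptually, this is Gaussian integration by parts (Stein's identity): formally moving $\nabla$ inside the expectation would give $\sqrt{1-\rho^2}\,\E_{\vec z}[\nabla g(\sqrt{1-\rho^2}\vec x + \rho\vec z)]$, and then Stein's identity in $\vec z$ would replace $\nabla g$ by the score $\vec z/\rho$, yielding the same expression. The only subtle step is handling possibly non-differentiable $g$; the main obstacle is justifying the differentiation under the integral, and this is precisely what the convolution representation above circumvents by putting the smoothness on the Gaussian kernel rather than on $g$.
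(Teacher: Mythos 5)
Your proposal is correct and takes essentially the same approach as the paper: both rewrite $T_\rho g(\x)$ as an integral against the density of $\normal(\sqrt{1-\rho^2}\x,\rho^2\vec I)$ so that $\x$ appears only in the smooth Gaussian kernel, differentiate the kernel, and change variables back. The only difference is that you explicitly flag the dominated-convergence justification for differentiating under the integral (using $g\in L^2(\normal)$ and $\rho<1$), which the paper leaves implicit.
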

\begin{proof}
We first observe that for any fixed $\x$ the random variable 
$\sqrt{1-\rho^2} \vec x + \rho \vec z$ is distributed
according to $\normal(\sqrt{1-\rho^2} \vec x, \rho^2 \vec I)$.  Therefore, we have 
\[
T_\rho g(\x) = 
\E_{\z \sim \normal}
[g(\sqrt{1-\rho^2}\x + \rho \vec z)]
= 
\E_{\vec u \sim \normal(\sqrt{1-\rho^2} \x, \rho^2 \vec I)} [g(\vec u)]
\]
We can now directly compute the gradient of the smoothed function $T_\rho g$:
\begin{align*}
\nabla_{\x} (T_\rho g) (\x) 
&= \nabla_{\x}
\E_{\vec u \sim \normal(\sqrt{1-\rho^2} \x, \rho^2 \vec I)} [g(\vec u)]
= 
\frac{\sqrt{1-\rho^2} }{\rho^2}
\E_{\vec u \sim \normal(\sqrt{1-\rho^2} \x, \rho^2 \vec I)} \left[
g(\vec u) (\vec u- \sqrt{1-\rho^2} \vec x) \right]
\\
&=
\frac{\sqrt{1-\rho^2}}{\rho}
\E_{\vec z \sim \normal} \left[g(\sqrt{1-\rho^2}\x + \rho \vec z) \vec z  \right]
\,.
\end{align*}
\end{proof}

\begin{Ualgorithm}
	\centering
	\fbox{\parbox{6in}{
			{\bf Input:}   $\eps>0$, $\delta>0$, $\rho > 0$, location $\x \in \R^d$. \\
			{\bf Requries:}  Sample and query access to distribution of labeled examples $\D$\\
			{\bf Output:}  An estimation $\wt{\vec \xi} =  \nabla T_\rho y(\x)$
   such that $\|\wt {\vec \xi} - \nabla T_\rho y(\x)\|_2 \leq \epsilon$.
                    \begin{enumerate}
                        \item Sample $N = \wt{O}(d/\epsilon) \log(1/\delta)$ points $\vec z^{(1)},\ldots, \vec z^{({N})} \sim \normal$.
                        \item Perform $N$ Queries at the locations
                        $\vec q^{(j)} = \sqrt{1- \rho^2} \vec x + \rho \vec z^{(j)}$
                        and obtain $y^{(j)}$.
                    \item  Return  the empirical estimate 
                        \(
                        \wt{\xi}
                        = \frac{\sqrt{1-\rho^2}}{ N \rho}\sum_{j=1}^{N}
                        y^{(j)} \vec z^{(j)} \,.\)
			\end{enumerate}
	}}
 \medskip
	\caption{Simulating Gradient Queries with Queries}
	\label{alg:gradient-queries}
\end{Ualgorithm}

\subsection{Smoothing the Labels for Learning Real-valued Functions}
\label{sec:smoothing-real}

In this section we show that adding noise to the label $y(\x)$ in order 
to make it smooth and compute its gradients does not ``change'' the agnostic learning task significantly. 
Assume that there exists a learning algorithm that can learn a hypothesis
$h(\cdot)$ that achieves $\epsilon$-excess error compared to a class of concepts $C$, given access to the smooth labels $T_\rho y(\x)$.  In other words,
assume that we are given a learner that finds a  hypothesis $h(\cdot)$
that satisfies 
\[
\E_{\x \sim \normal}[(h(\x) - T_\rho y(\x))^2]
\leq 
\inf_{f \in C} \E_{\x \sim \normal}[(f(\x) - T_\rho y(\x))^2]
+ \epsilon \,.
\]
Then, can we say that $h(\cdot)$ will perform well compared to the same class $C$ under the original (non-smooth) label $y(\cdot)$?
We show that this is true when (i) the hypothesis $h(\cdot)$ produced by the learner is not very complicated in the sense that it has bounded variation and (ii) the hypothesis class $C$ that we are comparing $h(\cdot)$ against has also bounded variation. 

In particular, we show that a hypothesis $h(\cdot)$ 
achieves $\epsilon$-excess error compared to some concept class $C$
in the \emph{smoothed} instance,  achieves $(\epsilon + O(\sqrt{\rho})$-excess error with respect to the original instance.  In other words, as long as the variation and $L_2^2$ norms of the target concept class and the hypothesis produced by the 
learner are bounded, smoothing the noisy label $y(\x)$ does not introduce significantly more noise to the instance.  To simplify notation, we first define the excess error, i.e., the error
of a classifier minus the error of the best-in-class classifier of some class $C$.
\begin{definition}[Excess Error]
\label{def:excess-error}
Given hypotheses $h, f :\R^d \mapsto \R$ we define the $L_1$-excess error of $h(\cdot)$ compared to $f(\cdot)$ with respect to the label $y(\cdot)$ 
to be $\mathcal{E}_1(h, f;y) = \E_{\x \sim \normal}[|h(\x) - y(\x)|]
- \E_{\x \sim \normal}[|f(\x) - y(\x)|]$.   Moreover, for a class of
concepts $C$ we define the excess error of $h(\cdot)$ compared to $C$ with respect to $y(\cdot)$ as $\sup_{f \in C} \mathcal{E}_1(h, f;y)$.
Similarly, we define the $L_2^2$-excess error as 
$\mathcal{E}_2(h, f;y) = \E_{\x \sim \normal}[(h(\x) - y(\x))^2]
- 
\E_{\x \sim \normal}[(f(\x) - y(\x))^2]
$
and 
$\mathcal{E}_2(h, C ;y) = \sup_{f \in C} \mathcal{E}_2(h, f;y)$.
\end{definition}

We now show that that the \OU noise operator also preserves the $L_2^2$-excess error of a classifier $h:\R^d \mapsto \R$ as long as the target class and the classifier $h$ have bounded expected gradient.

\begin{proposition}[Smoothing the Noisy Labels]\label{lem:smoothing-real}
Fix $f \in \mathcal{R}(\valb,\gradb, k)$.
Let $y: \R^d \mapsto \R$ be a function in $L^2(\normal)$ with
$\E_{\x \sim \normal}[y^2(\x)] \leq \valb$.
Moreover, let $p(\x): \R^d \mapsto \R$ be an almost everywhere differential function in $L_2(\normal)$ with $\Exn[\|\nabla p(\x)\|_2^2]\leq \gradb$. It holds that
\[
\mathcal{E}_2(p, C ; y)
\leq 
\mathcal{E}_2(p, C;T_\rho y) + O(\sqrt{\rho \valb \gradb}) \;.
\]
\end{proposition}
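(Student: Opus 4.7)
The plan is to reduce the $L_2^2$ excess-error difference to a linear-correlation difference, apply self-adjointness of the \OU operator, and then invoke a standard Dirichlet-type noise-sensitivity bound. Since $\mathcal{E}_2(p, C; \cdot) = \sup_{f \in C} \mathcal{E}_2(p, f; \cdot)$, it suffices to prove the per-$f$ inequality $\mathcal{E}_2(p,f;y) - \mathcal{E}_2(p,f;T_\rho y) \leq O(\sqrt{\rho M L})$ for an arbitrary $f \in \mathfrak R(M,L,k)$ and then take a supremum. Expanding the squares on both sides, the $\E[p^2]$ and $\E[f^2]$ terms are common, and the $\E[y^2]$ (respectively $\E[(T_\rho y)^2]$) terms cancel within each excess error, leaving
\[
\mathcal{E}_2(p,f;y) - \mathcal{E}_2(p,f;T_\rho y) \;=\; 2\,\E\big[(p-f)(T_\rho y - y)\big].
\]
Using that $T_\rho$ is self-adjoint on $L^2(\normal)$ (a direct consequence of \Cref{def:OU-operator} together with the invariance of $\normal$ under the \OU semigroup, verifiable by an elementary change of variables), I would rewrite this as $2\,\E[y\,(T_\rho(p-f) - (p-f))]$ and bound it via Cauchy--Schwarz and the triangle inequality by $2\|y\|_{L^2}(\|T_\rho p - p\|_{L^2} + \|T_\rho f - f\|_{L^2})$. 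The hypothesis $\E[y^2] \leq M$ yields $\|y\|_{L^2} \leq \sqrt{M}$.

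The main step is the noise-sensitivity estimate $\|T_\rho g - g\|_{L^2}^2 \leq \rho^2\,\E_{\x\sim\normal}[\|\nabla g(\x)\|_2^2]$, which I would establish via the Hermite expansion developed in \Cref{sec:prelims}. Writing $g = \sum_V \widehat g(V) H_V$ in the normalized Hermite basis, the \OU operator diagonalizes with eigenvalues $(1-\rho^2)^{|V|/2}$, so by Parseval
\[
\|T_\rho g - g\|_{L^2}^2 \;=\; \sum_V \big(1 - (1-\rho^2)^{|V|/2}\big)^2\,\widehat g(V)^2.
\]
A short case analysis (Bernoulli's inequality for $|V|\rho^2 \leq 1$ and the trivial $1$-bound otherwise) gives $(1 - (1-\rho^2)^{|V|/2})^2 \leq |V|\rho^2$. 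Combined with the standard derivative-identity $\E[\|\nabla g\|_2^2] = \sum_V |V|\,\widehat g(V)^2$ (which follows from $H_i'(x) = \sqrt{i}\,H_{i-1}(x)$ for the normalized Hermite polynomials and a multi-index product expansion), this yields the claimed $\|T_\rho g - g\|_{L^2}^2 \leq \rho^2\,\E[\|\nabla g\|_2^2]$. Applying this with $g = p$ (whose Dirichlet energy is bounded by $L$ by hypothesis) and $g = f$ (whose Dirichlet energy is bounded by $L$ by membership in $\mathfrak R(M,L,k)$), I obtain $\|T_\rho p - p\|_{L^2}, \|T_\rho f - f\|_{L^2} \leq \rho\sqrt{L}$.

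Collecting all the bounds gives $|\mathcal{E}_2(p,f;y) - \mathcal{E}_2(p,f;T_\rho y)| \leq 4\sqrt{M}\,\rho\sqrt{L} = O(\rho\sqrt{ML})$, which is absorbed into $O(\sqrt{\rho M L})$ for $\rho \in (0,1)$; taking the sup over $f \in \mathfrak R(M,L,k)$ completes the proof. The only genuinely technical step is the spectral noise-sensitivity bound, but since the Mehler-type diagonalization of $T_\rho$ on the Hermite basis is classical and the rest is Cauchy--Schwarz and self-adjointness, I do not anticipate any serious obstacle. A subtlety worth noting is the almost-everywhere differentiability of $p$: the identity $\E[\|\nabla p\|_2^2] = \sum_V |V|\widehat p(V)^2$ requires $p$ to lie in the Gaussian Sobolev space $H^1(\normal)$, which is implied by the hypothesis that $\E[\|\nabla p\|_2^2] \leq L$ together with $p \in L^2(\normal)$, so no additional regularity assumption is needed.
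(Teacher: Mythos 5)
Your proof is correct and follows the same high-level route as the paper's: reduce the excess-error difference to a correlation term, move the noise operator onto the bounded-variation functions via self-adjointness, bound by Cauchy--Schwarz, and finish with a Dirichlet-type noise-sensitivity estimate $\|T_\rho g - g\|_{L^2}^2 \lesssim \rho^2 \Exn[\|\nabla g\|_2^2]$. The one genuine difference is how that last estimate is obtained. The paper appeals to the correlated-differences lemma of \cite{KTZ19} (its \Cref{fct:correlated-differences}), together with Jensen's inequality, which gives $\Exn[(T_\rho f - f)^2] \le 2\rho^2 \Exn[\|\nabla f\|_2^2]$. You instead diagonalize $T_\rho$ in the Hermite basis, where its eigenvalue on degree-$m$ components is $(1-\rho^2)^{m/2}$, and bound $(1 - (1-\rho^2)^{m/2})^2 \le m\rho^2$ directly; combined with $\Exn[\|\nabla g\|_2^2] = \sum_V |V|\,\widehat g(V)^2$ this even slightly improves the constant (dropping the factor of $2$). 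The spectral route is more self-contained and makes the mechanism explicit, while the paper's route is shorter given that \cite{KTZ19} is already being cited elsewhere; both require the same underlying Sobolev-type regularity on $f$ and $p$ so that the Hermite derivative identity holds. (Minor nitpicks that don't affect correctness: your Bernoulli step technically needs $|V|/2 \ge 1$, so the degree-$1$ case requires the short direct check $1 - \sqrt{1-\rho^2} \le \rho$; and the claim that $\Exn[\|\nabla p\|_2^2]<\infty$ plus $p \in L^2$ automatically places $p$ in Gaussian $H^1$ is not quite right in full generality, but this same regularity is implicitly assumed by the paper's argument as well.)
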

    \begin{proof}[Proof of \Cref{lem:smoothing-real}] 

We first prove the following lemma that connects the excess error 
of a real-valued function $h(\cdot)$ with respect to the smoothed label $T_\rho y(\cdot)$ to its excess error with respect to the original label $y(\cdot)$.  
If the operator $T_\rho$ 
preserves the correlation of all concepts $f\in C$, i.e.,
$|\E_{\x \sim \normal}[f(\x) y(\x)] - 
\E_{\x \sim \normal}[ f(\x) T_\rho y(\x) ]| \leq \eps$  for all $f \in C$
and it also preserves the correlation of the hypothesis $h(\cdot)$, i.e.,
$
|
\E_{\x \sim \normal}[h(\x) y(\x)] - 
\E_{\x \sim \normal}[ h(\x) T_\rho y(\x) ] | \leq \eps$, then
the excess error of $h(\cdot)$ with respect to $y(\cdot)$ is at most 
$2 \eps$ worse than its excess error with respect to the smoothed label
$T_\rho y(\cdot)$. In the following lemma, we show that we can connect the $L_2$-excess error with the correlation of concepts.

\begin{lemma}[From Excess Error to Correlation Preservation]
\label{lem:excess-real}
Let $h: \R^d \mapsto \R$ be a real-valued hypotheses 
and $C$ be a class of real-valued hypotheses.
It holds 
\begin{align*}
\mathcal{E}_2(h, C; T_\rho y) - \mathcal{E}_2(h, C; y) \leq  
2\sup_{f \in C} 
\Big| &\E_{\x \sim \normal}[ f(\x) T_\rho y(\x) ] - \E_{\x \sim \normal}[ f(\x) y(\x) ]  \Big| + 
\\
&2 \Big| \E_{\x \sim \normal}[ h(\x) T_\rho y(\x) ] - \E_{\x \sim \normal}[ h(\x) y(\x) ]  \Big|
\,. 
\end{align*}
\end{lemma}
\begin{proof}
We first note that 
$\mathcal{E}_2(h, C;T_\rho y) - \mathcal{E}_2(h, C; y) =
\sup_{f \in C} \mathcal{E}_2(h, f; T_\rho y)-
\sup_{f \in C} \mathcal{E}_2(h, f;  y)
\leq  \sup_{f \in C} 
\big|\mathcal{E}_2(h, f; T_\rho y) - \mathcal{E}_2(h, f;  y) \big|$.
For some fixed concept $f \in C$, we have 
\begin{align*}
\mathcal{E}_2(h, f; T_\rho y) = 
\E_{\x \sim \normal}[h^2(\x)] 
- \E_{\x \sim \normal}[f^2(\x)]
+ 2  \E_{\x \sim \normal}[( f(\x)- h(\x) ) (T_\rho y)] \,.
\end{align*}
Therefore, we have 
\begin{align*}
\mathcal{E}_2(h, f; T_\rho y) 
&-
\mathcal{E}_2(h, f; y) 
\\
&= 
2  \left(\E_{\x \sim \normal}[f(\x) (T_\rho y(\x) - y(\x) )] 
+ 
\E_{\x \sim \normal}[h(\x) (T_\rho y(\x) - y(\x) )] 
\right)
\,.
\end{align*}
By taking the supremum over the $f$, we complete the proof.
\end{proof}
Note that $\E_{\x \sim \normal}[f(\x) (T_\rho y(\x) - y(\x) )]=\E_{\x \sim \normal}[y(\x) (T_\rho f(\x) - f(\x) )]$. Therefore, using \CS inequality we have that
\begin{align*}
    \E_{\x \sim \normal}[y(\x) (T_\rho f(\x) - f(\x) )]&\leq 
\left(\E_{\x \sim \normal}[y^2(\x)]\E_{\x \sim \normal}[ (T_\rho f(\x) - f(\x) )^2]\right)^{1/2}
\\&\leq \sqrt{\valb}\left(\E_{\x \sim \normal}[ (T_\rho f(\x) - f(\x) )^2]\right)^{1/2}\;,
\end{align*}
where we used that $\E_{\x \sim \normal}[y^2(\x)]\leq \sqrt{\E_{\x \sim \normal}[y^4(\x)]}\leq \valb$.  To bound the remaining term, we prove the following claim.
\begin{claim}\label{clm:difference} Let $f \in L^2(\normal)$ be a continuous and  (almost everywhere) differentiable function.  Then, $\E_{\x \sim \normal}[( T_\rho f(\x) - f(\x) )^2]\leq 2 \rho^2 \E_{\x \sim \normal}[\|\nabla f(\x)\|_2^2 $.
 \end{claim}
 \begin{proof}
 
We will use the following result from \cite{KTZ19}.
\begin{fact}[Correlated Differences, (Lemma 7 in \cite{KTZ19})]
\label{fct:correlated-differences}
Let $f \in L^2(\normal)$ be an (almost everywhere) differentiable
function.  
Denote by \[D_\tau = \normal
\bigg( 
\vec 0, 
\begin{pmatrix}
\vec{I} & (1 - \tau) \vec{I} \\ 
(1-\tau) \vec{I} & \vec{I}
\end{pmatrix}
\bigg) \,.
\]
It holds 
\(
\E_{(\x,\vec z) \sim D_\tau}[(f(\x) - f(\vec z))^2]
\leq 2 \tau ~ \E_{\x \sim \normal}[\|\nabla f(\x)\|_2^2]
\,.
\)
\end{fact}
Therefore, using Jensen's inequality, we have that
\begin{align*}
\E_{\x \sim \normal}[ (T_\rho f(\x) - f(\x) )^2 ]
=
\E_{\x \sim \normal}[ (\E_{\vec z \sim \normal}[
f(\sqrt{1-\rho^2} \x + \rho \vec z)] - f(\x) )^2 ]
\leq 
\E_{(\x, \vec z') \sim D_\tau}[ (f(\vec z') - f(\x))^2 ] \,, 
\end{align*}
for $ \tau = 1- \sqrt{1-\rho^2}$.
Therefore, using \Cref{fct:correlated-differences}, we obtain
\[
\E_{\x \sim \normal}[(T_\rho f(\x) - f(\x))^2]
\leq 2 (1- \sqrt{1-\rho^2}) 
\E_{\x \sim \normal}[\|\nabla f(\x)\|_2^2 
\leq 2 \rho^2 \E_{\x \sim \normal}[\|\nabla f(\x)\|_2^2 \,,
\]
where we used the fact that $\sqrt{1-\rho^2} \geq 1-\rho^2$ 
which holds for all $\rho \in [0,1]$ and implies that
$1- \sqrt{1- \rho^2} \leq \rho^2$.
 \end{proof}
Therefore, from \Cref{clm:difference}, we have that
\begin{align*}
\mathcal{E}_2(p, C ; y)
\leq 
\mathcal{E}_2(p, C;T_\rho y) + O(\sqrt{\rho \valb})
\Big(\sqrt{\Exn[\|\nabla f(\x)\|_2^2]}+\sqrt{\Exn[\|\nabla p(\x)\|_2^2]}\Big)\;.
\end{align*}
Using that $\Exn[\|\nabla f(\x)\|_2^2],\Exn[\|\nabla p(\x)\|_2^2]\leq \gradb$, we complete the proof of \Cref{lem:smoothing-real}.
    \end{proof}

\subsection{Smoothing Labels for Learning Boolean Concepts} 
\label{sec:smoothing-boolean}

The following proposition shows that the $L_1$-excess error 
of a hypothesis $h$ with respect to the original label $y$
is close to its $L_1$-excess error with respect to the smoothed 
label $T_\rho y$ as long as (i) the class $C$ contains concepts 
with bounded surface area and (ii) the classifier $h$ also has bounded surface area.
\begin{proposition}[Smoothing the Noisy Labels Preservs $L_1$-Excess Error]\label{lem:smoothing-boolean}
Fix $y: \R^d \mapsto \{\pm 1\}$ and let $C$ be a class of Boolean concepts. 
It holds 
\[
\mathcal{E}_1(h, C ; y)
\leq 
\mathcal{E}_1(h, C;T_\rho y)
+ O(\rho) ~ (\Gamma(C) + \Gamma(h))
\,,
\] 
where $\mathcal{E}(\cdot, \cdot; \cdot)$ is the excess error defined in 
\Cref{def:excess-error}
\end{proposition}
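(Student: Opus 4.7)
The plan is to mirror the proof of \Cref{lem:smoothing-real} (and its companion \Cref{lem:excess-real}) for the real-valued case, with the key replacement being the use of a Gaussian noise-sensitivity bound for bounded surface area functions (due to Ledoux~\cite{Ledoux:94} and Pisier~\cite{Pisier:86}) in place of the gradient-based bound in \Cref{clm:difference}.

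First I would reduce the statement to bounding the ``correlation error'' $|\E_{\x\sim\normal}[g(\x)(y(\x)-T_\rho y(\x))]|$ for $g$ ranging over $C\cup\{h\}$. Since $g,y\in\{\pm 1\}$ and $|T_\rho y(\x)|\leq 1$ (because $T_\rho$ is a contraction in $L^\infty$ by \Cref{fct:ou-smooth}), the elementary identity $|g-t|=1-g\,t$ valid for $g\in\{\pm 1\}$ and $t\in[-1,1]$ gives
\[
\E_{\x\sim\normal}[|g(\x)-T_\rho y(\x)|] = 1 - \E_{\x\sim\normal}[g(\x)\,T_\rho y(\x)],
\]
and similarly with $y$ in place of $T_\rho y$. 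Using $\sup_f A(f)-\sup_f B(f)\leq \sup_f(A(f)-B(f))$ together with the triangle inequality then yields the Boolean analog of \Cref{lem:excess-real}:
\[
\mathcal{E}_1(h,C;y) - \mathcal{E}_1(h,C;T_\rho y) \leq \sup_{f\in C}\bigl|\E[f(y-T_\rho y)]\bigr| + \bigl|\E[h(y-T_\rho y)]\bigr|.
\]

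Next, for any Boolean $g$, I would move the smoothing from $y$ onto $g$ using the self-adjointness of $T_\rho$ in $L^2(\normal)$: $\E[g\cdot T_\rho y]=\E[T_\rho g\cdot y]$. Combined with $|y|\le 1$, this gives
\[
\bigl|\E[g(y-T_\rho y)]\bigr| = \bigl|\E[(g-T_\rho g)\,y]\bigr| \leq \E_{\x\sim\normal}[|g(\x)-T_\rho g(\x)|].
\]
Thus everything reduces to bounding the noise sensitivity $\E[|g-T_\rho g|]$ of a Boolean concept with bounded Gaussian surface area.

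The remaining and main step is the Ledoux--Pisier inequality, which gives $\E_{\x\sim\normal}[|g(\x)-T_\rho g(\x)|]\leq O(\rho)\,\Gamma(g)$ for every Boolean $g\in\{\pm 1\}$. Applying this once with $g=f\in C$ (so $\Gamma(g)\leq \Gamma(C)$) and once with $g=h$, and summing the two contributions, yields the claimed $O(\rho)(\Gamma(C)+\Gamma(h))$ bound. The principal technical point is invoking this inequality with the correct scaling: the \OU\ operator $T_\rho$ corresponds to running the \OU\ semigroup for time $t\asymp\rho^2$, so Ledoux--Pisier, which is naturally stated as a $\sqrt{t}$-bound in terms of the semigroup time, becomes the linear-in-$\rho$ bound we need. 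Beyond this, the argument requires no additional structure from $y$ (the boundedness $|y|\leq 1$ replaces the Cauchy--Schwarz step that introduced the $\sqrt{\valb}$ factor in the real-valued case), which is why only the surface-area parameters appear in the final bound.
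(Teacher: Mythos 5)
Your proposal is correct and follows essentially the same route as the paper: reduce the difference of excess errors to correlation errors via the identity $|g-t|=1-gt$ (for $g\in\{\pm1\}$, $t\in[-1,1]$), move $T_\rho$ onto the Boolean concept by self-adjointness, bound the resulting noise sensitivity $\E[|g-T_\rho g|]$ via Ledoux--Pisier, and apply this once to $f\in C$ and once to $h$. The only cosmetic difference is that the paper states Ledoux--Pisier in the form $\E[f\,T_\rho f]\geq 1-2\sqrt{\pi}\,\Gamma(f)\rho$ and converts it to the noise-sensitivity bound via $\E[|T_\rho f-f|]=1-\E[f\,T_\rho f]$, a step you fold directly into your invocation of the inequality.
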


\begin{proof}
    
We first prove the following lemma showing that connects the excess error 
of a classifier $h(\cdot)$ with respect to the smoothed label $T_\rho y(\cdot)$ to its excess error with respect to the original label $y(\cdot)$.  This is analogous to the real-valued case (\Cref{lem:excess-real}).
In the following lemma we show that we can connect the $L_1$-excess error with the correlation of concepts (which basically relies on
the identity $|t-s| = 1 - ts $ when $t \in [-1,1]$ and $s \in \{\pm 1\}$.
\begin{lemma}[From Excess Error to Correlation Preservation: Boolean Concepts]\label{lem:boolean-excess}
Let $h: \R^d \mapsto \{\pm 1\}$ and $C$ be a class of Boolean hypotheses.  It holds 
\begin{align*} 
\mathcal{E}_1(h, C; T_\rho y) - \mathcal{E}_1(h, C; y) \leq  
\sup_{f \in C}
\Big|
\E_{\x \sim \normal}[ f(\x) T_\rho y(\x) ]
 &- \E_{\x \sim \normal}[ f(\x) y(\x) ]  \Big| +
\\
&\Big|
\E_{\x \sim \normal}[ h(\x) T_\rho y(\x) ]
- \E_{\x \sim \normal}[ h(\x) y(\x) ]  \Big|
\,.
\end{align*}
\end{lemma}
\begin{proof}
    We first note that 
$\mathcal{E}_1(h, C;T_\rho y) - \mathcal{E}_1(h, C; y) =
\sup_{f \in C} \mathcal{E}_1(h, f; T_\rho y)-
\sup_{f \in C} \mathcal{E}_1(h, f;  y)
\leq  \sup_{f \in C} 
\big|\mathcal{E}_1(h, f; T_\rho y) - \mathcal{E}_1(h, f;  y) \big|$. 
Using the fact that $\E_{\x \sim \normal}[|f_1(\x) - f_2(\x)|]=1-\E_{\x \sim \normal}[f_1(\x)f_2(\x)]$, for any functions $f_1:\R^d\mapsto [-1,1]$ and $f_2:\R^d\mapsto \{\pm 1\}$, we have that
\begin{align*}
    \mathcal{E}_1(h, f; T_\rho y)=\E_{\x \sim \normal}[|T_\rho y(\x) - h(\x)|]-\E_{\x \sim \normal}[|T_\rho y(\x) - f(\x)|]&=\E_{\x \sim \normal}[T_\rho y(\x) f(\x)]-\E_{\x \sim \normal}[T_\rho y(\x) h(\x)]\;.
\end{align*}
Therefore, for some concept $f\in C$, we have that
\begin{align*}
\big|\mathcal{E}_1(h, f; T_\rho y) - \mathcal{E}_1(h, f;  y) \big|=\big|\E_{\x \sim \normal}[(T_\rho y(\x)-y(\x)) f(\x)] \big|+\big|\E_{\x \sim \normal}[(T_\rho y(\x)-y(\x)) h(\x)] \big|\;.
\end{align*}
Taking the supremum over the $C$ completes the proof.
\end{proof}
First, note that since $|y(\x)|\leq 1$, it also holds that $|T_\rho y(\x)|\leq 1$.
Using \Cref{lem:boolean-excess}, we have that \Cref{lem:smoothing-boolean} is equivalent to showing that 
for a Boolean function $f: \R^d \mapsto \{\pm 1\}$ it holds 
\(
|\E_{\x \sim \normal}[(T_\rho y(\x)-y(\x)) f(\x)]|\leq O(\rho) ~ \Gamma(f) \;.
\) We do this in the following lemma.
\begin{lemma}[$T_\rho$ Preserves Correlation]\label{lem:smoothing-thelabels-surface}
Let $y: \R^d \mapsto \{\pm 1\}$ and let $f: \R^d \mapsto \{\pm 1\}$ be a (Borel) Boolean function. It holds that 
\[
\Big|\E_{\x \sim \normal}[ f(\x)  T_\rho y(\x)] -
\E_{\x \sim \normal}[f(\x) y(\x)]\Big|
\leq O(\rho) ~ \Gamma(f)  \,.
\]
\end{lemma}
\begin{proof}
Using the fact that the \OU noise operator $T_\rho$ is a symmetric linear operator on $L^2(\normal)$, we have 
\begin{align*}
\E_{\x \sim \normal}[f(\x) T_\rho y(\x)] 
=
\E_{\x \sim \normal}[y(\x) T_\rho f(\x)] 
= 
\E_{\x \sim \normal}[y(\x) f(\x)]
+ 
\E_{\x \sim \normal}[y(\x) (T_\rho f(\x) - f(\x))] \,.
\end{align*}
Therefore, 
\[
\Big|\E_{\x \sim \normal}[ f(\x)  T_\rho y(\x)] -
\E_{\x \sim \normal}[f(\x) y(\x)]\Big|
= 
\Big|
\E_{\x \sim \normal}[y(\x) (T_\rho f(\x) - f(\x))]
\Big| \leq 
\E_{\x \sim \normal}[|T_\rho f(\x) - f(\x)|] \,,
\]
where, for the inequality we used the fact that the label $y(\x) \in \{\pm 1\}$.
We next bound the term  $\E_{\x \sim \normal}[|T_\rho f(\x) - f(\x)|]$. 
We will use the following result from Ledoux and Pisier as stated in \cite{KOS:08}.
\begin{fact}[Ledoux-Pisier \cite{ledoux1994}] \label{lem:ledoux-pisier}
Let $f: \R^d \mapsto \{\pm 1\}$ be a Boolean function. It holds 
\(
\E_{\x \sim \normal}[ f(\x) T_\rho f(\x)] \geq 1 - 2 \sqrt{\pi} ~ \Gamma(f) ~ \rho \,.
\)
\end{fact}

In what follows, we denote by $K$ the set labeled as positive by the LTF $f(\x)$.
Using the fact that $\E_{\x \sim \normal}[|T_\rho f(\x) - f(\x)|]=1-\E_{\x \sim \normal}[T_\rho f(\x)f(\x)]$, which holds because  
$|T_\rho f(\x)|\leq 1$ and $f(\x) \in \{\pm 1\}$, we have 
\begin{align*} 
\E_{\x \sim \normal}[|T_\rho f(\x) - f(\x)|] 
&= 
1 - \E_{\x \sim \normal}[ f(\x) T_{\rho}f(\x) ]
\leq  O(\rho \Gamma(f)) \,,
\end{align*}
where the inequality  follows from \Cref{lem:ledoux-pisier}.

\end{proof}
Applying \Cref{lem:smoothing-thelabels-surface} on $f$ and $g$ gives the result.
\end{proof}

 \section{Agnostically Learning Real-valued Multi-index Models} \label{sec:real-valued}

In this section we present our algorithmic result \Cref{intro-thm:non-proper-real-valued} for learning real-valued function classes in the $L_2^2$ norm.  For convenience, we first restate the  class of bounded variation concepts that we consider.  

\begin{definition}[Bounded Variation, Low-Dimensional Concepts]
Fix $L, M>0$ and $k \in \Z_+$. 
We define the class $\mathfrak R(\valb, \gradb, k)$ 
of continuous, (almost everywhere) differentiable real-valued functions with the following properties:
\begin{enumerate}
\item 
For every $f \in \mathfrak{R}(\valb, \gradb, k)$,  it holds 
$(\E_{\x \sim \normal^d}[f^4(\x)])^{1/2} \leq \valb$ 
and 
$\E_{\x \sim \normal^d}[\|\nabla f(\x)\|_2^2] \leq\gradb$.
\item 
There exists a subspace $U$ of $\R^d$ of dimension 
at most $k$
such that $f$ depends only on $U$, i.e., 
for every $\x \in \R^d$, $f(\x) = f(\proj_U \x)$.
\end{enumerate}
\end{definition}

We now state the main result of this section (the formal version of 
\Cref{intro-thm:non-proper-real-valued}).
\begin{theorem}[Improper Learner for Real-valued Functions]\label{thm:non-proper-real-valued}
Fix $k \in \mathbb N$ and $\valb,\gradb \in \R^+$.
Let $D$ be a distribution on $\R^d\times \R^+$ such that the 
$\x$-marginal of $D$ is standard $d$-dimensional normal. 
There exists an algorithm that makes $N_q = \poly(d/\eps)$ queries, draws $N_s = \poly(d) + \poly((k \valb/\eps)^{L^2/\eps^4}, 1/\eps, \log(1/\delta))$ samples from $D$, runs in time $\poly(N_s,N_q,d)$ and outputs a polynomial $p:\R^d\mapsto\R$ so that with probability at least 
$1-\delta$ it holds
\[
\E_{(\x,y)\sim D}[(p(\x)- y)^2]\leq \inf_{f\in \mathfrak R(\valb,\gradb, k) }\E_{(\x,y)\sim D}[(f(\x)- y)^2]+\eps\;.
\]
\end{theorem}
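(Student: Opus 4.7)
The plan is to follow the three-step template sketched in \Cref{sec:techniques}: reduce to a smoothed instance, perform influence-based PCA to find a low-dimensional relevant subspace, and finally run low-degree polynomial regression inside that subspace. First I would fix a smoothing parameter $\rho = \Theta(\eps^2/(M L))$, so that by \Cref{lem:smoothing-real} any hypothesis $p$ with $\E[\|\nabla p\|_2^2]\leq O(L)$ whose $L_2^2$ excess error against $\mathfrak R(M,L,k)$ under the smoothed label $T_\rho y$ is at most $\eps/2$ also has excess error at most $\eps$ under the original label $y$. This reduces the task to agnostically learning $\mathfrak R(M,L,k)$ against $T_\rho y$ in $L_2^2$.

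Next, using \Cref{alg:gradient-queries} and \Cref{lem:estimation}, I would generate $\poly(d/\eps)$ pairs $(\x^{(i)},\wt{\vec\xi}^{(i)})$ with $\x^{(i)}\sim\normal$ and $\wt{\vec\xi}^{(i)}\approx \nabla T_\rho y(\x^{(i)})$, and form the empirical estimate
\[
\wh{\vec M} = \frac{1}{N}\sum_{i=1}^N \wt{\vec\xi}^{(i)}(\wt{\vec\xi}^{(i)})^\top \approx \vec M = \E_{\x\sim\normal}[\nabla T_\rho y(\x)(\nabla T_\rho y(\x))^\top].
\]
Because $T_\rho y$ is $1/\rho$-Lipschitz (\Cref{fct:ou-smooth}), each entry of $\vec M$ is bounded, so matrix Bernstein gives spectral approximation $\|\wh{\vec M}-\vec M\|_{\mathrm{op}}\leq \eps^2/(kM)$ with $N=\poly(d, M, L, 1/\eps)$ queries. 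I would then let $V$ be the span of the eigenvectors of $\wh{\vec M}$ with eigenvalue above the threshold $\Theta(\eps^2/(kM))$; since $\tr(\vec M) = \E[\|\nabla T_\rho y\|_2^2] \leq M^2/\rho^2$ by the Lipschitz bound, $\dim(V)\leq \poly(k,M,L,1/\eps)$.

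The core structural step, carried out via \Cref{info-prop:main-prop-real-values}, is to show that there exists a polynomial $q:V\to\R$ of degree $m=O(L/\eps^2)$ that competes with $\mathfrak R(M,L,k)$ against $T_\rho y$. Here I would use \Cref{info-lem:excess-l2-error-decomposition} to split the excess error into (i) a polynomial-approximation error $\E[(f-P_m f)^2]$, controlled by the gradient bound $L$ via the low-degree approximation result of \cite{KTZ19}, and (ii) a correlation error $\E[(T_\rho y-\Pi_V T_\rho y)f]$, which I bound by Cauchy--Schwarz and Gaussian Poincaré applied to a basis of $V^\perp\cap (U+V)$: each basis direction contributes at most the corresponding Rayleigh quotient $\vec r^\top \vec M \vec r\leq \eps^2/(kM)$, and there are at most $k$ directions, yielding a total $L_2^2$ marginalization error of $O(\eps^2/M)$, which combines with $\sqrt{\E[f^2]}\leq \sqrt{M}$ to give $O(\eps)$.

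Finally, given $V$, I would run plain $L_2$ polynomial regression over degree-$m$ polynomials on $V$, using fresh random samples $(\x,y)$ (the original label, not the smoothed one, since the degree-$m$ polynomial that is guaranteed to exist already competes with $\mathfrak R(M,L,k)$ against $y$ up to $\eps$ by the smoothing step). The number of monomials is $\dim(V)^{O(m)}=k^{\poly(M,L,1/\eps)}$, and standard uniform-convergence bounds for bounded polynomial regression give sample complexity $\poly((kM/\eps)^{L^2/\eps^4},1/\eps,\log(1/\delta))$, matching the statement. The main obstacle I anticipate is making the chain ``approximate $\vec M$ $\Rightarrow$ approximate eigenspace $\Rightarrow$ approximation guarantee for the marginalization $\Pi_V$'' quantitatively tight: a naive Davis--Kahan argument loses factors depending on eigenvalue gaps, which need not exist here, so I would instead threshold at $\Theta(\eps^2/(kM))$ and argue directly that \emph{any} subspace containing all eigenvectors above this threshold, even a perturbed one, preserves the Poincaré-based correlation bound up to constants. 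The other subtlety is ensuring the empirical gradients are accurate enough in operator norm rather than merely in Frobenius, which is why the per-query precision in \Cref{lem:estimation} is set to $\poly(\eps/d)$.
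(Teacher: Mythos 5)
Your plan mirrors the paper's proof almost exactly---OU smoothing, estimating the influence matrix $\E[\nabla T_\rho y (\nabla T_\rho y)^\top]$ from gradient queries, thresholding its eigenvalues at $\Theta(\eps^2/(kM))$, invoking the excess-error decomposition of \Cref{lem:excess-l2-error-decomposition} with a degree-$O(L/\eps^2)$ Hermite approximation, and concluding with $L_2$ regression on the low-dimensional subspace. Your observation that one should avoid a Davis--Kahan argument and instead reason directly about Rayleigh quotients of vectors in the thresholded complement is precisely what the paper does in \Cref{prop:main-prop-real-values}.

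There are two technical points you skip that are not quite cosmetic. First, the theorem allows arbitrary (positive) labels, so nothing bounds $\|y\|_\infty$ a priori: the $1/\rho$-Lipschitz bound on $T_\rho y$ that you invoke for matrix concentration, as well as the $L_2$-regression fact, both require a bounded label. The paper handles this by first truncating $y$ at $M' = \sqrt{M/\eps}$ and using Markov together with the $L_4$ bound on $f\in\mathfrak R(M,L,k)$ to show that truncation costs only $O(\eps)$ in excess error; without this step your concentration and regression claims do not follow. Second, your claim that the output polynomial $p$ satisfies $\E[\|\nabla p\|_2^2]\leq O(L)$ so that \Cref{lem:smoothing-real} applies with parameter $L$ is incorrect---nothing forces a degree-$m$ regression output to have gradient energy $O(L)$. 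What the paper actually uses is $\E[\|\nabla p\|_2^2]\leq m\,\E[p^2]\leq m(M')^2 = \poly(M,L,1/\eps)$, and then absorbs this larger factor by taking $\rho$ proportionally smaller; your choice $\rho=\Theta(\eps^2/(ML))$ is in the right ballpark, but the justification you give for the smoothing transfer is not the correct one. Both issues are fixable within your plan, but they are real steps that need to be there for the argument to close.
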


Before we proceed to the proof we define the Hermite expansion operator
that maps a function $f$ to its degree $m$ Hermite polynomial.
\begin{definition}[Hermite Expansion Operator]
Given a function $f \in L^2(\normal)$, we denote by $P_{m}(f)(\x)$, the linear operator that maps $f$ to the Hermite polynomial of degree $m$ of $f$, i.e.,
\[
(\Pm{m} f)(\x) = \sum_{|I| \leq m} \widehat{f}(I) H_I(\x),
\]
where $H_I$ is the multivariate Hermite polynomial of degree $I \in \mathbb N^{d}$ and $\widehat{f}(I) = \E_{\x \sim \normal}[f(\x) H_I(\x)]$ is the corresponding Hermite coefficient of $f(\x)$.
\end{definition}

The following lemma bounds the error of the polynomial approximation of degree $m$ for ``smooth'' functions. Its proof is implicit in \cite{KTZ19}; 
we provide a short proof for completeness.
\begin{lemma}[Polynomial Approximation of Smooth Functions]
\label{lem:low-degree-approximation-smooth-functions}
Let $f(\x): \R^d \mapsto \R$ be an (almost everywhere) differentiable function and $m \in \mathbb N$. It holds 
\[
\E_{\x\sim \normal}[(f(\x)-\Pm{m} f (\x))^2]\leq O\Big(\frac{1}{m}\Big) \E_{\x \sim \normal}[\| \nabla f(\x)\|_2^2] \,.
\]
\end{lemma}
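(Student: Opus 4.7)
The plan is to express both sides of the inequality in terms of the Hermite coefficients $\widehat f(I)$ and then to observe that the squared $L^2$-tail is controlled by a weighted sum in which the weights are precisely the multi-index degrees.

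First, I would expand $f$ in the normalized multivariate Hermite basis and apply Parseval's identity to obtain
\[
\E_{\x\sim\normal}[(f(\x) - P_m f(\x))^2] \;=\; \sum_{|I| > m} \widehat f(I)^2.
\]
This is just the definition of $P_m$ as the orthogonal projection onto the span of Hermite polynomials of total degree at most $m$.

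Next, the key ingredient is the Dirichlet-form identity
\[
\E_{\x\sim\normal}[\|\nabla f(\x)\|_2^2] \;=\; \sum_{I \in \N^d} |I|\,\widehat f(I)^2.
\]
To establish this, I would use the derivative recurrence for the (one-variable) normalized Hermite polynomials, $H_n'(t) = \sqrt{n}\,H_{n-1}(t)$, which extends coordinatewise to $\partial_j H_I(\x) = \sqrt{i_j}\,H_{I-\be_j}(\x)$ for $i_j \geq 1$ and vanishes otherwise. Differentiating the Hermite expansion of $f$ term by term and applying Parseval in each coordinate gives
\[
\E_{\x\sim\normal}[(\partial_j f(\x))^2] \;=\; \sum_{I:\,i_j\geq 1} i_j\,\widehat f(I)^2,
\]
and summing over $j \in [d]$ collects the factor $\sum_j i_j = |I|$ in front of each $\widehat f(I)^2$, yielding the claimed identity. (The termwise differentiation is justified by the assumption that $\E[\|\nabla f\|_2^2] < \infty$, which places $f$ in the Gaussian Sobolev space and makes its Hermite series converge in the appropriate norm; for merely almost-everywhere differentiable $f$, one can approximate by smooth functions via OU-smoothing and pass to the limit, since $T_\rho f \to f$ in $L^2$ and $\nabla T_\rho f$ is controlled by $\nabla f$.)

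Finally, I would combine the two previous displays. For every multi-index $I$ with $|I| > m$ we have $1 \leq |I|/(m+1)$, hence
\[
\sum_{|I| > m} \widehat f(I)^2 \;\leq\; \frac{1}{m+1}\sum_{|I| > m} |I|\,\widehat f(I)^2 \;\leq\; \frac{1}{m+1}\sum_{I \in \N^d} |I|\,\widehat f(I)^2 \;=\; \frac{1}{m+1}\,\E_{\x\sim\normal}[\|\nabla f(\x)\|_2^2],
\]
which is the desired $O(1/m)$ bound. The only slightly subtle point is the justification of the Dirichlet identity under the weak regularity hypothesis; this is the main technical obstacle, and it is handled by the standard mollification/density argument in Gaussian Sobolev spaces sketched above.
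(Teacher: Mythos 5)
Your proposal is correct and follows essentially the same route as the paper: Parseval to write the tail as $\sum_{|I|>m}\widehat f(I)^2$, the Dirichlet-form identity $\E[\|\nabla f\|_2^2]=\sum_I |I|\widehat f(I)^2$, and the elementary bound $1\le |I|/m$ for $|I|>m$. The only difference is that you prove the Dirichlet identity from the Hermite derivative recurrence (and sketch the density argument for weak regularity), whereas the paper simply cites it from \cite{KTZ19}.
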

\begin{proof}
We denote as $\Pm{> m}f$ the Hermite expansion of $f$, which contains the terms with degrees higher than $m$. We have that
\begin{align*}
    \E_{\x\sim \normal}[(f(\x)-\Pm{m} f (\x))^2]=\E_{\x\sim \normal}[(\Pm{>m} f (\x))^2]= \sum_{I: |I| > m}  (\wh{f}(I))^2\leq  \frac{1}{m}\sum_{I: |I| > m}  |I|(\wh{f}(I))^2\;,
\end{align*}
where in the last inequality, we used that $1\leq |I|/m$. 
Furthermore, (see, e.g., the proof of Lemma 6 in \cite{KTZ19}) 
we have that for a continuous and (almost everywhere) differentiable 
function $f$, it holds that 
\[
\E_{\x \sim \normal}[\| \nabla f(\x)\|_2^2]=\sum_{I\in \N^d}  |I|(\wh{f}(I))^2\;.\] 
Combining the above, the result follows.
\end{proof}

As we discussed in \Cref{sec:techniques} to show that an approximately optimal, low-dimensional concept exists we will use the Gaussian Marginalization Operator defined below.
\begin{definition}[Gaussian Marginalization Operator]
\label{def:gaussian-marginalization-operator}
Let $U$ be a subspace of $\R^d$.  Denote by 
$D_{U^\perp}$ the standard normal distribution on
the subspace $U^\perp$ (we assume that a vector 
$\vec z \sim D_{U^\perp}$ is a $d$-dimensional vector
that lies in $U^\perp$).
Given a function $f\in L^2(\normal)$, we denote
by $\Pi_U f$ the linear operator defined by
\[ 
(\Pi_U f)(\x) = \E_{\vec z \sim D_{U^\perp}}[f(\proj_{U}(\x) + \vec z )] \,. \]
\end{definition}

\paragraph{Motivation about the Gaussian Marginalization Operator, $\Pi_V$}
By the assumption of \Cref{inf-prop:dimension-reduction}, 
all directions in the orthogonal complement $V^\perp$ are low-influence, 
i.e., for $\vec h \in V^\perp$ it holds 
$\E_{\x \sim \normal}[(\vec h \cdot \nabla \wt y(\x))^2] \leq O(\epsilon^2/k)$.
In words, the function $\wt y$ is ``approximately constant'' 
along some low-influence direction $\vec h$. 
Let us first assume that $\wt y$ is exactly constant on all directions 
of $V^\perp$.  Then, in order to preserve
the correlation of $\wt y$ with $f$,  
\emph{we only need to match the expected value of $f$ over $V^\perp$}. 
This motivates the following ``Gaussian Marginalization Operator''
of \Cref{def:gaussian-marginalization-operator}.
Indeed, if $\wt y$ was constant on $V^\perp$, using the fact that 
$\proj_V \x$ and $\proj_{V^\perp} \x$ are independent standard Gaussians, 
we would obtain that
$$
\E_{\vec z \sim \normal}[\E_{\x \sim \normal}f(\proj_V(\x) + \proj_{V^\perp}(\vec z) ) \wt y (\x)]] - 
\E_{\x\sim \normal}[f(\x) \wt y (\x) ]
= 0  \;.$$
We observe that since $\Pi_V f$ is a convex combination 
of different translations of $f$ and $\mathfrak{B}(\Gamma, k)$ 
is closed under translations, we obtain that the Gaussian surface area of $f$  is also bounded above by $\Gamma$.

In the next lemma, we collect some useful properties of the Gaussian Marginalization Operator.
\begin{lemma}\label{lem:properties-of-the-operators}
    Let $g\in L^2(\normal)$  and $V\subseteq \R^d$. We have the following properties for the operator $\Pi_V$.
    \begin{itemize}
        \item $\Pi_V$ are contractions, i.e., $\E_{\x\sim \normal}[(\Pi_V g(\x))^2]\leq \E_{\x\sim \normal}[ g^2(\x)]$.
\item Let $U,V\subseteq \R^d$, it holds that $\Pi_V\Pi_{U+V} g=\Pi_{V+U}\Pi_{V+U^\perp} g=
        \Pi_V g$.
    \end{itemize}
\end{lemma}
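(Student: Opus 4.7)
The plan is to prove both claims by combining Jensen's inequality with the product structure of the standard Gaussian measure along any orthogonal decomposition of $\R^d$, together with the observation that $\proj_V \x$ and $\proj_{V^\perp} \x$ are independent standard Gaussians on $V$ and $V^\perp$ respectively when $\x \sim \normal$.

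For the contraction property, I would apply Jensen pointwise:
\[
(\Pi_V g(\x))^2 = \bigl(\E_{\vec z \sim D_{V^\perp}}[g(\proj_V \x + \vec z)]\bigr)^2 \leq \E_{\vec z \sim D_{V^\perp}}[g(\proj_V \x + \vec z)^2].
\]
Taking expectation over $\x \sim \normal$ and applying Fubini, the distributional identity that $\proj_V \x + \vec z \sim \normal$ when $\vec z \sim D_{V^\perp}$ is independent of $\x$ immediately gives $\E_{\x \sim \normal}[(\Pi_V g(\x))^2] \leq \E_{\x \sim \normal}[g(\x)^2]$, as desired.

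For the composition identity $\Pi_V \Pi_{U+V} g = \Pi_V g$, the nesting $V \subseteq U+V$ is the key structural hook. I would split the noise $\vec z' \sim D_{V^\perp}$ into independent Gaussian summands $\vec z' = \vec z'_1 + \vec z'_2$ along the orthogonal decomposition $V^\perp = (V^\perp \cap (U+V)) \oplus (U+V)^\perp$, and expand
\[
\Pi_V \Pi_{U+V} g(\x) = \E_{\vec z'}\, \E_{\vec z \sim D_{(U+V)^\perp}}\bigl[g(\proj_{U+V}(\proj_V \x + \vec z') + \vec z)\bigr].
\]
Since $\proj_{U+V}\proj_V \x = \proj_V \x$ and $\proj_{U+V} \vec z' = \vec z'_1$, the summand $\vec z'_2 \in (U+V)^\perp$ drops from the argument of $g$ and integrates to $1$. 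The remaining combined noise $\vec z'_1 + \vec z$ is a standard Gaussian on $V^\perp$ by independence and the orthogonal decomposition above, so the expression collapses to $\Pi_V g(\x)$.

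The identity $\Pi_{V+U}\Pi_{V+U^\perp} g = \Pi_V g$ follows by an analogous orthogonal decomposition argument. The successive projections $\proj_{V+U^\perp}$ and $\proj_{V+U}$ restrict the inner argument of $g$ to their common range, namely $V$, while the two noise integrations — against $D_{(V+U^\perp)^\perp}$ and $D_{(V+U)^\perp}$ — combine into a single integration of independent Gaussian noise over $V^\perp$, using the decomposition of $V^\perp$ into the pieces $V^\perp \cap U$ and $V^\perp \cap U^\perp$ (plus any cross-component absorbed by the projections). The main (and only) obstacle I anticipate is the bookkeeping of which Gaussian coordinates are preserved by each projection and which get averaged out; once this is tracked carefully via the tower property of conditional expectation applied to $\sigma(\proj_V \x)$ versus $\sigma(\proj_{V+U} \x)$ and $\sigma(\proj_{V+U^\perp} \x)$, the identity reduces to the same Fubini-style manipulation as in the first case.
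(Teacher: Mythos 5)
The contraction inequality and the identity $\Pi_V\Pi_{U+V}g=\Pi_V g$ are argued correctly and by essentially the same route as the paper: Jensen for the former, and for the latter the orthogonal splitting $V^\perp = \bigl(V^\perp\cap(U+V)\bigr)\oplus(U+V)^\perp$, dropping the component of the $V^\perp$-noise that lands in $(U+V)^\perp$ and recombining the rest with the outer $(U+V)^\perp$-noise to recover a standard Gaussian on $V^\perp$.

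The middle equality $\Pi_{V+U}\Pi_{V+U^\perp}g = \Pi_V g$ is where there is a genuine gap. You assert that the composed projections restrict the argument to ``their common range, namely $V$,'' and that $V^\perp$ decomposes into $V^\perp\cap U$, $V^\perp\cap U^\perp$, ``plus any cross-component absorbed by the projections.'' That cross-component is precisely the obstruction, and it is not absorbed: for general $U,V$ one has $(V^\perp\cap U)\oplus(V^\perp\cap U^\perp)\subsetneq V^\perp$ and $(V+U)\cap(V+U^\perp)\supsetneq V$. Concretely, take $d=2$, $V=\mathrm{span}(\vec e_1)$, $U=\mathrm{span}(\vec e_1+\vec e_2)$: then $V+U=V+U^\perp=\R^2$, so $\Pi_{V+U}\Pi_{V+U^\perp}$ is the identity operator, whereas $\Pi_V$ annihilates $g(\x)=\x_2$. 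Thus the middle identity in the lemma statement is false for generic $U,V$; it requires extra structure such as $U\perp V$. It is worth flagging that the paper's own proof has exactly the same gap: the displayed chain of equalities correctly lands on $\Pi_V g$, but the final asserted identification of that expression with $\Pi_{U+V}\Pi_{U^\perp+V}g$ does not follow and fails in the example above. The content of this lemma that is both correct and what the downstream dimension reduction actually relies on is the contraction bound together with $\Pi_V\Pi_{U+V}g=\Pi_V g$, both of which you prove.
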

\begin{proof}
    To show that $\Pi_V$ is a contraction, note that 
    $\E_{\x\sim \normal}[(\Pi_V g(\x))^2]\leq \E_{\x\sim \normal}\E_{\vec z\sim \normal_{V^\perp}}[ g^2(\x_V+\vec z)]=\E_{\x\sim \normal}[ g^2(\x)]$, where we used Jensen's inequality.
    For the second part, let $H=(U+V)^\perp$ and note that
    \begin{align*}
        \Pi_V\Pi_{U+V} g&=\Pi_V\E_{\vec z\sim \normal_{H}}[g(\x_{U+V} +\vec z)]=\Pi_V\E_{\vec z\sim \normal_{H}}[g(\x_{V} +\x_{U/V}+\vec z)]
        =\E_{\vec z\sim \normal_{H}}[\Pi_V g(\x_{V} +\x_{U/V}+\vec z)]\\&=\E_{\vec z\sim \normal_{H}}[\E_{\vec z'\sim \normal_{U/V}}[ g(\x_{V} +\vec z'+\vec z)] =\Pi_{U+V}\Pi_{U^\perp +V}g\;,
    \end{align*}
    where we used Fubini's theorem.
\end{proof}

\begin{lemma}\label{lem:properties-of-the-operators-2}
    Let $g\in L^2(\normal)$, $m\in \N$ and $V\subseteq \R^d$. We have the following properties for the operators $\Pm{m}$ and $\Pi_V$.
    \begin{itemize}
        \item $\Pm{m}$ is a contraction, i.e., $\E_{\x\sim \normal}[(\Pm{m} g(\x))^2]\leq \E_{\x\sim \normal}[ g^2(\x)]$.
        \item $\Pm{m}$ and $\Pi_V$ commute, i.e., $\Pm{m} \Pi_V g= \Pi_V \Pm{m} g$.
\end{itemize}
\end{lemma}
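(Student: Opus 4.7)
The plan is to prove both properties by working in a Hermite basis adapted to the subspace $V$. Choose an orthonormal basis of $\R^d$ whose first $\dim(V)$ vectors span $V$ and whose remaining vectors span $V^\perp$; since $\normal$ is rotationally invariant, the normalized Hermite polynomials built in this basis still form a complete orthonormal basis of $L^2(\normal)$. Writing $\x = \x_V + \x_{V^\perp}$ in this decomposition, each multivariate Hermite polynomial factors as $H_I(\x) = H_{I_V}(\x_V)\, H_{I_{V^\perp}}(\x_{V^\perp})$, where $I_V, I_{V^\perp}$ are the restrictions of the multi-index $I$, and the total degree satisfies $|I| = |I_V| + |I_{V^\perp}|$. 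Expand $g = \sum_I \widehat{g}(I)\, H_I$.

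For the contraction property, simply note that $P_m g = \sum_{|I| \le m} \widehat{g}(I) H_I$. Since the $H_I$ are orthonormal, Parseval's identity gives
\[
\E_{\x \sim \normal}[(P_m g(\x))^2] = \sum_{|I| \le m} \widehat{g}(I)^2 \;\le\; \sum_{I} \widehat{g}(I)^2 = \E_{\x \sim \normal}[g^2(\x)].
\]

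For commutativity, I would first compute $\Pi_V$ on a basis element. Using Fubini and the fact that $\vec z \sim \normal_{V^\perp}$ is a standard Gaussian on $V^\perp$,
\[
\Pi_V H_I(\x) = H_{I_V}(\x_V)\, \E_{\vec z \sim \normal_{V^\perp}}\!\left[H_{I_{V^\perp}}(\vec z)\right].
\]
By orthonormality against the constant $H_{\vec 0} \equiv 1$, the expectation is $1$ if $I_{V^\perp} = \vec 0$ and $0$ otherwise. Hence $\Pi_V$ acts on the basis as the projection onto the span of those $H_I$ with $I_{V^\perp}=\vec 0$. Crucially, this operation never raises the total degree $|I|$. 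Applying $\Pi_V$ then $P_m$ yields
\[
P_m \Pi_V g = \!\!\!\sum_{\substack{I:\, I_{V^\perp}=\vec 0 \\ |I| \le m}}\!\! \widehat{g}(I)\, H_I,
\]
and applying $P_m$ then $\Pi_V$ yields the identical sum. This proves $P_m \Pi_V g = \Pi_V P_m g$.

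The only genuine subtlety is justifying that we may use a Hermite basis aligned with $V$ rather than the original coordinate basis. This is standard: under any orthogonal change of coordinates $\vec x \mapsto \vec Q\vec x$, the standard Gaussian is invariant and the collection $\{H_I \circ \vec Q^{-1}\}$ is again an orthonormal basis of $L^2(\normal)$ whose members have the same total degree as $H_I$ (since the span of degree-$\le m$ polynomials is rotation-invariant). Once this invariance is noted, the above computations go through verbatim, and both properties follow with no further calculation.
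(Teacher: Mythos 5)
Your proof is correct and takes essentially the same route as the paper: expand in a Hermite basis aligned with $V$, use Parseval for the contraction, and observe that $\Pi_V H_I$ equals $H_I$ or $0$ according to whether $I_{V^\perp}$ vanishes, so that $\Pi_V$ and $P_m$ both act diagonally (hence commute). Your version is slightly cleaner in that you state explicitly that $\Pi_V$ is the orthogonal projection onto $\mathrm{span}\{H_I : I_{V^\perp}=\vec 0\}$ and that it never raises total degree, and you spell out the rotation-invariance justification for choosing a basis aligned with $V$, which the paper's proof uses implicitly without comment.
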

\begin{proof}
       First, we show that $\Pm{m}$ is a contraction. Using the $g\in L^2(\normal)$, we have that $g$ admits a Hermite expansion. We denote as $\Pm{> m}g$ the Hermite expansion of $g$, which contains the terms with degrees higher than $m$. We have that
    \[
    \E_{\x\sim \normal}[ g^2(\x)]=\E_{\x\sim \normal}[ (\Pm{m} g(\x) + P_{>m}g(\x))^2]=\E_{\x\sim \normal}[ (\Pm{m} g(\x))^2 + (P_{>m}g(\x))^2]\;,
    \]
    where in the last equality, we used that the Hermite basis is orthogonal, hence $\E_{\x\sim \normal}[\Pm{m} g(\x)P_{>m}g(\x)]=0$. Therefore,  
    $\E_{\x\sim \normal}[ g^2(\x)]\geq \E_{\x\sim \normal}[ (\Pm{m} g(\x))^2]$.

Next, we show that $\Pi_U$ and $\Pm{m}$ commute.
\begin{claim}[$\Pm{m}$ and $\Pi_U$ commute]
\label{lem:projection-hermite-commutativity}
Let $g\in L^2(\normal)$, $m \in \mathbb N$,
and $V$ be a subspace of $\R^d$. 
It holds that $\Pm{m} \Pi_V g = \Pi_V \Pm{m} g$.
\end{claim}
\begin{proof}
Because $\Pm{m}$ and $\Pi_V$ are linear operators, it suffices to show the above each term of the Hermite basis, i.e.,
\[
\E_{\x\sim \normal}[\Pi_V g(\x)H_{I}(\x)]H_{I}(\x)=\E_{\vec z\sim \normal_{V^\perp}}[\E_{\x\sim \normal}[g(\x)H_{I}(\x)]H_{I}(\x_V+\vec z)]\;.
\]
Note that if $H_{I}(\x)$ does not depend on $V^\perp$, then $\E_{\vec z\sim \normal_{V^\perp}}[H_{I}(\x_V+\vec z)]=H_{I}(\x)$. Therefore, we have 
    \begin{align*}
        \E_{\vec z\sim \normal_{V^\perp}}[\E_{\x\sim \normal}[g(\x)H_{I}(\x)]H_{I}(\x_V+\vec z)]&=
\E_{\x\sim \normal}[g(\x)H_{I}(\x)]H_{I}(\x)
=\E_{\x\sim \normal_V}[\E_{\vec z\sim \normal_{V^\perp}}[g(\x_V+\vec z)H_{I}(\x+\vec z)]]H_{I}(\x)
\\&=\E_{\x\sim \normal_V}[\E_{\vec z\sim \normal_{V^\perp}}[g(\x_V+\vec z)]H_{I}(\x)]H_{I}(\x)\\&=\E_{\x\sim \normal}[\Pi_V g(\x)H_{I}(\x)]H_{I}(\x)\;.
    \end{align*}
    In the case where $H_{I}(\x)$ depends on $V^\perp$, we have that $\E_{\vec z\sim V^\perp}[H_{I}(\x_V+\vec z)]=0$. Therefore, it suffices to prove that $\E_{\x\sim \normal}[\Pi_V g(\x)H_{I}(\x)]H_{I}(\x)=0$. Note that 
    \begin{align*}
        \E_{\x\sim \normal}[\Pi_V g(\x)H_{I}(\x)]H_{I}(\x)&=\E_{\x\sim \normal_V}[\E_{\vec z\sim \normal_{V^\perp}}[\Pi_V g(\x+\vec z)]H_{I}(\x_V+\vec z)]H_{I}(\x)
        \\&=\E_{\x\sim \normal_V}[\Pi_V g(\x)\E_{\vec z\sim \normal_{V^\perp}}[H_{I}(\x_V+\vec z)]]H_{I}(\x)=0\;.
    \end{align*}
\end{proof}This completes the proof of \Cref{lem:properties-of-the-operators-2}.
\end{proof}

\begin{Ualgorithm}
	\centering
	\fbox{\parbox{6in}{
			{\bf Input:}   $\eps>0$, $\delta>0$ and sample and query access to distribution $\D$\\
			{\bf Output:}  An estimation of $\vec M=\E_{\x\sim \D_\x}[ D_\rho y(\x) D_\rho y(\x)^\top]$.
			\begin{enumerate}
				\item $\rho\gets C\eps^2$, $\eta\gets C\eps^2$, for $C>0$ sufficiently small constant. 
				\item Let $S_N$ be the set that contains $N$ samples $\x^{(1)},\ldots,\x^{(N)}$ from the distribution $\D$.
    \item For each $\x\in S_N$, use \Cref{alg:gradient-queries} to get a gradient estimate $\widehat{(D_\rho y)}(\x)$ of $(D_\rho y) (\vec x)$.
                    \item  $\textbf{return } \widehat{\vec M}=\frac{1}{N}\sum_{i=1}^N\widehat{(D_\rho y)}(\x^{(i)})\widehat{(D_\rho y)}(\x^{(i)})^\top$. 
			\end{enumerate}
	}}
 \medskip
	\caption{Estimation of the influence matrix $\vec M$ with Queries } \label{alg:membership-est}
\end{Ualgorithm}
Having access to the gradient, enables us to calculate the influence matrix of the function which captures the sensitivity of the function in different directions.  We formally define the influence 
matrix of a function $g$.
\begin{definition}[Influence Matrices]
Given a differentiable $g \in L^2(\normal)$, we define the influence matrix as
\[
\boldsymbol{\mathrm{Inf}}_g \eqdef \Exn[\nabla g(\x) \nabla g(\x)^\top].
\]
Fix $\rho \in (0, 1)$.  Given $g \in L^2(\normal)$ (not necessarily differentiable), 
we define its $\rho$-smoothed influence matrix as 
\[
\boldsymbol{\mathrm{Inf}}^{\rho}_g \eqdef \Exn[D_\rho g(\x) (D_\rho g(\x)) ^\top] \,.
\]
\end{definition}

\subsection{Influence PCA for Learning in $L_2^2$}

In this section we show that for learning real-valued concepts of bounded variation
in $L_2^2$ we can effectively reduce the dimension of the problem via PCA in
the influence of the smoothed label $T_\rho y$.  
We show that the low-degree polynomial approximation of the 
smoothed label $T_\rho y$ can be projected down to the subspace $V$ via the
Gaussian Marginalization Operator. In other words, we construct an explicit
polynomial approximation of the label $T_\rho$ that depends only on the low-dimensional subspace $V$.  We now state our dimension-reduction result.

    \begin{proposition}\label{prop:main-prop-real-values}
         Fix $\eps,\valb,\gradb,Q > 0$ and let $\psi : \R^d \mapsto \R$ with
    $|\psi(\x)|\leq Q$ and $\|\nabla \psi(\x)\|_2\leq \Psi$. 
    Let $\eta$ be sufficiently small multiple of $\eps^2/(k\valb)$ and $m$ be sufficiently large multiple of $(Q^2\gradb)/\eps^2$. Let $\wh{\vec M} $ so that $\|\boldsymbol{\mathrm{Inf}}^{}_\psi-\wh{\vec M}\|_2\leq \eta/2$ and let $V$ be the subspace spanned by 
    all the eigenvectors of $\wh{\vec M}$ whose corresponding eigenvalues are 
    at least $\eta$.  Then, it holds
    \begin{enumerate}
        \item     \[
    \E_{\x\sim \normal}[(\Pm{m} \Pi_V \psi (\x)-\psi(\x))^2]\leq \inf_{f\in \mathfrak R(\valb,\gradb, k) }\E_{\x\sim \normal}[((\psi(\x)-f(\x))^2]+\eps\;.
    \]
    \item The dimension of $V$ is at most $O(\Psi^2/\eta)$.
    \end{enumerate}

    \end{proposition}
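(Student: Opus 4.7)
My plan is to show that $\Pm{m} \Pi_V \psi$ is the orthogonal projection of $\psi$ onto the subspace $R$ of degree-at-most-$m$ polynomials depending only on $V$, and to compare its $L_2^2$ error against $\E[(h-\psi)^2]$ where $h = \Pm{m} \Pi_V f$ for an arbitrary competitor $f \in \mathfrak R(\valb, \gradb, k)$. Since $\Pm{m}$ and $\Pi_V$ are orthogonal projections on $L^2(\normal)$ that commute by \Cref{lem:properties-of-the-operators-2}, their composition is the orthogonal projection onto $R$. Setting $r = f - h = (I - \Pi_V) f + (I - \Pm{m})\Pi_V f$, the orthogonality $\E[(f-h) h] = 0$ gives $\E[r f] = \E[r^2]$, and a short algebraic manipulation yields
\[
\E[(\Pm{m}\Pi_V \psi - \psi)^2] - \E[(f-\psi)^2] \leq \E[(h-\psi)^2] - \E[(f-\psi)^2] = -\E[r^2] + 2\E[r\psi] \leq 2|\E[r\psi]|.
\]
The problem thus reduces to bounding $|\E[r\psi]|$ over the two components of $r$.

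For the polynomial piece $\E[(I - \Pm{m}) \Pi_V f \cdot \psi]$, I would apply Cauchy--Schwarz with $\|\psi\|_{L^2} \leq Q$ and invoke \Cref{lem:low-degree-approximation-smooth-functions} on $\Pi_V f$. Since $\Pi_V$ is a conditional expectation, Jensen yields $\Exn[\|\nabla \Pi_V f\|_2^2] \leq \Exn[\|\nabla f\|_2^2] \leq \gradb$, hence $\|(I - \Pm{m}) \Pi_V f\|_{L^2}^2 \leq \gradb/m$. The resulting bound $Q\sqrt{\gradb/m}$ is $O(\eps)$ once $m$ is a sufficiently large multiple of $Q^2\gradb/\eps^2$, matching the hypothesis.

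The correlation piece $\E[(I - \Pi_V) f \cdot \psi]$ is where the dimension reduction is crucially used. Since $f$ depends only on a $k$-dimensional subspace $U$, we have $\Pi_W f = f$ with $W = U + V$. Combining self-adjointness of $\Pi_W, \Pi_V$ with the identity $\Pi_V \Pi_W = \Pi_V$ from \Cref{lem:properties-of-the-operators}, I would rewrite this term as $\E[f \cdot (\Pi_W - \Pi_V)\psi] = \E[f \cdot (I - \Pi_V) \Pi_W \psi]$. Cauchy--Schwarz and $\|f\|_{L^2} \leq \sqrt{\valb}$ reduce the bound to $\sqrt{\valb} \cdot \|(I - \Pi_V)\Pi_W \psi\|_{L^2}$. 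Applying the Gaussian Poincar\'e inequality along the fibers of the projection onto $V$ (restricted to $W$) with $\{\e_i\}$ an orthonormal basis of $S = W \cap V^\perp$ (so $\dim S \leq \dim U \leq k$),
\[
\|(I - \Pi_V)\Pi_W \psi\|_{L^2}^2 \leq \sum_{i=1}^{\dim S} \e_i^\top \boldsymbol{\mathrm{Inf}}_{\Pi_W \psi} \e_i \leq \sum_{i=1}^{\dim S} \e_i^\top \boldsymbol{\mathrm{Inf}}_\psi \e_i,
\]
where the second step follows from another application of Jensen for the conditional expectation $\Pi_W$. Each $\e_i \in V^\perp$ satisfies $\e_i^\top \wh{\vec M} \e_i \leq \eta$ by the eigenvalue threshold defining $V$, and combined with $\|\wh{\vec M} - \boldsymbol{\mathrm{Inf}}_\psi\|_2 \leq \eta/2$ this yields $\e_i^\top \boldsymbol{\mathrm{Inf}}_\psi \e_i \leq 3\eta/2$. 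The correlation term is thus at most $\sqrt{3 \valb k \eta/2}$, which is $O(\eps)$ once $\eta$ is a sufficiently small multiple of $\eps^2/(k\valb)$. The main subtlety, and the step most likely to trip up a naive analysis, is this reduction: the correlation error only ``sees'' $S = W \cap V^\perp$ of dimension at most $k$, rather than all of $V^\perp$ (whose dimension can be $\Omega(d)$); this is precisely where low-dimensionality of the target and the identity $\Pi_V \Pi_W = \Pi_V$ pay off.

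The dimension bound (item 2) then follows from a direct trace argument: if $V$ is spanned by eigenvectors of $\wh{\vec M}$ of eigenvalue at least $\eta$, then $(\dim V) \eta \leq \tr(\proj_V \wh{\vec M} \proj_V) \leq \tr(\proj_V \boldsymbol{\mathrm{Inf}}_\psi \proj_V) + (\dim V)\eta/2 \leq \Exn[\|\nabla \psi\|_2^2] + (\dim V)\eta/2 \leq \Psi^2 + (\dim V)\eta/2$, yielding $\dim V \leq 2\Psi^2/\eta$.
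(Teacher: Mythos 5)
Your proof is correct and, in my view, cleaner in two respects. First, you exploit the observation that $\Pm{m}\Pi_V$ is the orthogonal projection onto degree-$\leq m$ polynomials over $V$, which lets you compare $\Pm{m}\Pi_V\psi$ against the projection $h = \Pm{m}\Pi_V f$ and immediately reduce to the single quantity $2|\E[r\psi]|$ via the Pythagorean identity $\E[rf]=\E[r^2]$. The paper instead proves an ad hoc decomposition (its \Cref{lem:excess-l2-error-decomposition}) by tracking cancellations term-by-term; the conclusions are the same (poly-approximation error plus correlation error), but your route avoids that bookkeeping entirely. Second, and more substantively, you reduce the correlation error to a variance over $S = W\cap V^\perp$ with $W = U+V$, and then apply Poincar\'e on an orthonormal basis of $S$. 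This is the correct subspace: since $V \subseteq W$, the decomposition $W = V \oplus S$ is orthogonal, $\dim S \leq \dim U \leq k$, and integrating $\Pi_W\psi$ over $V^\perp$ amounts to integrating over $S$. The paper's auxiliary \Cref{lem:gaussian-smoothing-geometric} instead phrases the bound in terms of $U \cap V^\perp$, which in general is a strictly smaller space than $W \cap V^\perp$ (indeed $U\cap V^\perp$ can be trivial while the marginalization error is nonzero, e.g., $V = \mathrm{span}(\e_1)$, $U = \mathrm{span}(\e_1 + \e_2)$ in $\R^2$); the root cause is a false intermediate identity $\Pi_V\Pi_{U+V} = \Pi_{V+U}\Pi_{V+U^\perp}$ used in that lemma's proof. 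Your argument sidesteps this and proves the quantitative claim directly; the final bound is the same because what is ultimately used is only that the relevant subspace lies in $V^\perp$ and has dimension at most $k$.

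Two minor points worth flagging. When bounding the polynomial piece, you apply \Cref{lem:low-degree-approximation-smooth-functions} to $\Pi_V f$ rather than $f$ directly; the intermediate Jensen step $\Exn[\|\nabla \Pi_V f\|_2^2] \leq \Exn[\|\nabla f\|_2^2] \leq L$ is indeed valid and your formulation is marginally tighter. And your handling of the dimension bound is more careful than the paper's: \Cref{lem:dim-of-subspace} is stated for the exact influence matrix $\boldsymbol{\mathrm{Inf}}_\psi$, while $V$ is defined via $\wh{\vec M}$, so one does need the extra $\|\wh{\vec M} - \boldsymbol{\mathrm{Inf}}_\psi\|_2 \leq \eta/2$ argument you spell out to close the gap. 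Both versions land at $\dim V = O(\Psi^2/\eta)$.
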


    \begin{proof}[Proof of \Cref{prop:main-prop-real-values}] Fix $f\in \mathfrak R(\valb,\gradb, k)$. By assumption, there exists a subspace $U$ of dimension at most $k$, so that $f$ depends only on $U$, i.e., $f(\x)=f(\proj_U \x)$. Therefore, $\Pi_{U+ V}f(\x)=f(\x)$.
\begin{lemma}[Excess $L_2^2$ Error Decomposition]\label{lem:excess-l2-error-decomposition}
We have
\begin{align*}
\mathcal{E}_2(P_m \Pi_V \psi, f; \psi)
\leq 
Q \underbrace{(\E_{\x \sim \normal}[(f(\x) - P_m f(\x))^2])^{1/2}}_{\text{Polynomial Approximation Error}}
+ 2 \underbrace{\E_{\x \sim \normal}[(\psi(\x) - \Pi_V \psi(\x) ) f(\x)]}_{\text{Correlation Error}}
\,.
\end{align*}

\end{lemma}

\begin{proof}
We have that
\begin{align*}
  \E_{\x\sim \normal}[&(\psi(\x)-\Pm{m}\Pi_V \psi(\x))^2]- \E_{\x\sim \normal}[(\psi(\x)-f(\x))^2]\\&= \E_{\x\sim \normal}[(\Pm{m}\Pi_V \psi(\x))^2-f^2(\x)]+2\E_{\x\sim \normal}[ \psi (\x)(f(\x)-\Pm{m}\Pi_{V} \psi(\x))]
  \\&=\underbrace{\E_{\x\sim \normal}[(\Pm{m}\Pi_V \psi(\x))^2-f^2(\x)]+2\E_{\x\sim \normal}[ \Pi_{V} \psi (\x)(f(\x)-\Pm{m}\Pi_V \psi(\x))]}_{I}+2\E_{\x\sim \normal}[ (\psi (\x)-\Pi_{V} \psi (\x))f(\x)]\;,
\end{align*}
where we used that $\E_{\x\sim \normal}[ (\psi (\x)-\Pi_{V} \psi (\x))\Pm{m}\Pi_{V} \psi(\x)]=0$.
Furthermore, note that $\Exn[\Pi_{V} \psi (\x)\Pm{m}\Pi_V \psi(\x)]=\Exn[(\Pm{m}\Pi_V \psi(\x))^2]$, therefore, we have that
\begin{align*}
    I&=
    \E_{\x\sim \normal}[-(\Pm{m}\Pi_V \psi(\x))^2-f^2(\x)]+2\E_{\x\sim \normal}[ \Pi_{V} \psi (\x)f(\x)]
    \\&\leq \E_{\x\sim \normal}[-(\Pm{m}\Pi_V \psi(\x))^2-(\Pm{m}f(\x))^2]+2\E_{\x\sim \normal}[ \Pi_{V} \psi (\x)f(\x)]
    \\&= -\E_{\x\sim \normal}[(\Pm{m}\Pi_V \psi(\x)-\Pm{m}f(\x))^2]+ 2\E_{\x\sim \normal}[ \Pi_{V} \psi (\x)(f(\x)-\Pm{m}f(\x))]
    \\&\leq 2\E_{\x\sim \normal}[ \Pi_{V} \psi (\x)(f(\x)-\Pm{m}f(\x))]\;.
\end{align*}
Using that $\Exn[(\Pi_{V} \psi (\x))^2]\leq \Exn[( \psi (\x))^2]\leq Q^2$ and \CS inequality we get that 
$
\E_{\x\sim \normal}[ \Pi_{V} \psi (\x)(f(\x)-\Pm{m} f(\x))]\leq Q \E_{\x \sim \normal}[(f(\x) - P_m f(\x))^2]^{1/2}$. This completes the proof of 
\Cref{lem:excess-l2-error-decomposition}.
\end{proof}

\begin{lemma}[Correlation Error Bound] \label{lem:correlation-bound-real}
It holds 

\begin{equation} 
    \E_{\x\sim \normal}[
    (\psi(\x)- \Pi_V \psi(\x)) f(\x)] \leq  O(\eps)\;.
\end{equation}
\end{lemma}
\begin{proof}

Note that $f(\x)$ depends only on the subspace $U$, therefore, $\Pi_{U+V}f(\x)=f(\x)$. Therefore, we have that
\begin{align*}
     \E_{\x\sim \normal}[
    (\psi(\x)- \Pi_V \psi(\x)) f(\x)] &= \E_{\x\sim \normal}[
    (\Pi_{V+U}\psi(\x)- \Pi_{V+U}\Pi_V \psi(\x)) f(\x)] 
    \\&=\E_{\x\sim \normal}[
    (\Pi_{V+U}\psi(\x)-\Pi_V  \Pi_{V+U}\psi(\x)) f(\x)]
    \\&\leq \left(\E_{\x\sim \normal}[
    (\Pi_{V+U}\psi(\x)-\Pi_V  \Pi_{V+U}\psi(\x))^2]\E_{\x\sim \normal}[
   f^2(\x)]\right)^{1/2}\;,
\end{align*}
where in the last equality we used \Cref{lem:properties-of-the-operators-2} and in the last inequality we used the \CS inequality. Note that $\Exn[f^2(\x)]\leq M$. To bound the other term we show that $\E_{\x\sim \normal}[(\Pi_{V+ U} \psi (\x)-\Pi_{V} \Pi_{U+V} \psi (\x))^2]$ is small. For that, we prove a generalization of \Cref{lem:gaussian-smoothing}.
\begin{lemma}[Generalized Gaussian Marginalization  Error]\label{lem:gaussian-smoothing-geometric}
Let $g: \R^d \mapsto \R$ be a function in $L^2(\normal)$ such that $\nabla g \in L^2(\normal)$ and let $V, U$ be subspaces of $\R^d$.
It holds 
\[
\E_{\x \sim \normal}[(\Pi_{V+U} g(\x) - \Pi_{V} \Pi_{V+U} g(\x))^2] \leq 
\dim( V^\perp\cap U) \max_{\vec v\in V^\perp\cap U,\|\vec v\|_2=1}\E_{\x \sim \normal}[(\nabla g(\x) \cdot \vec v)^2] \,.
\]
\end{lemma}
\begin{proof}
Assume that $\dim(V^\perp\cap U)=k\leq d$. Using the rotation invariance of the Gaussian distribution, without loss of generality, we may 
assume that $\vec e_1,\ldots, \vec e_k$ is a basis of $V^\perp\cap U$. Note that it holds $ \Pi_{V} \Pi_{U+V}  g(\x)=\Pi_{V+U^\perp} \Pi_{V+U} g(\x)=\Pi_{V+U}\Pi_{V+U^\perp} g(\x)$.
We have 
\begin{align*}
   \E_{\x \sim \normal}[(\Pi_{U+V} g(\x) - \Pi_{V} \Pi_{U+V} g(\x))^2] &=
\E_{\x \sim \normal}[(\Pi_{U+V} g(\x) - \Pi_{V+U}\Pi_{V+U^\perp}g(\x))^2]
\\&\leq \E_{\x \sim \normal}[( g(\x) - \Pi_{V+U^\perp}g(\x))^2]
\\&=
\E_{\x_{k+1},\ldots, \x_d \sim \normal }[ \var_{\x_1,\ldots \x_k \sim \normal} [g(\x_1, \ldots, \x_d)] ]
\\&\leq \frac 12 \E_{\x_{k+1},\ldots, \x_d \sim \normal} \left[\sum_{i=1}^k \E_{\x_{1},\ldots, \x_k \sim \normal} \left[\var_{\x_i \sim \normal} [g(\x_1, \ldots,\x_i,\ldots \x_d)] \right] \right]\,, 
\end{align*}
where in the inequality, we used Efron-Stein's inequality.
Using \Cref{lem:poincare}, for each $i\in [k]$ we have $\var_{\x_i \sim \normal} [g(\x_1, \ldots,\x_i,\ldots \x_d)]\leq \E_{\x_i\sim \normal}[(\nabla g(\x_1,\ldots,\x_d)\cdot \vec e_i)^2]$, and therefore we have
\begin{align*}
    \E_{\x \sim \normal}[(g(\x) - r(\x))^2] &\leq \sum_{i=1}^k
\E_{\x_1,\ldots, \x_d \sim \normal} [  (\nabla g(\x) \cdot \vec e_i)^2]\leq 
k\max_{\vec v\in H^\perp,\|\vec v\|_2=1}\E_{\x \sim \normal}[(\nabla g(\x) \cdot \vec v)^2]\;.
\end{align*}
This completes the proof of \Cref{lem:gaussian-smoothing-geometric}.
\end{proof}

From \Cref{lem:gaussian-smoothing-geometric}, we have that 
    \[
    \E_{\x\sim \normal}[( \Pi_{V+ U} \psi (\x)- \Pi_{V} \Pi_{U+V} \psi(\x))^2 ]\leq \dim(U \cap V^\perp) \max_{\vec v\in U \cap V^\perp,\|\vec v\|_2=1} \E_{\x\sim \normal}[((\nabla\psi (\x))\cdot \vec v)^2]\;.
    \]
    Furthermore note that $\max_{\vec v\in U \cap V^\perp,\|\vec v\|_2=1} \E_{\x\sim \normal}[((\nabla\psi (\x))\cdot \vec v)^2]\leq \eta/2+\max_{\vec v\in U \cap V^\perp,\|\vec v\|_2=1} \vec v^\top \wh{\vec M}\vec v\leq 2\eta$ because the subspace $U \cap V^\perp$ contains vectors with influence at most $\eta$. Note that $\dim(U \cap V^\perp)\leq \dim(U)\leq k$ and noting $\eta =O( \eps^2/(Mk))$ completes the proof of \Cref{lem:correlation-bound-real}.
\end{proof}

Combining \Cref{lem:correlation-bound-real,lem:excess-l2-error-decomposition} and using that  $\Exn[(f(\x)-\Pm{m} f(\x))^2]\leq \gradb/m$ from \Cref{lem:low-degree-approximation-smooth-functions}, we get that 
\[
    \E_{\x\sim \normal}[(\Pm{m} \Pi_V \psi (\x)-\psi(\x))^2]\leq \inf_{f\in \mathfrak R(\valb,\gradb, k) }\E_{\x\sim \normal}[((\psi(\x)-f(\x))^2]+\eps\;.
    \]

    To show that the subspace $V$ has small dimension, we show the following lemma
\begin{lemma}\label{lem:dim-of-subspace}
    Fix $\eta>0,\rho\in(0,1)$. 
    Let $\psi$ be a function from $\R^d$ to $\R$ such that $\|\nabla \psi(\x)\|_2\leq \Psi$ and let $V$ be the subspace spanned by all the eigenvectors of $ \boldsymbol{\mathrm{Inf}}_g$ with eigenvalue at least $\eta$. Then the dimension of the subspace $V$ is $\dim(V)=O(\Psi^2/ \eta)$.
\end{lemma}

\begin{proof}
    Let $m=\dim(V)$. $V$ is spanned by the eigenvectors of $ \boldsymbol{\mathrm{Inf}}_g=\Exn[\nabla \psi(\x) (\nabla \psi(\x)) ^\top]$ with eigenvalue at least $\eta$, hence,
    \[
    m\eta \leq \tr(\boldsymbol{\mathrm{Inf}}^{}_g)=\Exn[\tr(\nabla \psi(\x) (\nabla \psi(\x)) ^\top)]=\Exn[\|\nabla \psi(\x) \|_2^2]\;.
    \]
    From the assumption, we have that $\Exn[\|\nabla \psi(\x) \|_2^2]=O(\Psi^2)$.
Therefore, we have that  $m\leq O(\Psi^2/\eta)$.
\end{proof}

   An application of the lemma above (\Cref{lem:dim-of-subspace}) gives, which gives that the subspace it at most $O(\Psi^2/\eta)$.
This completes the proof of \Cref{prop:main-prop-real-values}
    \end{proof}

\subsection{Proof of \Cref{thm:non-proper-real-valued}}
We will use the following fact about the $L_2$ polynomial regression.
    \begin{fact}[see, e.g., Theorem D.7 \cite{DKKTZ21}]\label{fct:l2-regr} Let $D$ be a distribution on $\R^d\times \R$ such that the $\x$-marginal of $D$ is standard $d$-dimensional normal and the labels $y$ are bounded by $M$. The $L_2$-regression algorithm draws $N=\poly((dm)^{m^2},1/\eps,M,\log(1/\delta))$ samples from $D$, runs in time $\poly(N,d)$, and outputs a polynomial $p:\R^d\mapsto\R$ such that with probability at least $1-\delta$ it holds
\[
\E_{(\x,y)\sim D}[(p(\x)-y)^2]\leq \min_{p\in \mathcal P_m}\E_{(\x,y)\sim D}[(p(\x)- y)^2]+\eps\;,
\]
where $\mathcal P_m$ is the class of polynomials with degree at most $m$.
\end{fact}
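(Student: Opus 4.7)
The plan is to carry out $L_2$-polynomial regression via empirical estimation of the Hermite coefficients of the best-in-class regression function. Working in the orthonormal Hermite basis $\{H_I : |I| \leq m\}$ of $L^2(\normal)$, any polynomial $p \in \mathcal{P}_m$ can be uniquely written as $p(\x) = \sum_{|I| \leq m} c_I H_I(\x)$, and the dimension of this space is $N_m = \binom{d+m}{m} \leq (d+m)^m$. Setting the gradient of the $L_2$ objective to zero and using orthonormality, the minimizer $p^* \in \mathcal{P}_m$ of $\E_{(\x,y)\sim D}[(p(\x)-y)^2]$ has coefficients $c^*_I = \E_{(\x,y)\sim D}[y H_I(\x)]$. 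I would analyze the algorithm that draws $N$ i.i.d.\ samples $(\x_i, y_i)_{i=1}^N$ from $D$, forms the empirical estimates $\hat c_I = \frac{1}{N}\sum_{i=1}^N y_i H_I(\x_i)$ for every $|I| \leq m$, and outputs $\hat p(\x) = \sum_{|I| \leq m} \hat c_I H_I(\x)$; this runs in $\poly(N, d, N_m) = \poly(N, d)$ time since each $H_I$ admits an efficient evaluation.

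The key algebraic step is a generalized Parseval identity: for any $q = \sum_I c_I H_I \in \mathcal{P}_m$, expanding the excess error and using $\E[y H_I(\x)] = c^*_I$ yields
\[
\E_{(\x,y)\sim D}[(q(\x)-y)^2] - \E_{(\x,y)\sim D}[(p^*(\x)-y)^2] = \sum_{|I| \leq m} (c_I - c^*_I)^2.
\]
Applied to $q = \hat p$, this reduces the theorem to showing $\sum_{|I| \leq m}(\hat c_I - c^*_I)^2 \leq \eps$ with probability at least $1-\delta$. Since $|y| \leq M$ and $\E_{\x \sim \normal}[H_I^2(\x)] = 1$, each single-sample estimator $y_i H_I(\x_i)$ has mean $c^*_I$ and variance at most $M^2$.

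The main technical obstacle is obtaining a $\log(1/\delta)$ confidence dependence despite the $H_I$ being unbounded. To handle this I would either (a) truncate $H_I(\x_i)$ at a polylog threshold $T$, controlling the bias via a high-moment tail bound for $H_I$ under $\normal$ and then applying Hoeffding, or (b) apply Gaussian hypercontractivity, $\|H_I\|_{L^{2p}} \leq (2p-1)^{m/2}$, to deduce a Bernstein-type tail bound on $y_i H_I(\x_i)$. Either route yields $|\hat c_I - c^*_I| \leq \sqrt{\eps/N_m}$ with probability $\geq 1 - \delta/N_m$ from $N = \poly(N_m, M, 1/\eps, m, \log(1/\delta))$ samples. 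A union bound over the $N_m \leq (d+m)^m$ Hermite indices then gives $\sum_I (\hat c_I - c^*_I)^2 \leq \eps$ with probability $\geq 1-\delta$, and since $(d+m)^m \leq (dm)^{m^2}$ for $d, m \geq 2$, the total sample complexity sits comfortably inside the stated $\poly((dm)^{m^2}, 1/\eps, M, \log(1/\delta))$ bound, completing the proof.
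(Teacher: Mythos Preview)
The paper does not actually prove this statement; it is stated as a \emph{Fact} with a citation to Theorem~D.7 of \cite{DKKTZ21} and invoked as a black box. Your proof sketch is a correct and standard derivation: the Parseval-type identity reducing excess $L_2^2$ error to $\sum_{|I|\le m}(\hat c_I - c^*_I)^2$ is exactly right, and either of your two concentration routes (truncation plus Hoeffding, or Gaussian hypercontractivity for degree-$m$ polynomials feeding into a Bernstein-type tail bound) suffices to obtain per-coefficient accuracy $\sqrt{\eps/N_m}$ with failure probability $\delta/N_m$, after which a union bound over the $N_m\le (d+m)^m$ indices finishes the argument well within the stated $\poly((dm)^{m^2},1/\eps,M,\log(1/\delta))$ budget.
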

We first show that we can truncate the labels with $|y(\x)|\geq M'=\valb^ {1/2}/\eps^{1/2}$ without increasing the error by a lot. From Markov's inequality, we have that
\[
\pr[|f(\x)|\geq M']\leq \E_{\x\sim \normal}[f^2(\x)]/(M')^2\leq\sqrt{\E_{\x\sim \normal}[f^4(\x)]}/(M')^2 \leq \eps\;.
\]
Let $\mathrm{trunc}(y(\x))=\sign(y(\x))\min(|y(\x)|,M')$. We have that
\begin{align*}
    \E_{\x\sim \normal}[(f(\x)-\mathrm{trunc}(y(\x)))^2]&=     \E_{\x\sim \normal}[(f(\x)-\mathrm{trunc}(y(\x)))^2(\1\{|f(\x)| \leq M'\}+\1\{|f(\x)| > M'\})]
    \\&\leq \E_{\x\sim \normal}[(f(\x)-y(\x))^2]+\E_{\x\sim \normal}[(f(\x)-\mathrm{trunc}(y(\x)))^2\1\{|f(\x)| > M'\}]
    \\&\leq \E_{\x\sim \normal}[(f(\x)-y(\x))^2]+2(\sqrt{\E_{\x\sim \normal}[f^4(\x)]}+(M')^2)\sqrt{\pr[|f(\x)|\geq M']}
    \\&\leq \E_{\x\sim \normal}[(f(\x)-y(\x))^2]+\eps\;.
\end{align*}
For the rest of the proof, we assume that $y(\x)$ is truncated at $M'$. Let $\psi(\x)=T_\rho y$ for $\rho=\poly(\eps/(\valb \gradb))$. Note that $\|\nabla \psi(\x)\|_2\leq M'$. From \Cref{lem:estimation}, with
$N=\poly(d/\eps)\log(1/\delta)$ queries, we get that
with probability $1-\delta/2$ a matrix $\vec M$, so that $\|\vec M-\boldsymbol{\mathrm{Inf}}^{}_\psi\|_F\leq \eps$.  Applying \Cref{prop:main-prop-real-values} to
the matrix $\vec M$, we get that in the subspace $V$ spanned by the eigenvectors
of the matrix $\vec M$ with
eigenvalues larger than $\eta=\poly(\eps/\valb  k))$ with dimension at most $O(\poly(M',1/\eta,1/\eps))$, there exists a polynomial $p:V\mapsto \R$ of degree $m=\poly(M_2/\eps)$ with $\Exn[p^2(\x)]\leq \E[\psi^2(\x)]\leq (M')^2$, so that 
$$\E_{\x\sim \normal}[(p(\x)-\psi(\x))^2]\leq \E_{\x\sim \normal}[(f(\x)-\psi(\x))^2]+\eps/2\;.$$
From \Cref{lem:smoothing-real}, we get that for the same polynomial and using that $\Exn[\|\nabla p(\x)\|_2\leq m \Exn[p^2(\x)]$, it also holds that 
$$\E_{\x\sim \normal}[(p(\x)-y(\x))^2]\leq \E_{\x\sim \normal}[(f(\x)-y(\x))^2]+\eps/2\;.$$
Let $\vec P:\R^d\mapsto V$ be the projection matrix to the subspace $V$. Let $(\vec P\x,y)\sim D'$, where $(\x,y)\sim D$. We use the $L_2$-regression algorithm on $D'$ and from \Cref{fct:l2-regr}, using $\poly((k\valb/\eps)^{\gradb^2/\eps^4},1/\eps,\log(1/\delta))$ samples from $D'$, we get a polynomial $p':V\mapsto \R$ so that with probability at least $1-\delta$, it holds
    \[
\E_{\x\sim \normal}[(p'(\vec P\x)-y(\x))^2]\leq \E_{\x\sim \normal}[(p(\x)-y(\x))^2]+\eps/2\leq \E_{\x\sim \normal}[(f(\x)-y(\x))^2]+\eps\;.
\]

This completes the proof of \Cref{thm:non-proper-real-valued}.

\subsection{Applications of \Cref{thm:non-proper-real-valued}}
\label{ssec:ReLU}

In this section, we apply \Cref{thm:non-proper-real-valued} for several real-valued activations. We start by applying our theorem for the class of ReLU activations.

\begin{theorem}[Improper Learner for ReLUs Activations]\label{thm:non-proper-real-RelLU}
Fix $M\in \R_+$. Let $\mathcal C$ be the concept class containing all the ReLU activations with normal vectors bounded in $\ell_2$ norm by $M$.
Let $D$ be a distribution on $\R^d\times \R$ such that the 
$\x$-marginal of $D$ is the standard $d$-dimensional normal. 
There exists an algorithm that makes $N_q = \poly(d M/\eps)$ queries, draws $N_s = \poly(d/\epsilon) + 2^{\poly(M/\eps)} \log(1/\delta)$ samples from $D$, runs in time $\poly(N_s,N_q,d)$ and outputs a polynomial $p:\R^d\mapsto\R$ so that with probability at least 
$1-\delta$ it holds
\[
\E_{(\x,y)\sim D}[(p(\x)- y)^2]\leq \inf_{f\in \mathcal C}\E_{(\x,y)\sim D}[(f(\x)- y)^2]+\eps\;.
\]
\end{theorem}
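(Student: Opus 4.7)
The plan is to observe that the class $\mathcal{C}$ of ReLUs with bounded weight norm is a special case of the bounded-variation multi-index family $\mathfrak{R}(\valb,\gradb,k)$ of \Cref{def:bounded-variation-concepts}, and then directly invoke \Cref{thm:non-proper-real-valued} with $k=1$ and appropriately chosen constants $\valb,\gradb = O(M^2)$. So the proof is essentially a parameter-matching exercise: verify (i) the regularity/smoothness conditions, (ii) the low-dimensional subspace structure, and (iii) the moment and gradient bounds required by the general theorem.

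For the regularity step, I would note that $\mathrm{ReLU}(t)=\max\{0,t\}$ is continuous everywhere and differentiable except at $t=0$, so any $f(\x)=\mathrm{ReLU}(\vec w\cdot\x)$ fails to be differentiable only on the measure-zero hyperplane $\{\vec w\cdot \x = 0\}$. For the subspace condition, each such $f$ depends only on the $1$-dimensional subspace spanned by $\vec w$, so $k=1$. For the moment bounds, since $|\mathrm{ReLU}(t)|\leq |t|$ and $\vec w\cdot\x \sim \normal(0,\|\vec w\|_2^2)$, I would compute
\[
\bigl(\E_{\x\sim \normal}[f^4(\x)]\bigr)^{1/2} \leq \bigl(\E[(\vec w\cdot\x)^4]\bigr)^{1/2} = \sqrt{3}\,\|\vec w\|_2^2 \leq \sqrt{3}\,M^2 \eqdef \valb.
\]
For the gradient, observe $\nabla f(\x) = \vec w\cdot \Ind\{\vec w\cdot\x \geq 0\}$ almost everywhere, and by Gaussian symmetry
\[
\E_{\x\sim \normal}[\|\nabla f(\x)\|_2^2] = \|\vec w\|_2^2 \cdot \pr[\vec w\cdot\x \geq 0] = \tfrac{1}{2}\|\vec w\|_2^2 \leq \tfrac{M^2}{2} \eqdef \gradb.
\]
Together these inclusions show $\mathcal{C}\subseteq \mathfrak{R}(\sqrt{3}M^2, M^2/2, 1)$.

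Finally, I would plug these parameters into \Cref{thm:non-proper-real-valued}. The query complexity from that theorem is $\poly(d/\eps)$, which after absorbing $M$ remains $\poly(dM/\eps)$. The sample complexity $\poly(d) + \poly((k\valb/\eps)^{L^2/\eps^4}, 1/\eps, \log(1/\delta))$ specializes, with $k=1$ and $\valb,\gradb = O(M^2)$, to
\[
\poly(d) + \bigl(M/\eps\bigr)^{\poly(M/\eps)}\log(1/\delta) = \poly(d) + 2^{\poly(M/\eps)}\log(1/\delta),
\]
and the guarantee $\E[(p(\x)-y)^2] \leq \inf_{f\in \mathfrak{R}(\valb,\gradb,1)}\E[(f(\x)-y)^2]+\eps$ immediately implies the same guarantee against the smaller class $\mathcal{C}\subseteq \mathfrak{R}(\valb,\gradb,1)$.

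I do not anticipate any real obstacle: the only content beyond invoking the general theorem is the elementary moment and gradient computation above, where the slightly nontrivial point is the identity $\E[\|\nabla f\|_2^2] = \tfrac{1}{2}\|\vec w\|_2^2$ which uses Gaussian symmetry of the half-space $\{\vec w\cdot\x\geq 0\}$. Everything else is parameter bookkeeping.
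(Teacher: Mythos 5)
Your proposal is correct and follows the same route as the paper: verify that bounded-norm ReLUs lie in $\mathfrak{R}(\sqrt{3}M^2, O(M^2), 1)$ via the elementary fourth-moment and gradient-norm computations, then invoke \Cref{thm:non-proper-real-valued} with $k=1$. The only (inconsequential) difference is that you obtain the slightly sharper gradient bound $\E[\|\nabla f\|_2^2]=\tfrac12\|\vec w\|_2^2$ by using the symmetry $\pr[\vec w\cdot\x\geq 0]=1/2$, whereas the paper just bounds this probability by one.
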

\begin{proof}
    To prove the above theorem it suffices to show that $\mathcal C\subseteq \mathfrak{R}(\sqrt{3}M^2,M^2,1)$. Note that $\Exn[(\mathrm{ReLU}(\vec w\cdot \x))^4]\leq \Exn[(\vec w\cdot \x)^4]\leq 3 M^4$. Furthermore, we bound the derivative of the activation. 
 We have that
    \begin{align*}
        \Exn[\|\nabla_\x \mathrm{ReLU}(\vec w\cdot \x)\|_2^2]=\Exn[\|\1\{\vec w\cdot \x\geq 0\} \vec w\|_2^2]\leq M^2\;.
    \end{align*}
    Therefore, it follows that $\mathcal C\subseteq \mathfrak{R}(\sqrt{3}M^2,M^2,1)$. An application of \Cref{thm:non-proper-real-valued} gives the result.
\end{proof}

We next consider learning Single-index models (SIMs) 
with an  unknown Lipschitz link
function $g:\R \mapsto \R$, i.e., $f(\x) = g(\vec w\cdot \x)$.
\begin{definition}
    We define the class of $L$-Lipschitz SIMs on $\R^d$ denoted $\mathrm{ SIM}(L,M)$ as follows. For each $f\in \mathrm{ SIM}(L,M)$, $f(\x) = g(\vec w\cdot \x)$, for $L$-Lipschitz $g:\R\mapsto\R$ and $\|\vec w\|_2\leq M$.
\end{definition}

\begin{theorem}[Improper Learner for SIMs]\label{thm:non-proper-real-sims}
Fix $L,M\in \R_+$.
Let $D$ be a distribution on $\R^d\times \R$ such that the 
$\x$-marginal of $D$ is the standard $d$-dimensional normal. 
There exists an algorithm that makes $N_q = \poly(d L/\eps)$ queries, draws $N_s = \poly(d/\epsilon) + 2^{\poly(L M/\eps)} \log(1/\delta)$ samples from $D$, runs in time $\poly(N_s,N_q,d)$ and outputs a polynomial $p:\R^d\mapsto\R$ so that with probability at least 
$1-\delta$ it holds
\[
\E_{(\x,y)\sim D}[(p(\x)- y)^2]\leq \inf_{f\in \mathrm{SIM}(L,M)}\E_{(\x,y)\sim D}[(f(\x)- y)^2]+\eps\;.
\]
\end{theorem}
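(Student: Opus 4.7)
The plan is to mirror the argument used for ReLUs in \Cref{thm:non-proper-real-RelLU}: we will show that the class $\mathrm{SIM}(L,M)$ is contained in $\mathfrak{R}(\valb,\gradb,k)$ for $\valb = O(LM)$, $\gradb = L^2 M^2$, and $k=1$, and then invoke \Cref{thm:non-proper-real-valued} with these parameters. The $k=1$ bound is immediate since any $f(\x) = g(\vec w\cdot \x)$ depends only on the one-dimensional subspace $U = \mathrm{span}(\vec w)$. To make the $L_4$ bound meaningful, we first observe that, WLOG, we may assume $g(0)=0$: adding a constant to every hypothesis does not change the excess $L_2^2$-error, and such a constant can be absorbed into the output polynomial returned by the underlying learner without affecting its guarantees.

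Under $g(0)=0$ and $L$-Lipschitzness we have $|g(t)|\leq L|t|$, hence
\[
\E_{\x\sim\normal}[g^4(\vec w\cdot\x)]\leq L^4\,\E_{\x\sim\normal}[(\vec w\cdot\x)^4]\leq 3L^4 M^4,
\]
which gives $(\E[f^4])^{1/2}\leq \sqrt{3}\,L^2 M^2 = O(LM)$ after adjusting constants appropriately for the $\valb$ parameter. For the gradient, Rademacher's theorem guarantees that the univariate Lipschitz function $g$ is differentiable almost everywhere with $|g'|\leq L$, so by the chain rule $\nabla f(\x) = g'(\vec w\cdot\x)\,\vec w$ exists almost everywhere and satisfies
\[
\E_{\x\sim\normal}[\|\nabla f(\x)\|_2^2]\leq L^2\|\vec w\|_2^2\leq L^2 M^2.
\]
Combining these three facts yields the desired inclusion $\mathrm{SIM}(L,M)\subseteq \mathfrak{R}(O(LM),\,L^2M^2,\,1)$.

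With this inclusion established, we apply \Cref{thm:non-proper-real-valued} with $k=1$, $\valb = O(LM)$, and $\gradb = L^2 M^2$. The query complexity becomes $\poly(d/\eps)$ with polynomial dependence on $L$ and $M$ inherited from the label-magnitude dependence of \Cref{lem:estimation} (after truncating labels at $M' = \valb^{1/2}/\eps^{1/2}$ as done inside the proof of \Cref{thm:non-proper-real-valued}), which is absorbed into $\poly(dL/\eps)$ since $M \leq \poly(L)$ contributions only affect polynomial factors. For the sample complexity, the bound $\poly((k\valb/\eps)^{\gradb^2/\eps^4},1/\eps,\log(1/\delta))$ instantiates to $(LM/\eps)^{L^4 M^4/\eps^4}\log(1/\delta) = 2^{\poly(LM/\eps)}\log(1/\delta)$, matching the theorem statement.

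The only mildly delicate point I anticipate is handling the constant $g(0)$: strictly speaking, the reduction "assume $g(0)=0$" should either be justified by explicitly learning the constant (e.g., by augmenting the polynomial regression step in \Cref{thm:non-proper-real-valued} with a constant term, which is free) or by noting that shifts of the target function by a bounded constant cost at most an additive $\eps$ in the error analysis after adjusting $\valb$ by a constant factor. Beyond this bookkeeping, everything reduces mechanically to the generic learner of \Cref{thm:non-proper-real-valued}.
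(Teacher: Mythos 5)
Your proposal takes essentially the same route as the paper: show $\mathrm{SIM}(L,M)\subseteq\mathfrak R(\valb,\gradb,1)$ for suitable $\valb,\gradb$ and then invoke \Cref{thm:non-proper-real-valued}. Two small remarks. First, you are actually more careful than the paper on a point it glosses over: the bound $\E[g^4(\vec w\cdot\x)]\leq L^4\E[(\vec w\cdot\x)^4]$ requires $|g(t)|\leq L|t|$, which needs $g(0)=0$; your reduction to this case is a genuine (if minor) gap-fill in the paper's one-line argument. Your gradient bound $\E[\|\nabla f\|_2^2]\leq L^2 M^2$ is also the correct one (the paper's $\|\nabla f\|_2\leq L$ drops the factor $\|\vec w\|_2\leq M$, a typo). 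Second, you have a small arithmetic slip at the very end of the $L_4$ step: $\sqrt{\E[f^4]}\leq\sqrt 3\,L^2M^2$ is $O(L^2M^2)$, not $O(LM)$ as you wrote (and later plugged in as $\valb$). This does not affect the final complexity bound, since $(L^2M^2/\eps)^{\poly(L^2M^2/\eps)}$ is still $2^{\poly(LM/\eps)}$, but you should carry $\valb=O(L^2M^2)$ through for consistency with the paper's $\mathfrak R(M^2L^2,\cdot,1)$.
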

\begin{proof}
    Note that for any $f\in \mathrm{SIM}(L)$ by definition if holds that $\|\nabla f(\x)\|_2\leq L$ and also that $\E[f^4(\x)]\leq L^4\E[(\vec w\cdot \x)^4]\lesssim M^4L^4$. Therefore, we have that $f\in \mathrm{ SIM}(L,M)\subseteq \mathfrak{R}(M^2L^2,L,1)$. An application of \Cref{thm:non-proper-real-valued} gives the result.
\end{proof}

We define the class of linear combinations of ReLU networks.
\begin{definition}[ReLU Networks]
We define the class $\mathfrak Re(M, k)$ of ReLU networks as follows. For each $f\in \mathfrak Re(M, k)$, $f(\x)=\vec W_2\mathrm{ReLU}(\vec W_1\x)$, for matrices $\vec W_1\in\R^{k\times d}, \vec W_2\in\{\pm 1\}^{k\times 1}$, with $\|\vec W_1\|_{op}\leq M$.
\end{definition}

We  give our result for learning linear combinations of ReLUs, i.e.,
real-valued functions of the form $f(\x) = \sum_{i=1}^k a_i \mathrm{ReLU}(\vec w^{(i)} \cdot \x ) $, where $a_i \in \R$.

\begin{theorem}[Improper Learner for Linear Combinations of ReLUs]\label{thm:non-proper-real-linear-combinations-of-RelLU}
Fix $k \in \mathbb N$ and $M\in \R_+$.
Let $D$ be a distribution on $\R^d\times \R$ such that the 
$\x$-marginal of $D$ is the standard $d$-dimensional normal. 
There exists an algorithm that makes $N_q = \poly(d M/\eps)$ queries, draws $N_s = \poly(d/\epsilon) + (k M/\eps)^{\poly(k M/\eps)} \log(1/\delta)$ samples from $D$, runs in time $\poly(N_s,N_q,d)$ and outputs a polynomial $p:\R^d\mapsto\R$ so that with probability at least 
$1-\delta$ it holds
\[
\E_{(\x,y)\sim D}[(p(\x)- y)^2]\leq \inf_{f\in \mathfrak Re(M, k)}\E_{(\x,y)\sim D}[(f(\x)- y)^2]+\eps\;.
\]
\end{theorem}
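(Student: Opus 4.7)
The plan is to reduce to \Cref{thm:non-proper-real-valued} by showing that $\mathfrak{Re}(M,k)$ is contained in the bounded-variation class $\mathfrak R(\valb,\gradb,k)$ for appropriate $\valb,\gradb=\poly(k,M)$. Once this containment is established, plugging those parameters into the sample and runtime bounds of \Cref{thm:non-proper-real-valued} yields exactly the stated complexity.

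First I would verify the dimension condition: any $f(\x)=\vec W_2 \mathrm{ReLU}(\vec W_1 \x)=\sum_{i=1}^k a_i \mathrm{ReLU}(\vec w^{(i)}\cdot \x)$ with $a_i\in\{\pm 1\}$ and $\vec w^{(i)}$ the rows of $\vec W_1$ depends only on the subspace $U=\mathrm{span}(\vec w^{(1)},\ldots,\vec w^{(k)})$, which has dimension at most $k$. So condition (2) of \Cref{def:bounded-variation-concepts} is immediate.

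Next I would bound the two norms. For the $L_4$ moment, by the triangle inequality in $L_4$ and the fact that $\mathrm{ReLU}(t)\leq |t|$, together with $\|\vec w^{(i)}\|_2\leq \|\vec W_1\|_{op}\leq M$,
\[
\bigl(\E_{\x\sim\normal}[f^4(\x)]\bigr)^{1/4}\leq \sum_{i=1}^k \bigl(\E[(\vec w^{(i)}\cdot \x)^4]\bigr)^{1/4} \leq k\,(3M^4)^{1/4}=O(k M),
\]
so $\valb=(\E[f^4])^{1/2}=O(k^2 M^2)$. For the gradient, almost everywhere $\nabla f(\x)=\sum_{i=1}^k a_i \,\mathbf 1\{\vec w^{(i)}\cdot \x\geq 0\}\,\vec w^{(i)}$, hence by Cauchy--Schwarz,
\[
\|\nabla f(\x)\|_2^2\leq k\sum_{i=1}^k \|\vec w^{(i)}\|_2^2\leq k^2 M^2,
\]
giving $\gradb=O(k^2 M^2)$. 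Thus $\mathfrak{Re}(M,k)\subseteq \mathfrak R(O(k^2 M^2),O(k^2 M^2),k)$.

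Finally, I would invoke \Cref{thm:non-proper-real-valued} with these parameters. The queries bound becomes $\poly(d/\eps)$ times a polynomial in $\valb,\gradb$, i.e.\ $\poly(dM/\eps)$, while the sample complexity becomes $\poly(d)+(k\valb/\eps)^{\poly(\gradb/\eps)}\log(1/\delta)=(kM/\eps)^{\poly(kM/\eps)}\log(1/\delta)$, matching the statement. No new step beyond the two moment bounds is needed, and the only mild subtlety is that the gradient exists almost everywhere (which is exactly what \Cref{def:bounded-variation-concepts} requires). The main obstacle, if any, is matching the exponents in $k$ and $M$ appearing in $\valb$ and $\gradb$ with those claimed in the theorem; the crude bounds above are already good enough since $\valb,\gradb$ appear only polynomially inside the $\poly(\cdot)$ in the exponent.
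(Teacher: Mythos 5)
Your proof is correct and follows essentially the same route as the paper: verify that $\mathfrak{Re}(M,k)\subseteq\mathfrak R(\poly(kM),\poly(kM),k)$ by bounding $(\E[f^4])^{1/2}$ and $\E[\|\nabla f\|_2^2]$, then invoke \Cref{thm:non-proper-real-valued}. (The only cosmetic difference is that you use the $L_4$ triangle inequality where the paper iterates the $\ell_2$-Cauchy--Schwarz inequality, but both yield bounds polynomial in $kM$, which is all that is needed.)
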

\begin{proof}
     We show that $\mathfrak Re(M, k)\subseteq \mathfrak R(\valb',\gradb',k)$ for appropriate parameters $\valb',\gradb'$. We show the following 
     \begin{lemma}\label{lem:sum-of-relu}
    Let $f(\x)=\sum_{i=1}^ka_i\mathrm{ReLU}(\vec w^{(i)}\cdot \x)$ where $a_i\in\{\pm 1\}\in \R$ and  $\vec w^{(i)}\in \R^d$ with $\|\vec w^{(i)}\|_2\leq M$ for all $i\in[k]$. Then, we have that $f\in \mathfrak R(kM^2,kM^2,k)$.
\end{lemma}
\begin{proof}
We have that $\Exn[f^4(\x)]\leq \Exn[(\sum_{i=1}^k\mathrm{ReLU}(\vec w^{(i)}\cdot \x))^4]$. From the \CS inequality we have that $(\sum_{i=1}^k z_i)^2\leq k\sum_{i=1}^k z_i^2$. Therefore, applying this inequality twice, we get that
$\Exn[f^4(\x)]\leq k^3\sum_{i=1}^k\Exn[(\mathrm{ReLU}(\vec w^{(i)}\cdot \x))^4]\leq O(k^3M^4)$.
 We then bound the derivative of $f$. We have that
    \begin{align*}
        \Exn[\|\nabla_\x f(\x)\|_2^2]=k\sum_{i=1}^k\Exn[\|\1\{\vec w^{(i)}\cdot \x\geq 0\} \vec w^{(i)}\|_2^2]\leq O(kM^2)\;.
    \end{align*}
\end{proof}
Then the proof follows from \Cref{lem:sum-of-relu} along with \Cref{thm:non-proper-real-valued}.
\end{proof}

We now give an improved result for learning sums of ReLUs, i.e.,
real-valued functions of the form $f(\x) = \sum_{i=1}^k  \mathrm{ReLU}(\vec w^{(i)} \cdot \x ) $.
We first define the class of sum of ReLUs.
\begin{definition}[Sums of ReLU Networks]
We define the class $\mathfrak Re_+(M, k)$ of ReLU networks as follows. For each $f\in \mathfrak Re_+(M, k)$, $f(\x)=\mathrm{ReLU}(\vec W\x)$, for matrices $\vec W\in\R^{k\times d}$, with $\E[f^2(\x)]\leq M$.
\end{definition}
\begin{theorem}[Improper Learner for Sums of ReLUs]\label{thm:non-proper-real-sum-of-RelLU}
Fix $k \in \mathbb N$ and $M\in \R_+$.
Let $D$ be a distribution on $\R^d\times \R_+$ such that the 
$\x$-marginal of $D$ is the standard $d$-dimensional normal. 
There exists an algorithm that makes $N_q = \poly(d M/\eps)$ queries, draws $N_s = \poly(d/\epsilon) + (k M/\eps)^{\poly(M/\eps)} \log(1/\delta)$ samples from $D$, runs in time $\poly(N_s,N_q,d)$ and outputs a polynomial $p:\R^d\mapsto\R$ so that with probability at least 
$1-\delta$ it holds
\[
\E_{(\x,y)\sim D}[(p(\x)- y)^2]\leq \inf_{f\in \mathfrak Re_+(M, k)}\E_{(\x,y)\sim D}[(f(\x)- y)^2]+\eps\;.
\]
\end{theorem}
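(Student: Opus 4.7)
The plan is to reduce to \Cref{thm:non-proper-real-valued} by showing $\mathfrak Re_+(M,k) \subseteq \mathfrak R(\valb', \gradb', k)$ with $\gradb' = O(M)$ and $\valb' = O(k^{3/2} M)$. For any $f \in \mathfrak Re_+(M,k)$, write $f(\x) = \sum_{i=1}^k \mathrm{ReLU}(\vec w^{(i)} \cdot \x)$, where any non-negative scalar coefficients have been absorbed into the $\vec w^{(i)}$ by the positive-homogeneity of $\mathrm{ReLU}$. Then $f$ depends only on the (at most) $k$-dimensional subspace spanned by $\vec w^{(1)}, \ldots, \vec w^{(k)}$, so the low-dimensional requirement of \Cref{def:bounded-variation-concepts} holds automatically. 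Once both norm-bounds are in hand, plugging $\valb' = O(k^{3/2} M)$ and $\gradb' = O(M)$ into the sample complexity of \Cref{thm:non-proper-real-valued} gives $N_s = (k\valb'/\eps)^{O((\gradb')^2/\eps^4)} = (k^{5/2} M/\eps)^{O(M^2/\eps^4)}$, which is bounded by $(kM/\eps)^{\poly(M/\eps)}$ after folding the $k^{5/2}$ factor into the exponent.

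The key technical step---and the main obstacle---is the gradient bound $\Exn[\|\nabla f(\x)\|_2^2] \leq M$, which must crucially avoid the factor-of-$k$ loss produced by the naive triangle-inequality estimate $\|\nabla f\|_2^2 \leq k \sum_i \1\{\vec w^{(i)} \cdot \x \geq 0\}\|\vec w^{(i)}\|_2^2$. I plan to prove the tight bound by exploiting the non-negativity of $f$ together with Euler's identity for positively 1-homogeneous functions, $f(\x) = \x \cdot \nabla f(\x)$ almost everywhere. Combining this with Gaussian integration by parts (Stein's identity $\Exn[\x_l u(\x)] = \Exn[\partial_l u(\x)]$, applied coordinatewise to $u(\x) = f(\x)\,\partial_l f(\x)$) yields
\[
\Exn[f^2] \;=\; \Exn[f \cdot (\x \cdot \nabla f)] \;=\; \Exn[\|\nabla f\|_2^2] + \Exn[f \cdot \Delta f] \,,
\]
where $\Delta f$ is interpreted distributionally. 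Since each summand $\mathrm{ReLU}(\vec w^{(i)} \cdot \x)$ is convex, $\Delta f$ is a non-negative measure supported on the hyperplanes $\{\vec w^{(i)} \cdot \x = 0\}$; since $f \geq 0$ as well, $\Exn[f \cdot \Delta f] \geq 0$ and therefore $\Exn[\|\nabla f\|_2^2] \leq \Exn[f^2] \leq M$. An equivalent direct verification uses the closed-form identity $\Exn[\mathrm{ReLU}(\vec u \cdot \x)\mathrm{ReLU}(\vec v \cdot \x)] - \Exn[\1\{\vec u \cdot \x \geq 0\}\1\{\vec v \cdot \x \geq 0\}](\vec u \cdot \vec v) = \|\vec u\|_2 \|\vec v\|_2 \sin\theta_{\vec u,\vec v}/(2\pi) \geq 0$, summed over all pairs $(i,j)$.

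For the fourth-moment bound, the non-negativity of every cross-term in $\Exn[f^2] = \sum_{i,j} \Exn[\mathrm{ReLU}(\vec w^{(i)} \cdot \x)\mathrm{ReLU}(\vec w^{(j)} \cdot \x)]$ implies that the diagonal terms alone yield $\sum_i \|\vec w^{(i)}\|_2^2/2 \leq \Exn[f^2] \leq M$, hence $\sum_i \|\vec w^{(i)}\|_2^4 \leq (\sum_i \|\vec w^{(i)}\|_2^2)^2 \leq 4 M^2$. Combining the pointwise power-mean inequality $(\sum_{i=1}^k a_i)^4 \leq k^3 \sum_i a_i^4$ (with $a_i = \mathrm{ReLU}(\vec w^{(i)} \cdot \x)$) with the single-neuron fourth-moment $\Exn[\mathrm{ReLU}(\vec w \cdot \x)^4] = O(\|\vec w\|_2^4)$ gives $\Exn[f^4] \leq O(k^3) \sum_i \|\vec w^{(i)}\|_2^4 \leq O(k^3 M^2)$, so $(\Exn[f^4])^{1/2} = O(k^{3/2} M) = \valb'$. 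With both parameters established, $\mathfrak Re_+(M,k) \subseteq \mathfrak R(O(k^{3/2} M), O(M), k)$, and the theorem follows immediately from \Cref{thm:non-proper-real-valued}.
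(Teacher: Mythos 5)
Your proof is correct and follows the same overall plan as the paper: reduce to \Cref{thm:non-proper-real-valued} by showing $\mathfrak Re_+(M,k) \subseteq \mathfrak R(\valb', \gradb', k)$ with $\gradb' = O(M)$ (so that $k$ never enters the exponent) and $\valb' = \poly(k)\cdot M$ (so that $k$ only appears in the base). The difference lies in how you prove the crucial gradient bound $\Exn[\|\nabla f\|_2^2] \leq O(\Exn[f^2])$. The paper expands $\|\nabla f\|_2^2$ as a double sum and appeals to the pairwise Gaussian inequality $\Exn[\1\{\vec u \cdot \x \geq 0\}\1\{\vec v \cdot \x \geq 0\}](\vec u \cdot \vec v) \leq \Exn[\mathrm{ReLU}(\vec u\cdot\x)\mathrm{ReLU}(\vec v\cdot\x)]$, which is precisely the alternative computation you sketch at the end of your second paragraph. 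Your primary route, via Euler's homogeneity identity together with Stein's identity and the observation that $\Delta f \geq 0$ (by convexity) while $f \geq 0$, is more conceptual and shows the same bound holds for any non-negative convex positively $1$-homogeneous function, not just sums of ReLUs. The tradeoff is that it requires more care in handling the distributional Laplacian, since $f \partial_l f$ is discontinuous across the hyperplanes $\{\vec w^{(i)} \cdot \x = 0\}$; one must interpret $\Exn[f\Delta f]$ as an integral against the non-negative measure $\phi_d \, d(\Delta f)$. Your fourth-moment derivation is also more careful than the paper's: you correctly deduce $\sum_i \|\vec w^{(i)}\|_2^2 \leq 2\Exn[f^2]$ from the non-negativity of the cross-terms before applying \holder\ (the paper states $\Exn[f^4] \leq O(kM^2)$ without this step, whereas your bound $O(k^3 M^2)$ is cleanly justified), and both are absorbed harmlessly into the base of the exponential in $N_s$.
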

\begin{proof}
Note that from \Cref{lem:sum-of-relu} we have that for $f(\x)=\sum_{i=1}^k\mathrm{ReLU}(\vec w^{(i)}\cdot \x)$,  where $\vec w^{(i)}\in \R^d$ with $\Exn[f^2(\x)]\leq \Exn[(\sum_{i=1}^k\mathrm{ReLU}(\vec w^{(i)}\cdot \x))^4]\leq M^2$. We show that  $f\in \mathfrak R(kM^2,M^2,k)$.  Similar to \Cref{thm:non-proper-real-linear-combinations-of-RelLU}, we have that $\Exn[f^4(\x)]\leq O(kM^2)$.
The proof differs from \Cref{thm:non-proper-real-linear-combinations-of-RelLU} on the fact that we can bound the gradient of $f$ by the $L_2^2$ norm of $f$ yielding a bound independent of $k$ in the exponent. We show that 
$ \Exn[\|\nabla_\x f(\x)\|_2^2]\leq O(M)$.
We have that
\begin{align*}
        \Exn[\|\nabla_\x f(\x)\|_2^2]&=\Exn[\|\sum_{i=1}^k\1\{\vec w^{(i)}\cdot \x\geq 0\} \vec w^{(i)}\|_2^2]
        \\&= \Exn[\sum_{i,j=1}^k\1\{\vec w^{(i)}\cdot \x\geq 0\} \1\{\vec w^{(j)}\cdot \x\geq 0\}\vec w^{(i)}\cdot \vec w^{(j)}] 
        \\&\leq 2\Exn[\sum_{i,j=1}^k\mathrm{ReLU}(\vec w^{(i)}\cdot\x)\mathrm{ReLU}(\vec w^{(j)}\cdot\x) ]
        \\&\leq 2\Exn[f^2(\x)]\leq O(M)\;.
    \end{align*}

        Therefore, we have that $\mathcal C\subseteq \mathfrak{R}(M,2M,k)$. An application of \Cref{thm:non-proper-real-valued} gives the result.
\end{proof}

Next we show our result for a general ReLU network. We first define the clas of Deep ReLU networks.
\begin{definition}[Deep ReLU Networks]
\label{def:deep-nets}
We define the class $\mathfrak D(M, L, k, S)$ of depth-$(L+1)$ ReLU networks as follows. For each $f\in \mathfrak D(M, L, k)$, $f(\x)=\vec W_L\mathrm{ReLU}(\vec W_{L-1}\cdots \mathrm{ReLU}(\vec W_1\x))$, for matrices $\vec W_1\in\R^{k\times d},\ldots, \vec W_L\in\R^{k_L\times 1}$, with $\|\vec W_i\|_{op}\leq M$ and $k_i\leq S$.
\end{definition}

\noindent 
We show the following theorem.

\begin{theorem}[Agnostic Learner for Deep ReLU Networks]\label{thm:non-proper-real-deep-of-RelLU}
Fix $k,S,L \in \mathbb N$ and $M\in \R_+$.
Let $D$ be a distribution on $\R^d\times \R^+$ such that the 
$\x$-marginal of $D$ is the standard $d$-dimensional normal. 
There exists an algorithm that makes $N_q = \poly(d M/\eps)$ queries, draws $N_s = \poly(d/\eps) + 2^{\poly(k S M/\eps)} \log(1/\delta)$ samples from $D$, runs in time $\poly(N_s,N_q,d)$ and outputs a polynomial $p:\R^d\mapsto\R$ so that with probability at least 
$1-\delta$ it holds
\[
\E_{(\x,y)\sim D}[(p(\x)- y)^2]\leq \inf_{f\in \mathfrak D(M, L, k, S)}\E_{(\x,y)\sim D}[(f(\x)- y)^2]+\eps\;.
\]
\end{theorem}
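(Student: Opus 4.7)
The plan is to reduce the theorem to an application of \Cref{thm:non-proper-real-valued} by showing that $\mathfrak D(M, L, k, S) \subseteq \mathfrak R(\valb, \gradb, k)$ for the bounded-variation class with appropriate choices of $\valb$ and $\gradb$, and then checking the resulting parameter substitutions. The three items I need to establish are: (i) that every $f \in \mathfrak D(M,L,k,S)$ depends on a subspace of dimension at most $k$, (ii) an upper bound on $\E_{\x \sim \normal}[\|\nabla f(\x)\|_2^2]$, and (iii) an upper bound on $(\E_{\x \sim \normal}[f^4(\x)])^{1/2}$. The low-dimensional structure (i) is immediate: since $\vec W_1 \in \R^{k \times d}$, the output $f(\x)$ depends on $\x$ only through $\vec W_1 \x$, so setting $U$ to be the row span of $\vec W_1$ (of dimension at most $k$) we have $f(\x) = f(\proj_U \x)$.

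For the gradient bound (ii), I would use the chain rule. Writing $\sigma_i \in \{0,1\}^{k_i}$ for the diagonal of the derivative of $\mathrm{ReLU}$ evaluated at the $i$-th hidden layer, we get $\nabla f(\x) = \vec W_1^\top \mathrm{diag}(\sigma_1) \vec W_2^\top \cdots \mathrm{diag}(\sigma_{L-1}) \vec W_L^\top$. Since each $\mathrm{diag}(\sigma_i)$ is a coordinate projection (operator norm at most $1$) and each $\|\vec W_i\|_{op} \leq M$, we obtain the pointwise bound $\|\nabla f(\x)\|_2 \leq M^L$, and hence $\E_{\x \sim \normal}[\|\nabla f(\x)\|_2^2] \leq M^{2L}$, so we may take $\gradb = M^{2L}$.

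For the $L_4$ bound (iii), I would use the inequality $|\mathrm{ReLU}(z)| \leq |z|$ applied componentwise, which combined with $\|\vec W_i\|_{op} \leq M$ gives $|f(\x)| \leq M^L \|\proj_U \x\|_2$ via a straightforward layer-by-layer induction. Since $\proj_U \x \sim \normal_k$ on $U$, standard Gaussian moment bounds give $\E_{\x \sim \normal}[\|\proj_U \x\|_2^4] = k(k+2) = O(k^2)$, hence $(\E_{\x \sim \normal}[f^4(\x)])^{1/2} \leq O(M^{2L} k)$, and we may take $\valb = O(M^{2L} k)$.

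With (i)--(iii) in hand, we conclude $\mathfrak D(M,L,k,S) \subseteq \mathfrak R(O(M^{2L} k), M^{2L}, k)$ and the theorem follows by invoking \Cref{thm:non-proper-real-valued}, with the sample and query complexities folding into the claimed $2^{\poly(kSM/\eps)}$ overhead after substituting $\valb, \gradb$ (the $S$ dependence arises only through the bound on $M^{L}$ and the width factors in the crude Lipschitz estimate — a slightly tighter tracking of layer widths through the chain rule would give this explicitly if desired). No step is a serious obstacle here: the proof is essentially a parameter check, with the mildly non-trivial piece being the uniform $M^L$ gradient bound along the network, which depends only on the operator-norm assumption $\|\vec W_i\|_{op} \leq M$ and the fact that $\mathrm{ReLU}$ is $1$-Lipschitz.
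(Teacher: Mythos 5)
Your proposal is correct and takes essentially the same route as the paper: establish $\mathfrak D(M,L,k,S) \subseteq \mathfrak R(\valb,\gradb,k)$ by checking the three membership conditions, then invoke \Cref{thm:non-proper-real-valued}. Two small differences in execution are worth noting, both in your favor. First, your gradient bound $\|\nabla f(\x)\|_2 \leq M^L$ uses only the operator norms and the fact that $\mathrm{diag}(\sigma_i)$ is an orthogonal projection, whereas the paper's bound $\|\nabla f(\x)\|_2 \leq \prod_i \|\vec W_i\|_{op}\sqrt{k_i} \leq (MS)^L$ carries extraneous $\sqrt{k_i}$ factors that the chain of operator-norm inequalities does not actually produce; your bound is tighter and makes the $S$-dependence genuinely unnecessary for the gradient estimate. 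Second, \Cref{def:bounded-variation-concepts} requires a bound on $(\E[f^4])^{1/2}$, and your layer-by-layer Lipschitz argument $|f(\x)| \leq M^L\|\proj_U \x\|_2$ followed by the $\chi^2_k$ fourth-moment computation gives exactly that; the paper instead asserts $\E[f^2(\x)] \leq k\E[\|\nabla f(\x)\|_2^2]$ ``by Poincar\'e'' (which, as stated, controls variance rather than the fourth moment and has a spurious factor of $k$), so your derivation is the more careful one for what is actually needed. The final substitution into \Cref{thm:non-proper-real-valued} is handled the same way in both.
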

\begin{proof}
    We show that $\mathfrak D(M, L, k, S)\subseteq \mathfrak R(\valb',\gradb',k)$ for appropriate parameters $\valb',\gradb'$. We first calculate for each $f\in  \mathfrak D(M, L, k, S)$, the $\Exn[\|\nabla f(\x)\|_2^2]$. Denote as $D_i(\x)=\vec W_i\mathrm{ReLU}(\vec W_{i-1}\cdots \mathrm{ReLU}(\vec W_1\x))$ the sub-network of $f(\x)$. From the product rule, we have that
    \[
    \nabla f(\x)=\vec W_{L}\mathrm{diag}(\1\{D_{L-1}\geq 0\})\vec W_{L-1}\cdots \mathrm{diag}(\1\{\vec W_1\x\geq 0\})\vec W_1\;.
    \]
    Therefore, we have that $\|\nabla f(\x)\|_2\leq \prod_{i=1}^L\|\vec W_{i}\|_{op}\sqrt{k_i}\leq (MS)^L$. Using the Poincare inequality, we can show that $\Exn[f^2(\x)]\leq k\Exn[\|\nabla f(\x)\|_2^2]\leq k(MS)^L$. Therefore, $\mathfrak D(M, L, k, S)\subseteq \mathfrak R((kMS)^{O(L)},(kMS)^{O(L)},k)$. Then the proof follows from  \Cref{thm:non-proper-real-valued}.
\end{proof}

 \section{Agnostically Learning Boolean Multi-index Models}\label{sec:geometric-concepts}

In this section, we present our results for Boolean multi-index models 
of bounded surface area. For convenience, we restate the
class of concepts that we consider.
\begin{definition}[Bounded Surface Area, Low-Dimensional Boolean Concepts]
We define the class $\mathfrak B(\Gamma, k)$ of Boolean concepts with the following
properties:
\begin{enumerate}
\item For every $f \in \mathfrak B(\Gamma, k)$, it holds $\Gamma(f) \leq \Gamma$.
\item For every $f \in \mathfrak B(\Gamma, k)$, there exists a subspace $U$ of $\R^d$ of dimension at most $k$ such that $f$ depends only on $U$, i.e., 
for every $\x \in \R^d$, $f(\x) = f(\proj_U \x)$.
\item $\mathfrak B(\Gamma, k)$ is closed under translations, i.e.,
if $f(\x) \in \mathfrak B(\Gamma, k)$ then $f(\x + \vec t) \in \mathfrak B(\Gamma,k)$ for all $\vec t \in \R^d$.
\end{enumerate}
\end{definition}

We remark that $\mathfrak{B}(\Gamma, k)$ is a general, \emph{non-parametric class}.  For example $\mathfrak{B}(\Omega(k), k)$ contains LTFs,  intersections of $k$ LTFs, and Polynomial Threhsold Functions (PTFs)  of degree at most $k$ (that depend on a $k$-dimensional subspace). Our learner is able to learn a hypothesis of low excess error when compared against all concepts
of $\mathfrak{B}(\Gamma, k)$ with roughly $\poly(d/\epsilon) + k^{\poly(\Gamma/\epsilon)}$ runtime.

\begin{theorem}\label{thm:non-proper-geometric}
Fix $k \in \mathbb N$ and $M \in \R^+$.
Let $D$ be a distribution on $\R^d\times \{\pm 1\}$ such that the 
$\x$-marginal of $D$ is standard $d$-dimensional normal. There exists an algorithm that makes $N_q=\poly(d/\eps)$ queries and draws $N_s=\poly(d/\eps)+\poly((k \Gamma/\eps)^{\Gamma^2/\eps^4}, 1/\eps, \log(1/\delta))$ samples from $D$ and runs in time $\poly(N_s,N_q,d)$ and outputs a polynomial $p:\R^d\mapsto\R$ so that with probability at least 
$1-\delta$ it holds
\[
\pr_{(\x,y)\sim D}[\sign(p(\x))\neq y]\leq \inf_{f\in \mathfrak B(\Gamma,k) }\pr_{(\x,y)\sim D}[f(\x)\neq y]+\eps\;.
\]
\end{theorem}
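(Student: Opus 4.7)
The plan is to mirror the three-step template already used for real-valued MIMs in \Cref{thm:non-proper-real-valued}, but adapted to the $0/1$ loss: (i) use oracle queries to simulate gradient access to the \OU-smoothed label $T_\rho y$ and estimate its influence matrix, (ii) do PCA on the estimate to extract a low-dimensional subspace $V$ that captures all the relevant directions of the best-in-class concept, and (iii) run the $L_1$-polynomial regression algorithm of \cite{KKMS:08, KOS:08} on the projected distribution $\proj_V \x$ to obtain a PTF hypothesis with low agnostic $0/1$ error. The final hypothesis is $h(\x)=\sgn(p(\proj_V \x))$.

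Concretely, I set $\rho = \Theta(\eps/\Gamma)$ so that the smoothing error $O(\rho \Gamma)$ of \Cref{lem:smoothing-boolean} contributes at most $\eps/3$, and I set the PCA threshold $\eta = \Theta(\eps^2/k)$. Using \Cref{fct:ou-smooth} we have $\|\nabla T_\rho y(\x)\|_2 \leq 1/\rho$, so $\tr(\boldsymbol{\mathrm{Inf}}_{T_\rho y}) \leq 1/\rho^2$ and the subspace $V$ spanned by eigenvectors of eigenvalue $\geq \eta$ has dimension $O(1/(\rho^2\eta)) = \poly(k\Gamma/\eps)$ by the argument of \Cref{lem:dim-of-subspace}. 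To estimate the influence matrix, I invoke \Cref{alg:membership-est} with $N=\poly(d/\eps)$ points; \Cref{lem:estimation} guarantees that each gradient estimate costs $\wt{O}(d/\eps)\log(1/\delta)$ queries and that the aggregate $\wh{\vec M}$ satisfies $\|\wh{\vec M}-\boldsymbol{\mathrm{Inf}}_{T_\rho y}\|_2\leq \eta/2$ with high probability via a standard matrix-Hoeffding bound. All told this phase uses $\poly(d/\eps)$ queries and polynomial runtime.

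The heart of the proof is the Boolean dimension-reduction statement (\Cref{inf-prop:dimension-reduction}): there exists $g \in \mathfrak{B}(\Gamma,k)$ depending only on $V$ with $\E[|g - T_\rho y|] \leq \mathrm{opt}_\rho + \eps/3$. I would follow the analog of \Cref{lem:correlation-bound-real}: fix a near-optimal $f^* \in \mathfrak{B}(\Gamma,k)$, which depends on some unknown subspace $U$ of dimension at most $k$. Since every direction in $V^\perp$ has influence at most $\eta$ with respect to $T_\rho y$, the Gaussian Poincar\'e inequality applied iteratively over a basis of $U \cap V^\perp$ (a subspace of dimension $\leq k$), exactly as in \Cref{lem:gaussian-smoothing-geometric}, yields $\E[(T_\rho y-\Pi_V T_\rho y)^2]\leq k\eta = O(\eps^2)$. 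Passing through \Cref{lem:boolean-excess}, which converts $L_1$ excess error to correlation differences, this bounds $|\E[f^*(T_\rho y - \Pi_V T_\rho y)]| = O(\sqrt{k\eta})=O(\eps)$. The candidate witness $g$ is then obtained by taking $\Pi_V f^*$, which is a Gaussian average of translates of $f^*$ and hence depends only on $V$ and has surface area $\leq \Gamma$ by closure of $\mathfrak{B}(\Gamma,k)$ under translations, and then rounding to $\{\pm 1\}$ (e.g.\ by randomized rounding followed by derandomization) while preserving correlation with $T_\rho y$ in expectation.

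For the last step, I run the KKMS/KOS $L_1$ polynomial regression algorithm over the subspace $V$ to get a degree-$m=\poly(\Gamma/\eps)$ polynomial $p$ with $\pr[\sgn(p(\proj_V \x)) \neq T_\rho y(\x)]\leq \E[|g-T_\rho y|]+\eps/3$, at a cost of $(\dim V)^{m}= k^{\poly(\Gamma/\eps)}$ samples and time. Applying \Cref{lem:smoothing-boolean} in reverse --- noting that both $\sgn(p)$ and every $f\in\mathfrak{B}(\Gamma,k)$ have Gaussian surface area $\leq \Gamma$ --- transfers the guarantee from $T_\rho y$ back to $y$ with an additional $O(\rho \Gamma)\leq \eps/3$ loss, giving the claimed $\eps$ excess $0/1$ error. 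The main obstacle is the Boolean dimension-reduction step: unlike the real-valued case, we cannot simply output $\Pi_V T_\rho y$ as a real-valued witness --- we must exhibit a genuine $\{\pm 1\}$-valued function of bounded Gaussian surface area that depends only on $V$, and it is here that the translation-closure assumption in \Cref{def:bounded-surface-area-concepts} is used essentially, together with a careful rounding argument to go from the averaged $\Pi_V f^*$ back to a Boolean concept without losing more than $\eps$ in correlation with $T_\rho y$.
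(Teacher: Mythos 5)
Your plan follows the paper's template, but two steps as written contain genuine errors. First, the intermediate bound $\E[(T_\rho y - \Pi_V T_\rho y)^2] \leq k\eta$ is false: $V^\perp$ has dimension $d-\dim V$, so accumulating the Poincar\'e bound direction by direction over all of $V^\perp$ gives a bound of order $(d-\dim V)\eta$, not $k\eta$. The correct argument (as in \Cref{lem:correlation-to-CS}) first observes that $f^* - \Pi_V f^*$ is a function of $U+V$ only, which lets one replace $\psi$ by $\Pi_{U+V}\psi$ inside the correlation, and only then does the Poincar\'e accumulation take place over $U\cap V^\perp$, a subspace of dimension $\leq k$. Your final conclusion $|\E[f^*(T_\rho y - \Pi_V T_\rho y)]|=O(\sqrt{k\eta})$ happens to be correct, but the chain you give to reach it is broken at exactly the place where the restriction to $U+V$ is the nontrivial idea.

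Second, the rounding step as described --- ``rounding to $\{\pm 1\}$ by randomized rounding followed by derandomization'' --- would fail under the standard reading. Pointwise randomized rounding of $\Pi_V f^*\in[-1,1]$ produces a function with no control on Gaussian surface area (indeed it can fail to be Borel), so it cannot serve as the Boolean witness $g\in\mathfrak B(\Gamma,k)$. The paper avoids rounding entirely via \Cref{lem:correlating-convex-combinations}: it reads $\Pi_V f^*(\x) = \E_{\vec z\sim\normal_{V^\perp}}[f^*(\x_V+\vec z)]$ as a \emph{convex combination} of Boolean concepts, each of which is a translate of $f^*$ and hence already lies in $\mathfrak B(\Gamma,k)$ (by translation-closure), depends only on $V$, and has surface area $\leq\Gamma$. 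The probabilistic method over the choice of translate $\vec z$ --- not over pointwise coin flips --- then yields a single $g=f^*(\cdot+\vec z^*)$ with correlation at most $\eps$ worse than $f^*$. If by ``randomized rounding + derandomization'' you in fact meant ``sample a random translate and apply an averaging argument,'' then you have the right idea but should state it this way; as written the argument does not go through.
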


\subsection{Influence PCA for Learning in $L_1$-norm}

In this section we show our main dimension-reduction tool for 
the concepts of bounded surface  of \Cref{def:bounded-surface-area-concepts}.   Our dimension-reduction tool establishes that: 
given (an approximation of) the influence matrix  of the smooth function $T_\rho y $, we can perform PCA and 
learn a low-dimensional subspace $V$ so that 
a bounded surface area concept that depends only on $V$ 
can achieve $\eps$ excess error with respect to $T_\rho y$ in $L_1$-norm.
We now state our result.
\begin{proposition}[Dimension Reduction]\label{prop:dimension-reduction-geometric-concepts}
    Fix $\eps > 0,k\in \N$ and let $\psi : \R^d \mapsto [-1,1]$
    be a differentiable function with $\|\nabla \psi(\x)\|_2 \leq \Psi$ for all $\x \in \R^d$.
    Let $\eta$ be sufficiently small multiply of $\eps^2/k$ let $\wh{\vec M} \in \R^{d \times d}$ be such that 
    $\| \wh {\vec M} -  \boldsymbol{\mathrm{Inf}}_{\psi} \|_2 \leq \eta/2$. Let $V$ be the subspace spanned by 
    all the eigenvectors of $\wh{\vec M}$ whose corresponding eigenvalues are 
    at least $\eta$.  The following hold true:
    \begin{enumerate}
    \item 
    There exists $g$ with $\Gamma(g)\leq \Gamma$ so that $g(\x)=g(\proj_V \x)$ such that 
    \[
    \E_{\x \sim \normal}[|g(\x)- \psi(\x)|] 
    \leq \inf_{f\in \mathfrak B(\Gamma, k)} \E_{\x \sim \normal}[|f( \x) - \psi(\x)|]
    + \eps \,.
    \]
    \item The dimension of $V$ is at most $O(\Psi^2/ \eta)$.
    \end{enumerate}
    \end{proposition}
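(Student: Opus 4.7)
The plan is to mirror the proof of \Cref{prop:main-prop-real-values}, adapted to Boolean hypotheses and $L_1$ loss. Fix a near-infimum $f^* \in \mathfrak{B}(\Gamma, k)$, let $U$ be its associated subspace of dimension at most $k$, and set $W = U + V$. Since $f^* \in \{\pm 1\}$ and $\psi \in [-1,1]$ (by the contraction property of $T_\rho$ in \Cref{fct:ou-smooth}), the identity $|g - \psi| = 1 - g\psi$ for any Boolean $g$ reduces the $L_1$ excess error to a correlation gap: $\E[|g - \psi|] - \E[|f^* - \psi|] = \E[(f^* - g)\psi]$. It thus suffices to construct a Boolean $g$ depending only on $V$, with $\Gamma(g) \leq \Gamma$, satisfying $\E[g\psi] \geq \E[f^*\psi] - \eps$.

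The candidate is a \emph{slice} of $f^*$ along $V^\perp$: for $\vec z \in V^\perp$ define $g_{\vec z}(\x) = f^*(\proj_V \x + \vec z)$, which is Boolean and depends only on $\proj_V\x$. Taking expectation over $\vec z \sim D_{V^\perp}$ yields the marginalization identity $\E_{\vec z}\E_{\x}[g_{\vec z}(\x)\psi(\x)] = \E_{\x}[\Pi_V f^*(\x)\psi(\x)]$, so by the probabilistic method it suffices to show that (a) $\E[(f^* - \Pi_V f^*)\psi] \leq \eps$, and (b) some $\vec z^*$ yields $\Gamma(g_{\vec z^*}) \leq \Gamma$.

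For (a), use self-adjointness of $\Pi_V$ on $L^2(\normal)$ together with the fact that $\Pi_W f^* = f^*$ (since $f^*$ depends on $U \subseteq W$) and $\Pi_W \Pi_V \psi = \Pi_V \psi$ (since $\Pi_V\psi$ is constant on $V^\perp \supseteq W^\perp$) to rewrite
\[
\E[(f^* - \Pi_V f^*)\psi] = \E[f^* \cdot (\Pi_W \psi - \Pi_V\psi)].
\]
Cauchy--Schwarz and $\|f^*\|_2 = 1$ give the bound $\|\Pi_W \psi - \Pi_V \psi\|_2$. Applying \Cref{lem:gaussian-smoothing-geometric} with subspaces $V$ and $U$ then yields
\[
\|\Pi_W \psi - \Pi_V \psi\|_2^2 \leq \dim(V^\perp \cap U) \cdot \max_{\vec v \in V^\perp \cap U,\, \|\vec v\|_2=1} \vec v^\top \boldsymbol{\mathrm{Inf}}_\psi\, \vec v \leq 2k\eta,
\]
where $\dim(V^\perp \cap U) \leq k$ and the spectral perturbation bound $\|\wh{\vec M} - \boldsymbol{\mathrm{Inf}}_\psi\|_2 \leq \eta/2$ combined with the threshold $\vec v \in V^\perp \Rightarrow \vec v^\top \wh{\vec M}\vec v \leq \eta$ gives $\vec v^\top \boldsymbol{\mathrm{Inf}}_\psi \vec v \leq 2\eta$. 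Choosing $\eta$ a sufficiently small multiple of $\eps^2/k$ makes this at most $\eps^2$. The dimension bound $\dim(V) = O(\Psi^2/\eta)$ follows exactly as in \Cref{lem:dim-of-subspace} via $\tr(\boldsymbol{\mathrm{Inf}}_\psi) = \E[\|\nabla \psi\|_2^2] \leq \Psi^2$ (adjusted for the $\eta/2$ perturbation).

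The main obstacle will be (b): bounding $\Gamma(g_{\vec z})$. A restriction of a function with bounded GSA to an affine subspace does not in general preserve the bound. The plan is to invoke two structural properties of $\mathfrak{B}(\Gamma,k)$: closure under translations, which ensures $\Gamma(f^*(\cdot + \vec z)) \leq \Gamma$ for every $\vec z \in \R^d$; and the cylinder identity that for any Boolean $h$ on $V$ the function $h \circ \proj_V$ has the same GSA on $\R^d$ (under the standard Gaussian) as $h$ has on $V$ (under the Gaussian on $V$), which follows from the product structure of the Gaussian measure on $V \oplus V^\perp$. Combining these with a Cauchy-formula-type averaging identity $\E_{\vec z \sim D_{V^\perp}}[\Gamma_V(f^*(\cdot + \vec z)|_V)] \leq \Gamma(f^*)$ and a Markov/union bound against the correlation estimate from (a), we select $\vec z^*$ simultaneously achieving $\Gamma(g_{\vec z^*}) \leq \Gamma$ and $\E[g_{\vec z^*}\psi] \geq \E[f^*\psi] - O(\eps)$, which completes the proof after rescaling $\eps$.
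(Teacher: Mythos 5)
Your treatment of the correlation bound (a) matches the paper's argument exactly: the slice construction $g_{\vec z}(\x) = f^*(\proj_V\x + \vec z)$, the identity $\E_{\vec z}\E_{\x}[g_{\vec z}\psi] = \E[\Pi_V f^*\cdot\psi]$, the pass through $\Pi_W\psi - \Pi_V\psi$ via \CS, and the invocation of \Cref{lem:gaussian-smoothing-geometric} together with the spectral bound $\|\wh{\vec M}-\boldsymbol{\mathrm{Inf}}_\psi\|_2 \le \eta/2$. That part is correct and in line with the paper.

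You are also right to flag (b) as the delicate step, and in fact you are more careful here than the paper's own write-up. The paper simply asserts that for every $\vec z$ the slice $f^*(\x_V + \vec z)$ lies in $\mathfrak B_V(\Gamma,k)$, tacitly treating a slice as though it were a translate of $f^*$. It is not: an affine slice of a cylinder over $U$ need not have Gaussian surface area controlled by $\sup_{\vec r}\Gamma(f^*_{\vec r})$ (consider a rapidly oscillating boundary in $U$ with small normal component in $V^\perp$ — the slice along $V$ inherits all the oscillations at full strength). Your proposed repair, via the Crofton/co-area identity $\int_{\partial K}\|\proj_V\hat n\|\phi_d\,d\mathcal H^{d-1} = \E_{\vec z\sim D_{V^\perp}}[\Gamma_V(K_{\vec z})] \le \Gamma(K)$, is exactly the right structural replacement and is a genuine improvement over the paper. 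However, the closing Markov/union-bound step as stated does not go through: the correlation deficit $X(\vec z)=\E[f^*\psi]-\E[g_{\vec z}\psi]$ has $\E_{\vec z}[X]\le\eps$ but ranges over $[-2,2]$, so one can only guarantee $\pr[X\le O(\eps)]\gtrsim\eps$, while Markov on the GSA gives $\pr[\Gamma(g_{\vec z})\le C\Gamma]\ge 1-1/C$; the two ``good'' sets need not intersect unless $C\gtrsim 1/\eps$, which degrades the conclusion to $\Gamma(g)=O(\Gamma/\eps)$. To close the gap at $O(\Gamma)$ you need an extra observation, e.g.\ that optimality of $f^*$ forces $X(\vec z)\ge 0$ whenever $\Gamma(g_{\vec z})\le\Gamma$, so the negative mass of $X$ is entirely supported on the bad-GSA event, and then run the accounting more carefully — or accept the $O(\Gamma/\eps)$ loss, which is still enough for the $L_1$-regression step downstream.
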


Before we proceed to the proof of \Cref{prop:dimension-reduction-geometric-concepts} we give some intuition behind the choice of
the Gaussian Marginalization Operator defined above.
We first give the following simple lemma showing that in order to 
show that a concept class $C_2$ (think of this as the class of
concepts that depend on the subspace $V$ of \Cref{prop:dimension-reduction-geometric-concepts}) has not much worse approximation
error (to some label $y$) than some other class $C_1$ (think of this 
as the original concept class $\mathfrak{B}(\Gamma,k)$) as long as
for every concept $f$ of $C_1$, we can construct a distribution over 
concepts of $C_2$ that (on expectation) achieves at most $\eps$ worse correlation with the label $y$ than the original concept $f$.
Its proof relies on the simple fact that for $t \in [-1,1]$ and 
$s \in \{\pm 1\}$, it holds that $|t-s| = 1 - t s$.
\begin{lemma}[Correlating Convex Combinations]
\label{lem:correlating-convex-combinations}
Fix a function $y:\R^d \mapsto [-1,1]$ and $\epsilon > 0$.
Let $C_1, C_2$ be classes of Boolean concepts on $\R^d$.
Assume that for every $f \in C_1$ there exists a distribution
$Q$ over hypotheses of the class $C_2$ such that 
\[\E_{\x \sim \normal}[f(\x) y(\x)] - 
\E_{\x \sim \normal}\Big[\E_{g \sim Q}[g(\x) ~ y(\x)] \Big] \leq \epsilon
\,.
\]
Then 
\(
\inf_{g \in C_2}\E_{\x \sim \normal}[|g(\x) - y(\x)|]
-
\inf_{f \in C_1}\E_{\x \sim \normal}[|f(\x)  - y(\x)|]
\leq \epsilon \).
\end{lemma}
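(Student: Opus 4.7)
The proof rests on the elementary identity that for any Boolean $h(\x) \in \{\pm 1\}$ and any $y(\x) \in [-1,1]$ one has $|h(\x) - y(\x)| = 1 - h(\x) y(\x)$ (check the two cases $h = \pm 1$ separately using $|1-y| = 1-y$ and $|{-1}-y| = 1+y$). Taking expectations, this converts the $L_1$ approximation error of any Boolean concept $h$ into a correlation: $\E_{\x\sim\normal}[|h(\x) - y(\x)|] = 1 - \E_{\x\sim\normal}[h(\x) y(\x)]$. Hence minimizing $L_1$ error over a Boolean class is equivalent to maximizing correlation with $y$, and the desired conclusion becomes
\[
\sup_{f \in C_1} \E_{\x\sim\normal}[f(\x) y(\x)] - \sup_{g \in C_2} \E_{\x\sim\normal}[g(\x) y(\x)] \leq \epsilon.
\]

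The plan is then to fix an arbitrary $f \in C_1$, invoke the hypothesis to produce a distribution $Q$ supported on $C_2$ with $\E_{\x\sim\normal}[f(\x) y(\x)] \leq \E_{g \sim Q}[\E_{\x\sim\normal}[g(\x) y(\x)]] + \epsilon$, and then upper bound the expectation under $Q$ by the supremum over $C_2$: since the quantity $\E_{\x\sim\normal}[g(\x) y(\x)]$ is at most $\sup_{g' \in C_2} \E_{\x\sim\normal}[g'(\x) y(\x)]$ for every $g$ in the support of $Q$, the same bound holds for its expectation under $Q$. Taking the supremum over $f \in C_1$ on the left yields the claimed inequality.

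There is no real obstacle here: the argument is a one-line manipulation after the sign/absolute-value identity is applied. The only point to be mildly careful about is that the identity requires $y(\x) \in [-1,1]$ (which is given) and that the concepts in $C_2$ are Boolean-valued (so that the identity applies to $g(\x)$); the concepts in $C_1$ are likewise Boolean, so the identity applies to $f(\x)$ as well. Once both sides of the inequality are re-expressed in terms of correlations, averaging-then-supremizing finishes the argument cleanly.
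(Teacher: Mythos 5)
Your proof is correct and follows essentially the same route as the paper's: both hinge on the identity $|h(\x)-y(\x)| = 1 - h(\x)y(\x)$ for Boolean $h$ and $y\in[-1,1]$, and both pass from the distribution $Q$ to the class $C_2$ by an averaging argument (the paper extracts a single witness $r_f$ from the support of $Q$ via Fubini, while you bound the $Q$-average directly by the supremum over $C_2$ --- the same idea expressed slightly differently).
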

\begin{proof}
Fix $f\in C_1$. By assumption, we have that there exists a distribution $Q_f$ over $C_2$ so that 
\[\E_{\x \sim \normal}[f(\x) y(\x)] - 
\E_{\x \sim \normal}\Big[\E_{g \sim Q_f}[g(\x) ~ y(\x)] \Big] \leq \epsilon
\,.
\]
Note that $g\in\{\pm 1\}$ and $|y(\x)|\leq 1$, therefore the expectation is bounded and hence from Fubini's theorem, we have that
\[\E_{g \sim Q_f}\big[\E_{\x \sim \normal}[f(\x) y(\x)] - 
\E_{\x \sim \normal}[g(\x) ~ y(\x)] \Big] \leq \epsilon
\,.
\]
That means that there exists a $r_f\in C_2$ so that
\[
\E_{\x \sim \normal}[f(\x) y(\x)] - 
\E_{\x \sim \normal}[r_f(\x) ~ y(\x)] \leq \epsilon
\,.
\]
Because $f(\x),r_f(\x)$ are Boolean functions we have that
$\E_{\x \sim \normal}[f(\x) y(\x)]=1-\E_{\x \sim \normal}[|f(\x) y(\x)]|$ and $\E_{\x \sim \normal}[r_f(\x) y(\x)]=1-\E_{\x \sim \normal}[|r_f(\x) y(\x)]|$. Therefore, we have
\[
\E_{\x \sim \normal}[|r_f(\x) - y(\x)|]
-
\E_{\x \sim \normal}[|f(\x)  - y(\x)|]
\leq \epsilon
\,.
\]
Furthermore, because $r_f\in C_2$, we have that $\E_{\x \sim \normal}[|r_f(\x) - y(\x)|]\geq \inf_{g \in C_2}\E_{\x \sim \normal}[|g(\x) - y(\x)|]$. The proof is completed by taking the supremium over all the $f\in C_1$. This completes the proof of \Cref{lem:correlating-convex-combinations}.
\end{proof}

\subsubsection{Proof of \Cref{prop:dimension-reduction-geometric-concepts}}
We define that set of hypotheses $\mathfrak B_V(\Gamma, k)=\{f\in \mathfrak B(\Gamma, k): f(\proj_V (\x))=f(\x)\}$. We are going to show that
\begin{equation}\label{eq:contradiction-gen}
\inf_{g \in \mathfrak B_V(\Gamma, k)}\E_{\x \sim \normal}[|g(\x) - \psi(\x)|]
\leq
\inf_{f \in \mathfrak B(\Gamma, k)}\E_{\x \sim \normal}[|f(\x)  - \psi(\x)|]
+ \epsilon\;. 
\end{equation}
To prove \Cref{eq:contradiction-gen}, by \Cref{lem:correlating-convex-combinations} it suffices to construct, for each $f\in \mathfrak B(\Gamma, k)$, 
a distribution $Q$ over the set $\mathfrak B_V(\Gamma, k)$ and show that 
$\E_{\x \sim \normal}[f(\x) \psi(\x)] - 
\E_{\x \sim \normal}\Big[\E_{g \sim Q}[g(\x) ~ \psi(\x)] \Big] \leq \epsilon
$.
To this end, we first show that 
\begin{equation}\label{eq:contradiction-gen32}
\E_{\x\sim \normal}[(f(\x)-\Pi_V f(\x))\psi(\x)]\leq \eps\;.
\end{equation}
Note that $\Pi_V f(\x)$ is a distribution over $\mathfrak B_V(\Gamma, k)$. To see that note that $\Pi_V f(\x)=\E_{\vec z\sim \normal_{V^\perp}}[f(\x_V+\vec z)]$ and note that for each $\vec z\in \R^d$, we have that $f(\x_V+\vec z)\in\mathfrak B_V(\Gamma, k)$. 
To prove \Cref{eq:contradiction-gen}, we prove the following lemma: 
\begin{lemma}[Correlation via Gaussian Marginalization]\label{lem:correlation-to-CS}
Let $y: \R^d \mapsto \R$ be some function in $L^2(\normal)$.
Fix some concept $f \in \mathfrak{B}(\Gamma, k)$ and denote by
$U$ the subspace of $\R^d$ that $f$ depends on (i.e., 
$f(\x) = f(\proj_U(\x))$.  Moreover, let $V$ be some other
subspace of $\R^d$.
Then it holds 
\[
\E_{\x \sim \normal}[(f(\x) - \Pi_V f (\x)) y(\x)]
\leq 2 
\sqrt{\E_{\x \sim \normal}[(\Pi_{U+ V} y(\x) - \Pi_{V}\Pi_{U+ V}  y(\x))^2] }
\,.
\]
\end{lemma}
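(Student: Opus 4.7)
The plan is to reduce the inequality to a direct application of Cauchy-Schwarz, using two structural facts about the Gaussian marginalization operator from \Cref{lem:properties-of-the-operators}. First, since $f(\x) = f(\proj_U \x)$ depends only on the subspace $U$, averaging over directions in $(U+V)^\perp \subseteq U^\perp$ does not change it, giving $\Pi_{U+V} f = f$. Second, a short computation (decomposing $\x = \proj_W \x + \proj_{W^\perp} \x$ into independent Gaussians and factoring the expectation) shows that each $\Pi_W$ is self-adjoint on $L^2(\normal)$, i.e., $\E[\Pi_W g \cdot h] = \E[g \cdot \Pi_W h]$.

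Armed with these two facts, I will compute
\[
\E[(f - \Pi_V f) y] \;=\; \E[f (y - \Pi_V y)] \;=\; \E[\Pi_{U+V} f \cdot (y - \Pi_V y)] \;=\; \E[f \cdot \Pi_{U+V}(y - \Pi_V y)],
\]
where the first and third equalities use self-adjointness of $\Pi_V$ and $\Pi_{U+V}$ respectively, and the middle equality uses the identity $\Pi_{U+V} f = f$ from the previous paragraph. By linearity of $\Pi_{U+V}$ together with the composition identity $\Pi_V \Pi_{U+V} = \Pi_V$ from \Cref{lem:properties-of-the-operators}, the bracketed quantity can be rewritten as $\Pi_{U+V} y - \Pi_V \Pi_{U+V} y$ (equivalently, $\Pi_V y$ is already invariant under the $(U+V)^\perp$-average, so $\Pi_{U+V} \Pi_V y = \Pi_V y$).

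The final step will be a direct application of Cauchy-Schwarz:
\[
\E\big[f \cdot (\Pi_{U+V} y - \Pi_V \Pi_{U+V} y)\big] \;\leq\; \sqrt{\E[f^2]} \cdot \sqrt{\E[(\Pi_{U+V} y - \Pi_V \Pi_{U+V} y)^2]}.
\]
Since $f \in \{\pm 1\}$, we have $\E[f^2] = 1$, which in fact yields the claimed bound with constant $1$; the constant $2$ in the lemma statement is therefore a conservative slack. The argument is essentially algebraic commutation of the two marginalization operators, and I do not anticipate a substantive obstacle beyond carefully verifying the composition and self-adjointness identities, all of which are already implicit in the preceding lemmas.
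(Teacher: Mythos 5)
Your proof is correct and uses the same core ingredients as the paper's argument: the identities $\Pi_{U+V} f = f$ and $\Pi_V \Pi_{U+V} = \Pi_V$ from \Cref{lem:properties-of-the-operators}, self-adjointness of the marginalization operators, and a final Cauchy--Schwarz. The one difference is cosmetic but in your favor: by moving $\Pi_V$ onto $y$ so that Cauchy--Schwarz is applied to $f$ alone (with $\E[f^2]=1$), you obtain the bound with constant $1$, whereas the paper keeps $(f - \Pi_V f)$ and uses the cruder estimate $|f - \Pi_V f|\leq 2$ to arrive at the stated constant $2$.
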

\begin{proof}
Note that by our assumption $f(\x)=f(\x_{U})=\Pi_{U}f(\x)$, therefore, $\Pi_{V}\Pi_{U}f(\x)=\Pi_{V}f(\x)$. Observe that $\E_{\x\sim \normal}[(f(\x)-\Pi_V f(\x))\Pi_{V}\Pi_{U+ V}y(\x)]=\E_{\x\sim \normal}[(\Pi_{V}\Pi_{U}f(\x)-\Pi_V f(\x))\Pi_{V}\Pi_{U+ V}y(\x)]=0$, which gives that
\begin{align*}
    \E_{\x\sim \normal}[&(f(\x)-\Pi_Vf(\x))y(\x)]=\E_{\x\sim \normal}[(f(\x)-\Pi_V f(\x))(\Pi_{U+ V}y(\x)-\Pi_{V}\Pi_{U+ V} y(\x))]\;.
\end{align*}
Using that $f$ is a Boolean function, we have that $|\Pi_V f(\x)|\leq 1$, hence, $|f(\x)-\Pi_V f(\x)|\leq 2$, which gives
\begin{align*}
    \E_{\x\sim \normal}[&(f(\x)-\Pi_Vf(\x))y(\x)]\leq 2\E_{\x\sim \normal}[|\Pi_{U+ V} y(\x)-\Pi_{V}\Pi_{U+ V} y(\x)|]\leq 2\sqrt{\E_{\x \sim \normal}[(\Pi_{U+ V}y(\x) - \Pi_{V}\Pi_{U+ V}y(\x))^2] }\;,
\end{align*}
where we used \CS inequality. This completes the proof of \Cref{lem:correlation-to-CS}. 
\end{proof}

It remains to bound the term  $\E_{\x\sim \normal}[(\Pi_{U+ V}\psi(\x)-\Pi_{V}\Pi_{U+ V} \psi(\x))^2]$.
Observe that $\dim(U\cap V^\perp)\leq \dim(U)\leq k$, by applying \Cref{lem:gaussian-smoothing-geometric} we get that
\begin{align*}
    \E_{\x\sim \normal}[(\Pi_{U+ V}\psi(\x)-\Pi_{V}\Pi_{U+V} \psi(\x))^2]&\leq 
    \dim(U\cap V^\perp) \max_{\vec v\in {U\cap V^\perp},\|\vec v\|_2=1}\E_{\x\sim \normal}[(\nabla \psi(\x)\cdot \vec v)^2]
     \\&=k\max_{\vec v\in U\cap V^\perp,\|\vec v\|_2=1}\vec v^\top\boldsymbol{\mathrm{Inf}}_\psi\vec v\;. 
\end{align*}
Furthermore, using that $\|\wh{\vec M}-\boldsymbol{\mathrm{Inf}}_\psi\|_2\leq \eta/2$, we have that
\[
\max_{\vec v\in U\cap V^\perp,\|\vec v\|_2=1}\vec v^\top\boldsymbol{\mathrm{Inf}}_\psi\vec v\leq \eta/2 +\max_{\vec v\in U\cap V^\perp,\|\vec v\|_2=1} \vec v^\top\wh{\vec M}\vec v\leq 2\eta\;, 
\]
where in the last inequality we used that $\vec v$ lies in the $V^{\perp}$ and for any $\vec v\in V^{\perp}$, it holds $\vec v^\top\wh{\vec M}\vec v\leq \eta$. By choosing $\eta=\eps^2/(32k)$, we have shown that
\[
\E_{\x\sim \normal}[(f(\x)-\Pi_V f(\x))\psi(\x)]\leq \eps\;.
\]
The proof then follows from \Cref{lem:correlating-convex-combinations}. It remains to bound the dimension of the subspace $V$. To this end, we use \Cref{lem:dim-of-subspace} which gives that $\dim(V)=O(\Psi^2/\eta^2)$. This completes the proof of \Cref{prop:dimension-reduction-geometric-concepts}.

\subsection{Proof of \Cref{thm:non-proper-geometric}}

For learning geometric concepts, we use the standard $L_1$-regression algorithm from \cite{KKMS:08}.
\begin{fact}[Theorem 9 \cite{KOS:08}]\label{fct:l1-regr} Let $\mathcal C$ be a class of Boolean functions in $\R^d$. Let $D$ be a distribution on $\R^d\times \{\pm 1\}$ such that the $\x$-marginal of $D$ is the standard $d$-dimensional normal. The $L_1$-regression algorithm draws $N=\poly(d^{\Gamma(\mathcal{C})^2/\eps^4},1/\eps,\log(1/\delta))$ samples from $D$ and runs in time $\poly(N,d)$ and outputs a polynomial $p:\R^d\mapsto\R$ so that with probability at least $1-\delta$ it holds
\[
\pr_{(\x,y)\sim D}[\sign(p(\x))\neq y]\leq \min_{f\in \mathcal C}\pr_{(\x,y)\sim D}[f(\x)\neq y]+\eps\;.
\]
\end{fact}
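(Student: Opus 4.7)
The plan is to combine four ingredients developed earlier: the OU-smoothing of \Cref{def:OU-operator} to replace value queries of $y$ by gradient access to $\psi := T_\rho y$, the gradient-query simulator of \Cref{lem:estimation}/\Cref{alg:gradient-queries}, the dimension-reduction \Cref{prop:dimension-reduction-geometric-concepts}, and the KOS $L_1$-polynomial regression of \Cref{fct:l1-regr} applied inside the reduced subspace. First I would fix parameters $\rho$ to be a sufficiently small multiple of $\eps/\Gamma$ and $\eta$ a sufficiently small multiple of $\eps^2/k$. By \Cref{fct:ou-smooth}, $\psi$ is everywhere differentiable with $|\psi|\le 1$ and $\|\nabla \psi(\x)\|_2\le 1/\rho=O(\Gamma/\eps)=:\Psi$.

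The algorithm then draws $N_1=\poly(d,\Gamma/\eps,\log(1/\delta))$ fresh samples $\x^{(1)},\ldots,\x^{(N_1)}\sim\normal$, invokes \Cref{alg:gradient-queries} at each point with precision, say, $\eps^4/(d\Gamma)^{O(1)}$ to obtain estimates $\wh{\nabla\psi}(\x^{(i)})$ (this uses $N_q=\poly(d/\eps)$ total oracle queries by \Cref{lem:estimation}), and forms the empirical rank-one average $\wh{\vec M}=\tfrac1{N_1}\sum_i \wh{\nabla\psi}(\x^{(i)})\wh{\nabla\psi}(\x^{(i)})^\top$. Because each $\wh{\nabla\psi}(\x^{(i)})$ has norm at most $O(\Psi)$, a standard matrix Bernstein argument gives $\|\wh{\vec M}-\boldsymbol{\mathrm{Inf}}_\psi\|_2\le \eta/2$ with probability $1-\delta/3$, provided $N_1=\poly(\Psi,d,1/\eta,\log(1/\delta))=\poly(d/\eps,\log(1/\delta))$. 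Taking $V$ to be the span of the eigenvectors of $\wh{\vec M}$ whose eigenvalues are at least $\eta$, \Cref{prop:dimension-reduction-geometric-concepts} applied to $\psi$ yields $\dim V=O(\Psi^2/\eta)=O(k\Gamma^2/\eps^4)$ together with the existence of a concept $g\in\mathfrak B(\Gamma,k)$ depending only on $V$ such that
\[
\E_{\x\sim\normal}[|g(\x)-\psi(\x)|]\le \inf_{f\in\mathfrak B(\Gamma,k)}\E_{\x\sim\normal}[|f(\x)-\psi(\x)|]+\eps/4 \, .
\]

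Next I would transfer this back to the original label. Since $g$ and every competing $f\in\mathfrak B(\Gamma,k)$ have $\Gamma(\cdot)\le\Gamma$, \Cref{lem:smoothing-boolean} gives $\mathcal E_1(g,\mathfrak B(\Gamma,k);y)\le \mathcal E_1(g,\mathfrak B(\Gamma,k);T_\rho y)+O(\rho\Gamma)\le \eps/2$ by our choice of $\rho$. Because $g,y\in\{\pm1\}$ we have $\E|g-y|=2\Pr[g\ne y]$, hence $\Pr[g(\x)\ne y]\le \inf_{f\in\mathfrak B(\Gamma,k)}\Pr[f(\x)\ne y]+\eps/2$; i.e., a near-optimal concept lies in the subclass $\mathfrak B_V(\Gamma,k)$ of concepts depending only on the known subspace $V$. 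Finally, I would run the KOS $L_1$-regression of \Cref{fct:l1-regr} on the projected problem $(\proj_V\x,y)$ in $\R^{\dim V}$: the pushforward marginal is standard Gaussian on $V$, and $\Gamma(\mathfrak B_V(\Gamma,k))\le\Gamma$, so \Cref{fct:l1-regr} returns a polynomial $p$ with
\[
\Pr_{(\x,y)\sim D}[\sgn(p(\proj_V\x))\ne y]\le \inf_{f\in\mathfrak B_V(\Gamma,k)}\Pr[f(\x)\ne y]+\eps/2 \, ,
\]
using $N_2=\poly((\dim V)^{\Gamma^2/\eps^4},1/\eps,\log(1/\delta))=\poly((k\Gamma/\eps)^{\Gamma^2/\eps^4},1/\eps,\log(1/\delta))$ samples. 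Chaining the two displayed inequalities yields the claimed $\eps$-bound on the excess 0-1 error, and the total sample, query, and time counts match those in the statement.

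The one piece requiring genuine care is the quantitative interplay between $\Gamma$, $\rho$, and $\eta$. The gradient bound $\|\nabla\psi\|_2\le 1/\rho$ drives both the ambient dimension $\dim V=O(\Psi^2/\eta)$ and the matrix-concentration rate, so $\rho$ must be large enough to keep these polynomial in $k\Gamma/\eps$; yet \Cref{lem:smoothing-boolean} forces $\rho=O(\eps/\Gamma)$ to make the label-smoothing loss $O(\rho\Gamma)$ fit inside the $\eps$ budget. Luckily $\rho=\Theta(\eps/\Gamma)$ satisfies both: it gives $\Psi=\Theta(\Gamma/\eps)$ and hence $\dim V=\poly(k\Gamma/\eps)$, as needed for the $L_1$-regression step to run within the claimed sample bound. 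Apart from this bookkeeping, the proof is a clean pipeline of previously established lemmas.
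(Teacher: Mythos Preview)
You have misidentified the target statement. \Cref{fct:l1-regr} is not a result proved in this paper at all: it is Theorem~9 of \cite{KOS:08}, imported here as a black-box tool (the $L_1$-polynomial regression agnostic learner under the Gaussian). The paper cites it without proof and then invokes it as a subroutine. What you have written is a proof of \Cref{thm:non-proper-geometric}, the paper's main Boolean-MIM learning theorem, which \emph{uses} \Cref{fct:l1-regr} in its final step. Your own argument even makes this explicit: you write ``the KOS $L_1$-polynomial regression of \Cref{fct:l1-regr}'' as one of your four ingredients and then apply it inside the reduced subspace. A proof cannot invoke the very statement it is supposed to establish.

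For what it is worth, the pipeline you describe---set $\rho=\Theta(\eps/\Gamma)$, simulate gradients of $\psi=T_\rho y$ via \Cref{lem:estimation}, estimate $\boldsymbol{\mathrm{Inf}}_\psi$, apply \Cref{prop:dimension-reduction-geometric-concepts} to get a subspace $V$ of dimension $\poly(k\Gamma/\eps)$, use \Cref{lem:smoothing-boolean} to transfer back to $y$, and finish with $L_1$-regression on the projected instance---is exactly the paper's proof of \Cref{thm:non-proper-geometric} in Section~6.2, with matching parameter choices. So your argument is correct, just aimed at the wrong target. If the task is really to prove \Cref{fct:l1-regr} itself, that requires the polynomial-approximation machinery of \cite{KOS:08} (bounding the $L_2$ error of the degree-$O(\Gamma^2/\eps^2)$ Hermite truncation of a bounded-surface-area concept and combining with the $L_1$-regression guarantee of \cite{KKMS:08}), none of which appears in your proposal.
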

Let $\psi(\x)=T_\rho y(\x)$ with $\rho$ be less than a sufficiently small constant multiple of $\eps/\Gamma(f)$. Using $\poly(d,1/\eps,\log(1/\delta)$ queries, we calculate an estimation $\vec M$ of the influence matrix $\boldsymbol{\mathrm{Inf}}_\psi$  (\Cref{lem:estimation}). Let $V$ be the subspace spanned with the eigenvectors of $\vec M$ with eigenvalue at least $\eta$, where $\eta$ is a sufficiently small constant multiply of $\eps^2/k$.
Using \Cref{prop:dimension-reduction-geometric-concepts}, we have that $V$ has dimension at most $O(\Gamma^2k/\eps^4)$ and furthermore there exists $g$ with $\Gamma(g)\leq \Gamma$ so that $g(\x)=g(\proj_V \x)$ such that 
    \[
    \E_{\x \sim \normal}[|g(\x) -\psi(\x)|] 
    \leq  \E_{\x \sim \normal}[|f( \x) - \psi(\x)|]
    + \eps \,.
    \]
    From \Cref{lem:smoothing-boolean}, we have that it also holds
        \[
    \E_{\x \sim \normal}[|g(\x) -y(\x)|] 
    \leq  \inf_{f\in \mathfrak B(\Gamma,k)}\E_{\x \sim \normal}[|f( \x) - y(\x)|]
    + O(\eps) \,.
    \]
    Equivalently, we have that $   \pr_{\x \sim \normal}[g(\x) \neq y(\x)] 
    \leq  \inf_{f\in \mathfrak B(\Gamma,k)}\pr_{\x \sim \normal}[f( \x) \neq y(\x)]
    + O(\eps)$.
    Let $\vec P:\R^d\mapsto V$ be the projection matrix to the subspace $V$. Let $(\vec P\x,y)\sim D'$, where $(\x,y)\sim D$. We the $L_1$-regression algorithm on $D'$ and from \Cref{fct:l1-regr}, using $\poly((k\Gamma/\eps)^{\Gamma^2/\eps^4},1/\eps,\log(1/\delta))$ samples from $D'$, we get a polynomial $p:V\mapsto \R$ so that with probability at least $1-\delta$, it holds
    \[
\pr_{(\x,y)\sim D}[\sign(p(\vec P\x))\neq y]\leq \inf_{f\in \mathfrak B(\Gamma,k)}\pr_{(\x,y)\sim D}[f(\x)\neq y]+\eps\;.
\]
This completes the proof of \Cref{thm:non-proper-geometric}.

\subsection{Corollaries for Intersections of Halfspaces and PTFs}

Using \Cref{thm:non-proper-geometric}, we can show the following corollary for intersections of $k$ halfspaces:
\begin{corollary}\label{corr:non-proper-interction}
    Let $\mathcal C$ be the class of intersections $k$ halfspaces in $\R^d$. Let $D$ be a distribution on $\R^d\times \{\pm 1\}$ such that the $\x$-marginal of $D$ is the standard $d$-dimensional normal. There exists an algorithm that makes $N_q=\poly(d/\eps)$ queries and draws $N_s=\poly(d/\eps)+\poly((k/\eps)^{\log(k)/\eps^4},1/\eps,\log(1/\delta))$ samples from $D$ and runs in time $\poly(N_s,N_q,d)$ and outputs a polynomial $p:\R^d\mapsto\R$ so that with probability at least $1-\delta$ it holds
\[
\pr_{(\x,y)\sim D}[\sign(p(\x))\neq y]\leq \min_{f\in \mathcal C}\pr_{(\x,y)\sim D}[f(\x)\neq y]+\eps\;.
\]
\end{corollary}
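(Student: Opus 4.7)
The plan is to instantiate \Cref{thm:non-proper-geometric} with the appropriate parameters by verifying that the class of intersections of $k$ halfspaces fits into the framework $\mathfrak{B}(\Gamma, k)$ of \Cref{def:bounded-surface-area-concepts}. There are three properties to check and then one parameter substitution.

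First, I would check the low-dimensional dependence. Any intersection of $k$ halfspaces has the form $f(\x) = \bigwedge_{i=1}^{k} \sgn(\vec w^{(i)} \cdot \x - \theta_i)$, so $f$ depends only on the subspace $U = \mathrm{span}(\vec w^{(1)}, \ldots, \vec w^{(k)})$, which has dimension at most $k$. Equivalently, $f(\x) = f(\proj_U \x)$ for all $\x \in \R^d$, giving the second property of \Cref{def:bounded-surface-area-concepts}. Second, I would verify the closure under translations: for any $\vec r \in \R^d$, the translated function $f_{\vec r}(\x) = f(\x + \vec r)$ is the intersection of the halfspaces $\sgn(\vec w^{(i)} \cdot \x - (\theta_i - \vec w^{(i)} \cdot \vec r))$, which is again an intersection of $k$ halfspaces. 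Hence the Gaussian surface area bound from the third step below applies to every translate.

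Third, I would invoke the classical Gaussian surface area bound for intersections of halfspaces due to Nazarov (as stated, e.g., in \cite{KOS:08}): any intersection of $k$ halfspaces under the standard Gaussian has surface area at most $O(\sqrt{\log k})$. This gives $\Gamma(f_{\vec r}) \leq \Gamma \eqdef C \sqrt{\log k}$ for an absolute constant $C$, establishing the first property. Combining all three properties, the class of intersections of $k$ halfspaces is contained in $\mathfrak{B}(O(\sqrt{\log k}), k)$.

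Finally, I would apply \Cref{thm:non-proper-geometric} with this $\Gamma$ and this $k$. Substituting $\Gamma^2 = O(\log k)$ into the sample-complexity bound $\poly((k\Gamma/\eps)^{\Gamma^2/\eps^4}, 1/\eps, \log(1/\delta))$ yields $\poly((k/\eps)^{\log(k)/\eps^4}, 1/\eps, \log(1/\delta))$, while the query count remains $N_q = \poly(d/\eps)$ and the runtime remains $\poly(N_s, N_q, d)$. There is essentially no technical obstacle here; the only non-trivial ingredient is Nazarov's surface area bound, which is quoted as a black box. The resulting statement matches \Cref{corr:non-proper-interction} verbatim.
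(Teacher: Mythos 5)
Your proposal is correct and follows exactly the same route as the paper's proof: verify that intersections of $k$ halfspaces lie in $\mathfrak{B}(O(\sqrt{\log k}), k)$ via Nazarov's surface-area bound (the paper cites Theorem 20 of \cite{KOS:08}) and then substitute into \Cref{thm:non-proper-geometric}. You are more explicit than the paper in checking the low-dimensional dependence and closure under translations, but these are exactly the hypotheses the paper implicitly relies on, so the two arguments are the same.
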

\begin{proof}[Proof of \Cref{corr:non-proper-interction}]
For the proof, we need the following fact about the Gaussian surface area of the intersection of $k$ halfspaces.
    \begin{fact}[Theorem 20 of \cite{KOS:08}]\label{fct:intersection-surface} The surface area $\Gamma(f)$ of the intersection of $k$ halfspaces is at most $O(\sqrt{\log k})$.
    \end{fact}
    The proof follows from \Cref{thm:non-proper-geometric} and \Cref{fct:intersection-surface}.
\end{proof}

We show that we can use \Cref{thm:non-proper-geometric} to learn low-degree polynomial threshold functions (PTFs) that depend only on a small dimensional subspace.

\begin{corollary}\label{corr:non-proper-ptfs}
    Let $\mathcal C$ be the class of degree-$\ell$ PTFs in $\R^d$ that depend on an unknown $k$-dimensional subspace. Let $D$ be a distribution on $\R^d\times \{\pm 1\}$ such that the $\x$-marginal of $D$ is the standard $d$-dimensional normal. There exists an algorithm that makes $N_q=\poly(d/\eps)$ queries,  draws $N_s=\poly(d/\eps)+\poly((k/\eps)^{\ell/\eps^4},1/\eps,\log(1/\delta))$ samples from $D$, runs in time $\poly(N_s,N_q,d)$ and outputs a polynomial $p:\R^d\mapsto\R$ so that with probability at least $1-\delta$ it holds
\[
\pr_{(\x,y)\sim D}[\sign(p(\x))\neq y]\leq \min_{f\in \mathcal C}\pr_{(\x,y)\sim D}[f(\x)\neq y]+\eps\;.
\]
\end{corollary}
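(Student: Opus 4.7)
The plan is to reduce directly to Theorem \ref{thm:non-proper-geometric} by verifying that the class of degree-$\ell$ PTFs supported on an unknown $k$-dimensional subspace fits the non-parametric class $\mathfrak{B}(\Gamma, k)$ from Definition \ref{def:bounded-surface-area-concepts} for an appropriate choice of $\Gamma$. Once this is done, the corollary follows mechanically by plugging $\Gamma$ into the sample and runtime bound of the theorem.

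First I would check the three defining properties of $\mathfrak{B}(\Gamma, k)$. The low-dimensionality condition is immediate by hypothesis: every $f \in \mathcal{C}$ is of the form $f(\x) = \sgn(p(\proj_U \x))$ for some $k$-dimensional subspace $U$ and some degree-$\ell$ polynomial $p$ on $U$. For closure under translations, note that for any $\vec r \in \R^d$, $f(\x + \vec r) = \sgn(p(\proj_U \x + \proj_U \vec r))$, and the inner expression is again a degree-$\ell$ polynomial in $\proj_U \x$, so $f_{\vec r} \in \mathcal{C}$ and depends on the same $k$-dimensional subspace. The Gaussian surface area bound is supplied by Kane's theorem \cite{Kane11}, which yields $\Gamma(\mathcal{C}) \leq O(\ell)$ (this bound is invariant under translations, since translating the argument does not increase the degree of the polynomial).

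With these three properties established, I would invoke Theorem \ref{thm:non-proper-geometric} with parameters $(\Gamma, k)$ where $\Gamma = O(\ell)$. This immediately gives a hypothesis $p$ computed by an algorithm using $N_q = \poly(d/\eps)$ queries and $N_s = \poly(d/\eps) + \poly((k\Gamma/\eps)^{\Gamma^2/\eps^4}, 1/\eps, \log(1/\delta))$ random labeled samples, with excess 0-1 error at most $\eps$. Substituting the value of $\Gamma$ into the sample complexity and runtime bound yields the claimed complexity of the corollary.

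There is no substantive obstacle: the only non-trivial step is the appeal to Kane's GSA bound for degree-$\ell$ PTFs, and the translation-closure observation (which simply requires noting that $p(\proj_U(\x + \vec r))$ is still a degree-$\ell$ polynomial in $\proj_U \x$). The proof is essentially a one-line reduction to Theorem \ref{thm:non-proper-geometric}, entirely analogous to the proof of Corollary \ref{corr:non-proper-interction} for intersections of halfspaces, with Nazarov's bound replaced by Kane's bound.
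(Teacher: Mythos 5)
Your proposal is correct and follows the same route as the paper's own proof: verify that degree-$\ell$ PTFs depending on a $k$-dimensional subspace satisfy the conditions of $\mathfrak{B}(\Gamma,k)$ with $\Gamma = O(\ell)$ via Kane's Gaussian surface area bound, then invoke Theorem~\ref{thm:non-proper-geometric}. You are slightly more explicit than the paper in checking the closure-under-translations and low-dimensionality conditions, but the substance is identical. (One minor point neither you nor the paper's text flags: substituting $\Gamma = O(\ell)$ into the theorem's sample bound yields an exponent of order $\ell^2/\eps^4$, whereas the corollary as stated writes $\ell/\eps^4$; this appears to be a transcription slip in the paper and does not affect the validity of the reduction.)
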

\begin{proof}[Proof of \Cref{corr:non-proper-ptfs}]
For the proof, we need the following fact about the Gaussian surface area of degree-$\ell$ PTFs.
    \begin{fact}[Gaussian Surface Area of PTFs, \cite{Kane11}]\label{fct:ptfs-surface} The surface area $\Gamma(f)$ of $\ell$-degree polynomial threshold functions is at most $O(\ell)$.
    \end{fact}
    The proof follows from \Cref{thm:non-proper-geometric} and \Cref{fct:ptfs-surface}.
\end{proof}

Finally, we show that we can use \Cref{thm:non-proper-geometric} to learn arbitrary functions of $\ell$ halfspaces.

\begin{corollary}\label{corr:non-proper-functions}
    Let $\mathcal C$ be the class of functions of $\ell$ halfspaces  in $\R^d$. Let $D$ be a distribution on $\R^d\times \{\pm 1\}$ such that the $\x$-marginal of $D$ is the standard $d$-dimensional normal. There exists an algorithm that makes $N_q=\poly(d/\eps)$ queries, draws $N_s=\poly(d/\eps)+\poly((\ell/\eps)^{\ell/\eps^4},1/\eps,\log(1/\delta))$ samples from $D$, runs in time $\poly(N_s,N_q,d)$ and outputs a polynomial $p:\R^d\mapsto\R$ so that with probability at least $1-\delta$ it holds
\[
\pr_{(\x,y)\sim D}[\sign(p(\x))\neq y]\leq \min_{f\in \mathcal C}\pr_{(\x,y)\sim D}[f(\x)\neq y]+\eps\;.
\]
\end{corollary}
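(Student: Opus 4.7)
The plan is to show that the class $\mathcal C$ of arbitrary functions of $\ell$ halfspaces fits inside the generic class $\mathfrak B(\Gamma,k)$ of \Cref{def:bounded-surface-area-concepts} with parameters $k=\ell$ and $\Gamma = O(\ell)$, and then invoke \Cref{thm:non-proper-geometric} as a black box. The three properties to verify are: (i) low-dimensionality, (ii) bounded Gaussian surface area under arbitrary translations, and (iii) closure under translations. Property (i) is immediate: any function of $\ell$ halfspaces with weight vectors $\vec w^{(1)},\dots,\vec w^{(\ell)}$ depends only on the subspace $U = \mathrm{span}(\vec w^{(1)},\dots,\vec w^{(\ell)})$, which has $\dim(U) \leq \ell$. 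Property (iii) is also immediate, since translating any function of $\ell$ halfspaces merely shifts each of the thresholds, producing another function of $\ell$ halfspaces.

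For property (ii), I would invoke the standard bound on the Gaussian surface area of functions of $\ell$ halfspaces from the literature (see, e.g.,~\cite{KOS:08, Kane11}). Specifically, the Gaussian surface area of an arbitrary function of $\ell$ halfspaces is bounded above by the sum of the Gaussian surface areas of the individual halfspace boundaries, which gives $\Gamma(f) \leq \ell/\sqrt{2\pi} = O(\ell)$; since translations preserve this (a translated halfspace is still a halfspace with Gaussian surface area at most $1/\sqrt{2\pi}$), the bound holds for every $f_{\vec r}$ as required by \Cref{def:bounded-surface-area-concepts}. Thus $\mathcal C \subseteq \mathfrak B(O(\ell),\ell)$.

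Having verified these inclusions, the result follows by a direct application of \Cref{thm:non-proper-geometric} with $k=\ell$ and $\Gamma = O(\ell)$. Substituting these parameters into the sample and query complexity bound of that theorem yields $N_q = \poly(d/\eps)$ queries and $N_s = \poly(d/\eps) + \poly((\ell/\eps)^{\ell^2/\eps^4},1/\eps,\log(1/\delta))$ samples, with the guarantee that the output polynomial $p$ satisfies $\pr_{(\x,y)\sim D}[\sign(p(\x))\neq y] \leq \min_{f \in \mathcal C}\pr_{(\x,y)\sim D}[f(\x)\neq y] + \eps$ with probability at least $1-\delta$.

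The only potentially subtle point in this plan is making sure that the Gaussian surface area bound of $O(\ell)$ for arbitrary functions (not just intersections) of $\ell$ halfspaces is applied correctly; unlike the Nazarov-type $O(\sqrt{\log \ell})$ bound for intersections, the general case does incur a linear dependence on $\ell$, which accounts for the worse exponent $\ell^2/\eps^4$ in the sample complexity compared with \Cref{corr:non-proper-interction}. Beyond this, no new technical work is required — the corollary is a direct specialization of \Cref{thm:non-proper-geometric}.
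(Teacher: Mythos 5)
Your proposal is correct and takes essentially the same approach as the paper's proof: verify $\mathcal C \subseteq \mathfrak B(O(\ell), \ell)$ via subadditivity of Gaussian surface area over the $\ell$ halfspace boundaries, then apply \Cref{thm:non-proper-geometric} with $k=\ell$, $\Gamma = O(\ell)$; your extra checks of low-dimensionality and translation-closure are details the paper leaves implicit. One small note: your substitution correctly yields the exponent $\Gamma^2/\eps^4 = \Theta(\ell^2/\eps^4)$, whereas the corollary as stated writes $\ell/\eps^4$ — this looks like a typo in the paper rather than an error in your reasoning, and it is consistent with the introduction's looser $O(\ell)^{\poly(\ell/\eps)}$ phrasing.
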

\begin{proof}[Proof of \Cref{corr:non-proper-functions}]
We note that the Gaussian surface area of functions of $\ell$ halfspaces is bounded above by $\ell$. From \cite{KOS:08} (see, e.g., Fact 17), we have that the surface area of a Boolean function $f$ that depends on $\ell$ halfspaces, is bounded above by the sum of the surface area of the individual halfspaces; therefore, we have that $\Gamma(f)\leq O(\ell)$.  The proof follows from \Cref{thm:non-proper-geometric}.
\end{proof}

\section{Hardness of Agnostic Proper Learning of Halfspaces and ReLUs with Queries} \label{sec:hardness}

One might ask if the exponential dependence on $1/\eps$ in our upper bound (\Cref{intro-cor:agnostic-relu-mq,intro-cor:agnostic-ltf-mq})
is necessary or just an artifact of our algorithmic approach. In this section, 
we provide some evidence that it is inherent. Unfortunately, there are 
very few circumstances where one can prove computational lower bounds 
against improper learners with query access to the function. 
So our bounds will apply only to proper learners. 
The basic idea of our argument is that if $f(\x) = \sgn(\vec v\cdot \x)$ is a linear threshold function or $f(\x)=\mathrm{ReLU}(\vec v\cdot \x)$
with $\vec v$ a unit vector and $p(\x)$ a polynomial, 
then $\E[f(\x)p(\x)]$ will be a polynomial in $\vec v$. As approximately optimizing low-degree 
polynomials over the unit sphere is conjectured to be computationally hard, 
this will prove hardness for proper learning of linear threshold functions.
In particular, our hardness reduction starts from the small-set expansion problem~\cite{RaghavendraS10}.
We then rely on results of~\cite{BarakBHKSZ12} 
to reduce this problem to one about polynomial optimization. 
In particular we have:
\begin{theorem}\label{basic poly approx hardness theorem}
If there is a polynomial-time algorithm that given $\vec a_1,\vec a_2,\ldots,\vec a_n \in \R^d$ 
outputs a constant factor approximation to $\max_{\|\x\|_2=1} \frac{1}{n}\sum_{i=1}^n (\vec a_i\cdot \x)^4$, then there is a polynomial time algorithm for the small-set expansion problem.
\end{theorem}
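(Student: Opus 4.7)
The plan is to reformulate the optimization as the computation of the $2$-to-$4$ operator norm of an explicit matrix, and then invoke the SSE-hardness of constant-factor approximating the $2$-to-$4$ norm established by Barak, Brandao, Harrow, Kelner, Steurer, and Zhou~\cite{BarakBHKSZ12}.

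First I would let $\vec A \in \R^{n \times d}$ be the matrix whose $i$-th row is $\vec a_i$, so that
\[
\frac{1}{n}\sum_{i=1}^n (\vec a_i \cdot \x)^4 \;=\; \frac{1}{n}\|\vec A \x\|_4^4.
\]
Taking the maximum over $\|\x\|_2 = 1$ gives $\frac{1}{n}\|\vec A\|_{2\to 4}^4$, where $\|\vec A\|_{2\to 4} \eqdef \max_{\|\x\|_2 = 1} \|\vec A\x\|_4$ is the standard $2$-to-$4$ operator norm. Thus a polynomial-time $c$-approximation to $\max_{\|\x\|_2=1} \frac{1}{n}\sum_i (\vec a_i \cdot \x)^4$ yields a $c^{1/4}$-approximation to $\|\vec A\|_{2\to 4}$, which is still a constant factor, and vice versa. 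So, up to constants, the two approximation problems are interchangeable.

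Next I would invoke the main reduction of~\cite{BarakBHKSZ12}, which establishes that under the Small-Set Expansion hypothesis there is no polynomial-time constant-factor approximation algorithm for $\|\vec A\|_{2\to 4}$ on arbitrary matrices $\vec A$. Composing this reduction with the identification above immediately gives the stated theorem: a polynomial-time constant-factor approximation for $\max_{\|\x\|_2=1} \frac{1}{n}\sum_i (\vec a_i \cdot \x)^4$ would yield a polynomial-time algorithm for SSE.

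The main obstacle---delegated entirely to~\cite{BarakBHKSZ12}---is the reduction from SSE to constant-factor approximation of the $2$-to-$4$ norm. That argument proceeds via a ``hypercontractivity versus expansion'' dichotomy: one takes $\vec A$ to be (a suitable normalization of) the projector onto the top eigenspace of the normalized adjacency matrix of the SSE instance, and shows that $\|\vec A\|_{2\to 4}^4$ is large precisely when the graph contains a small non-expanding set. For our purposes it suffices to use this as a black box, since after the matrix-encoding step the two problems are literally the same up to fourth roots.
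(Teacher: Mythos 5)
Your proposal is correct and matches the paper's treatment: the paper itself gives no proof of this theorem, simply importing it as a consequence of~\cite{BarakBHKSZ12}, and your explicit identification $\max_{\|\x\|_2=1}\frac{1}{n}\sum_i(\vec a_i\cdot\x)^4=\frac{1}{n}\|\vec A\|_{2\to 4}^4$ (with $\vec A$ the matrix stacking the $\vec a_i$ as rows) is exactly how one is meant to read that citation. Since a constant-factor approximation survives the known scaling by $n$ and the fourth root, the two approximation problems are equivalent, so the SSE-hardness of approximating $\|\cdot\|_{2\to4}$ from~\cite{BarakBHKSZ12} carries over immediately.
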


We note here that $\max_{\|\x\|_2=1} \frac{1}{n}\sum_{i=1}^n (\vec a_i\cdot \x)^4$ 
is a homogeneous degree-$4$ polynomial. It will be important for our purposes 
that the polynomial in question have odd degree. 
Fortunately, we can reduce to this case.
\begin{corollary} \label{basic poly approx hardness cor}
If there is a polynomial-time algorithm that given a homogeneous degree-$5$ polynomial 
$p$ on $\R^d$ outputs a constant factor approximation to $\max_{\|\x\|_2=1} p(\x)$, 
then there is a polynomial-time algorithm for the small-set expansion problem.
\end{corollary}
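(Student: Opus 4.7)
My plan is to reduce the degree-$4$ optimization problem of \Cref{basic poly approx hardness theorem} to the degree-$5$ optimization problem of \Cref{basic poly approx hardness cor} by ``homogenizing upward'' via an auxiliary coordinate. Given an instance $\vec a_1,\ldots,\vec a_n \in \R^d$ of the degree-$4$ problem, let $q(\y) \eqdef \frac{1}{n}\sum_{i=1}^n (\vec a_i \cdot \y)^4$, which is a nonnegative homogeneous degree-$4$ polynomial on $\R^d$. I would construct the degree-$5$ homogeneous polynomial on $\R^{d+1}$ defined by
\[
p(\y, t) \eqdef t \cdot q(\y), \qquad (\y,t) \in \R^d \times \R.
\]
Clearly $p$ can be built in polynomial time from the $\vec a_i$'s and has degree exactly $5$.

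The key step is to relate $\max_{\|(\y,t)\|_2 = 1} p(\y,t)$ to $M \eqdef \max_{\|\hat\y\|_2 = 1} q(\hat\y)$. Since $q$ is homogeneous of degree $4$ and nonnegative, writing $\y = r\hat\y$ with $\|\hat\y\|_2 = 1$ and $r \geq 0$, and parametrizing $(r,t) = (\sin\theta, \cos\theta)$ on the unit circle, we get
\[
\max_{\|(\y,t)\|_2 = 1} p(\y,t) \;=\; M \cdot \max_{\theta} \cos\theta \, \sin^4\theta \;=\; c \cdot M,
\]
where $c = 16/(25\sqrt{5})$ is the absolute constant attained at $\tan\theta = 2$ (one can also restrict to $t \geq 0$ by symmetry since the inner max is achieved on $\cos\theta > 0$). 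Thus the optima are equal up to the fixed multiplicative constant $c > 0$.

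Consequently, any polynomial-time algorithm returning an $\alpha$-approximation to $\max_{\|\x\|_2=1} p(\x)$ yields an $(\alpha/c)$-approximation to $M$, which is still a constant-factor approximation. Invoking \Cref{basic poly approx hardness theorem} completes the reduction from SSE. The only mild subtlety to verify carefully in the write-up is that the maximization indeed factors cleanly (i.e., that one may optimize $\hat\y$ and $(r,t)$ independently), which uses the homogeneity and nonnegativity of $q$; there is no serious obstacle here, since the constraint $\|\y\|_2^2 + t^2 = 1$ decouples once the radial variable $r = \|\y\|_2$ is introduced. No further technical machinery is required beyond this elementary lifting argument.
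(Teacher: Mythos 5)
Your reduction is correct and is essentially identical to the paper's: both set $q(\y)=\frac{1}{n}\sum_i(\vec a_i\cdot\y)^4$, lift to $p(\y,t)=t\,q(\y)$ on $\R^{d+1}$, and use homogeneity plus nonnegativity of $q$ to factor the sphere-constrained maximum into $M\cdot\max\cos\theta\sin^4\theta=16/(25\sqrt{5})\cdot M$. The paper writes the radial split as $a^2+b^2=1$ with optimum at $a=\sqrt{4/5}$, $b=\sqrt{1/5}$ rather than your trigonometric parametrization, but this is the same computation.
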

\begin{proof}
We give a reduction to this problem from the problem 
in \Cref{basic poly approx hardness theorem}. 
In particular, given $\vec a_1,\ldots, \vec a_n\in\R^d$, 
we let $q(\x) =  \frac{1}{n}\sum_{i=1}^n (\vec a_i\cdot \x)^4$.
We then define the homogeneous degree-$5$ polynomial 
$p$ on $\R^{d+1}$ as $p(\x,y) = q(\x)y$ 
(where $x$ here represents the first $d$ coordinates of the input and $y$ represents the last one). 
We note that if $\|(\x,y)\|_2 = 1$, then $\|\x\|_2 = a$ and $y=b$ for some $a^2+b^2=1$. 
Letting $\x' = \x/a$ and using the homogeneity of $q$, 
we have that $p(\x,y) = a^4 b q(\x')$. For fixed $\x'$, the maximum of this 
over $a,b$ is obtained when $a = \sqrt{4/5}$ and $b=\sqrt{1/5}$. 
Thus, the maximum value of $p(\x,y)$ over the unit sphere 
equals the maximum value of $q(\x')$ over the unit sphere times $16/5^{2.5}$. 
Thus, finding a constant-factor approximation to the maximum value of one 
is equivalent to finding such an approximation of the other. This completes our proof.
\end{proof}

We are now ready to state our main theorem.

\begin{theorem}[Hardness of Proper Learning for LTFs]\label{proper lower bound theorem}
Suppose that there is an algorithm that given query access 
to a Boolean function $f$ on $\R^d$ runs in $\poly(d)$ time and approximates the minimum misclassification error between $f$ and a homogeneous LTF 
(with respect to the standard Gaussian distribution) to additive error $\eps$ 
for some $\eps < d^{-10}$. Then there is a polynomial-time algorithm 
for the small set expansion problem.
\end{theorem}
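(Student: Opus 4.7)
The plan is to reduce from the SSE-hard problem (via \Cref{basic poly approx hardness cor}) of constant-factor approximating $\max_{\|\vec v\|_2=1}p(\vec v)$ for a homogeneous degree-$5$ polynomial $p$ on $\R^d$, normalized so that $\max|p|=\Theta(1)$. Given such a $p$, I would build a Boolean labeling $\tilde f:\R^d\to\{\pm 1\}$ (served to the hypothetical algorithm as a query oracle) so that the optimal misclassification error of homogeneous LTFs against $\tilde f$ is an affine function of $\max p$ with slope $1/\poly(d)$; any additive $\eps$-approximation to this optimum with $\eps<d^{-10}$ then translates to a constant-factor approximation of $\max p$, contradicting SSE.

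The first step is to encode $p$ as a polynomial correlation. The Hermite expansion $\sgn(t)=\sum_{k\text{ odd}}\hat s_k H_k(t)$ has $\hat s_5\neq 0$; for any unit $\vec v$, $H_k(\vec v\cdot\x)$ lies in the degree-$k$ multivariate Hermite subspace of $L^2(\normal)$ with coefficients (up to combinatorial factors) equal to monomials $\vec v^\alpha$ with $|\alpha|=k$. Hence if $f$ is a purely degree-$5$ Hermite polynomial then $\E_\x[f(\x)\sgn(\vec v\cdot\x)]=\hat s_5\cdot q_f(\vec v)$ for some homogeneous degree-$5$ polynomial $q_f$ on $\R^d$, and the map $f\mapsto q_f$ is a linear bijection between these finite-dimensional spaces. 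Pick $f$ so that $q_f=p$. To pass from $f$ to Boolean labels, normalize by $N=d^{c_0}$ for a sufficiently small constant $c_0<10$ so that $|f/N|\le 1$ except on a Gaussian-measure-$o(d^{-20})$ set, using Gaussian hypercontractivity ($\|f\|_{L_2}=\poly(d)$ and the degree-$5$ tail $\pr[|f|>t]\le\exp(-\Omega((t/\|f\|_{L_2})^{2/5}))$); truncate to $\bar f\in[-1,1]$, so that $|\E_\x[(\bar f-f/N)\sgn(\vec v\cdot\x)]|=o(d^{-20})$ uniformly in $\vec v$ by Cauchy--Schwarz on the exceptional event. Finally, define $\tilde f(\x)\in\{\pm 1\}$ by independently setting $\tilde f(\x)=+1$ with probability $(1+\bar f(\x))/2$, implemented via a lazy random oracle answering queries consistently.

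The uniform correlation identity is the main analytic step. For any fixed unit $\vec v$, the quantity $\pr_\x[\tilde f(\x)\neq\sgn(\vec v\cdot\x)]$ has expectation (over $\tilde f$) equal to $\tfrac12-\tfrac{\hat s_5}{2N}p(\vec v)+o(d^{-20})$ and essentially zero variance --- it is the integral against $\normal$ of a continuum of independent Bernoullis, so pointwise independence of $\tilde f$ kills all off-diagonal covariance --- hence the identity holds almost surely over $\tilde f$. A standard $\delta$-net argument on $\mathbb{S}^{d-1}$ with $\delta=d^{-O(1)}$ (of size $\exp(O(d\log d))$), combined with the Lipschitz estimate $\pr_\x[\sgn(\vec v\cdot\x)\neq\sgn(\vec v'\cdot\x)]=O(\|\vec v-\vec v'\|_2)$, extends the identity uniformly to all homogeneous LTFs still within $o(d^{-20})$. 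Thus the optimum misclassification error against $\tilde f$ equals $\tfrac12-\tfrac{\hat s_5}{2N}\max_{\|\vec v\|_2=1}p(\vec v)\pm o(d^{-20})$; any additive-$\eps$ approximation with $\eps<d^{-10}$ therefore translates via the inverse affine map into an $O(\eps N/\hat s_5+d^{-20}N)=o(1)$ additive approximation to $\max p=\Theta(1)$, which is a constant-factor approximation.

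The main technical obstacle is the simultaneous tuning of $N$, the truncation threshold, and the net resolution so that (i) $\eps N/\hat s_5$ is below the constant-factor gap for $\max p$, (ii) the tail and truncation errors remain $o(\eps)$, and (iii) $N$ is large enough that truncation is rare; this amounts to a careful moment and hypercontractivity accounting for the degree-$5$ Gaussian polynomial $f$, whose coefficients are polynomially controlled by those of $p$. A secondary subtlety is formalizing $\tilde f$ as a measurable random Boolean function on $\R^d$; this is handled either by working directly with the joint distribution on $(\x,y)$ (so that the adversary's labeling is a valid draw from it) or via a random oracle producing consistent query answers.
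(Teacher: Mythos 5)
Your approach is essentially the same as the paper's: translate the degree-$5$ polynomial optimization problem into a correlation problem via the degree-$5$ Hermite coefficient of $\sgn(\vec v\cdot\x)$, scale and truncate the resulting Gaussian polynomial to $[-1,1]$, randomize to produce a Boolean labeling $\tilde f$, argue a uniform correlation identity so that the optimal misclassification error against $\tilde f$ is an affine function of $\max_{\|\vec v\|_2=1} p(\vec v)$, and invert.

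There is a genuine gap in the final step. You assume $p$ is ``normalized so that $\max|p|=\Theta(1)$,'' and your last sentence (``$o(1)$ additive approximation to $\max p=\Theta(1)$, which is a constant-factor approximation'') relies crucially on it. But the reduction cannot perform that normalization: $\max|p|$ is exactly the quantity you are trying to constant-factor approximate, so dividing by it is circular. You must instead normalize by a \emph{computable} quantity --- the paper uses the coefficient tensor norm $\|\vec T\|_2=1$ --- and then \emph{prove a lower bound} on $\max p$ under that normalization. The paper's argument: $\vec T$ has $d^5$ entries summing in squares to $1$, so some entry has magnitude at least $d^{-5/2}$; taking the corresponding unit vectors $\vec v_1,\dots,\vec v_5$ and polarizing $\vec T(\vec v_1,\dots,\vec v_5)$ over the $2^5$ signed sums $\sum_i\eps_i\vec v_i$ (each renormalized to a unit vector using homogeneity) shows some unit $\vec v$ has $p(\vec v)=\Omega(d^{-5})$. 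Only after this does an additive error of $O(d^{-10+c_0})$ (with $c_0<5$, which is fine) become a constant-factor multiplicative error. Without this lemma your argument does not close.

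Separately, you flag but do not resolve the measurability of $\tilde f$. A ``lazy random oracle answering consistently'' does not define a single measurable function $\tilde f:\R^d\to\{\pm 1\}$, so ``the minimum misclassification error against $\tilde f$'' --- a property of the whole function, not only the polynomially many queried points --- is not well-defined; and passing to a joint distribution on $(\x,y)$ leaves the query model the theorem is actually about. The standard fix is the paper's: tile $\R^d$ with cubes of side $\delta$ and flip one independent coin per cube, then argue that for $\delta$ small enough the resulting $\tilde f$ has correlations with every homogeneous LTF within $o(\eps)$ of those of the truncated polynomial, simultaneously. With that replacement and the $\max p$ lower bound added, your proposal matches the paper's proof.
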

Before we prove \Cref{proper lower bound theorem}, 
we note that any proper agnostic learner can be used to approximate this error 
merely by approximating the error between $f$ and the learned function. 
Thus, this result will imply a lower bound for learning.

\begin{proof}
We assume throughout that $d$ is sufficiently large, 
as otherwise there is nothing to prove. We proceed by a reduction 
from the problem in \Cref{basic poly approx hardness cor}. 
In particular, let $p$ be a homogeneous degree-$5$ polynomial on $\R^d$. 
Let $\vec T$ be the unique symmetric tensor so that $p(\x) = \vec T(\x,\x,\x,\x,\x).$ 
By scaling $\vec T$, we may assume that $\|\vec T\|_2 = 1$. 
Let $q(\x) = (\vec T\cdot H(\x))$, where $H(\x)$ is the tensor whose entries 
are the degree-$5$ Hermite polynomials in $\x$.

Morally, we would like to take $f(\x) = q(\x)$. 
Unfortunately, this does not work for two reasons.

First, $f(\x)$ needs to be Boolean, while $q(\x)$ distinctly is not. 
We can fix this by taking $f$ to be a random function, 
where the expected value of $f(\x)$ equals $q(\x)$.

Unfortunately, this cannot work because the expected value of $f(\x)$ 
must still be in $[-1,1]$, while $q$ is unbounded. 
To solve this, we first scale $q$ down substantially and then truncate its extreme values.
To do this, we define:
$$
t(x) = 
\begin{cases} 
1 & \textrm{if } x > 1 \\ 
-1 & \textrm{if } x < -1 \\ 
x & \textrm{otherwise.} 
\end{cases}
$$
We then divide $\R^d$ into tiny boxes of side length $\delta$ 
for some very small $\delta.$ For each box $B$, we pick an $\x\in B$ 
and then (independently for each box) let $f$ be $1$ on $B$ 
with probability $(t(q(\x)/d)+1)/2$ and $-1$ on $B$ otherwise. 
We note that the expected value of $f$ on $B$ is $t(q(\x)/d)$, 
where $\x$ is the representative element. As the difference between 
$q$ at the representative element $\x$ of $B$ and at any other point 
in $B$ will be small if $\delta$ is 
(and if the box is not too far from the origin), 
it is not hard to see that the expectation over the randomness 
in defining $f$ of $|\Exn[f(\x)\sgn(\vec v\cdot \x)] - \Exn[t(q(\x)/d) \sgn(\vec v\cdot \x)]|$ 
goes to $0$ with $\delta$. As the variance of $ \Exn[f(\x)\sgn(\vec v\cdot \x)]$ 
also goes to $0$ with $\delta$, if we take $\delta$ sufficiently small, 
then with high probability over the randomness in $f$, we have that 
$|\Exn[f(\x)\sgn(\vec v\cdot \x)] - \Exn[t(q(\x)/d) \sgn(\vec v\cdot \x)]| < \eps/2$ 
for all unit vectors $\vec v$. 
Therefore, finding an $\eps$ additive approximation to the minimum 
misclassification error between $f$ and an LTF is equivalent 
to finding a $2\eps$-additive approximation to the maximum value 
of $\Exn[f(\x)\sgn(\vec v\cdot \x)]$, which in turn is sufficient to find 
an $\eps$-additive approximation of $ \Exn[t(q(\x)/d) \sgn(\vec v\cdot \x)]$. 
We will show that this is computationally hard.

To start with, we note that $\Exn[q(\x)^2] = \|\vec T\|_2 = 1$. 
Therefore, by standard concentration bounds, 
we have that $\pr_{\x\sim \normal}[|q(\x)| > d] = \exp(-\Omega(d^{2/5})) < \eps^3$. 
Therefore, by the Cauchy-Scwartz inequality, we have that
$$
\Exn[|q(\x)/d - t(q(\x))/d|] \leq \sqrt{\pr_{\x\sim \normal}(|q(\x)| > d) \Exn[q(G\x)^2]} \leq \eps/2 \;.
$$
Thus, if one can approximate the maximum value of $ \Exn[t(q(\x)/d) \sgn(\vec v\cdot \x)]$ 
to additive error $\eps$, one can approximate the maximum value of 
$\Exn[(q(\x)/d) \sgn(\vec v\cdot \x)]$ to additive error $\eps/2$. 
However, we can compute this expectation by comparing the Hermite expansions 
for $q(\x)/d$ and $\sgn(\vec v\cdot \x)$. In particular, the former only 
has non-vanishing terms in degree $5$, where they are given by the tensor $\vec T/d$. 
The latter has its degree-$5$ Hermite tensor given by $c_5 \vec v^{\otimes 5}$, 
where $c_5 = \E_{z\sim \normal}[h_5(z)\sgn(z)] = (3/2)\sqrt{1/(15\pi)}.$ 
Therefore, we have that $$
\Exn[(q(\x)/d) \sgn(\vec v\cdot \x) ] = (\vec T/d)\cdot (c_5 \vec v^{\otimes 5}) 
= (c_5/d) \vec T(\vec v,\vec v,\vec v,\vec v,\vec v) = (c_5/d) p(\vec v) \;.
$$
Thus, finding an $\eps/2$-additive approximation to the maximum value 
of $\Exn[(q(\x)/d) \sgn(\vec v\cdot \x)]$ for unit vectors $\vec v$ 
is equivalent to finding an $O(d^{-9})$-additive approximation 
to the maximum value of $p(\vec v)$ over unit vectors $\vec v$. 
We claim that doing this would give a constant-factor multiplicative 
approximation to the maximum value of $p(\vec v)$, 
finishing our reduction to the problem of \Cref{basic poly approx hardness cor}. 
To do this, we need to show that the maximum value of $p(\vec v)$ is much larger than $d^{-9}$.

To show this, we note that because $\|\vec T\|_2 = 1$, the sum of the squares 
of the entries of $\vec T$ is $1$. Since $\vec T$ has only $d^5$ entries, 
this means that it must have some entry with norm at least $d^{-5}$. 
Therefore, there must be unit vectors $\vec v_1,\vec v_2,\ldots,\vec v_5$ so that 
$\vec T(\vec v_1,\vec v_2,\vec v_3,\vec v_4,\vec v_5) \geq d^{-5}$. However, this value is proportional to $\sum_{\eps_1,\ldots,\eps_5 \in \{\pm 1\}} \eps_1\eps_2\cdots \eps_5 
p(\eps_1\vec v_1 + \eps_2\vec v_2 + \ldots + \eps_5\vec v_5).$ 
As each term here is proportional to $p$ of some unit vector 
(using the fact that $p$ is homogeneous), this implies that there is some unit vector $\vec v$ 
with $|p(\vec v)| \gg d^{-5}$. Replacing $\vec v$ by its negation if necessary, 
we have that the maximum value of $p(\vec v)$ over unit vectors $\vec v$ is $\Omega(d^{-5})$.
This completes our proof.
\end{proof}

\begin{theorem}[Hardness of Proper Learning for ReLUs]\label{proper lower bound theorem relu}
Suppose that there is an algorithm that given query access 
to a real-valued function $f$ on $\R^d$ runs in $\poly(d)$ time 
and approximates the minimum $L_2^2$ error between $f$ and a homogeneous ReLU 
(with respect to the standard Gaussian distribution) to additive error $\eps$ 
for some $\eps < d^{-4}$. Then there is a polynomial-time algorithm 
for the small set expansion problem.
\end{theorem}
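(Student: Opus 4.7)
The plan is to reduce from the SSE-hard problem of obtaining a constant-factor approximation to $\max_{\|\vec u\|_2=1} p(\vec u)$ for a nonnegative homogeneous degree-$4$ polynomial $p(\vec u) = \frac{1}{n}\sum_{i=1}^n (\vec a_i \cdot \vec u)^4$, which is exactly the hypothesis of \Cref{basic poly approx hardness theorem}. Compared with the halfspace proof (\Cref{proper lower bound theorem}) two things simplify: first, because $\mathrm{ReLU}$ has a nonzero degree-$4$ Hermite coefficient (unlike $\sgn$, whose even-degree coefficients vanish), we can reduce from degree $4$ directly and skip the degree-$4$-to-$5$ boost of \Cref{basic poly approx hardness cor}; and second, since the target $f$ is now allowed to be real-valued, we will not need the randomized Boolean construction.

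Given such a $p$, let $\vec T$ be its symmetric $4$-tensor so that $p(\vec x)=\vec T(\vec x,\vec x,\vec x,\vec x)$, and rescale by a known positive constant so that $\|\vec T\|_F=1$. Let $H(\vec x)$ denote the tensor of $d$-variate normalized Hermite polynomials of degree $4$, set $q(\vec x)\eqdef \vec T\cdot H(\vec x)$, and define the query function
\[
f(\vec x) \;\eqdef\; \sgn(c_4)\,q(\vec x), \qquad c_4 \;\eqdef\; \E_{z\sim\normal}[\mathrm{ReLU}(z)\,h_4(z)].
\]
Since $q$ is an explicit polynomial, the hypothetical learner can be simulated in $\poly(d)$ time by evaluating $f$ at each queried point. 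The central calculation, analogous to the $\sgn$ identity in the proof of \Cref{proper lower bound theorem}, is that for every $\vec v=r\vec u$ with $\|\vec u\|_2=1$ and $r\geq 0$,
\[
\E_{\vec x\sim\normal}[q(\vec x)\,\mathrm{ReLU}(\vec v\cdot \vec x)] \;=\; c_4\,r\,\vec T\cdot \vec u^{\otimes 4} \;=\; c_4\,r\,p(\vec u),
\]
where $c_4\neq 0$ (in fact $c_4=-1/\sqrt{48\pi}$). Combined with $\E[q(\vec x)^2]=\|\vec T\|_F^2=1$ and $\E[\mathrm{ReLU}(\vec v\cdot\vec x)^2]=r^2/2$, and using $p(\vec u)\geq 0$, we get
\[
\E[(f(\vec x)-\mathrm{ReLU}(\vec v\cdot\vec x))^2] \;=\; 1 - 2|c_4|\,r\,p(\vec u) + r^2/2.
\]
Optimizing over $r\geq 0$ yields $r^\star=2|c_4|\,p(\vec u)$, which is $O(1)$ since $|p(\vec u)|\leq \|\vec T\|_F=1$ by Cauchy-Schwarz; in particular the optimal weight vector has bounded norm, so no truncation or boundedness hypothesis on $f$ is needed. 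Minimizing over the unit sphere then gives
\[
\min_{\vec v\in\R^d} \E[(f(\vec x)-\mathrm{ReLU}(\vec v\cdot\vec x))^2] \;=\; 1 - 2c_4^2\,\Big(\max_{\|\vec u\|_2=1} p(\vec u)\Big)^2.
\]

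An additive-$\eps$ approximation to this minimum therefore yields an additive-$\eps/(2c_4^2)$ approximation to $(\max_{\vec u}p(\vec u))^2$. A Frobenius-norm/polarization argument identical to the one in the proof of \Cref{proper lower bound theorem} ($\vec T$ has at most $d^4$ entries whose squares sum to $1$, so some entry has magnitude $\geq d^{-2}$, and polarization produces a unit vector at which $p$ is $\Omega(d^{-2})$) gives $(\max_{\vec u} p(\vec u))^2 = \Omega(d^{-4})$. Choosing $\eps<d^{-4}$ then produces a constant-factor multiplicative approximation to $(\max_{\vec u}p(\vec u))^2$, and hence to $\max_{\vec u}p(\vec u)$, contradicting \Cref{basic poly approx hardness theorem}. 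The main obstacle I expect is verifying the Hermite identity together with $c_4\neq 0$ (which follows because $\mathrm{ReLU}$ is not a polynomial of degree $<4$, so its degree-$4$ Hermite coefficient is nonzero), and then the routine bookkeeping to ensure that the permitted $\eps<d^{-4}$ slack stays strictly below the $\Omega(d^{-4})$ gap between the yes/no instances of the polynomial-optimization problem.
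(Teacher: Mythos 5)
Your proof is correct, and in fact more careful than the paper's own argument. The high-level route is the same — reduce from the degree-$4$ polynomial-optimization problem of \Cref{basic poly approx hardness theorem}, encode the tensor $\vec T$ in a degree-$4$ Hermite construction $q(\x)=\vec T\cdot H(\x)$, and use the Hermite-orthogonality identity $\E[q(\x)\,\mathrm{ReLU}(\vec u\cdot\x)]=c_4\,p(\vec u)$ for unit $\vec u$ — but you patch three things that the paper's exposition glosses over or gets wrong. First, since $c_4<0$ while the SSE instance $p=\frac1n\sum(\vec a_i\cdot\vec u)^4$ is nonnegative, taking $f=q$ as the paper does would make $\E[f\cdot\mathrm{ReLU}]\le 0$ and render the correlation maximum uninformative; your choice $f=\sgn(c_4)q=-q$ is needed for the reduction to track $\max_{\vec u}p(\vec u)$. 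Second, the hypothesis is about the minimum $L_2^2$ error over all homogeneous ReLUs, not about the max correlation over unit normals; you supply the missing step by optimizing the scale $r$ explicitly and obtaining the clean closed form $\min_{\vec v}\E[(f-\mathrm{ReLU}(\vec v\cdot\x))^2]=1-2c_4^2(\max_{\vec u}p(\vec u))^2$, which both handles the norm of the weight vector and makes the additive-error translation (from $\eps$ on the error to $\eps/(2c_4^2)$ on $(\max p)^2$) transparent. Third, restricting to the nonnegative SSE polynomial is what lets you replace $\max p(\vec u)^2$ by $(\max p(\vec u))^2$ and avoid any positive-part subtleties. The pigeonhole-plus-polarization lower bound $\max p=\Omega(d^{-2})$ and the arithmetic showing $\eps<d^{-4}$ suffices match the paper's intent (the paper's quoted constants $c_4=-(2\pi(24+\sqrt 2))^{-1}$ and "$O(d^{-5})$" appear to be typos; your $c_4=-1/\sqrt{48\pi}$ is the correct value of $\E[H_4(z)\mathrm{ReLU}(z)]$ in the normalized Hermite basis, and only $c_4\neq 0$ is actually used). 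You also correctly observe that, because a real-valued target is allowed, none of the Boolean randomization or truncation from the halfspace proof is needed here.
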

\begin{proof}
    Let $p$ be a homogeneous degree-$4$ polynomial on $\R^d$. 
Let $\vec T$ be the unique symmetric tensor so that $p(\x) = \vec T(\x,\x,\x,\x).$ 
By scaling $\vec T$, we may assume that $\|\vec T\|_2 = 1$. 
Let $f(\x) = (\vec T\cdot H(\x))$, where $H(\x)$ is the tensor whose entries 
are the degree-$4$ Hermite polynomials in $\x$. 

 We can compute this expectation by comparing the Hermite expansions 
for $f(\x)$ and $\mathrm{ReLU}(\vec v\cdot \x)$. In particular, the former only 
has non-vanishing terms in degree $4$, where they are given by the tensor $\vec T$. 
The latter has its degree-$4$ Hermite tensor given by $c_4 \vec v^{\otimes 4}$, 
where $c_4 = \E_{z\sim \normal}[h_4(z)\mathrm{ReLU}(z)] = -(2\pi(24+\sqrt{2}))^{-1}$. 
Therefore, we have that $$
\Exn[f(\x) \mathrm{ReLU}(\vec v\cdot \x) ] = \vec T\cdot (c_4 \vec v^{\otimes 4}) 
= c_4 \vec T(\vec v,\vec v,\vec v,\vec v) = c_4 p(\vec v) \;.
$$

Thus, finding an $\eps$-additive approximation to the maximum value 
of $\Exn[ f(\x) \mathrm{ReLU}(\vec v\cdot \x)]$ for unit vectors $\vec v$ 
is equivalent to finding an $O(d^{-5})$-additive approximation 
to the maximum value of $p(\vec v)$ over unit vectors $\vec v$. 
We claim that doing this would give a constant-factor multiplicative 
approximation to the maximum value of $p(\vec v)$, 
finishing our reduction to the problem of \Cref{basic poly approx hardness cor}. 
To do this, we need to show that the maximum value of $p(\vec v)$ is much larger than $d^{-5}$.

To show this, we note that because $\|\vec T\|_2 = 1$, the sum of the squares 
of the entries of $\vec T$ is $1$. Since $\vec T$ has only $d^4$ entries, 
this means that it must have some entry with norm at least $d^{-4}$. 
Therefore, there must be unit vectors $\vec v_1,\vec v_2,\vec v_3,\vec v_4$ so that 
$\vec T(\vec v_1,\vec v_2,\vec v_3,\vec v_4) \geq d^{-4}$. However, this value is proportional to $\sum_{\eps_1,\ldots,\eps_4 \in \{\pm 1\}} \eps_1\eps_2\cdots \eps_4
p(\eps_1\vec v_1 + \eps_2\vec v_2 + \eps_3\vec v_3  + \eps_4\vec v_4).$ 
As each term here is proportional to $p$ of some unit vector 
(using the fact that $p$ is homogeneous), this implies that there is some unit vector $\vec v$ 
with $|p(\vec v)| \gg d^{-4}$. Replacing $\vec v$ by its negation if necessary, 
we have that the maximum value of $p(\vec v)$ over unit vectors $\vec v$ is $\Omega(d^{-4})$.
This completes our proof.
\end{proof}
 
 \bibliographystyle{alpha}
\bibliography{mydb}
\appendix

\section*{Appendix}

\section{Proper Agnostic Query Learner for LTFs} \label{sec:ltfs}

In this section we present our algorithmic result for agnostic proper learning of linear threshold functions with membership queries.
Our goal is to show \Cref{thm:agnostic-ltf-mq} which we state below.

\begin{theorem}[Proper Agnostic Query Learner for LTFs]\label{thm:agnostic-ltf-mq}
    Let $\mathcal{C}$ be the class of LTFs on $\R^d$ and denote by $y(\x) \in \{\pm 1\}$ the label (chosen by an adversary) 
    of $\x \in \R^d$. 
    There exists an algorithm that makes $N_s = \poly(d/\eps)$ sample queries, $N_q = \poly(d/\eps)$ queries,
    and, with runtime $\poly(d/\eps) + 2^{\poly(1/\eps)}$, computes an LTF $f(\x) : \R^d \mapsto \{\pm 1\}$ such that,
    with probability at least $1-\delta$, it holds 
\(
\pr_{\x \sim \normal}[h(\x) \neq y(\x)] \leq \inf_{ c \in \mathcal{C}}\pr_{\x \sim \normal}[c(\x) \neq y(\x)] + \epsilon \,.
\)
\end{theorem}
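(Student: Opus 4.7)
The plan is to combine the Influence PCA of \Cref{sec:real-valued,sec:geometric-concepts} with a brute-force search over an $\eps$-net of LTFs in the identified low-dimensional subspace. I would first run the query-based influence-matrix estimator \Cref{alg:membership-est} with a smoothing parameter $\rho = \poly(\eps)$ and eigenvalue threshold $\eta = \poly(\eps)$, producing $\widehat{\vec M}$ within spectral distance $\eta/2$ of $\boldsymbol{\mathrm{Inf}}_{T_\rho y}$ using $\poly(d/\eps)$ queries via \Cref{lem:estimation}. Let $V$ be the span of the eigenvectors of $\widehat{\vec M}$ with eigenvalue at least $\eta$; since $\|\nabla T_\rho y\|_2 \leq 1/\rho$ by \Cref{fct:ou-smooth}, \Cref{lem:dim-of-subspace} gives $\dim V = \poly(1/\eps)$.

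The algorithm then enumerates an $\eps$-net $\mathcal N$ of LTFs $(\tilde{\vec w},\tilde\theta)$ with $\tilde{\vec w}$ a unit vector in $V$ and $|\tilde\theta| \leq C\sqrt{\log(1/\eps)}$; standard sphere-covering estimates give $|\mathcal N| \leq (C/\eps)^{O(\dim V)} = 2^{\poly(1/\eps)}$. For each candidate the algorithm estimates the empirical $0/1$ disagreement with $y$ on a fresh i.i.d.\ sample of size $m = O(\eps^{-2}\log|\mathcal N|) = \poly(1/\eps)$ and returns the candidate of smallest empirical error. A Hoeffding tail bound union-bounded over $\mathcal N$ ensures uniform $\eps$-accurate estimates of the true errors, so the total sample and query complexity is $\poly(d/\eps)$ and the runtime is $\poly(d/\eps) + 2^{\poly(1/\eps)}$. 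Correctness reduces to the structural claim: there exists an LTF $\tilde f(\x) = \sgn(\tilde{\vec w}\cdot\x - \tilde\theta)$ with $\tilde{\vec w}\in V$ of unit norm and $|\tilde\theta| = O(\sqrt{\log(1/\eps)})$ such that $\pr_{\x\sim\normal}[\tilde f(\x)\neq y(\x)] \leq \opt + \eps$.

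To establish the structural claim I would exploit that, for an optimal LTF $f^*(\x) = \sgn(\vec w^*\cdot\x - \theta^*)$ with $|\theta^*| = O(\sqrt{\log(1/\eps)})$ (WLOG, since otherwise $f^*$ is $O(\eps)$-close to a constant), the smoothed function $T_\rho f^*(\x)$ depends only on $\vec w^*\cdot\x$, so $\boldsymbol{\mathrm{Inf}}_{T_\rho f^*} = c(\rho,\theta^*)\,\vec w^*(\vec w^*)^\top$ is exactly rank-one with $c = \Theta(\phi(\theta^*)/\rho)$. Decomposing $y = f^* + \eta$ with $\E[\eta^2] \leq 4\opt$ and using the Mehler decay $\sum_I |I|(1-\rho^2)^{|I|}\hat\eta(I)^2 \leq \|\eta\|_2^2/\rho^2$ to bound $\E[\|\nabla T_\rho \eta\|_2^2] = O(\opt/\rho^2)$, the elementary inequality $(X+Y)^2 \geq X^2/2 - Y^2$ applied to $\vec v\cdot\nabla T_\rho y = \vec v\cdot\nabla T_\rho f^* + \vec v\cdot\nabla T_\rho\eta$, combined with the PCA guarantee $\vec v^\top\widehat{\vec M}\vec v \leq \eta$ for unit $\vec v\in V^\perp$, yields $(\vec v\cdot\vec w^*)^2 \leq O((\eta+\opt/\rho^2)/c)$. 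Balancing $\rho$ and $\eta$ then gives $\|\proj_{V^\perp}\vec w^*\|_2 = O(\eps)$, after which replacing $\vec w^*$ by its normalized $V$-projection changes the LTF on only an $O(\eps)$-measure set of $\x$, because the disagreement probability between two LTFs with rotated normals scales linearly with the rotation angle under $\normal$.

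The principal obstacle is precisely this structural step: the agnostic residual $\nabla T_\rho \eta$ can a priori spread spurious influence into every direction of $V^\perp$, so the choice of $\rho$ (small enough to preserve the rank-one structure of $T_\rho f^*$, large enough to tame $\|\nabla T_\rho\eta\|_2^2$) must be finely coordinated with the PCA threshold $\eta$ (small enough to capture $\vec w^*$, large enough to keep $\dim V = \poly(1/\eps)$). The hardness result of \Cref{proper lower bound theorem} indicates that the resulting $2^{\poly(1/\eps)}$ brute-force cost is in fact unavoidable for proper query learners under the SSE hypothesis.
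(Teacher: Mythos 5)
Your algorithmic pipeline (estimate the influence matrix of $T_\rho y$ via queries, take the span $V$ of large-eigenvalue directions, and then brute-force over an $\eps$-net of LTFs whose normals lie in $V$) matches the paper's \Cref{alg:agnostic-proper} exactly, and your sample/query/runtime bookkeeping is correct. However, the proof of the central structural claim is where you diverge from the paper, and your route contains a genuine gap.

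You try to show the \emph{stronger} statement $\|\proj_{V^\perp}\vec w^*\|_2 = O(\eps)$ by decomposing $y = f^* + \eta$, bounding $\E[\|\nabla T_\rho \eta\|_2^2] = O(\opt/\rho^2)$, and applying $(X+Y)^2 \geq X^2/2 - Y^2$. Following your own bound through gives, for unit $\vec v \in V^\perp$,
\[
(\vec v\cdot\vec w^*)^2 \;\lesssim\; \frac{1}{c}\Bigl(\eta + \frac{\opt}{\rho^2}\Bigr)
\;=\; O\!\left(\frac{\rho\,\eta}{\phi(\theta^*)} + \frac{\opt}{\rho\,\phi(\theta^*)}\right),
\]
using $c = \Theta(\phi(\theta^*)/\rho)$. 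The second term is the killer: in the agnostic model $\opt$ may be $\Theta(1)$, and since $\rho \in (0,1)$ and $\phi(\theta^*) \leq 1/\sqrt{2\pi}$, the right-hand side cannot be made $O(\eps^2)$ by any choice of $\rho$ and $\eta$. The ``fine coordination'' you flag as the principal obstacle is not just delicate --- it is impossible for large $\opt$. More fundamentally, the claim $\|\proj_{V^\perp}\vec w^*\|_2 = O(\eps)$ need not hold at all: when $\opt$ is close to $1/2$, the optimizer $\vec w^*$ is nearly unidentifiable and can point in directions carrying almost no influence of $T_\rho y$.

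The paper (\Cref{prop:dimension-reduction}) avoids this entirely by never attempting to locate $\vec w^*$. Instead it works at the level of correlations: it shows that the Gaussian marginalization $f_V(\x) = \E_{z\sim\normal_{\vec \subvector}}[f^*(z\vec \subvector + \x_V)]$ (where $\vec \subvector$ is the normalized $V^\perp$-component of $\vec w^*$), which is a \emph{convex combination of LTFs with normals in $V$}, satisfies $\E[(f^*-f_V)\psi] = \E[(f^*-f_V)(\psi-\psi_V)] \leq 2\sqrt{\E[(\psi-\psi_V)^2]}$, and then bounds the variance of $\psi$ along $\vec \subvector$ by the influence in that direction via the Gaussian Poincar\'e inequality (\Cref{lem:gaussian-smoothing}). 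Since $\vec \subvector \in V^\perp$, this influence is at most $O(\eta)$, yielding the correlation bound $O(\sqrt{\eta})$ \emph{independently of $\opt$}. The rank-one structure of $\boldsymbol{\mathrm{Inf}}_{T_\rho f^*}$ and the decomposition $y=f^*+\eta$, which are the engine of your argument, play no role in the paper's proof. To repair your approach you would need to replace the ``recover $\vec w^*$'' step with this marginalization-preserves-correlation argument.
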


Our algorithm is presented in \Cref{alg:agnostic-proper}. It uses membership queries to estimate a matrix $M$ corresponding to the influence matrix of the appropriately smoothed label function $y(\x)$. It then restricts attention to a small subspace $V$ given by the large eigenvectors of $\vec M$ and exhaustively searches for a near-optimal halfspace with a normal vector in $V$.

\begin{Ualgorithm}
	\centering
	\fbox{\parbox{6in}{
			{\bf Input:}   $\eps>0$, $\delta>0$ and sample and query access to distribution $\D$\\
			{\bf Output:}  A hypothesis $h\in{\cal C}$ such as $\err_{0-1}^\D(h)\leq \min_{f\in {\cal C}}\err_{0-1}^\D(f)+\eps$ with probability $1-\delta$.
			\begin{enumerate}
				\item $\rho\gets C\eps^2$, $\eta\gets C\eps^2$, for $C>0$ sufficiently small constant. 
				\item Estimate $\vec M=\E_{\x\sim \D_\x}[ D_\rho y(\x) D_\rho y(\x)^\top]$ using $\poly(d/\eps)$ queries using \Cref{alg:membership-est}.
				\item Let $V$ be the subspace spanned by the eigenvectors of $\vec M$ whose eigenvalues are at least $\eta$.
\item Let ${\cal H_V}$ be the set of LTFs with normal vectors in $V$. Compute the ERM hypothesis $h \in {\cal H_V}$ using $m = \Theta(\frac{\dim( V )}{\eps^2}\log(1/\delta))$ i.i.d.\ samples
from $\D$ in time $O(m^{\dim( V )})$.\label{alg:emprical-outputs}
\item $\textbf{return } h$.
			\end{enumerate}
			
	}}
 \medskip
	\caption{Agnostic Proper Learning of LTFs with Membership Queries}
	\label{alg:agnostic-proper}
\end{Ualgorithm}

\subsection{Reducing the Dimension via Influence PCA}
\label{ssec:dimension-reduction}
In this section we show our main dimension-reduction tool.
We prove that the subpspace corresponding to eigvenvectors with large (i.e., larger than $\poly(\eps)$) eigenvalues contains 
the normal-vector of an approximately optimal halfspace.
\begin{proposition}[Dimension Reduction via Influence PCA: LTFs]\label{prop:dimension-reduction}
    Fix $\eps > 0$ and let $\psi : \R^d \mapsto [-1,1]$
    be a differentiable function with $\|\nabla \psi(\x)\|_2 \leq \Psi$ for all $\x \in \R^d$.
    Let $\eta$ be a sufficiently small multiple of $\eps^2$ and let $\wh{\vec M} \in \R^{d \times d}$ be such that 
    $\| \wh {\vec M} -  \boldsymbol{\mathrm{Inf}}_{\psi} \|_2 \leq \eta/2$. 
    Moreover, let $V$ be the subspace spanned by 
    all the eigenvectors of $\wh {\vec M}$ whose corresponding eigenvalues are at least $\eta$.  The following hold true:
    \begin{enumerate}
    \item 
    There exists $\vec v \in V$ and $t\in\R$ such that 
    \[
    \E_{\x \sim \normal}[|\sgn(\vec v \cdot \x+t) - \psi(\x)|] 
    \leq \inf_{\vec w \in \R^d,t\in \R} \E_{\x \sim \normal}[
    |\sgn(\vec w \cdot \x+t) - \psi(\x)|] + \eps \,.
    \]
    \item The dimension of $V$ is at most $O(\Psi^2/\eta)$.
\end{enumerate}
    \end{proposition}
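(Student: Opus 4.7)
The plan is to essentially reuse the general Boolean dimension-reduction machinery that has already been set up for $\mathfrak{B}(\Gamma,k)$, specialized to the case of LTFs where $k=1$ and $\Gamma = O(1)$, and then exploit the additional algebraic structure of LTFs to upgrade the existence of a good Boolean function depending on $V$ to the existence of a good LTF whose normal vector lies in $V$.

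For the second bullet (dimension bound), I would directly invoke \Cref{lem:dim-of-subspace}: since $\|\nabla\psi(\x)\|_2\le \Psi$ pointwise, we have $\tr(\boldsymbol{\mathrm{Inf}}_\psi) = \E_{\x\sim\normal}[\|\nabla\psi(\x)\|_2^2]\le \Psi^2$, and because $\|\wh{\vec M} - \boldsymbol{\mathrm{Inf}}_\psi\|_2 \le \eta/2$, the number of eigenvalues of $\wh{\vec M}$ above $\eta$ is at most $O(\Psi^2/\eta)$.

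For the first bullet, I would start from an optimal LTF $f^*(\x)=\sgn(\vec w^*\cdot\x + t^*)$ and decompose $\vec w^* = \vec w^*_V + \vec w^*_{V^\perp}$. The key observation is that the Gaussian marginalization $\Pi_V f^*$ is a \emph{distribution over LTFs with normal vector $\vec w^*_V\in V$}: writing $(\Pi_V f^*)(\x) = \E_{\vec z\sim\normal_{V^\perp}}[\sgn(\vec w^*_V\cdot \x + \vec w^*_{V^\perp}\cdot \vec z + t^*)]$, each realization of $\vec z$ produces an LTF whose normal is $\vec w^*_V$ and whose threshold is $t^* + \vec w^*_{V^\perp}\cdot \vec z$. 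Thus $\Pi_V f^*$ can be viewed as a convex combination of LTFs from the candidate class $\mathcal{C}_V := \{\sgn(\vec v\cdot\x + t): \vec v\in V,\ t\in\R\}$. Next, applying \Cref{lem:correlation-to-CS} with $U=\mathrm{span}(\vec w^*)$ (so $\dim U = 1$ and $\dim(U\cap V^\perp)\le 1$) together with \Cref{lem:gaussian-smoothing-geometric}, and using the $\eta/2$ spectral-error guarantee on $\wh{\vec M}$ exactly as in the proof of \Cref{prop:dimension-reduction-geometric-concepts}, I get
\[
\E_{\x\sim\normal}[(f^*(\x) - \Pi_V f^*(\x))\,\psi(\x)] \le 2\sqrt{\dim(U\cap V^\perp)\cdot 2\eta}\le O(\sqrt{\eta}) = O(\eps).
\]
Finally, I would invoke \Cref{lem:correlating-convex-combinations} with $C_1 = \{f^*\}$ and $C_2 = \mathcal{C}_V$: the existence of a distribution (namely $\Pi_V f^*$) over LTFs in $\mathcal{C}_V$ whose expected correlation with $\psi$ is within $O(\eps)$ of that of $f^*$ implies the existence of a single LTF $\sgn(\vec v\cdot \x + t)\in \mathcal{C}_V$ satisfying
\[
\E_{\x\sim\normal}[|\sgn(\vec v\cdot \x + t) - \psi(\x)|] \le \E_{\x\sim\normal}[|f^*(\x) - \psi(\x)|] + \eps.
\]

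The main technical obstacle I expect is \emph{not} the correlation/spectral bookkeeping, which is essentially a restatement of the general Boolean argument — it is the passage from ``there exists a good Boolean function depending on $V$'' to ``there exists a good LTF with normal in $V$''. The general \Cref{prop:dimension-reduction-geometric-concepts} only delivers a bounded-surface-area concept depending on $V$, which a priori need not be an LTF. The crucial structural fact that unblocks this is that the Gaussian marginalization of an LTF is itself a mixture of LTFs sharing the projected normal $\vec w^*_V$, so we never leave the LTF class; combined with the convex-combination lemma this directly yields a proper LTF witness, without any further rounding or search argument.
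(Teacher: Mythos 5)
Your proof is correct in conclusion and in essence parallels the paper's own argument for this proposition: the crucial structural fact---that marginalizing an LTF over $V^\perp$ produces a mixture of LTFs with the projected normal $\vec w^*_V\in V$, so we never leave the LTF class---is exactly what both proofs rely on to keep the argument proper, and both deduce the correlation bound via a Poincar\'e-type inequality applied to $\psi$ followed by the spectral threshold on $\wh{\vec M}$. The methodological difference is that the paper marginalizes only over the single direction $\vec h = \vec w^*_{V^\perp}/\|\vec w^*_{V^\perp}\|_2\in V^\perp$ (the unique direction inside $V^\perp$ on which $f^*$ actually depends) and invokes the one-dimensional \Cref{lem:gaussian-smoothing}, whereas you marginalize over all of $V^\perp$ via $\Pi_V$ and reuse the Boolean-MIM infrastructure (\Cref{lem:correlation-to-CS}, \Cref{lem:gaussian-smoothing-geometric}, \Cref{lem:correlating-convex-combinations}).

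One subtlety worth flagging in that reuse: the dimension factor $\dim(V^\perp\cap U)$ appearing in \Cref{lem:gaussian-smoothing-geometric} is $0$ in the generic case $\vec w^*\notin V\cup V^\perp$ (since $U=\mathrm{span}(\vec w^*)$ is one-dimensional and not contained in $V^\perp$), yet the marginalization error $\E[(\Pi_{U+V}\psi-\Pi_V\Pi_{U+V}\psi)^2]$ need not vanish there. The quantity that actually controls this error is $\dim\bigl((U+V)\cap V^\perp\bigr)=\dim\bigl(\proj_{V^\perp}U\bigr)\leq \dim U = 1$, which is what the paper's bespoke one-direction argument makes manifest by working directly with $\vec h\in V^\perp$. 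Your final bound $O(\sqrt{\eta})=O(\eps)$ is still correct once the factor is read this way, so there is no real gap in the conclusion, but as written the step ``$\le 2\sqrt{\dim(U\cap V^\perp)\cdot 2\eta}$'' is not the estimate you want to cite. The paper's direct treatment is the cleaner route for the LTF case; your route buys uniformity with the general Boolean machinery at the cost of this small notational subtlety.
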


\begin{proof}
Suppose for the sake of contradiction that there exists a halfspace $f\in \cal C$ such that for every halfspace
$ g \in {\cal C}_{V}$, it holds
\begin{equation}\label{eq:contradiction}
\E_{\x\sim \normal}[(f(\x)- g(\x))\psi(\x)]\geq \eps\;.
\end{equation}

We can write $f(\x)= \sign(\vec w \cdot \x + t) = \sign(\vec w_{V} \cdot \x + \vec w_{V^\perp} \cdot \x +t)$.
Note that $\vec w_{V^\perp} \neq \vec 0$, since otherwise we
would have $f \in {\cal C}_V$.
For simplicity, we denote $\vec \subvector = \vec w_{V^\perp}/\|\vec w_{V^\perp}\|_2$.  Notice that
the direction $\vec \subvector$ has low influence, since $\vec \subvector \in V^\perp$.
Recall that by $\normal_{\vec \subvector}$ we denote the projection of $\normal$ onto the (one-dimensional) subspace
spanned by $\vec \subvector$.
We define $f_{V}(\x)=\E_{\vec z \sim \normal_{\vec \subvector }}[f(\vec z + \x_{V})]$ to be
a convex combination of halfspaces in ${\cal C}_V$.  In particular, $f_V(\x)$ is a smoothed version
of the halfspace $\sgn(\vec w_V \cdot \x + t)$ whose normal vector belongs in $V$.
Moreover, we define $\psi_{V}(\x)=\E_{\vec z \sim \normal_{\vec \subvector }}[\psi(\vec z + \x_{V})]$. By adding and subtracting $\psi_{V}(\x)$, we get the following
\begin{align*}
    \E_{\x\sim \normal}[(f(\x)-f_V(\x))\psi(\x)]&=\E_{\x\sim \normal}[(f(\x)-f_V(\x))(\psi(\x)-\psi_V(\x))]+\E_{\x\sim \normal}[(f(\x)-f_V(\x))\psi_V(\x)]
    \\&=\E_{\x\sim \normal}[(f(\x)-f_V(\x))(\psi(\x)-\psi_V(\x))]\;,
\end{align*}
 where the last equality holds because the term $\psi_V(\x)$ does not depend on the direction $\subvector$, therefore, 
 \begin{align*}
      \E_{\x\sim \normal}[(f(\x)-f_V(\x))\psi_V(\x)]&= \E_{\vec x \sim \normal_{\vec \subvector^\perp  }}\left[\E_{\vec z \sim \normal_{\vec \subvector}}[(f(\x +\vec z)-f_V(\x))\psi_V(\x)]\right]\\&=\E_{\vec \x \sim \normal_{\vec \subvector^\perp }}\left[(f_V(\x)-f_V(\x))\psi_V(\x)\right]=0\;.
 \end{align*}
It remains to bound the term  $\E_{\x\sim \normal}[(f(\x)-f_V(\x))(\psi(\x)-\psi_V(\x))]$. From \CS\ inequality, we have
\[
\E_{\x\sim \normal}[(f(\x)-f_V(\x))(\psi(\x)-\psi_V(\x))]\leq 
\sqrt{2 \E_{\x\sim \normal}[(\psi(\x)-\psi_V(\x))^2]}\;.
\]
First, we show the following to bound this term

\begin{lemma}[Gaussian Marginalization Error]\label{lem:gaussian-smoothing}
Let $g: \R^d \mapsto \R$ be a function in $L^2(\normal)$ such that $\nabla g \in L^2(\normal)$.
Fix some direction $\vec v$ with $\|\vec v \|_2 = 1$ and define 
$r(\x) = \E_{z \sim \normal} [g(\proj_{\vec v^\perp}(\x) + z \vec v)]$.
Then it holds 
\[
\E_{\x \sim \normal}[(g(\x) - r(\x))^2] \leq 
\E_{\x \sim \normal}[(\nabla g(\x) \cdot \vec v)^2] \,.
\]
\end{lemma}
\begin{proof}
Using the rotation invariance of the Gaussian distribution, without loss of generality, we may 
assume that $\vec v = \vec e_1$.
We have 
\[
\E_{\x \sim \normal}[(g(\x) - r(\x))^2] =
\E_{\x \sim \normal}[(g(\x) - \E_{z \sim \normal}[r(z, \x_2,\ldots, \x_d)])^2] 
=
\E_{\x_2,\ldots, \x_d \sim \normal^{d-1}} [ \var_{\x_1 \sim \normal} [g(\x_1, \ldots, \x_d)] ]\,.
\]
To bound the variance in the above expression, we can use Poincare's inequality.
\begin{fact}[Gaussian Poincare Inequality]
\label{lem:poincare}
Let $g: \R \mapsto \R$ be a function in $L^2(\normal)$ such that $g' \in L^2(\normal)$.
Then it holds 
\(
\var_{z \sim \normal}[g(z)] \leq \E_{z \sim \normal}[|g'(z)|^2] \,.
\)
\end{fact}
Using \Cref{lem:poincare}, we have
$\E_{\x \sim \normal}[(g(\x) - r(\x))^2] \leq 
\E_{\x_2,\ldots, \x_d \sim \normal^{d-1}} [ \E_{\x_1 \sim \normal} [(\nabla g(\x) \cdot \vec e_1)^2]
= 
\E_{\x\sim \normal^{d}} [(\nabla g(\x) \cdot \vec e_1)^2] $.
\end{proof}
Therefore, from \Cref{lem:gaussian-smoothing}, we get that
\[
\E_{\x\sim \normal}[(f(\x)-f_V(\x))(\psi(\x)-\psi_V(\x))]\leq 2\sqrt{\E_{\x\sim \normal}[(\nabla \psi(\x)\cdot \vec \subvector)^2]}=2\sqrt{\vec \subvector^\top\boldsymbol{\mathrm{Inf}}_\psi\vec \subvector}\;.
\]
Furthermore, using that $\|\wh{\vec M}-\boldsymbol{\mathrm{Inf}}_\psi\|_2\leq \eta/2$, we have that
\[
\E_{\x\sim \normal}[(f(\x)-f_V(\x))(\psi(\x)-\psi_V(\x))]\leq 2\sqrt{\eta/2 +\vec \subvector^\top\wh{\vec M}\vec \subvector}\leq 4\sqrt{\eta}\;.
\]
where in the last inequality we used that $\subvector$ lies in the $V^{\perp}$ and for any $\vec v\in V^{\perp}$, it holds $\E_{\x\sim \normal}[(\vec v^\top\wh{\vec M} \vec v)^2]\leq \eta$. By choosing $\eta=\eps^2/32$, we have
\[
\E_{\x\sim \normal}[(f(\x)-g(\x))\psi(\x)]\leq 2\sqrt{\eta}\leq \eps/2\;,
\]
which from \Cref{eq:contradiction}, we get a contradiction. 
An application of \Cref{lem:dim-of-subspace} completes the proof of \Cref{prop:dimension-reduction}.
\end{proof}

We require the following standard fact showing the existence of a small $\eps$-cover of halfspaces for of the set $V$.

\begin{fact}[see, e.g., Corollary 4.2.13 of \cite{Ver18}]\label{fct:cover}
For any $\eps>0$, there exists an explicit $\eps$-cover $\mathcal H$ of halfspaces over the $\R^d$ over the $L_1$ norm with respect the Gaussian distribution of size $\poly(1/\eps)^d$, i.e., for any $f(\x)=\sign(\vec w\cdot\x+t)$ there exists $f'\in \mathcal H$ so that $\Exn[|f(\x)-f'(\x)|]\leq \eps$.
\end{fact}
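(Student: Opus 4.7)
The plan is to build the cover in two pieces after a normalization. Since $\sgn(\vec w \cdot \x + t)$ is invariant under positive scaling of $(\vec w, t)$, we may restrict to $\|\vec w\|_2 = 1$. For the threshold, we first observe that if $|t| > C\sqrt{\log(1/\eps)}$ for a large enough constant $C$, then by standard Gaussian tail bounds $\pr_{\x\sim\normal}[|\vec w \cdot \x| > |t|] \leq \eps/4$, so $\sgn(\vec w \cdot \x + t)$ agrees with the constant $\pm 1$ on a set of measure $\geq 1 - \eps/4$, giving $L_1$-distance $\leq \eps/2$ to that constant. Thus the two constants $\pm 1$ take care of all halfspaces with large $|t|$.

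In the bounded regime $|t| \leq C\sqrt{\log(1/\eps)}$, I take a standard $\eps$-net $N_{\mathrm{dir}}$ of the unit sphere $S^{d-1}$ in $\ell_2$ of size $(3/\eps)^d$ (Corollary 4.2.13 in Vershynin), together with a uniform grid $N_{\mathrm{thr}}$ on $[-C\sqrt{\log(1/\eps)}, C\sqrt{\log(1/\eps)}]$ of spacing $\eps$, of size $O(\sqrt{\log(1/\eps)}/\eps)$. Define
\[
\mathcal{H} = \{ \sgn(\vec w' \cdot \x + t') : \vec w' \in N_{\mathrm{dir}},\; t' \in N_{\mathrm{thr}}\} \cup \{+1, -1\} \,,
\]
which has size $\poly(1/\eps)^d$ as claimed.

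The key step is the Gaussian perturbation estimate: for unit vectors $\vec w_1, \vec w_2$ with $\|\vec w_1 - \vec w_2\|_2 \leq \eps$ and thresholds $|t_1 - t_2| \leq \eps$, both in the bounded range,
\[
\E_{\x \sim \normal}\bigl[ |\sgn(\vec w_1 \cdot \x + t_1) - \sgn(\vec w_2 \cdot \x + t_2)| \bigr] = O(\eps) \,.
\]
The left-hand side equals $2 \pr[\sgn(\vec w_1 \cdot \x + t_1) \neq \sgn(\vec w_2 \cdot \x + t_2)]$, and a disagreement at $\x$ requires $|\vec w_2 \cdot \x + t_2| \leq |(\vec w_1 - \vec w_2) \cdot \x + (t_1 - t_2)|$. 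The right-hand side is a univariate Gaussian of standard deviation $\leq \eps$ shifted by at most $\eps$, so it is $O(\eps \sqrt{\log(1/\eps)})$ except on an $\eps$-mass event. On the other hand, $\vec w_2 \cdot \x + t_2$ is a standard Gaussian shifted by $t_2$, whose density is uniformly bounded by $1/\sqrt{2\pi}$, so the probability that its absolute value falls below $O(\eps \sqrt{\log(1/\eps)})$ is itself $O(\eps \sqrt{\log(1/\eps)})$. Replacing $\eps$ by $\eps/\sqrt{\log(1/\eps)}$ in the construction absorbs the logarithm and gives the stated bound.

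The main obstacle is the interplay between the $\ell_2$ net on directions and the Gaussian-weighted $L_1$ distance on halfspaces: an adversarial rotation of $\vec w$ can produce unit-Gaussian perturbations in $\vec w \cdot \x$ which must be controlled simultaneously with the threshold perturbation and handled via one-dimensional Gaussian anti-concentration. Everything else (net sizes, scaling to $\|\vec w\|_2 = 1$, treatment of extreme thresholds) is a routine packing computation.
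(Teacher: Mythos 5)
Your proof is correct. The paper itself provides no proof for this fact, merely citing Vershynin's covering-number bound for the Euclidean ball, so you are filling in the elided derivation: normalize to $\|\vec w\|_2=1$, dispatch extreme thresholds with the two constants $\pm 1$ via a Gaussian tail bound, cover directions with a standard $\ell_2$-net of $S^{d-1}$ and bounded thresholds with a one-dimensional grid, and control the $L_1$ distance between nearby halfspaces via the combination of Gaussian tail concentration (for the difference direction, contributing a $\sqrt{\log(1/\eps)}$ factor) and one-dimensional anti-concentration of $\vec w_2 \cdot \x + t_2$; the trailing $\sqrt{\log(1/\eps)}$ is then absorbed by shrinking the mesh, which still yields $\poly(1/\eps)^d$. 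This is exactly the intended argument behind the citation.
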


We are now ready to prove the main theorem of this section.

\begin{proof}[Proof of \Cref{thm:agnostic-ltf-mq}]
We first show that there is a set $\cal H$ of size $(1/\eps)^{O(1/\eps^6)}$
which contains tuples $(\vec u, t)$ with $\vec u\in \R^d$ and $t\in \R$,
such that
\[
  \pr_{(\x,y)\sim \D}[\sign(\vec u\cdot \x +t)\neq y]
  \leq \inf_{f\in \cal{C}}\pr_{(\x,y)\sim \D}[f(\x)\neq y] +\eps\;.
\]
First note we can assume that $1/\eps^6\leq d$, since otherwise
one can directly do a brute-force search over an $\eps$-cover
of the $d$-dimensional unit ball: we do not need to perform
our dimension-reduction process.  The runtime to perform this brute-force search
will be $(1/\eps)^{O(d)} \log(1/\delta)$ which, by the assumption that $1/\eps^6 > d$, is smaller
than $(1/\eps)^{O(1/\eps^6)} \log(1/\delta)$.

Let $f\in \cal C$ be such that the $\E_{(\x,y)\sim \D}[|f(\x)-y(\x)|]$ is minimized. 
Let $\psi(\x)=T_\rho y(\x)$ for $\rho=\Theta(\eps^2)$. Note that from \Cref{fct:ou-smooth}, we have that $\|\nabla \psi(\x)\|_2\leq 1/\rho$. From \Cref{lem:estimation} for 
$N=\poly(d/\eps)\log(1/\delta)$ queries, we get 
with probability at least $1-\delta/2$, a matrix $\vec M$ so that $\|\vec M-\boldsymbol{\mathrm{Inf}}_{\psi}\|_F\leq \eps^2$.  Applying \Cref{prop:dimension-reduction} to
the matrix $\vec M$, we get that the subspace $V$ spanned by the eigenvectors
of the matrix $\vec M$ with
eigenvalues larger than $\eta=\Theta(1/\eps^2)$ contains a vector $\vec v\in V$, such that
\begin{equation*}
\min_{t\in \R} \E_{(\x,y)\sim \D}[|\psi(\x)-\sign(\vec v\cdot \x+t)|]\leq \E_{(\x,y)\sim \D}[|\psi(\x)-f(\x)|]+\eps\;.
\end{equation*}
Moreover, by \Cref{prop:dimension-reduction}, the dimension of $V$ is $O(1/\eps^6)$. 
Applying \Cref{fct:cover}, we get that there exists an $\eps$-cover $\mathcal H$ of halfspaces in $V$ of size $(1/\eps)^{O(1/\eps^6)}$, so that for any $f(\x)=\sign(\vec w\cdot\x+t)$ with $\vec w\in V$ there exists $f'\in \mathcal H$ so that $\Exn[|f(\x)-f'(\x)|]\leq \eps$. Hence, there exists $f'\in \mathcal H$ so that
\[
\min_{f'\in \mathcal H} \E_{(\x,y)\sim \D}[|\psi(\x)-f'(\x)|]\leq \E_{(\x,y)\sim \D}[|\psi(\x)-f(\x)|]+2\eps\;.
\]
Furthermore, from \Cref{lem:smoothing-boolean} and the fact that for halfspaces, $\Gamma(f)=O(1)$ (see, e.g., \cite{KOS:08}), we have that it also holds
\[
\min_{f'\in \mathcal H} \E_{(\x,y)\sim \D}[|y(\x)-f'(\x)|]\leq \E_{(\x,y)\sim \D}[|y(\x)-f(\x)|]+O(\eps)\;.
\]

To complete the proof, we show that Step~\ref{alg:emprical-outputs} of \Cref{alg:agnostic-proper} 
outputs the correct hypothesis. From ERM for halfspaces,
it follows that $O(\frac{1}{\eps^2}\log({\cal H}/\delta))$
samples are sufficient to guarantee that the excess error of the chosen hypothesis is at most $\eps$
with probability at least $1-\delta/2$.
To bound the runtime of the algorithm, we note that the exhaustive search 
over an $\eps$-cover takes time $(1/\eps)^{O(1/\eps^6)} \log(1/\delta)$.
Thus, the total runtime of our algorithm in the case
where $1/\eps^6 \leq d$ is
\[
  \Big( \poly(d/\eps) + (1/\eps)^{O(1/\eps^6)} \Big) \log(1/\delta) \,.
\]
This completes the proof of \Cref{thm:agnostic-ltf-mq}.
\end{proof}

\section{Proper Agnostic Query Learner for ReLUs} \label{sec:relu}

In this section we present our algorithmic result for agnostic proper learning of ReLU functions with queries.
Our goal is to show \Cref{thm:agnostic-relu-mq} which we state below.

\begin{theorem}[Proper Agnostic Query Learner for ReLUs]\label{thm:agnostic-relu-mq}
    Let $\mathcal{C}$ be the class of ReLUs on $\R^d$ with normal vectors bounded by $M>0$ and denote by $y(\x) \in \R$ the label (chosen by an adversary) 
    of $\x \in \R^d$. 
    There exists an algorithm that makes $N_s = \poly(d/\eps)\log(1/\delta)$ sample queries, $N_q = \poly(d/\eps)\log(1/\delta)$ queries,
    and, with runtime $\poly(d) 2^{\poly(1/\eps)}$, computes a ReLU activation $f(\x)$ such that,
    with probability at least $1-\delta$, it holds 
\(
\E_{\x \sim \normal}[(h(\x) - y(\x))^2] \leq \inf_{ c \in \mathcal{C}}\E_{\x \sim \normal}[(c(\x) -y(\x))^2] + \epsilon \,.
\)
\end{theorem}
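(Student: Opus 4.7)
My plan is to adapt the proof of \Cref{thm:agnostic-ltf-mq} from halfspaces to ReLUs. The algorithm is the direct analog of \Cref{alg:agnostic-proper}: smooth the labels via $\psi = T_\rho y$; use \Cref{alg:membership-est} to compute an estimate $\widehat{\vec M}$ of $\boldsymbol{\mathrm{Inf}}_\psi$ to operator-norm accuracy $\eta/2$; let $V$ be the span of the eigenvectors of $\widehat{\vec M}$ with eigenvalue at least $\eta$ (choosing $\rho,\eta$ as sufficiently small polynomials in $\eps/M$); and perform empirical risk minimization over an $\eps$-cover of $\{\mathrm{ReLU}(\vec v\cdot\x) : \vec v\in V,\, \|\vec v\|_2\leq M\}$. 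After truncating labels at magnitude $\poly(M/\eps)$ (which costs only $O(\eps)$ in the loss by a Markov argument on $\E[y^2]$), \Cref{fct:ou-smooth} and \Cref{lem:dim-of-subspace} give $\dim(V)=\poly(M/\eps)$, so the cover has size $(M/\eps)^{\dim(V)} = 2^{\poly(1/\eps)}$; standard uniform convergence for ReLU losses bounded by $\poly(M/\eps)$ then yields the $\poly(d/\eps)\log(1/\delta)$ sample bound, and the runtime is dominated by enumeration of the cover.

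By \Cref{lem:smoothing-real} (smoothing preserves excess $L_2^2$ error for classes of bounded gradient), correctness reduces to showing that there is a ReLU with weight vector in $V$ of norm at most $M$ whose $L_2^2$-error against $\psi$ exceeds $\opt_\psi \eqdef \inf_{\|\vec w\|_2\leq M}\E[(\mathrm{ReLU}(\vec w\cdot\x) - \psi)^2]$ by at most $\eps/2$. I plan to take as candidate the projected ReLU $\mathrm{ReLU}(\vec w^*_V\cdot\x)$ of a $\psi$-optimal $\vec w^*$, write $\alpha = \|\vec w^*_{V^\perp}\|_2$ and $\vec\subvector = \vec w^*_{V^\perp}/\alpha \in V^\perp$, and use first-order optimality of $\vec w^*$ in direction $\vec\subvector$---namely $\E[\psi \cdot z\cdot\mathbb{1}\{\vec w^*\cdot\x > 0\}] = \alpha/2$ for $z = \vec\subvector\cdot\x$, which follows from the Stein identity $\E[\mathrm{ReLU}(\vec w\cdot\x)\x] = \vec w/2$ combined with $\nabla\mathcal{L}_\psi(\vec w^*) = 0$---to decompose the excess $L_2^2$-loss of the candidate as $\alpha^2/2 + 2R$, where $R$ is a ``transition'' contribution supported on $\{\mathrm{sign}(\vec w^*\cdot\x)\neq\mathrm{sign}(\vec w^*_V\cdot\x)\}$. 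I then plan to force $\alpha^2 = O(\eta)$ by comparing the above optimality identity with a second expression for $\E[\psi z \mathbb{1}\{\vec w^*\cdot\x > 0\}]$ obtained via Stein integration-by-parts in $z$: the resulting interior term $\E[\mathbb{1}\{\vec w^*\cdot\x > 0\}(\nabla\psi\cdot\vec\subvector)]$ is at most $\sqrt{\eta/2}$ by Cauchy--Schwarz using $\vec\subvector^\top\boldsymbol{\mathrm{Inf}}_\psi\vec\subvector\leq\eta$, and an AM--GM absorption $\alpha\sqrt{\eta}\leq\alpha^2/4 + \eta$ closes the loop for $\eta$ a sufficiently small polynomial in $\eps$.

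The main obstacle, not present in the smooth-hypothesis setting of \Cref{lem:smoothing-real}, is the boundary term that Stein integration-by-parts picks up from the non-differentiable kink of $\mathrm{ReLU}$: it takes the form of a Gaussian surface integral along $\{\vec w^*\cdot\x = 0\}$ of order $\|\psi\|_\infty\cdot\alpha/\|\vec w^*_V\|_2$, which is not \emph{a priori} $O(\sqrt{\eta})$. I plan to absorb it through (a) label truncation so that $\|\psi\|_\infty = \poly(M/\eps)$, (b) a case split on $\|\vec w^*_V\|_2$ versus $\alpha$: in the degenerate regime $\|\vec w^*_V\|_2 = o(\alpha)$ the trivial candidate $\vec v = \vec 0 \in V$ already matches $\opt_\psi$ up to $O(\alpha^2) = O(\eps)$ (since then $\|\vec w^*\|_2^2 = O(\alpha^2)$ upper-bounds the savings of the optimal over $\vec 0$), bypassing the boundary entirely, and (c) a final AM--GM absorption of the remaining cross term. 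A cleaner fallback that avoids the boundary analysis is to follow \Cref{prop:dimension-reduction} more literally: replace $\mathrm{ReLU}(\vec w^*_V\cdot\x)$ by the Gaussian marginalization $\tilde f_V(\x) = \E_{z\sim\normal}[\mathrm{ReLU}(\vec w^*_V\cdot\x + \alpha z)]$ (a convex combination of biased ReLUs with direction $\vec w^*_V$), bound $|\E[(f - \tilde f_V)\psi]| \leq O(\alpha\sqrt{\eta})$ directly via Cauchy--Schwarz plus Gaussian Poincar\'e, and then approximate $\tilde f_V$ by a single unbiased ReLU in $V$ of appropriately rescaled norm---exploiting the positive homogeneity $\mathrm{ReLU}(c\,t) = c\,\mathrm{ReLU}(t)$ for $c > 0$ together with a one-dimensional convex optimization over the scale parameter.
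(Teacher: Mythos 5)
Your overall template (truncate, Ornstein--Uhlenbeck smooth, estimate $\boldsymbol{\mathrm{Inf}}_\psi$, PCA to get $V$, ERM over a net in $V$) matches the paper's \Cref{alg:agnostic-proper-relu}, but both of your routes to the structural claim have a gap, and both stem from the same cause: you have restricted the candidate set to \emph{homogeneous} ReLUs $\mathrm{ReLU}(\vec v\cdot\x)$. The paper's \Cref{prop:dimension-reduction0relu} allows thresholds $\mathrm{ReLU}(\vec v\cdot\x+t')$ with $\vec v \in V$, and this removes the difficulty entirely. The reason is the following cancellation, which is the heart of the paper's argument and is absent from your write-up: writing $f(\x)=\mathrm{ReLU}(\vec w^*\cdot\x+t)$ and $f_z(\x)=\mathrm{ReLU}(\vec w^*_V\cdot\x + \vec w^*_{V^\perp}\cdot\vec z + t)$ for $\vec z$ Gaussian on $V^\perp$, one has $\E_{\vec z}\E_\x[f_z^2] = \E_\x[f^2]$ because $\vec w^*_V\cdot\x + \vec w^*_{V^\perp}\cdot\vec z$ has the same Gaussian law as $\vec w^*\cdot\x$. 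Hence $\E_{\vec z}[\mathcal{E}_2(f_z,f;\psi)] = -2\,\E[(\Pi_V f - f)\psi]$, and the squared-norm terms disappear identically. The remaining correlation term is bounded by Cauchy--Schwarz and the one-dimensional Poincar\'e inequality (\Cref{lem:gaussian-smoothing}), giving $O(\sqrt{M\eta})$ and hence some $\vec z$ with $\mathcal{E}_2(f_{z},f;\psi) = O(\sqrt{M\eta})$. No first-order optimality, no Stein boundary term, no ``approximation of the mollified ReLU by a single ReLU'' is needed.

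Concretely, the problems with your two paths are: in Path~1 the Stein boundary term is, as you note, of order $\|\psi\|_\infty\,\alpha/\|\vec w^*_V\|_2$. But after truncation $\|\psi\|_\infty = \Theta(M/\sqrt{\eps})$, while $\|\vec w^*_V\|_2 \leq M$, so this term is $\Omega(\alpha/\sqrt{\eps})$ and strictly dominates the left-hand side $\alpha/2$ of the optimality identity; the case split you propose compares $\|\vec w^*_V\|_2$ to $\alpha$, which does not control the relevant ratio $\|\psi\|_\infty/\|\vec w^*_V\|_2$. (The degenerate-regime argument also tacitly assumes $\vec w^*$ is an interior minimizer when invoking $\nabla\mathcal L_\psi(\vec w^*)=0$; the KKT multiplier at $\|\vec w^*\|_2 = M$ changes the constant.) In Path~2 the final step is the error: $\tilde f_V(\x) = \E_z[\mathrm{ReLU}(\vec w^*_V\cdot\x + \alpha z)]$ is a strictly positive softplus-type function of $\vec w^*_V\cdot\x$, whereas $c\,\mathrm{ReLU}(\vec w^*_V\cdot\x)$ vanishes on a halfspace; positive homogeneity $\mathrm{ReLU}(ct)=c\,\mathrm{ReLU}(t)$ only rescales the amplitude and does nothing to soften the kink. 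So no choice of scale recovers $\tilde f_V$, and the unbiased-ReLU approximation is not given by the mechanism you cite. One could try to bound $\|\tilde f_V - \mathrm{ReLU}(\vec w^*_V\cdot\x)\|_{L^2}^2$ directly (it is $O(\alpha^2)$, and smaller when $\|\vec w^*_V\|_2 \gg \alpha$), but then the induced degradation in $L_2^2$ loss carries a factor of $\|\psi\|_{L^2}$ via Cauchy--Schwarz, forcing $\eta$ to depend on $\|\psi\|_\infty$; you would need to track this. The cleanest fix, and the one the paper takes, is simply to include the threshold in both the cover and the output ReLU: then $\tilde f_V$ is a genuine mixture of candidate hypotheses and the cancellation identity above closes the argument directly.
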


Our algorithm is presented in \Cref{alg:agnostic-proper-relu}. It uses membership queries to estimate a matrix $M$ corresponding to the influence matrix of the appropriately smoothed label function $y(\x)$. It then restricts attention to a small subspace $V$ given by the large eigenvectors of $\vec M$ and exhaustively searches for a near-optimal halfspace with a normal vector in $V$.

\begin{Ualgorithm}
	\centering
	\fbox{\parbox{6in}{
			{\bf Input:}   $\eps>0$, $\delta>0$ and sample and query access to distribution $\D$\\
			{\bf Output:}  A hypothesis $h\in{\cal C}$ such as $\E_{\x \sim \normal}[(h(\x) - y(\x))^2] \leq \inf_{ c \in \mathcal{C}}\E_{\x \sim \normal}[(c(\x) -y(\x))^2] + \epsilon $ with probability $1-\delta$.
			\begin{enumerate}
				\item $\rho\gets C\eps^2$, $\eta\gets C\eps^2$, for $C>0$ sufficiently small constant. 
				\item Estimate $\vec M=\E_{\x\sim \D_\x}[ D_\rho y(\x) D_\rho y(\x)^\top]$ using $\poly(d/\eps)$ queries using \Cref{alg:membership-est}.
				\item Let $V$ be the subspace spanned by the eigenvectors of $\vec M$ whose eigenvalues are at least $\eta$.
\item Let ${\cal H_V}$ be the set of ReLUs with normal vectors in $V$. Compute the ERM hypothesis $h \in {\cal H_V}$ using $m = \Theta(M\frac{\dim( V )}{\eps^2}\log(1/\delta))$ i.i.d.\ samples
from $\D$ in time $O(m^{\dim( V )})$.\label{alg:emprical-outputs-relu}
\item $\textbf{return } h$.
			\end{enumerate}
			
	}}
 \medskip
	\caption{Agnostic Proper Learning of ReLU Activations with Queries}
	\label{alg:agnostic-proper-relu}
\end{Ualgorithm}

\subsection{Reducing the Dimension via Influence PCA}
\label{ssec:dimension-reduction-relu}
\begin{proposition}[Dimension Reduction via Influence PCA: ReLU]\label{prop:dimension-reduction0relu}
    Fix $\eps > 0,M>0$ and let $\psi : \R^d \mapsto \R$
    be a differentiable function with $\|\nabla \psi(\x)\|_2 \leq \Psi$ for all $\x \in \R^d$.
    Let $\eta$ be a sufficiently small multiple of $\eps^2/M$ and let $\wh{\vec M} \in \R^{d \times d}$ be such that 
    $\| \wh {\vec M} -  \boldsymbol{\mathrm{Inf}}_{\psi} \|_2 \leq \eta/2$. 
    Moreover, let $V$ be the subspace spanned by 
    all the eigenvectors of $\wh {\vec M}$ whose corresponding eigenvalues are at least $\eta$.  The following hold true:
    \begin{enumerate}
    \item 
    There exists $\vec v \in V$ with $\|\vec v\|_2\leq M$ and $t\in\R$ such that 
    \[
    \E_{\x \sim \normal}[(\mathrm{ReLU}(\vec v \cdot \x+t) - \psi(\x))^2] 
    \leq \inf_{\vec w \in \R^d,\|\vec w\|_2\leq M,t\in \R} \E_{\x \sim \normal}[
    (\mathrm{ReLU}(\vec w \cdot \x+t) - \psi(\x))^2] + \eps \,.
    \]
    \item The dimension of $V$ is at most $O(\Psi^2/\eta)$.
\end{enumerate}
    \end{proposition}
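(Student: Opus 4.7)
The plan is to mirror the proof of \Cref{prop:dimension-reduction} for LTFs, adapted to the $L_2^2$ setting. Let $f(\x)=\mathrm{ReLU}(\vec w\cdot\x+t)$ be a near-optimal ReLU with $\|\vec w\|_2\leq M$; decompose $\vec w=\vec w_V+\vec w_{V^\perp}$, and assuming $\vec w_{V^\perp}\neq \vec 0$ (else we are done) set $\vec h=\vec w_{V^\perp}/\|\vec w_{V^\perp}\|_2\in V^\perp$. For each $z\in\R$, define the ReLU $g_z(\x)=\mathrm{ReLU}(\vec w_V\cdot \x+z\|\vec w_{V^\perp}\|_2+t)$; its weight vector $\vec w_V$ lies in $V$ with $\|\vec w_V\|_2\leq M$. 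Let $f_V(\x)=\E_{z\sim\normal}[g_z(\x)]$, which equals $\E_{z\sim\normal}[f(\proj_{\vec h^\perp}\x+z\vec h)]$, and similarly $\psi_V(\x)=\E_{z\sim\normal}[\psi(\proj_{\vec h^\perp}\x+z\vec h)]$; both $f_V$ and $\psi_V$ depend only on $\proj_{\vec h^\perp}\x$. It suffices to show $\E_{z\sim\normal}\E_\x[(g_z(\x)-\psi(\x))^2] \leq \E_\x[(f(\x)-\psi(\x))^2] + \eps$, since by averaging there must exist $z^*\in\R$ for which the single ReLU $g_{z^*}$ (with normal $\vec w_V\in V$ of norm at most $M$ and threshold $t+z^*\|\vec w_{V^\perp}\|_2\in\R$) attains the claimed bound.

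The core step is the algebraic identity
\[
\E_{z\sim\normal}\E_\x[(g_z(\x)-\psi(\x))^2]=\E_\x[(f(\x)-\psi(\x))^2]+2\,\E_\x[(f(\x)-f_V(\x))(\psi(\x)-\psi_V(\x))],
\]
obtained by direct expansion together with two cancellations. First, writing $\x=\proj_{\vec h^\perp}\x+Z\vec h$ with $Z=\vec h\cdot\x\sim\normal$, we have $f(\x)=g_Z(\x)$, and since each $g_z$ depends on $\x$ only through $\proj_{\vec h^\perp}\x$, we obtain $\E_\x[f^2]=\E_{z\sim\normal}\E_\x[g_z^2]$, so the $f^2$ and $g_z^2$ terms cancel. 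Second, $\psi_V$ is constant along the $\vec h$-direction while $f-f_V$ has conditional mean zero given $\proj_{\vec h^\perp}\x$, so $\E_\x[(f-f_V)\psi_V]=0$, eliminating the $\psi_V$-part of the cross term.

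Applying Cauchy-Schwarz, it suffices to bound $\E_\x[(f-f_V)^2]$ and $\E_\x[(\psi-\psi_V)^2]$. For the first, the Gaussian Poincar\'e inequality applied to $z\mapsto g_z(\x)$, whose derivative is $\|\vec w_{V^\perp}\|_2\,\mathbf{1}\{\vec w_V\cdot\x + z\|\vec w_{V^\perp}\|_2+t>0\}$, gives $\E_\x[(f-f_V)^2]\leq \|\vec w_{V^\perp}\|_2^2\leq M^2$. For the second, \Cref{lem:gaussian-smoothing} yields $\E_\x[(\psi-\psi_V)^2]\leq \vec h^\top\boldsymbol{\mathrm{Inf}}_\psi\vec h$; since $\vec h\in V^\perp$ and $\|\wh{\vec M}-\boldsymbol{\mathrm{Inf}}_\psi\|_2\leq \eta/2$, this quantity is at most $\eta/2+\vec h^\top\wh{\vec M}\vec h\leq 2\eta$. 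Therefore choosing $\eta$ to be a sufficiently small multiple of $\eps^2/M^2$ makes the correction term at most $\eps$, proving the first claim. The dimension bound $\dim(V)=O(\Psi^2/\eta)$ is then immediate from \Cref{lem:dim-of-subspace}.

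The main technical point is the extraction step: $f_V$ is a convex combination of ReLUs with the same normal $\vec w_V\in V$ but varying thresholds, not itself a ReLU, and Jensen's inequality only gives $\E_\x[(f_V-\psi)^2]\leq \E_z\E_\x[(g_z-\psi)^2]$ --- the wrong direction to directly extract a good single ReLU from a good $f_V$. The identity above sidesteps this by comparing the \emph{averaged} pointwise $L_2^2$-error directly to that of the optimal ReLU, at the cost of a bilinear form in $(f-f_V)$ and $(\psi-\psi_V)$ which is small precisely because $\vec h$ lies in the low-influence subspace $V^\perp$.
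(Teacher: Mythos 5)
Your proof is correct and tracks the paper's argument closely: marginalize the near-optimal ReLU $f$ along the low-influence direction $\vec h = \vec w_{V^\perp}/\|\vec w_{V^\perp}\|_2$ to form the convex combination $f_V$ of ReLUs with normal $\vec w_V\in V$, observe that $\E_z\E_\x[g_z^2]=\E_\x[f^2]$ by the distributional identity, reduce the correction term to $\E_\x[(f-f_V)(\psi-\psi_V)]$ by orthogonality, and control it via Cauchy--Schwarz together with \Cref{lem:gaussian-smoothing} and the eigenvalue threshold. The paper frames this as a proof by contradiction averaged over the mixture defining $\Pi_V f$, whereas you phrase it as a direct identity plus extraction of a good $z^*$; these are logically the same. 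The one substantive difference is your bound $\E_\x[(f-f_V)^2]\leq\|\vec w_{V^\perp}\|_2^2\leq M^2$ via a one-dimensional Poincar\'e argument along $\vec h$. The paper instead writes $\E_\x[(f-f_V)^2]\leq 2M$ inside its Cauchy--Schwarz step, a bound that is not justified (and in fact cannot hold uniformly over thresholds $t$ when $M$ is large, since $\var[f]$ can be of order $M^2$); your Poincar\'e bound, which is threshold-independent, is the correct one. The price is that your threshold must be $\eta=\Theta(\eps^2/M^2)$ rather than the $\Theta(\eps^2/M)$ stated in the proposition, but this extra factor of $M$ only changes $\dim(V)$ by a $\poly(M)$ factor and so has no effect on \Cref{thm:agnostic-relu-mq}.
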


\begin{proof}
Suppose for the sake of contradiction that there exists a $\vec w\in  \R^d,t\in \R$ such that for every $\vec v\in V,t'\in \R$, it holds
\begin{equation*}
\E_{\x\sim \normal}[(\mathrm{ReLU}(\vec v\cdot\x+t')- \psi(\x))^2]\geq \E_{\x\sim \normal}[(\mathrm{ReLU}(\vec w\cdot\x+t)- \psi(\x))^2]+\eps\;,
\end{equation*}
the above is equivalent to
\begin{equation}\label{eq:contradiction-relu}
2\E_{\x\sim \normal}[(\mathrm{ReLU}(\vec w\cdot\x+t)-\mathrm{ReLU}(\vec v\cdot\x+t'))\psi(\x))]\geq \eps+\E_{\x\sim \normal}[(\mathrm{ReLU}(\vec w\cdot\x+t))^2]-\E_{\x\sim \normal}[(\mathrm{ReLU}(\vec v\cdot\x+t'))^2]\;,
\end{equation}

Let $f(\x)=\mathrm{ReLU}(\vec w\cdot\x+t)$ and  $f_{V}(\x)=\Pi_Vf(\x)$.
Note that $\vec w_{V^\perp} \neq \vec 0$, since otherwise we
would have that the vector of $f$ would be inside $V$. Note that $f_V$ is a convex combination of ReLUs with vectors in $V$, hence
 \Cref{eq:contradiction-relu} becomes
\begin{equation}\label{eq:contradiction-relu2}
2\E_{\x\sim \normal}[(f(\x)-\Pi_Vf(\x))\psi(\x))]\geq \eps\;,
\end{equation}
where we used that $\E_{\x \sim \normal}[
    (f(\x)^2]=\E_{\x \sim \normal}[
    (\mathrm{ReLU}(\vec w_V \cdot \x+\vec w_{V^\perp}\cdot \x+t))^2]$, (by adding all the convex combinations of $\Pi_V$).
Moreover, we define $\psi_{V}(\x)=\E_{\vec z \sim \normal_{\vec \subvector }}[\psi(\vec z + \x_{V})]$. By adding and subtracting $\psi_{V}(\x)$, we get the following
\begin{align*}
    \E_{\x\sim \normal}[(f(\x)-f_V(\x))\psi(\x)]&=\E_{\x\sim \normal}[(f(\x)-f_V(\x))(\psi(\x)-\psi_V(\x))]+\E_{\x\sim \normal}[(f(\x)-f_V(\x))\psi_V(\x)]
    \\&=\E_{\x\sim \normal}[(f(\x)-f_V(\x))(\psi(\x)-\psi_V(\x))]\;,
\end{align*}
 where the last equality holds because the term $\psi_V(\x)$ does not depend on the directions inside $V^{\perp}$.
It remains to bound the term  $\E_{\x\sim \normal}[(f(\x)-f_V(\x))(\psi(\x)-\psi_V(\x))]$. From \CS\ inequality, we have
\[
\E_{\x\sim \normal}[(f(\x)-f_V(\x))(\psi(\x)-\psi_V(\x))]\leq 
\sqrt{2M \E_{\x\sim \normal}[(\psi(\x)-\psi_V(\x))^2]}\;.
\]
From \Cref{lem:gaussian-smoothing}, we get that
\[
\E_{\x\sim \normal}[(f(\x)-f_V(\x))(\psi(\x)-\psi_V(\x))]\leq \sqrt{2M\E_{\x\sim \normal}[(\nabla \psi(\x)\cdot \vec \subvector)^2]}=\sqrt{2M\vec \subvector^\top\boldsymbol{\mathrm{Inf}}_\psi\vec \subvector}\;.
\]
Furthermore, using that $\|\wh{\vec M}-\boldsymbol{\mathrm{Inf}}_\psi\|_2\leq \eta/2$, we have that
\[
\E_{\x\sim \normal}[(f(\x)-f_V(\x))(\psi(\x)-\psi_V(\x))]\leq \sqrt{2M}\sqrt{\eta/2 +\vec \subvector^\top\wh{\vec M}\vec \subvector}\leq 2\sqrt{M\eta}\;.
\]
where in the last inequality we used that $\subvector$ lies in the $V^{\perp}$ and for any $\vec v\in V^{\perp}$, it holds $\E_{\x\sim \normal}[(\vec v^\top\wh{\vec M} \vec v)^2]\leq \eta$. By choosing $\eta=\eps^2/(M32)$, we have
\[
\E_{\x\sim \normal}[(f(\x)-g(\x))\psi(\x)]\leq 2\sqrt M\sqrt{\eta}\leq \eps/2\;,
\]
which from \Cref{eq:contradiction-relu2}, we get a contradiction. 
An application of \Cref{lem:dim-of-subspace} completes the proof.
\end{proof}

We require the following standard fact showing the existence of a small $\eps$-cover of ReLU activations for of the set $V$.

\begin{fact}[see, e.g., Corollary 4.2.13 of \cite{Ver18}]\label{fct:cover-relu}
For any $\eps,M>0$, there exists an explicit $\eps$-cover $\mathcal H$ of ReLUs over the $\R^d$ over the $L_2$ norm with respect the Gaussian distribution of size $\poly(M/\eps)^d$.
\end{fact}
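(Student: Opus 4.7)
The plan is a standard net argument on the parameter space of a ReLU, exploiting the $1$-Lipschitz property of $\mathrm{ReLU}(\cdot)$ together with the fact that under $\x \sim \normal$ the variance of a linear form equals its squared parameter norm. Writing $f_{\vec w, t}(\x) = \mathrm{ReLU}(\vec w \cdot \x + t)$, the target class consists of such functions with $\|\vec w\|_2 \leq M$.

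First I would reduce to the case of a bounded bias. Setting $B = C\,M\sqrt{\log(1/\eps)}$ for a sufficiently large absolute constant $C$, I would argue that ReLUs with $|t| > B$ cost only $O(\eps)$ in $L_2$ to snap onto either the constant $0$ or onto an affine function $\vec w \cdot \x + t$ whose parameters already lie in a $\poly(M, \log(1/\eps))$-radius ball: if $t \leq -B$ then $\vec w \cdot \x + t \geq 0$ has Gaussian probability and second moment at most $\eps^2$, so $\|f_{\vec w,t}\|_{L_2} \leq \eps$; if $t \geq B$ then $f_{\vec w,t}$ equals the affine function $\vec w \cdot \x + t$ except on an event of similarly tiny mass, and that affine function is covered by the same net construction applied to its parameter vector.

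For the main construction, the pointwise $1$-Lipschitz property of $\mathrm{ReLU}$ gives
\[
|f_{\vec w, t}(\x) - f_{\vec w', t'}(\x)|^2 \;\leq\; \bigl((\vec w - \vec w') \cdot \x + (t - t')\bigr)^2.
\]
Taking expectations under $\x \sim \normal$ and using $\E[\vec u \cdot \x] = 0$ and $\E[(\vec u \cdot \x)^2] = \|\vec u\|_2^2$ yields
\[
\E_{\x \sim \normal}\bigl[(f_{\vec w, t}(\x) - f_{\vec w', t'}(\x))^2\bigr] \;\leq\; \|\vec w - \vec w'\|_2^2 + (t - t')^2.
\]
Hence any $(\eps/\sqrt{2})$-net (in Euclidean norm) of the parameter set $\{(\vec w, t) : \|\vec w\|_2 \leq M,\; |t| \leq B\} \subset \R^{d+1}$ pushes forward, via $(\vec w, t) \mapsto f_{\vec w, t}$, to an $\eps$-cover of the ReLU class in $L_2(\normal)$. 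By the standard volumetric bound on Euclidean covering numbers (Corollary 4.2.13 of \cite{Ver18}), and using that the parameter set is contained in a Euclidean ball of radius $R = \sqrt{M^2 + B^2} = \poly(M, \log(1/\eps))$, this net has size at most $(3R\sqrt{2}/\eps)^{d+1} = \poly(M/\eps)^d$ (the extra factor of $R/\eps$ from the $(d{+}1)$-st power is absorbed into the $\poly$).

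The only non-routine step is the bias truncation, and even that is a short tail computation on the Gaussian. Everything else is the Lipschitz reduction and the textbook Euclidean covering bound, so I expect the full proof to be essentially a half-page.
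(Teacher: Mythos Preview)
The paper does not actually prove this fact; it is stated with a bare citation to the standard volumetric covering-number bound in Vershynin. Your core argument---the $1$-Lipschitz estimate
\[
\|f_{\vec w,t}-f_{\vec w',t'}\|_{L_2(\normal)}^2\le\|\vec w-\vec w'\|_2^2+(t-t')^2
\]
followed by the volumetric $\eps$-net bound on the parameter ball---is exactly the intended route and is correct.

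One point to clean up: your bias-truncation step for $t\ge B$ does not work as written, and in fact cannot work. If $t$ is genuinely unrestricted then the class is not totally bounded in $L_2(\normal)$ (already the ReLUs $f_{\vec 0,t}\equiv t$ for $t>0$ diverge), so no finite cover exists and the statement would simply be false. Your sentence ``that affine function is covered by the same net construction applied to its parameter vector'' is where this breaks: the parameter $t$ is outside any fixed-radius ball. In the paper's setting the relevant class is either homogeneous ReLUs ($t=0$) or has bias implicitly bounded via the label truncation at $M'=\sqrt{M/\eps}$, so no truncation argument is needed at all---just take the net directly over $\{\|\vec w\|_2\le M\}$ (or over $\{\|\vec w\|_2\le M,\ |t|\le O(M')\}$) and invoke the Lipschitz bound. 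Your $t\le -B$ argument is fine but likewise unnecessary under that reading.
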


We are now ready to prove the main theorem of this section.

\begin{proof}[Proof of \Cref{thm:agnostic-relu-mq}]
We first show that there is a set $\cal H$ of size $(M/\eps)^{\poly(1/\eps)}$
which contains tuples $(\vec u, t)$ with $\vec u\in \R^d$ and $\|\vec u\|_2\leq M$ and $t\in \R$,
such that
\[
  \E_{(\x,y)\sim \D}[(\mathrm{ReLU}(\vec u\cdot \x +t)- y)^2]
  \leq \inf_{f\in \cal{C}}\E_{(\x,y)\sim \D}[(f(\x)- y)^2] +\eps\;.
\]
First note we can assume that $1/\eps^6\leq d$, since otherwise
one can directly do a brute-force search over an $\eps$-cover
of the $d$-dimensional $M$-ball.  The runtime to perform this brute-force search
will be $(M/\eps)^{O(d)} \log(1/\delta)$ which, by the assumption that $\poly(1/\eps)> d$, is smaller
than $(M/\eps)^{\poly(1/\eps)} \log(1/\delta)$.

Let $f\in \cal C$ be such that the $\E_{(\x,y)\sim \D}[(f(\x)-y(\x))^2]$ is minimized. 

Similar with the proof of \Cref{intro-thm:non-proper-real-valued}, we can assume that $|y(\x)|\leq \valb^ {1/2}/\eps^{1/2}$ as this does not increase the error by a lot.
 Let $\psi(\x)=T_\rho y$ for $\rho=\poly(\eps/(\valb))$. Note that $\|\nabla \psi(\x)\|_2\leq M'$. From \Cref{lem:estimation}, with
$N=\poly(d/\eps)\log(1/\delta)$ queries, we get that
with probability $1-\delta/2$ a matrix $\vec M$, so that $\|\vec M-\boldsymbol{\mathrm{Inf}}^{}_\psi\|_F\leq \eps$.  Applying \Cref{prop:dimension-reduction0relu} to
the matrix $\vec M$, we get that in the subspace $V$ spanned by the eigenvectors
of the matrix $\vec M$ with
eigenvalues larger than $\eta=\poly(\eps/\valb ))$ with dimension at most $O(\poly(M',1/\eta,1/\eps))$, there exists a $\mathrm{ReLU}$ activation $h$ with vector lying in the subspace $V$ so that

\begin{equation*}
\min_{t\in \R} \E_{(\x,y)\sim \D}[(\psi(\x)-\mathrm{ReLU}(\vec v\cdot \x+t))^2]\leq \E_{(\x,y)\sim \D}[(\psi(\x)-f(\x))^2]+\eps\;.
\end{equation*}
Applying \Cref{fct:cover-relu}, we get that there exists an $\eps$-cover $\mathcal H$ of halfspaces in $V$ of size $(M/\eps)^{\poly(1/\eps)}$, so that for any $f(\x)=\mathrm{ReLU}(\vec w\cdot\x+t)$ with $\vec w\in V$ there exists $f'\in \mathcal H$ so that $\Exn[(f(\x)-f'(\x))^2]\leq \eps$. Hence, there exists $f'\in \mathcal H$ so that
\[
\min_{f'\in \mathcal H} \E_{(\x,y)\sim \D}[(\psi(\x)-f'(\x))^2]\leq \E_{(\x,y)\sim \D}[(\psi(\x)-f(\x))^2]+2\eps\;.
\]
Furthermore, from \Cref{lem:smoothing-real} we have that it also holds
\[
\min_{f'\in \mathcal H} \E_{(\x,y)\sim \D}[(y(\x)-f'(\x))^2]\leq \E_{(\x,y)\sim \D}[(y(\x)-f(\x))^2]+O(\eps)\;.
\]

To complete the proof, we show that Step~\ref{alg:emprical-outputs-relu} of \Cref{alg:agnostic-proper-relu} 
outputs the correct hypothesis. From ERM
it follows that $O(\frac{M}{\eps^2}\log({\cal H}/\delta))$
samples are sufficient to guarantee that the excess error of the chosen hypothesis is at most $\eps$
with probability at least $1-\delta/2$.
To bound the runtime of the algorithm, we note that the exhaustive search 
over an $\eps$-cover takes time $(M/\eps)^{\poly(1/\eps)} \log(1/\delta)$.
Thus, the total runtime of our algorithm in the case
where $\poly(1/\eps) \leq d$ is
\[
  \Big( \poly(d/\eps) + d(1/\eps)^{\poly(1/\eps)} \Big) \log(1/\delta) \,.
\]
This completes the proof of \Cref{thm:agnostic-relu-mq}.
\end{proof}

\end{document}